  \providecommand\BibTeX{{%
    \normalfont B\kern-0.5em{\scshape i\kern-0.25em b}\kern-0.8em\TeX}}}
\newcommand{\gD}{\mathcal{D}}
\newcommand{\gF}{\mathcal{F}}
\newcommand{\gO}{\mathcal{O}}
\newcommand{\gP}{\mathcal{P}}
\newcommand{\gX}{\mathcal{X}}
\newcommand{\gY}{\mathcal{Y}}
\newcommand{\gZ}{\mathcal{Z}}
\newcommand{\gL}{\mathcal{L}}
\newcommand{\gN}{\mathcal{N}}
\newcommand{\gS}{\mathcal{S}}
\newcommand{\gU}{\mathcal{U}}
\newcommand{\Id}{\mathds{1}}
\def\R{\mathbb{R}}
\def\N{\mathbb{N}}
\def\E{\mathbb{E}}
\def\Prob{\mathbb{P}}
\def\Z{\mathbb{Z}}
\DeclarePairedDelimiter\floor{\lfloor}{\rfloor}
\newtheorem{thm}{Theorem}
\newtheorem{cor}{Corollary}
\newtheorem{lem}{Lemma}
\Crefname{cor}{Corollary}{Corollaries}
\newtheorem{defn}{Definition}
\newtheorem{rem}{Remark}
\theoremstyle{plain}
\newenvironment{customthm}[1]
{\innercustomthm}
{\endinnercustomthm}
\theoremstyle{plain}
\newenvironment{customlem}[1]
{\innercustomlem}
{\endinnercustomlem}
\theoremstyle{plain}
\theoremstyle{plain}
\newenvironment{customdef}[1]
{\innercustomdef}
{\endinnercustomdef}
\theoremstyle{plain}
\newenvironment{customcor}[1]
{\innercustomcor}
{\endinnercustomcor}
\theoremstyle{plain}
\numberwithin{claim}{section}
\numberwithin{fact}{claim}
\def\ddefloop#1{\ifx\ddefloop#1\else\ddef{#1}\expandafter\ddefloop\fi}
\def\ddef#1{\expandafter\def\csname bb#1\endcsname{\ensuremath{\mathbb{#1}}}}
\def\ddef#1{\expandafter\def\csname c#1\endcsname{\ensuremath{\mathcal{#1}}}}
\DeclareMathOperator*{\argmax}{arg\,max}
\def\1{\mathds{1}}
\def\diag{\textup{diag}}
\def\Exp{\textup{Exp}}
\def\atan2{\mathrm{arctan2}}
\newcommand\footnoteref[1]{\protected@xdef\@thefnmark{\ref{#1}}\@footnotemark}
\definecolor{darkgreen}{rgb}{0,0.5,0}
\definecolor{darkblue}{rgb}{0,0,0.5}
\definecolor{purple}{rgb}{1,0,1}
\newcommand{\kibitz}[2]{\ifnum\Comments=0\textcolor{#1}{#2}\fi}
\renewcommand{\epsilon}{\varepsilon}
\definecolor{gray}{rgb}{0.5,0.5,0.5}
\else\excludeversion{old}\fi
\definecolor{rev}{rgb}{0,0,1.0}\else\definecolor{rev}{rgb}{0,0,0}\fi
\newcolumntype{g}{>{\columncolor{tabgray}}c}
\definecolor{tabgray}{gray}{0.90}
\definecolor{rgray}{gray}{0.75}
\newcommand{\gray}{\cellcolor{tabgray}}
\newcolumntype{v}{>{\color{rev}}c}
\newcommand{\framework}{\textbf{TSS}\xspace}
\newcommand{\frameworkfull}{\textbf{T}ransformation-\textbf{S}pecific \textbf{S}moothing-based robustness certification\xspace}
\newsavebox{\boundingimage}
\begin{document}

\ifnum\ArXiv=1
\fancyhead{}
\fi

\title{
   TSS: Transformation-Specific Smoothing\texorpdfstring{\\}{} for Robustness Certification
}

\author{Linyi Li*}
\thanks{* Equal contribution.}
\affiliation{%
  \institution{University of Illinois}
  \city{Urbana}
  \state{Illinois}
  \country{USA}
}
\email{linyi2@illinois.edu}

\author{Maurice Weber*}
\affiliation{%
  \institution{ETH Z\"{u}rich}
  \city{Z\"{u}rich}
  \country{Switzerland}
}
\email{maurice.weber@inf.ethz.ch}

\author{Xiaojun Xu}
\affiliation{%
  \institution{University of Illinois}
  \city{Urbana}
  \state{Illinois}
  \country{USA}
}
\email{xiaojun3@illinois.edu}

\author{Luka Rimanic}
\affiliation{%
  \institution{ETH Z\"{u}rich}
  \city{Z\"{u}rich}
  \country{Switzerland}
}
\email{luka.rimanic@inf.ethz.ch}

\author{Bhavya Kailkhura}
\affiliation{%
  \institution{Lawrence Livermore National Laboratory}
  \city{Livermore}
  \state{California}
  \country{USA}
}
\email{kailkhura1@llnl.gov}

\author{Tao Xie}
\affiliation{%
  \institution{Key Laboratory of High Confidence
    Software Technologies, MoE (Peking University)}
  \city{Beijing}
  \country{China}
}
\email{taoxie@pku.edu.cn}

\author{Ce Zhang}
\affiliation{%
  \institution{ETH Z\"{u}rich}
  \city{Z\"{u}rich}
  \country{Switzerland}
}
\email{ce.zhang@inf.ethz.ch}

\author{Bo Li}
\affiliation{%
  \institution{University of Illinois}
  \city{Urbana}
  \state{Illinois}
  \country{USA}
}
\email{lbo@illinois.edu}

\begin{abstract}

As machine learning (ML) systems become pervasive, safeguarding their security is critical.
However, recently it has been demonstrated that motivated adversaries are able to mislead ML systems by perturbing test data using semantic transformations.
While there exists a rich body of research providing provable robustness guarantees for ML models against $\ell_p$ norm bounded adversarial perturbations, guarantees against semantic perturbations remain largely underexplored.
In this paper, we provide
\framework ---{\em a unified framework for
certifying ML robustness against general adversarial semantic transformations}.
First, depending on the properties of each transformation, we divide
common transformations
into two categories, namely \textit{resolvable} (e.g., Gaussian blur) and \textit{differentially resolvable} (e.g., rotation) transformations.
For the former, we propose transformation-specific randomized smoothing strategies and obtain strong robustness certification.
The latter category covers transformations that involve interpolation errors, and we propose a novel approach based on stratified sampling to certify the robustness.
Our framework \framework leverages these certification strategies and combines with consistency-enhanced training to provide rigorous certification of robustness.
We conduct extensive experiments on {over} ten types of challenging semantic transformations and show that \framework significantly outperforms the state of the art.
Moreover, to the best of our knowledge, \framework is the first approach that achieves nontrivial certified robustness on the large-scale ImageNet dataset.
For instance, our framework achieves
$30.4\%$
certified robust accuracy against
rotation attack~(within $\pm 30^\circ$)
on ImageNet.
Moreover, to consider a broader range of transformations, we show \framework is also robust against adaptive attacks and unforeseen image corruptions such as CIFAR-10-C and ImageNet-C.

\end{abstract}

\begin{CCSXML}
<ccs2012>
   <concept>
       <concept_id>10002944.10011123.10011676</concept_id>
       <concept_desc>General and reference~Verification</concept_desc>
       <concept_significance>500</concept_significance>
   </concept>
   <concept>
       <concept_id>10002978.10002986.10002990</concept_id>
       <concept_desc>Security and privacy~Logic and verification</concept_desc>
       <concept_significance>500</concept_significance>
       </concept>
   <concept>
       <concept_id>10002978.10003022.10003023</concept_id>
       <concept_desc>Security and privacy~Software security engineering</concept_desc>
       <concept_significance>300</concept_significance>
       </concept>
   <concept>
       <concept_id>10010147.10010257.10010293.10010294</concept_id>
       <concept_desc>Computing methodologies~Neural networks</concept_desc>
       <concept_significance>300</concept_significance>
       </concept>
 </ccs2012>
\end{CCSXML}

\ccsdesc[500]{General and reference~Verification}
\ccsdesc[500]{Security and privacy~Logic and verification}
\ccsdesc[300]{Security and privacy~Software security engineering}
\ccsdesc[300]{Computing methodologies~Neural networks}

\keywords{Certified Robustness, Semantic Transformation Attacks} %

\maketitle


\section{Introduction}

\label{sec:intro}

Recent advances in machine learning~(ML) have enabled a plethora of applications in tasks such as image recognition~\cite{he2015delving} and game playing~\cite{silver2017go, moravvcik2017deepstack}.
Despite all of these advances, ML systems are also found exceedingly vulnerable to adversarial attacks:
image recognition systems can be adversarially misled~\cite{szegedy2013intriguing,goodfellow2014explaining,chaowei2018generating}, and malware detection systems can be evaded easily~\cite{liang2019improving,xu2016automatically}.

\begin{figure}[!t]
    \centering
    \includegraphics[width=0.4\textwidth]{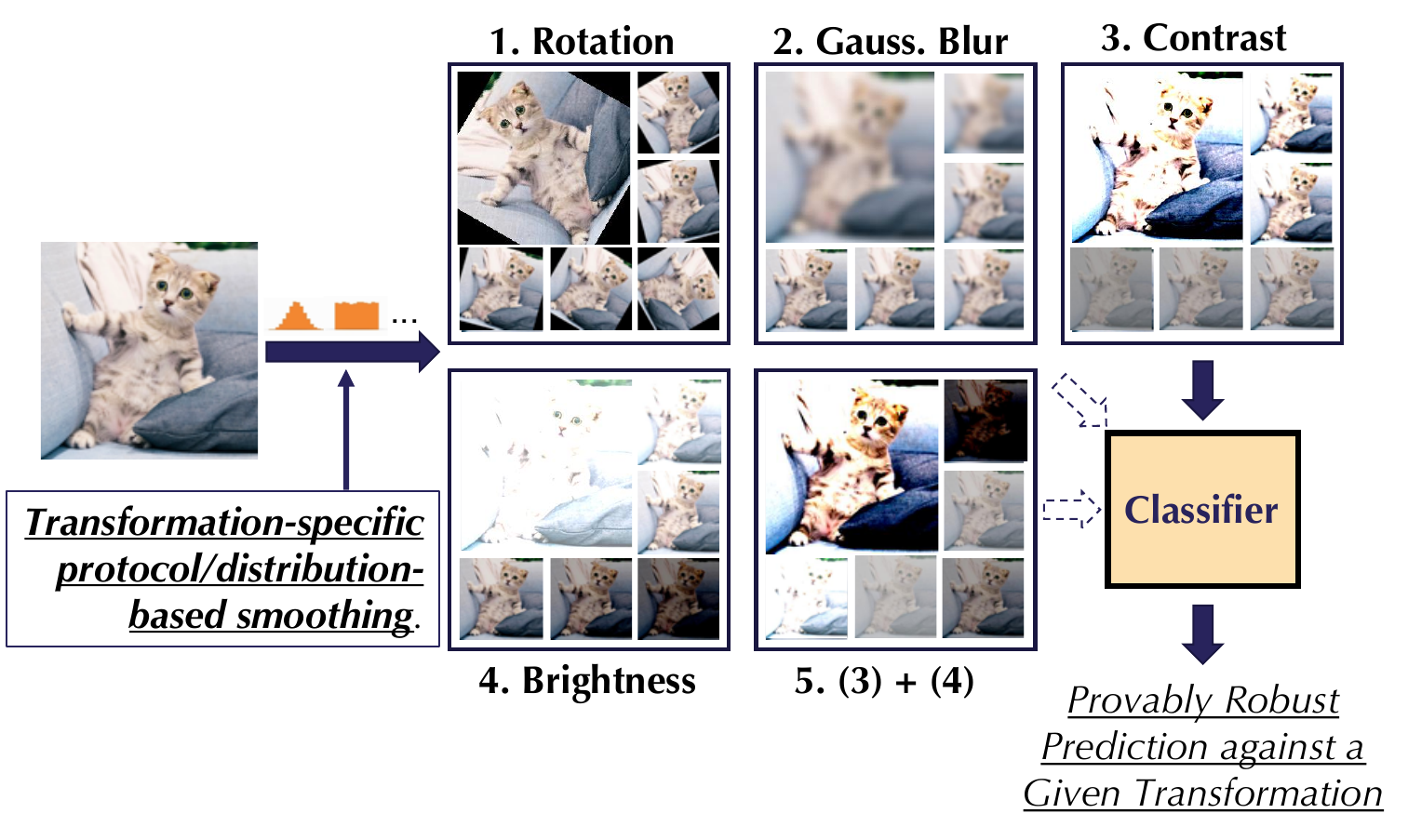}
    \vspace{-2mm}
    \caption{ \small
    {\textit{\frameworkfull}, a general robustness certification framework against various semantic transformations.} We develop a range of different transformation-specific smoothing protocols and various techniques to provide
    substantially better certified robustness bounds than state-of-the-art approaches on large-scale datasets.}
    \label{fig:transformation}
\end{figure}

The existing practice of security in ML has fallen into the cycle where new empirical defense techniques are proposed~\cite{xingjun2018characterizing,tramer2017ensemble}, followed by new adaptive attacks breaking these defenses~\cite{chaowei2018spatially,goodfellow2014explaining,ivan2018robust,athalye2018obfuscated}.
In response, recent research has attempted to provide \textit{provable robustness guarantees} for an ML model.
Such certification usually follows the form that the ML model is provably robust against arbitrary adversarial attacks, as long as the  perturbation magnitude is below a certain threshold. Different certifiable defenses and robustness verification approaches have provided nontrivial robustness guarantees against $\ell_p$ perturbations where the perturbation is bounded by small $\ell_p$ norm~\cite{kolter2017provable,tjeng2018evaluating,linyi2019robustra,cohen2019certified, xu2020automatic}.

However, certifying robustness only against $\ell_p$ perturbations is not sufficient for attacks based on semantic transformation.
For instance, image rotation, scaling, and other semantic transformations are able to mislead ML models effectively~\cite{engstrom2019exploring,ghiasi2020breaking,chaowei2018spatially, gokhale2020attribute}.
These transformations are common and practical~\cite{pei2017deepxplore,hendrycks2019benchmarking, bulusu2020anomalous}.
For example, it has been shown~\cite{hosseini2018semantic} that brightness/contrast attacks can achieve 91.6\% attack success on CIFAR-10, and 71\%-100\% attack success rate on ImageNet~\cite{hendrycks2019benchmarking}.
In practice, brightness- and contrast-based attacks have been demonstrated to be successful in autonomous driving~\cite{pei2017deepxplore,tian2018deeptest}.
These attacks incur large $\ell_p$-norm differences and are thus beyond the reach of existing certifiable defenses~\cite{salman2019convex,kumar2020curse,hayes2020extensions,blum2020random}.
{\em Can we provide provable robustness guarantees
against these semantic transformations?}

In this paper, we propose theoretical and empirical analyses to certify the ML robustness against a wide range of semantic transformations beyond $\ell_p$ bounded perturbations.
The theoretical analysis is nontrivial given different properties of the transformations,
and our empirical results set the new state-of-the-art robustness certification for a range of semantic transformations, exceeding existing work by a large margin.
In particular, we propose {\frameworkfull} ---
a \textit{general framework} based on function smoothing providing certified robustness for ML models against a range of adversarial transformations (Figure~\ref{fig:transformation}).
To this end, we first categorize semantic transformations as either \textit{resolvable} or \textit{differentially resolvable}.
We then provide certified robustness against \textit{resolvable} transformations, which include
brightness, contrast, translation, Gaussian blur, and their composition.
Second, we develop novel
certification techniques for \textit{differentially resolvable} transformations (e.g., rotation and scaling),
based on the \textit{building block} that we have developed for resolvable
transformations.

    For resolvable transformations, we leverage the framework to \emph{jointly} reason about (1)~function smoothing under different smoothing distributions and (2)~the properties inherent to each specific transformation.
    To our best knowledge, this is the first time that the interplay between smoothing distribution and semantic transformation has been analyzed as existing work~\cite{li2019certified,cohen2019certified,yang2020randomized} that studies different smoothing distributions considers only $\ell_p$ perturbations.
    Based on this analysis, we find that against certain transformations such as Gaussian blur, exponential distribution is better than Gaussian smoothing, which is commonly used in the $\ell_p$-case.

    For differentially resolvable transformations, such as rotation, scaling, and their composition with other transformations, the common \textbf{challenge} is that they naturally induce \textit{interpolation error}.
    Existing work~\cite{balunovic2019certifying,fischer2019statistical} can provide
    robustness guarantees but it
    cannot rigorously certify robustness for
    ImageNet-scale data.
    We develop a collection of novel techniques, including stratified sampling and Lipschitz bound computation to provide a tighter and sound upper bound for the interpolation error.
    We integrate these novel techniques into our \framework framework and further propose a progressive-sampling-based strategy to accelerate the robustness certification.
    We show that these techniques comprise a scalable and general framework for certifying robustness against differentially resolvable transformations.

    We conduct extensive experiments to evaluate the proposed certification framework and show that our framework significantly outperforms the state-of-the-art on different datasets including the large-scale ImageNet against a series of practical semantic transformations.
    In summary, this paper makes the following \underline{contributions}:
\begin{enumerate}[label=(\emph{\arabic*}),leftmargin=*]
    \vspace{-0.3em}
    \item We propose a \textit{general} function smoothing framework, \framework, to certify ML robustness against general semantic transformations.
    \item We categorize common adversarial semantic transformations
    in the literature into \textit{resolvable} and \textit{differentially resolvable} transformations and show that our framework is general enough to
    certify both types of transformations.
    \item We theoretically explore different smoothing strategies by sampling from different distributions including non-isotropic Gaussian, uniform, and Laplace distributions. We show that for specific transformations, such as Gaussian blur, smoothing with exponential distribution is better.
    \item We propose a pipeline, \textbf{\framework-DR}, including a stratified sampling approach, an effective Lipschitz-based bounding technique, and a progressive sampling strategy to provide rigorous, tight, and scalable robustness certification against differentially resolvable transformations such as rotation and scaling.
    \item We conduct extensive experiments and show that our framework \framework can provide significantly higher certified robustness compared with the state-of-the-art approaches, against a range of semantic transformations and their composition on MNIST, CIFAR-10, and ImageNet.
    \item
    We show that \framework also provides much higher empirical robustness against adaptive attacks and unforeseen corruptions such as CIFAR-10-C and ImageNet-C.
    \vspace{-0.3em}
\end{enumerate}
    The code implementation and all trained models are publicly available at \texttt{\small \url{https://github.com/AI-secure/semantic-randomized-smoothing}}.

\section{Background}
\label{sec:background}
We next provide an overview of different semantic transformations and explain the intuition behind the randomized smoothing~\cite{cohen2019certified} that has been proposed to certify the robustness against $\ell_p$ perturbations.

\noindent\textbf{Semantic Transformation Based Attacks.}
Beyond adversarial $\ell_p$ perturbations, a realistic threat model is given by image transformations that preserve the underlying semantics.
Examples for these types of transformations include changes to contrast or brightness levels, or rotation of the entire image.
These attacks share three common characteristics:
(1)~The perturbation stemming from a successful semantic attack typically has higher $\ell_p$ norm compared to $\ell_p$-bounded attacks.
However, these attacks still preserve the underlying semantics~(a car image rotated by $10^\circ$ still contains a car).
(2)~These attacks are governed by a low-dimensional parameter space.
For example, the rotation attack  chooses a single-dimensional rotation angle.
(3) Some of such adversarial transformations would lead to high interpolation error (e.g., rotation), which makes it challenging to certify.
Nevertheless, these types of attacks can also cause significant damage~\cite{hosseini2018semantic,hendrycks2019benchmarking} and pose realistic threats for practical ML applications such as autonomous driving~\cite{pei2017deepxplore}.
We remark that our proposed framework can be extended to certify robustness against other attacks sharing these  characteristics even beyond the image domain, such as GAN-based attacks against ML based malware detection~\cite{hu2017generating,wang2019evading}, where a limited dimension of features of the malware can be manipulated in order to preserve the malicious functionalities and such perturbation usually incurs large $\ell_p$ differences for the generated instances.

\noindent\textbf{Randomized Smoothing.}
    On a high level, randomized smoothing~\cite{lecuyer2019certified,li2019certified,cohen2019certified} provides a way to certify robustness based on randomly perturbing inputs to an ML model.
    The intuition behind such  randomized classifier is that noise smoothens the decision boundaries and suppresses regions with high curvature.
    Since adversarial examples aim to exploit precisely these high curvature regions, the vulnerability to this type of attack is reduced.
    Formally, a base classifier $h$ is smoothed by adding noise $\varepsilon$ to a test instance. The prediction of the smoothed classifier is then given by the most likely prediction under this smoothing distribution. Subsequently, a tight robustness guarantee can be obtained, based on the noise variance and the class probabilities of the smoothed classifier. It is guaranteed that, as long as the $\ell_2$ norm of the perturbation is bounded by a certain amount, the prediction on an adversarial vs. benign input will stay the same.
    This technique provides a powerful framework to study the robustness of classification models against adversarial attacks for which the primary figure of merit is a low $\ell_p$ norm with a simultaneously high success rate of fooling the classifier~\cite{Dvijotham2020A,yang2020randomized}.
    However, semantic transformations incur large $\ell_p$ perturbations,
    which renders classical randomized smoothing infeasible~\cite{kumar2020curse,hayes2020extensions,blum2020random},
    making it of great importance to generalize randomized smoothing to this kind of threat model.

\section{Threat Model \& \framework Overview}

  In this section, we first introduce the notations used throughout this paper. We then define our \textbf{threat model}, the \textbf{defense goal} and outline the \textbf{challenges} for certifying the robustness against semantic transformations.
  Finally, we will provide a brief \textbf{overview} of our \framework certification framework.

  \begin{figure}
      \centering
      \includegraphics[width=0.85\linewidth]{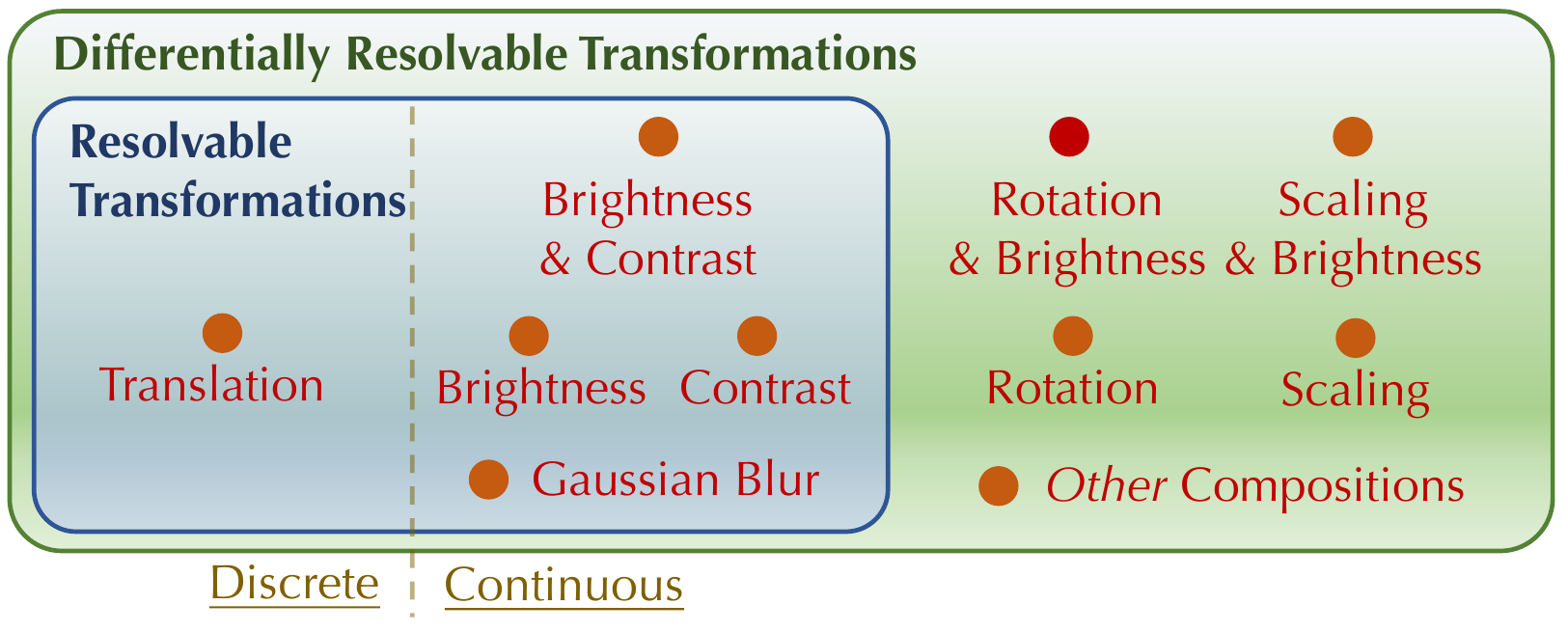}
      \caption{\small We provide strong robustness certification for both resolvable transformations and differentially revolvable transformations. These two categories cover common adversarial semantic transformations.}
      \label{fig:venn-diagram}
      \vspace{-4mm}
  \end{figure}

    We denote the space of inputs as $\gX\subseteq\R^d$ and the set of labels as $\gY=\{1,\,\ldots,\,C\}$ (where $C\geq2$ is the number of classes).
    The set of transformation parameters is given by $\gZ\subseteq\R^m$ (e.g., rotation angles).
    We use the notation $\Prob_X$ to denote the probability measure induced by the random variable $X$ and write $f_X$ for its probability density function.
    For a set $S$, we denote its probability by $\Prob_X(S)$.
    A classifier is defined to be a deterministic function $h$ mapping inputs $x\in\gX$ to classes $y\in\gY$.
    Formally, a classifier learns a conditional probability distribution $p(y\lvert\,x)$ over labels and outputs the class that maximizes $p$, i.e., $h(x) = \arg\max_{y\in\gY}p(y\lvert\,x)$.


\subsection{Threat Model and Certification Goal} \label{subsec:threat-model-cert-goal}
    \paragraph{Semantic Transformations}
    We model semantic transformations as deterministic functions $\phi\colon\gX\times\gZ\to\gX$, transforming an image $x\in\gX$ with a $\gZ$-valued parameter $\alpha$.
    For example, we use $\phi_R(x,\alpha)$ to model a rotation of the image $x$ by $\alpha$ degrees counter-clockwise with bilinear interpolation.
    We further partition semantic transformations into two different categories, namely resolvable and differentially resolvable transformations.
    We will show that these two categories could cover commonly known semantic attacks.
    This categorization depends on whether or not it is possible to write the composition of the transformation $\phi$ with itself as applying the same transformation just once, but with a different parameter, i.e., whether for any $\alpha,\,\beta\in\cZ$ there exists $\gamma$ such that $\phi(\phi(x,\,\alpha),\,\beta) = \phi(x,\,\gamma)$. Precise definitions are given in Sections~\ref{sec:resolvable} and~\ref{sec:differentially}. Figure~\ref{fig:venn-diagram} presents an overview of the transformations considered in this work.

    \paragraph{Threat model}
    We consider an adversary that launches a semantic attack, a type of data evasion attack, against a given classification model $h$ by applying a semantic transformation $\phi$ with parameter $\alpha$ to an input image $x \to \phi(x,\,\alpha)$.
    We allow the attacker to choose an \emph{arbitrary} parameter $\alpha$ within a predefined (attack) parameter space $\gS$.
    For instance, a na\`ive adversary who randomly changes brightness from within $\pm 40\%$ is able to reduce the accuracy of a state-of-the-art ImageNet classifier from $74.4\%$ to $21.8\%$ (Table~\ref{tab:main-attack}).
    While this attack is an  example random adversarial attack, our threat model also covers other types of semantic attacks and we provide the first taxonomy for semantic attacks (i.e., resolvable and differentially resolvable) in detail in Sections~\ref{sec:resolvable} and~\ref{sec:differentially}.

    \paragraph{Certification Goal}
    Since the only degree of freedom that a semantic adversary has is the parameter,
    \textbf{our goal} is to characterize a set of parameters for which the model under attack is guaranteed to be robust. Formally, we wish to find a set $\gS_{\mathrm{adv}}\subseteq\gZ$ such that, for a classifier $h$ and adversarial transformation  $\phi$, we have
    \begin{equation}
        h(x) = h(\phi(x,\,\alpha))\hspace{1em}\forall\,\alpha\in\gS_{\mathrm{adv}}.
    \end{equation}

    \paragraph{Challenges for Certifying Semantic Transformations}
    Certifying ML robustness against semantic transformations is nontrivial and requires careful analysis. We identify the following two main challenges that we aim to address in this paper:
    \begin{enumerate}
        \item[\bf(C1)] The absolute difference between semantically transformed images in terms of $\ell_p$-norms is typically high. This factor causes existing certifiable defenses against $\ell_p$ bounded perturbations to be  inapplicable~\cite{salman2019convex,kumar2020curse,hayes2020extensions,blum2020random}.
        \item[\bf(C2)] Certain semantic transformations incur additional \textit{interpolation errors}.
        To derive a robustness certificate, it is required to bound these errors, an endeavour that has been proven to be hard both analytically and computationally. This challenge applies to transformations that involve interpolation, such as rotation and scaling.
    \end{enumerate}
    We remark that it is in general not feasible to use brute-force approaches such as grid search to enumerate all possible transformation parameters~(e.g., rotation angles) since the parameter spaces are typically continuous. Given that different transformations have their own unique properties, it is crucial to provide a \textit{unified} framework that takes into account transformation-specific properties in a general way.

    To address these challenges, we generalize randomized smoothing via our proposed \textit{function smoothing framework} to certify arbitrary input transformations via different smoothing distributions, paving the way to robustness certifications that go beyond $\ell_p$ perturbations. This result addresses challenge \textbf{(C1)} in a unified way.
    Based on this generalization and depending on specific transformation properties, we address challenge \textbf{(C2)} and propose a series of smoothing strategies and computing techniques that provide robustness certifications for a diverse range of transformations.

    We next introduce our generalized function smoothing framework and show how it can be leveraged to certify semantic transformations.
    We then categorize transformations as either \textit{resolvable} transformations~(\Cref{sec:resolvable}) such as Gaussian blur, or \textit{differentially resolvable} transformations~(\Cref{sec:differentially}) such as rotations.

  \subsection{Framework Overview}
    \label{subsec:overview}
    \begin{figure}
      \centering
      \includegraphics[width=\linewidth]{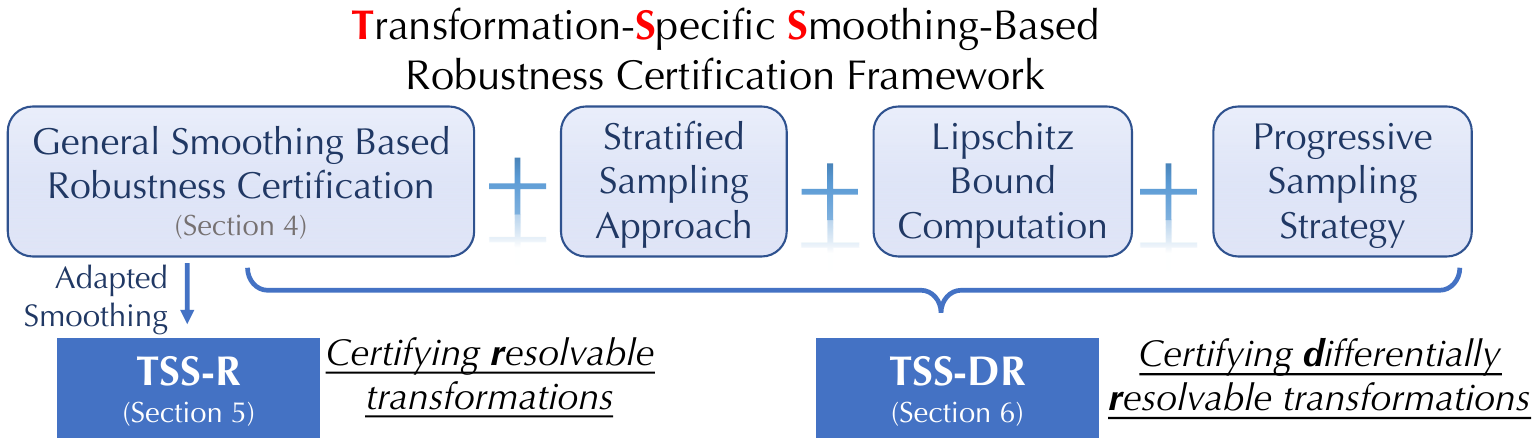}
      \vspace{-1em}
      \caption{\small An overview of \framework.}
      \label{fig:method-overview}
      \vspace{-1em}
    \end{figure}

    An overview of our proposed framework \framework is given in~\Cref{fig:method-overview}.
    We propose the function smoothing framework, a generalization of randomized smoothing, to provide robustness certifications under general smoothing distributions~(\Cref{sec:framework}). This generalization enables us to smooth the model on specific transformation dimensions.
    We then consider two different types of transformation attacks.
    For \textit{resolvable} transformations, using function smoothing framework, we adapt different smoothing strategies for specific transformations and propose \textbf{\framework-R} (Section~\ref{sec:resolvable}). We show that some smoothing distributions are more suitable for certain transformations.
    For \textit{differentially resolvable} transformations, to address the interpolation error, we combine function smoothing with the proposed stratified sampling approach and a novel technique for Lipschitz bound computation to compute a rigorous upper bound of the error.
    We then develop a progressive sampling strategy to accelerate the certification. This pipeline is termed \textbf{\framework-DR}, and we provide details and the theoretical groundwork in \Cref{sec:differentially}.


\section{TSS: Transformation Specific Smoothing based Certification}
    \label{sec:framework}
    In this section, we extend randomized smoothing and propose a function smoothing framework \framework~(\frameworkfull) for certifying robustness against semantic transformations. This framework constitutes the main building block for \textbf{\framework-R} and \textbf{\framework-DR} against specific types of adversarial transformations.

    Given an arbitrary base classifier $h$, we construct a smoothed classifier $g$ by randomly transforming inputs with parameters sampled from a smoothing distribution. Specifically, given an input $x$, the smoothed classifier $g$ predicts the class that $h$ is most likely to return when the input is perturbed by some random transformation. We formalize this intuition in the following definition.
    \begin{defn}[$\varepsilon$-Smoothed Classifier]
        \label{def:smoothed_classifier}
    	Let $\phi\colon\gX\times\gZ\to\gX$ be a transformation, $\varepsilon\sim\Prob_\varepsilon$ a random variable taking values in $\gZ$ and let $h\colon\gX\to\cY$ be a base classifier. We define the $\varepsilon$-smoothed classifier $g\colon\gX\to\cY$ as $g(x;\varepsilon) = \arg\max_{y\in\gY} q(y\lvert\,x;\,\varepsilon)$
    	where $q$ is given by the expectation with respect to the smoothing distribution $\varepsilon$, i.e.,
    	\begin{equation}
    	    \label{eq:smoothed_classifier}
    		q(y\lvert\,x;\,\varepsilon):=\E(p(y\lvert\,\phi(x,\,\varepsilon))).
    	\end{equation}
    \end{defn}
    A key to certifying robustness against a specific transformation is the choice of transformation $\phi$ in the definition of the smoothed classifier~\eqref{eq:smoothed_classifier}.
    For example, if the goal is to certify the Gaussian blur transformation, a reasonable choice is to use the same transformation in the smoothed classifier. However, for other types of transformations this choice does not lead to the desired robustness certificate, and a different approach is required. In Sections~\ref{sec:resolvable} and~\ref{sec:differentially}, we derive approaches to overcome this challenge and certify robustness against a broader family of semantic transformations.

    \paragraph{General Robustness Certification}

        Given an input $x\in\gX$ and a random variable $\varepsilon$ taking values in $\gZ$,
        suppose that the base classifier $h$ predicts $\phi(x,\varepsilon)$ to be of class $y_A$ with probability at least $p_A$ and the second most likely class with probability at most $p_B$~(i.e., \eqref{eq:4}).
        Our goal is to derive a robustness certificate for the $\varepsilon$-smoothed classifier $g$, i.e., we aim to
        find a set of perturbation parameters $\gS_{\mathrm{adv}}$
        depending on $p_A,\,p_B$, and smoothing parameter $\varepsilon$ such that, for all possible perturbation  $\alpha\in\gS_{\mathrm{adv}}$, it is guaranteed that
        \begin{equation}
            \label{eq:robustness_guarantee}
            g(\phi(x,\,\alpha);\,\varepsilon) = g(x;\,\varepsilon)
        \end{equation}
        In other words, the prediction of the smoothed classifier can never be changed by applying the transformation $\phi$ with parameters $\alpha$ that are in the robust set $\gS_{\mathrm{adv}}$.
        The following theorem provides a generic robustness condition that we will subsequently leverage to obtain conditions on transformation parameters.
        In particular, this result addresses the first challenge \textbf{(C1)} for certifying semantic transformations since this result allows to certify robustness beyond additive perturbations.
        \begin{thm}
            \label{thm:main}
            Let $\varepsilon_0\sim\Prob_0$ and $\varepsilon_1\sim\Prob_1$ be $\gZ$-valued random variables with probability density functions $f_0$ and $f_1$ with respect to a measure $\mu$ on $\gZ$ and let $\phi\colon\gX\times\gZ\to\gX$ be a semantic transformation.
            Suppose that $y_A = g(x;\,\varepsilon_0)$ and let $p_A,\,p_B\in[0,\,1]$ be bounds to the class probabilities, i.e.,
            \begin{equation}
                q(y_A\lvert\,x,\,\epsilon_0) \geq p_A > p_B \geq \max_{y\neq y_A} q(y\lvert\,x,\,\epsilon_0).
                \label{eq:4}
            \end{equation}
            For $t\geq0$, let $\underline{S}_t,\,\overline{S}_t \subseteq \gZ$ be the sets defined as $\underline{S}_t := \{f_1/f_0 < t\}$ and $\overline{S}_t := \{f_1/f_0 \leq t\}$ and define the function $\xi\colon[0,\,1] \to [0,\,1]$ by
            \begin{equation}
                \begin{gathered}
                    \xi(p) := \sup\{\Prob_1(S)\colon\,\underline{S}_{\tau_p}\subseteq S \subseteq\overline{S}_{\tau_p}\}\\
                    \mathrm{where}\hspace{1em}
                    \tau_p := \inf\{t\geq 0\colon\,\Prob_0(\overline{S}_{t}) \geq p\}.
                \end{gathered}
            \end{equation}
            Then, if the condition
            \begin{equation}
                \label{eq:robustness_condition}
                \xi(p_A) + \xi(1-p_B) > 1
            \end{equation}
            is satisfied, then it is guaranteed that $g(x;\,\varepsilon_1) = g(x;\,\varepsilon_0)$.
        \end{thm}
        A detailed proof for this statement is provided in Appendix~\ref{appendix:proof_thrms}.
        At a high level, the condition~\eqref{eq:4} defines a family of classifiers based on class probabilities obtained from smoothing the input $x$ with the distribution $\varepsilon_0$. Based on the Neyman Pearson Lemma from statistical hypothesis testing, shifting $\varepsilon_0 \to \varepsilon_1$ results in bounds to the class probabilities associated with smoothing $x$ with $\varepsilon_1$. For  class $y_A$, the lower bound is given by $\xi(p_A)$, while for any other class the upper bound is given by $1 - \xi(1 - p_B)$, leading to the the robustness condition $\xi(p_A) > 1 - \xi(1 - p_B)$.
        It is
        a more general version of what
        is proved by Cohen et al.~\cite{cohen2019certified}, and its
        generality allows us to analyze
        a larger family of threat models.
        Notice that it is not immediately clear how one can obtain the robustness guarantee~(\ref{eq:robustness_guarantee}) and deriving such a guarantee from Theorem~\ref{thm:main} is nontrivial.
        We will therefore explain in detail how this result can be instantiated to certify semantic transformations in Sections~\ref{sec:resolvable} and~\ref{sec:differentially}.

\section{TSS-R: R{\small esolveable} T{\small ransformations}}
\label{sec:resolvable}

    In this section, we define resolvable transformations and then show how Theorem~\ref{thm:main} is used to certify this class of semantic transformations. We then proceed to providing a robustness verification strategy for each specific transformation.
    In addition, we show how the generality of our framework allows us to reason about the best smoothing strategy for a given transformation, which is beyond the reach of related randomized smoothing based approaches~\cite{yang2020randomized,fischer2019statistical}.

        Intuitively, we call a semantic transformation resolvable if we can separate transformation parameters from inputs with a function that acts on parameters and satisfies certain regularity conditions.
        \begin{defn}[Resolvable transform]
            \label{def:resolvable_transform}
            A transformation $\phi\colon\gX\times\gZ\to\gX$ is called resolvable if for any $\alpha\in\gZ$ there exists a resolving function $\gamma_\alpha\colon\gZ\to\gZ$ that is injective, continuously differentiable, has non-vanishing Jacobian and for which
            \begin{equation}
                \phi(\phi(x,\,\alpha),\,\beta) = \phi(x,\,\gamma_\alpha(\beta))\hspace{1em}x\in\gX,\,\beta\in\gZ.
            \end{equation}
             Furthermore, we say that $\phi$ is additive, if $\gamma_\alpha(\beta) = \alpha + \beta$.
        \end{defn}

        The following result provides a more intuitive view on Theorem~\ref{thm:main}, expressing the condition on probability distributions as a condition on the transformation parameters.

        \begin{cor}
            \label{cor:main_thm_resolvable}
            Suppose that the transformation $\phi$ in Theorem~\ref{thm:main} is resolvable with resolving function $\gamma_\alpha$. Let $\alpha\in\gZ$ and set $\varepsilon_1:=\gamma_\alpha(\varepsilon_0)$ in the definition of the function $\xi$. Then, if $\alpha$ satisfies condition~(\ref{eq:robustness_condition}), it is guaranteed that $g(\phi(x,\alpha);\,\varepsilon_0)=g(x;\,\varepsilon_0)$.
        \end{cor}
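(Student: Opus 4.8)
The plan is to reduce the corollary to a direct application of Theorem~\ref{thm:main}. The only real work is to show that smoothing the base classifier at the \emph{transformed} point $\phi(x,\alpha)$ with noise $\varepsilon_0$ coincides with smoothing it at the \emph{original} point $x$ with the pushed-forward noise $\varepsilon_1=\gamma_\alpha(\varepsilon_0)$; once this identity is established, Theorem~\ref{thm:main} supplies the conclusion.

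First I would establish the key identity
\[
    g^{\varepsilon_0}(\phi(x,\alpha)) = g^{\varepsilon_1}(x),\qquad \varepsilon_1=\gamma_\alpha(\varepsilon_0).
\]
Unfolding Definition~\ref{def:smoothed_classifier} gives $g^{\varepsilon_0}(\phi(x,\alpha)) = \E\big(h(\phi(\phi(x,\alpha),\varepsilon_0))\big)$. Because $\phi$ is resolvable, $\phi(\phi(x,\alpha),\varepsilon_0) = \phi(x,\gamma_\alpha(\varepsilon_0)) = \phi(x,\varepsilon_1)$ holds pointwise, so the integrand equals $h(\phi(x,\varepsilon_1))$ and taking expectations yields $\E(h(\phi(x,\varepsilon_1))) = g^{\varepsilon_1}(x)$, as claimed. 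Since the entire probability vector is preserved, this already gives $\argmax_k g_k^{\varepsilon_0}(\phi(x,\alpha)) = \argmax_k g_k^{\varepsilon_1}(x)$.

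Next I would confirm that the hypotheses of Theorem~\ref{thm:main} are literally met for the pair $(\varepsilon_0,\varepsilon_1)$. This is exactly where the regularity assumptions in Definition~\ref{def:resolvable_transform} enter: because $\gamma_\alpha$ is injective, continuously differentiable, and has non-vanishing Jacobian, the change-of-variables formula produces a density $f_1(z) = f_0(\gamma_\alpha^{-1}(z))\,|\det J_{\gamma_\alpha^{-1}}(z)|$ for the law $\Prob_1$ of $\varepsilon_1$ with respect to $\mu$. With $f_0,f_1$ in hand the likelihood ratio $\Lambda=f_1/f_0$, the level sets $\underline{S_t},\overline{S_t}$, and the functions $\zeta,\xi$ are all well defined, and by assumption $\alpha$ makes the robustness condition~(\ref{eq:robustness_condition}) hold. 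Applying Theorem~\ref{thm:main} (whose confidence hypothesis on $g^{\varepsilon_0}$ at $x$ is inherited from the ambient setting) gives $\argmax_k g_k^{\varepsilon_1}(x) = \argmax_k g_k^{\varepsilon_0}(x)$. Chaining this with the identity of the previous step yields $\argmax_k g_k^{\varepsilon_0}(\phi(x,\alpha)) = \argmax_k g_k^{\varepsilon_0}(x)$, which is the assertion.

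The argument is short, and I expect the only genuinely delicate point to be the second step: verifying that $\varepsilon_1=\gamma_\alpha(\varepsilon_0)$ admits a density with respect to $\mu$, so that the likelihood-ratio and level-set machinery of Theorem~\ref{thm:main} apply verbatim. This is precisely what injectivity, continuous differentiability, and the non-vanishing Jacobian condition are designed to guarantee; one should in particular check that the change of variables is compatible with the reference measure $\mu$ used to define the densities (so that the pushforward $\Prob_1$ remains absolutely continuous and the ratio $\Lambda$ is meaningful where $f_0>0$). The resolvability identity itself, which carries the conceptual content of the corollary, is immediate from Definition~\ref{def:resolvable_transform}.
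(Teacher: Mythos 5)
Your proposal is correct and matches the paper's own proof essentially verbatim: both establish the identity $g^{\varepsilon_0}(\phi(x,\alpha)) = g^{\varepsilon_1}(x)$ from the resolvability relation $\phi(\phi(x,\alpha),\varepsilon_0)=\phi(x,\gamma_\alpha(\varepsilon_0))$, both invoke the change-of-variables (Jacobi) formula to obtain the density $f_1(z)=f_0(\gamma_\alpha^{-1}(z))\,\lvert\det J_{\gamma_\alpha^{-1}}(z)\rvert$ so that Theorem~\ref{thm:main} applies to the pair $(\varepsilon_0,\varepsilon_1)$, and both chain the resulting $\argmax$ equality with the identity to conclude. The only difference is the order of presentation, and your closing remark about absolute continuity of the pushforward with respect to $\mu$ is a fair observation that the paper also handles only implicitly via the Jacobi formula.
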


        This corollary implies that for resolvable transformations, after we choose the smoothing distribution for the random variable $\varepsilon_0$, we can infer the distribution of $\varepsilon_1 = \gamma_\alpha(\varepsilon_0)$.
        Then, by plugging in $\varepsilon_0$ and $\varepsilon_1$ into \Cref{thm:main}, we can derive an explicit robustness condition from \eqref{eq:robustness_condition} such that for any $\alpha$ satisfying this condition, we can certify the robustness.
        In particular, for additive transformations we have $\epsilon_1 = \gamma_\alpha(\epsilon_0) = \alpha + \epsilon_0$.
        For common smoothing distributions $\epsilon_0$ along with additive transformation, we derive robustness conditions in \Cref{adx-sec:distribution_proofs}.

        In the next subsection, we focus on specific resolvable transformations. For certain transformations, this result can be applied directly. However, for some transformations, e.g.,  the composition of brightness and contrast, more careful analysis is required.
        We remark that this corollary also serves a stepping stone to certifying more complex transformations that are in general not resolvable, such as rotations as we will present in \Cref{sec:differentially}.


        \subsection{Certifying Specific Transformations}
        Here we build on our theoretical results from the previous section and provide approaches to certifying a range of different semantic transformations that are resolvable.
        We state all results here and provide proofs in appendices.

        \subsubsection{Gaussian Blur} \label{sec:main_gaussian_blur}
        This transformation is widely used in image processing to reduce noise and image detail. Mathematically, applying Gaussian blur amounts to convolving an image with a Gaussian function
        \begin{equation}
            G_\alpha(k) = \frac{1}{\sqrt{2\pi\alpha}}\exp\left(-{k^2}/{(2\alpha)}\right)
        \end{equation}
        where $\alpha>0$ is the \emph{squared} kernel radius. For $x\in\gX$, we define Gaussian blur as the transformation $\phi_B\colon\gX\times\R_{\geq0}\to\gX$ where
        \begin{equation}
            \begin{aligned}
                \phi_B(x,\,\alpha) = x\,*\,G_\alpha
            \end{aligned}
        \end{equation}
        and $*$ denotes the convolution operator. The following lemma shows that Gaussian blur is an \emph{additive transform}. Thus, existing robustness conditions for additive transformations shown in \Cref{adx-sec:distribution_proofs} are directly applicable.
        \begin{lem}
            \label{lem:gaussian_blur_additive}
            The Gaussian blur transformation is additive, i.e., for any $\alpha,\,\beta \geq 0$, we have $\phi_B(\phi_B(x,\,\alpha),\,\beta) = \phi_B(x,\,\alpha + \beta)$.
        \end{lem}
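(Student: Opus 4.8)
The plan is to reduce the statement to the classical semigroup property of Gaussian convolution. First I would unfold the definition of $\phi_B$ on the left-hand side: since $\phi_B(x,\alpha) = x * G_\alpha$, applying the transform twice gives $\phi_B(\phi_B(x,\alpha),\beta) = (x * G_\alpha) * G_\beta$. Convolution is associative, so this equals $x * (G_\alpha * G_\beta)$. Hence it suffices to prove the kernel identity $G_\alpha * G_\beta = G_{\alpha+\beta}$, after which the claim follows by comparing with $\phi_B(x,\alpha+\beta) = x * G_{\alpha+\beta}$.

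For the kernel identity I would argue in the Fourier domain, which is the least computational route. The Fourier transform of $G_\alpha$ is again Gaussian, $\widehat{G_\alpha}(\omega) = \exp(-\alpha\omega^2/2)$, so by the convolution theorem
\begin{equation}
    \widehat{G_\alpha * G_\beta}(\omega) = \widehat{G_\alpha}(\omega)\,\widehat{G_\beta}(\omega) = \exp\!\left(-\tfrac{(\alpha+\beta)\omega^2}{2}\right) = \widehat{G_{\alpha+\beta}}(\omega).
\end{equation}
Injectivity of the Fourier transform then yields $G_\alpha * G_\beta = G_{\alpha+\beta}$. A direct alternative is to substitute the two densities into the convolution integral and complete the square in the exponent; the Gaussian integral over the cross term combines with the two normalizing factors $1/\sqrt{2\pi\alpha}$ and $1/\sqrt{2\pi\beta}$ to produce exactly $1/\sqrt{2\pi(\alpha+\beta)}$, reproducing both the shape and the normalization of $G_{\alpha+\beta}$. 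Note that the variance convention here ($G_\alpha$ has variance $\alpha$, matching the exponent $-k^2/2\alpha$) is what makes the parameters add rather than, say, the standard deviations.

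The step I expect to require the most care is not the algebra but the modeling assumption under which associativity and the continuous semigroup identity transfer to the setting $\gX \subseteq \R^d$ used here. If the blur is implemented as a genuine continuous convolution (or on a periodic/infinite grid), associativity and the Fourier argument apply verbatim; if instead one works with truncated or sampled kernels on a finite image, the identity holds only up to boundary and discretization effects. I would therefore state and prove the result at the level of the idealized continuous convolution consistent with the definition of $G_\alpha$ given above, and observe that this is the regime in which additivity is exact---so that $\phi_B$ is resolvable in the sense of Definition~\ref{def:resolvable_transform}, with resolving function $\gamma_\alpha(\beta) = \alpha + \beta$, which is trivially injective, smooth, and of unit Jacobian.
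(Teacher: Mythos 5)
Your proposal is correct and follows essentially the same route as the paper's proof: reduce via associativity of convolution to the kernel identity $G_\alpha * G_\beta = G_{\alpha+\beta}$, and establish that identity through the convolution theorem for the Fourier transform (the paper uses the $e^{-2\pi i\omega y}$ convention, yielding $\gF(G_\alpha)(\omega)=\exp(-2\pi^2\omega^2\alpha)$, but this is only a normalization difference). Your added remarks on the continuous-versus-discrete modeling assumption and the resolving function $\gamma_\alpha(\beta)=\alpha+\beta$ are sensible but do not change the argument.
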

        We notice that the Gaussian blur transformation uses only  positive parameters. We therefore consider uniform noise on $[0,\,a]$ for $a>0$, folded Gaussians and exponential distribution for smoothing.

        \subsubsection{Brightness and Contrast} \label{sec:main_brightness_and_contrast}
        This transformation first changes the brightness of an image by adding a constant value $b\in\R$ to every pixel, and then alters the contrast by multiplying each pixel with a positive factor $e^k$, for some $k\in\R$. We define the brightness and contrast transformation $\phi_{BC}\colon\gX\times\R^2\to\gX$ as
        \begin{align}
            (x,\,\alpha) \mapsto \phi_{BC} (x,\,{\alpha}) := e^{k}(x + b),\hspace{1em}{\alpha}=(k,\,b)^T
        \end{align}
        where $k,\,b\in\R$ are contrast and brightness parameters, respectively.
        We remark that $\phi_{BC}$ is resolvable; however, it is not additive and applying Corollary~\ref{cor:main_thm_resolvable} directly using the resolving function $\gamma_\alpha$ leads to analytically intractable expressions.
        On the other hand, if the parameters $k$ and $b$ follow independent Gaussian distributions, we can circumvent this difficulty as follows.
        Given $\varepsilon_0 \sim \gN(0,\,\mathrm{diag}(\sigma^2,\,\tau^2))$, we compute the bounds $p_A$ and $p_B$ to the class probabilities associated with the classifier $g(x;\,\varepsilon_0)$, i.e., smoothed with $\varepsilon_0$.
        In the next step, we identify a distribution $\varepsilon_1$ with the property that
        we can map any lower bound $p$ of $q(y\lvert\,x;\,\varepsilon_0)$ to a lower bound on $q(y\lvert\,x;\,\varepsilon_1)$.
        Using $\epsilon_1$ as a bridge,
        we then derive a robustness condition, which is based on Theorem~\ref{thm:main}, and obtain the guarantee that $g(\phi_{BC}(x,\,\alpha);\,\varepsilon_0) = g(x;\,\varepsilon_0)$ whenever the transformation parameters satisfy this condition.
        The next lemma shows that the distribution $\varepsilon_1$ with the desired property (lower bound to the classifier smoothed with $\varepsilon_1$) is given by a Gaussian with transformed covariance matrix.
        \begin{lem}
            \label{lem:bc_lower_bound}
            Let $x\in\gX$, $k\in\R$, $\varepsilon_0\sim\gN(0,\,\mathrm{diag}(\sigma^2,\,\tau^2))$ and $\varepsilon_1\sim\gN(0,\,\mathrm{diag}(\sigma^2,\,e^{-2k}\tau^2))$.
            Suppose that $q(y\lvert\,x;\,\varepsilon_0)\geq p$ for some $p\in[0,\,1]$ and $y\in\gY$.
            Let $\Phi$ be the cumulative density function of the standard Gaussian.
            Then
            \begin{align}
                \label{eq:bc_lower_bound}
                q(y\lvert\,x;\,\varepsilon_1) \geq
                    \begin{cases}
                         2\Phi\left(e^k\Phi^{-1}\left(\frac{1+p}{2}\right)\right) - 1 & k \leq 0\\
                        2\left(1 - \Phi\left(e^k\Phi^{-1}(1 - \frac{p}{2})\right)\right) &k>0.
                    \end{cases}
            \end{align}
        \end{lem}
        Now suppose that $g(\cdot;\,\varepsilon_0)$ makes the prediction $y_A$ at $x$ with probability at least $p_A$. Then, the preceding lemma tells us that the prediction confidence of $g(\cdot;\,\varepsilon_1)$ satisfies the lower bound~\eqref{eq:bc_lower_bound} for the same class. Based on these confidence levels, we instantiate Theorem~\ref{thm:main} with the random variables $\varepsilon_1$ and ${\alpha} + \varepsilon_1$ to get a robustness condition.
        \begin{lem}
            \label{lem:bc_certification}
            Let $\varepsilon_0$ and $\varepsilon_1$ be as in Lemma~\ref{lem:bc_lower_bound} and suppose that
            \begin{equation}
                q(y_A\lvert\,x;\,\varepsilon_1) \geq \tilde{p}_A > \tilde{p}_B \geq \max_{y\neq y_A}q(y\lvert\,x;\,\varepsilon_1).
            \end{equation}
            Then it is guaranteed that $y_A = g(\phi_{BC}(x,\,{\alpha});\,\varepsilon_0)$ as long as ${\alpha}=(k,\,b)^T$ satisfies
            \begin{equation}
                \label{eq:bc_robustness_bound}
                \sqrt{\left({k}/{\sigma}\right)^2 + \left({b}/({e^{-k}\tau})\right)^2} < \frac{1}{2}\left(\Phi^{-1}\left(\Tilde{p}_A\right)
                -\Phi^{-1}\left(\tilde{p}_B\right)\right).
            \end{equation}
        \end{lem}
        In practice, we apply this lemma by replacing $\tilde{p}_A$ and $\tilde{p}_B$ in~\eqref{eq:bc_robustness_bound} with the bound computed from~\eqref{eq:bc_lower_bound} based on the class probability bounds $p_A$ and $p_B$ associated with the classifier $g(x;\,\varepsilon_0)$.

        \subsubsection{Translation} \label{sec:main_translation}
        Let $\Bar{\phi}_T\colon\gX\times\Z^2\to\gX$ be the transformation moving an image $k_1$ pixels to the right and $k_2$ pixels to the bottom with reflection padding. In order to handle continuous noise distributions, we define the translation transformation $\phi_T\colon\gX\times\R^2\to\gX$ as $\phi_T(x,\,\alpha) =\Bar{\phi}_T(x,\,[\alpha])$ where $[\cdot]$ denotes rounding to the nearest integer, applied element-wise. We note that $\phi_T$ is an \emph{additive transform}, allowing us to directly apply Corollary~\ref{cor:main_thm_resolvable} and derive robustness conditions. We note that if we use black padding instead of reflection padding, the transformation is not additive. However, since the number of possible translations is finite, another possibility is to use a simple brute-force approach that can handle black padding, which has already been studied extensively~\cite{mohapatra2019towards,pei2017towards}.

        \subsubsection{Composition of Gaussian Blur, Brightness, Contrast, and Translation} \label{sec:main_gaussian_blur_brightness_contrast_translation}

        Interestingly, the composition of all these four transformations is still resolvable.
        Thus, we are able to derive the explicit robustness condition for this composition based on \Cref{cor:main_thm_resolvable}, as shown in details in \Cref{appendix:transformation_compositions}.
        Based on this robustness condition, we compute practically meaningful robustness certificates as we will present in experiments in \Cref{sec:exp}.

        \subsubsection{Robustness Certification Strategies}

        \label{sec:computing_certification_summary_resolvable}
        \begin{table}[t]
            \centering
            \caption{\small Summary of the Robustness Certification Strategies for Resolvable Transformations.}
            \resizebox{\linewidth}{!}{
            \begin{tabular}{c|l|l}
                \toprule
                Transformation & Step 1 & Step 2 \\
                \midrule
                Gaussian Blur & \multirow{7}{*}{\shortstack[l]{Compute $p_A$\\ and $p_B$ \\ with\\ Monte-Carlo\\ Sampling}} & Check via \Cref{cor_apx:exponential}~(in \Cref{adx-sec:distribution_proofs})  \\
                \cline{1-1}\cline{3-3}
                Brightness & & Check via \Cref{cor:gaussian_noise}~(in \Cref{adx-sec:distribution_proofs})  \\
                \cline{1-1}\cline{3-3}
                Translation & & Check via \Cref{cor:gaussian_noise}~(in \Cref{adx-sec:distribution_proofs}) \\
                \cline{1-1}\cline{3-3}
                Brightness & & \multirow{2}{*}{\shortstack[l]{Compute $\Tilde{p}_A$ via \Cref{lem:bc_lower_bound}, then check\\ via \Cref{lem:bc_certification}~(detail in \Cref{appendix:brightness_contrast_detail})}} \\
                and Contrast & & \\
                \cline{1-1}\cline{3-3}
                Gaussian Blur, Brightness, & & \multirow{2}{*}{\shortstack[l]{Compute $\Tilde{p}_A$ via \Cref{cor:BTBC_lb}, then check\\ via \Cref{lem:BTBC_bound}~(detail in \Cref{appendix:BTBC_detail})}} \\
                Contrast and Translation & \\
                \bottomrule
            \end{tabular}
            }
            \label{tab:computing-summary-resolvable}
        \end{table}

        With these robustness conditions, for a given clean input $x$, a transformation $\phi$, and a set of parameters $\gS_{\mathrm{adv}}$, we certify the robustness of the smoothed classifier $g$ with two steps: 1)~estimate $p_A$ and $p_B$~(Equation~\eqref{eq:4}) with Monte-Carlo sampling and high-confidence bound following Cohen et al.~\cite{cohen2019certified};
        and 2)~leverage the robustness conditions to obtain the certificate.
        A summary for each transformation including the used robustness conditions are shown in \Cref{tab:computing-summary-resolvable}.

        \begin{figure}[t]
            \centering
            \includegraphics[width=.9\linewidth]{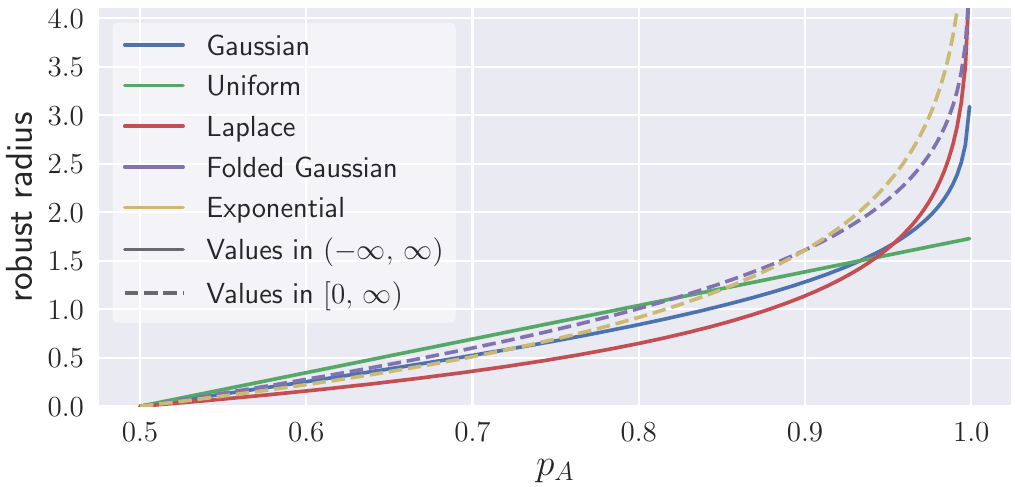}
            \vspace{-4mm}
            \caption{\small Robust radius comparison for different noise distributions, each with unit variance and dimension.}
            \label{fig:bound_compare}
        \end{figure}

        \subsection{Properties of Smoothing Distributions}
        \label{sec:noise_distributions}

        The robustness condition in Theorem~\ref{thm:main} is generic and leaves a degree of freedom with regards to which smoothing distribution should be used.
        Previous work mainly provides results for cases in which this distribution is Gaussian~\cite{cohen2019certified,Zhai2020MACER}, while it is nontrivial to extend it to
        other distributions.
        Here, we aim to answer this question and provide results for a range of distributions, and discuss their differences.
        As we will see, {\em for
        different scenarios, different distributions behave
        differently and can certify different radii.}
        We instantiate Theorem~\ref{thm:main} with an arbitrary transformation $\phi$ and with $\varepsilon_1 := \alpha + \varepsilon_0$ where $\varepsilon_0$ is the smoothing distribution and $\alpha$ is the transformation parameter.
        The robust radius is then derived by solving condition~\eqref{eq:robustness_condition} for $\alpha$.

        \Cref{fig:bound_compare} illustrates robustness radii associated with different smoothing distributions, each scaled to have unit variance. The bounds are derived in~\Cref{adx-sec:distribution_proofs} and summarized in~\Cref{adx-table:smoothing-distributions}.
        We emphasize that the contribution of this work is not merely these results on different smoothing distributions but, more importantly, the \textit{joint study between different smoothing mechanisms and different semantic transformations.}
        To compare the different radii for a fixed base classifier, we assume that
        \textit{the smoothed classifier
        $g(\cdot;\,\varepsilon)$ always has the same confidence $p_A$ for noise distributions with equal variance.}
        Finally, we provide the following conclusions and we will verify them empirically in \Cref{subsec:different-smoothing-dist}.
        \begin{enumerate}[leftmargin=*]
            \item \textit{Exponential noise can provide larger robust radius.} We notice that smoothing with exponential noise generally allows for larger adversarial perturbations than other distributions. We also observe that, while all distributions behave similarly for low confidence levels, it is only non-uniform noise distributions that converge toward $+\infty$ when $p_A\to 1$ and exponential noise converges quickest.
            \item \textit{Additional knowledge can lead to larger robust radius.} When we have additional information on the transformation, e.g., all perturbations in Gaussian blur are positive, we can take advantage of this additional information and certify larger radii. For example, under this assumption, we can use folded Gaussian noise for smoothing instead of a standard Gaussian, resulting in a larger radius.
        \end{enumerate}


\section{TSS-DR: D{\small ifferentially} R{\small esolvable} T{\small ransformations}}
\label{sec:differentially}

    As we have seen, our proposed function smoothing framework can directly deal with resolvable transformations.
    However, due to their use of interpolation, some important transformations do not fall into this category, including rotation, scaling, and their composition with resolvable transformations.
    In this section, we show that they belong to the more general class termed \emph{differentially resolvable transformations} and to address challenge \textbf{(C2)}, we propose a novel pipeline \textbf{\framework-DR} to provide rigorous robustness certification using our function smoothing framework as a central building block.

        Common semantic transformations such as rotations and scaling do not fall into the category of resolvable transformations due to their use of interpolation.
        To see this issue, consider for example the rotation transformation denoted by $\phi_R$.
        As shown in~\Cref{fig:aliasing_issue}, despite very similar, the image rotated by $30^\circ$ is different from the image rotated separately by $15^\circ$ and then again by $15^\circ$. The reason
        is the bilinear interpolation occurring during
        the
        rotation.
        Therefore, if the attacker inputs $\phi_R(x,15)$, the smoothed classifier in \Cref{sec:resolvable} outputs
        \begin{equation}
            g(\phi_R(x,15);\,\varepsilon) = \arg\max_{y\in\gY}\E\left(p(y\lvert\,\phi_R(\phi_R(x,15),\varepsilon)) \right),
        \end{equation}
        which is a weighted average over the predictions of the base classifier on the randomly perturbed set $\{\phi_R(\phi_R(x,15),\alpha)\colon \alpha\in\gZ\}$.
        However, in order to use Corollary~\ref{cor:main_thm_resolvable} and to reason about whether this prediction agrees with the prediction on the clean input (i.e., the average prediction on $\{\phi_R(x,\alpha)\colon \alpha\in\gZ\}$), we need $\phi_R$ to be resolvable.
        As it turns out, this is not the case for transformations that involve interpolation such as rotation and scaling.

        To address these issues, we define a transformation $\phi$ to be \emph{differentially resolvable}, if it can be written in terms of a resolvable transformation $\psi$ and a parameter mapping $\delta$.
        \begin{defn}[Differentially resolvable transform]
            \label{def:differentially_resolvable}
            Let $\phi\colon\gX\times\gZ_\phi\to\gX$ be a transformation with noise space $\gZ_\phi$ and let $\psi\colon\gX\times\gZ_\psi\to\gX$ be a resolvable transformation with noise space $\gZ_\psi$. We say that $\phi$ can be resolved by $\psi$ if for any $x\in\gX$ there exists function $\delta_x\colon\gZ_\phi\times \gZ_\phi \to \gZ_\psi$ such that for any $\alpha \in \gZ_\phi$ and any $\beta\in\gZ_\phi$,
            \begin{equation}
                \phi(x,\,\alpha) = \psi(\phi(x,\,\beta),\,\delta_x(\alpha,\,\beta)).
            \end{equation}
        \end{defn}
        This definition leaves open a certain degree of freedom with regard to the choice of resolvable transformation $\psi$.
        For example, we can choose the resolvable transformation corresponding to additive noise $(x,\,\delta) \mapsto \psi(x,\,\delta) := x + \delta$,
        which lets us write any transformation $\phi$ as $\phi(x,\,\alpha) = \phi(x,\,\beta) + (\phi(x,\,\alpha) - \phi(x,\,\beta)) = \psi(\phi(x,\,\beta),\,\delta)$ with $\delta = (\phi(x,\,\alpha) - \phi(x,\,\beta))$.
        In other words, $\phi(x,\alpha)$ can be viewed as first being transformed to $\phi(x,\beta)$ and then to $\phi(x,\beta) + \delta$.

        \begin{figure}[tbp]
          \centering
          \savebox{\boundingimage}{\includegraphics[width=.45\linewidth]{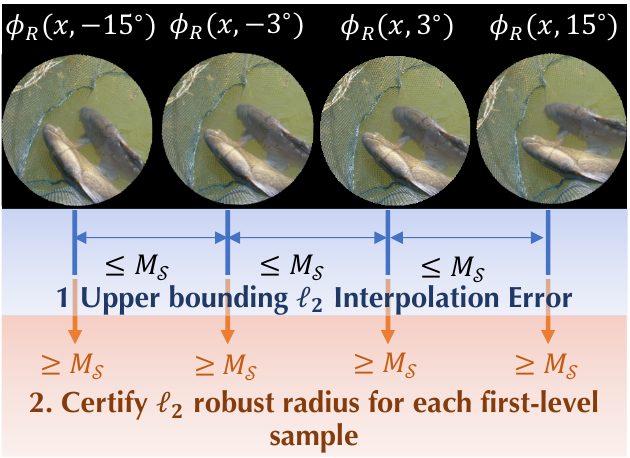}}%
          \begin{subfigure}[b]{0.45\linewidth}
            \centering
            \usebox{\boundingimage}
            \caption{\label{fig:bounding-illustration}}
          \end{subfigure}
          \hfill
          \begin{subfigure}[b]{0.52\linewidth}
            \centering
            \raisebox{\dimexpr.5\ht\boundingimage-.5\height}{%
              \includegraphics[width=\linewidth]{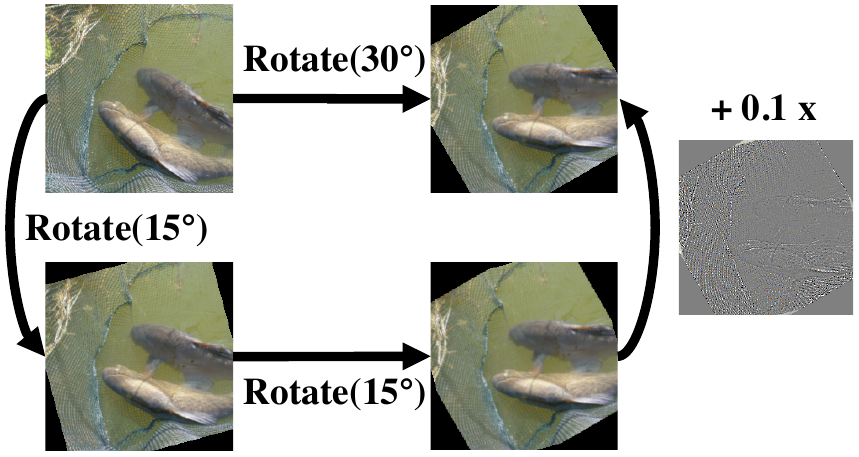}}
            \caption{\label{fig:aliasing_issue}}
          \end{subfigure}
          \vspace{-2mm}
          \caption{\small(\subref{fig:bounding-illustration}) High-level illustration of our robustness certification pipeline \textbf{\framework-DR} for differentially resolvable transformations;~(\subref{fig:aliasing_issue}) interpolation error.}
        \end{figure}

    \subsection{Overview of TSS-DR}
        \label{subsec:dif-resolvable-robust-certification}

        Here, we derive a general robustness certification strategy for differentially resolvable transformations.
        Suppose that our goal is to certify the robustness against a transformation $\phi$ that can be resolved by $\psi$ and for transformation parameters from the set $\gS \subseteq \gZ_{\phi}$. To that end, we first sample a set of parameters $\{\alpha_i\}_{i=1}^N\subseteq\gS$, and transform the input (with those sampled parameters) that yields $\{\phi(x,\alpha_i)\}_{i=1}^N$.
        In the second step, we compute the class probabilities for each transformed input $\phi(x,\,\alpha_i)$ with the classifier smoothed with the resolvable transformation $\psi$.
        Finally, the intuition is that, if every $\alpha\in\gS$ is close enough to one of the sampled parameters, then the classifier is guaranteed to be robust against parameters from the set $\gS$.
        In the next theorem, we show the existence of such a ``proximity set'' for general $\delta_x$.
        \begin{thm}
            \label{thm:main2}
            Let $\phi\colon\gX\times\gZ_\phi\to\gX$ be a transformation that is resolved by $\psi\colon\gX\times\gZ_\psi\to\gX$. Let $\varepsilon\sim\Prob_\varepsilon$ be a $\gZ_\psi$-valued random variable and suppose that the smoothed classifier $g: \gX \to \gY$ given by $q(y\lvert\,x;\varepsilon) = \E(p(y\lvert\,\psi(x,\,\varepsilon)))$ predicts $g(x;\,\varepsilon) = y_A = \argmax_y q(y\lvert\,x;\varepsilon)$.
            Let $\gS\subseteq\gZ_\phi$ and $\{\alpha_i\}_{i=1}^N\subseteq\gS$ be a set of transformation parameters such that for any $i$, the class probabilities satisfy
            \begin{equation}
                q(y_A\lvert\,\phi(x,\,\alpha_i);\,\varepsilon) \geq p_A^{(i)}\geq p_B^{(i)} \geq \max_{y\neq y_A} q(y\lvert\,\phi(x,\,\alpha_i);\,\varepsilon).
            \end{equation}
            Then there exists a set $\Delta^*\subseteq\gZ_\psi$ with the property that, if for any $\alpha\in\gS,\,\exists\alpha_i$ with $\delta_x(\alpha,\,\alpha_i)\in\Delta^*$, then it is guaranteed that
            \begin{equation}
                q(y_A\lvert\,\phi(x,\,\alpha);\varepsilon) > \max_{y\neq y_A}q(y\lvert\,\phi(x,\,\alpha);\varepsilon).
            \end{equation}
        \end{thm}
        In the theorem, the smoothed classifier $g(\cdot;\,\varepsilon)$ is based on the resolvable transformation $\psi$ that serves as a starting point to certify the target transformation $\phi$.
        To certify $\phi$ over its parameter space $\gS$, we input $N$ transformed samples $\phi(x,\alpha_i)$ to the smoothed classifier $g(\cdot;\,\varepsilon)$.
        Then, we get $\Delta^*$, the certified robust parameter set for resolvable transformation $\psi$.
        This $\Delta^*$ means that for any $\phi(x,\alpha_i)$, if we apply the transformation $\psi$ with any parameter $\delta \in \Delta^*$, the resulting instance $\psi(\phi(x,\alpha_i),\delta)$ is robust for $g(\cdot;\varepsilon)$.
        Since $\phi$ is resolvable by $\psi$, i.e., for any $\alpha\in \gS$, there exists an $\alpha_i$ and $\delta \in \Delta^*$ such that $\phi(x,\alpha) = \psi(\phi(x,\alpha_i),\delta)$, we can assert that for any $\alpha\in \gS$, the output of $g(\cdot;\varepsilon)$ on $\phi(x,\alpha)$ is robust.

        The key of using this theorem for a specific transformation is to choose the resolvable transformation $\psi$ that can enable a tight calculation of $\Delta^*$ under a specific way of sampling $\{\alpha_i\}_{i=1}^N$.
        First, we observe that a large family of transformations including rotation and scaling can be resolved by the additive transformation $\psi\colon\gX\times\gX\to\gX$ defined by $(x,\,\delta)\mapsto x + \delta$.
        Indeed, any transformation whose pixel value changes are continuous~(or with finite discontinuities) with respect to the parameter changes are differentially resolvable---they all can be resolved by the additive transformation.
        Choosing isotropic Gaussian noise $\varepsilon\sim\gN(0,\,\sigma
        ^2\Id_d)$ as smoothing noise then leads to the condition that the maximum $\ell_2$-interpolation error between the interval $\gS=[a,\,b]$ (which is to be certified) and the sampled parameters $\alpha_i$ must be bounded by a radius $R$. This result is shown in the next corollary, which is derived from Theorem~\ref{thm:main2}.
        \begin{cor}
            \label{cor:rotations_scaling_certificate}
            Let $\psi(x,\,\delta) = x + \delta$ and let $\varepsilon\sim\gN(0,\,\sigma^2 \Id_d)$. Furthermore, let $\phi$ be a transformation with parameters in $\gZ_\phi\subseteq\R^m$ and let $\gS\subseteq\gZ_\phi$ and $\{\alpha_i\}_{i=1}^N\subseteq\gS$. Let $y_A\in\cY$ and suppose that for any $i$, the $\varepsilon$-smoothed classifier defined by $q(y\lvert\,x;\varepsilon):=\E(p(y\lvert\,x+\varepsilon))$ has class probabilities that satisfy
            \begin{equation}
                 q(y_A\lvert\,\phi(x,\,\alpha_i);\,\varepsilon) \geq p_A^{(i)} \geq p_B^{(i)} \geq \max_{y\neq y_A}q(y\lvert\,\phi(x,\,\alpha_i);\,\varepsilon).
             \end{equation}
            Then it is guaranteed that $\forall\alpha\in\gS\colon\,y_A = \argmax_y q(y\lvert\,\phi(x,\,\alpha);\varepsilon)$ if the maximum interpolation error
            \begin{equation}
                \label{eq:max_interpolation_error}
                M_{\gS}:=\max_{\alpha\in\gS}\min_{1\leq i \leq N}\left\|\phi(x,\,\alpha) - \phi(x,\,\alpha_i)\right\|_2
            \end{equation}
            \begin{equation}
                \label{eq:general_Ms}
                \begin{gathered}
                    \text{satisfies}\quad
                    M_{\gS} < R:=\frac{\sigma}{2}\min_{1\leq i \leq N}\left(\Phi^{-1}\left(p_A^{(i)}\right)-\Phi^{-1}\left(p_B^{(i)}\right)\right).
                \end{gathered}
            \end{equation}
        \end{cor}
        In a nutshell, this corollary shows that if the smoothed classifier classifies
        all
        samples of transformed inputs $\{\phi(x,\alpha_i)\}_{i=1}^N$ consistent with the original input and the smallest gap between
        confidence levels $p_A^{(i)}$ and $p_B^{(i)}$ is large enough, then it is guaranteed to make the same prediction on transformed inputs $\phi(x,\alpha)$ for \emph{any} $\alpha\in\gS$.

        The main challenge now lies in computing a tight and scalable upper bound $M \geq M_{\gS}$. Given this bound, a set of transformation parameters $\gS$ can then be certified by computing $R$ in~\eqref{eq:general_Ms} and checking that $R > M_{\gS}$.
        With this methodology, we address challenge \textbf{(C2)} and provide means to certify transformations that incur interpolation errors.
        \Cref{fig:bounding-illustration} illustrates this methodology on a high level for the rotation transformation as an example.
        In the following, we present the general methodology that provides an upper bound of the interpolation error $M_\gS$ and provide closed form expressions for rotation and scaling.
        In \Cref{appendix:transformation_compositions}, we further extend this methodology to certify transformation compositions such as rotation + brightness change + $\ell_2$ perturbations.

        We remark that dealing with the interpolation error has already been tried before~\cite{balunovic2019certifying,fischer2019statistical}.
        However, these approaches either leverage explicit linear or interval bound propagation -- techniques that are either not scalable or not tight enough.
            Therefore, on large datasets such as ImageNet, they can provide only  limited certification~(e.g., against certain random attack instead of any attack).

        \begin{figure}[!t]
            \centering
            \includegraphics[width=0.85\linewidth]{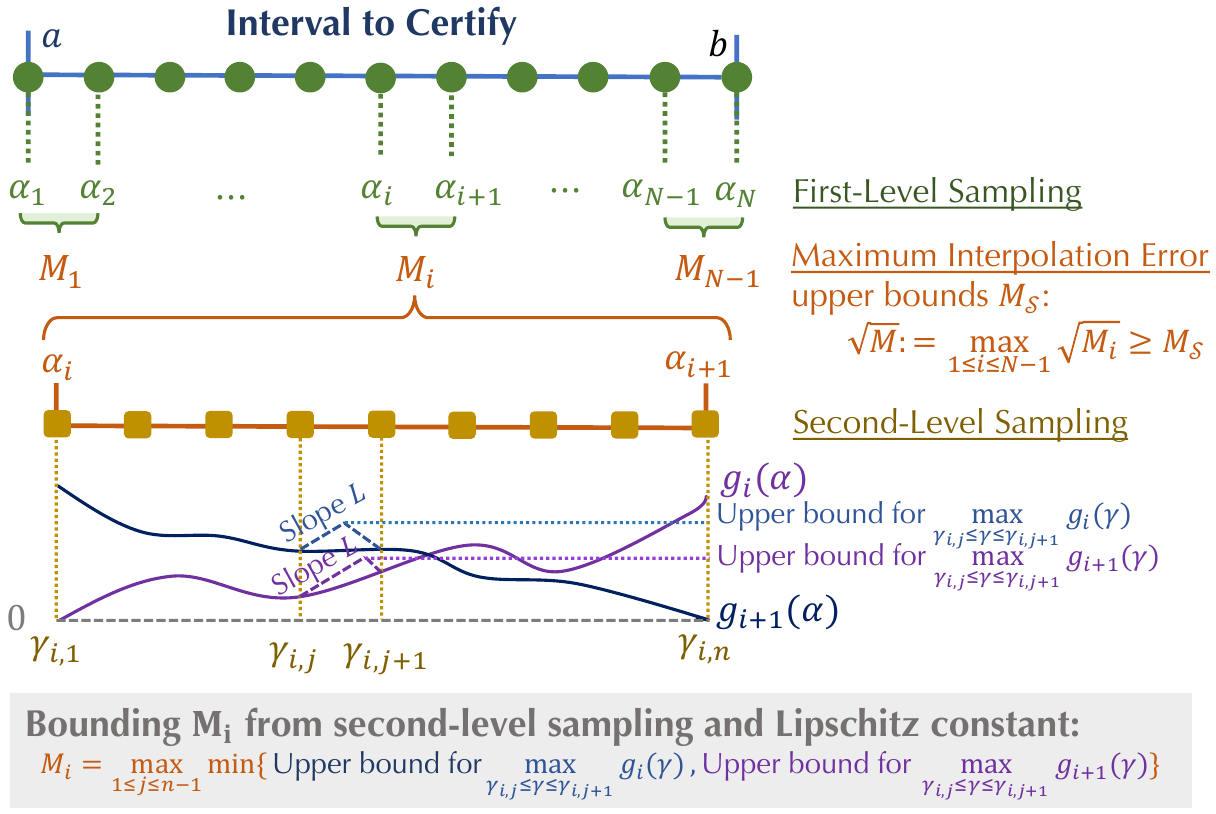}
            \caption{\small An overview of our interpolation error bounding technique based on stratified sampling and Lipschitz computation.}
            \label{fig:stratified-sampling}
        \end{figure}

    \subsection{Upper Bounding the Interpolation Error}
    \label{subsec:interpolation-bound-computation}
    Here, we present the general methodology to compute a rigorous upper bound of the interpolation error introduced in Corollary~\ref{cor:rotations_scaling_certificate}.
    The methodology presented here is based on stratified sampling and is of a general nature; an explicit computation is shown for the case of rotation and scaling toward the end of this subsection.

    Let $\gS = [a,\,b]$ be an interval of transformation parameters that we wish to certify and let $\{\alpha_i\}_{i=1}^N$ be parameters dividing $\gS$ uniformly, i.e.,
    \begin{equation}
        \alpha_i = a + (b-a)\cdot\frac{i-1}{N-1},\hspace{1em}i=1,\,\ldots,\,N. \label{eq:unif-in-a-b}
    \end{equation}
    The set of these parameters corresponds to the first-level samples in stratified sampling. With respect to these first-level samples, we define the functions $g_i\colon[a,\,b]\to\R_{\geq 0}$ as
    \begin{equation}
        \label{eq:g_i_functions_def}
        \alpha\mapsto g_i(\alpha) := \|\phi(x,\alpha) - \phi(x,\alpha_i)\|_2^2
    \end{equation}
    corresponding to the squared $\ell_2$ interpolation error between the image $x$ transformed with $\alpha$ and $\alpha_i$, respectively. For each first-level interval $[\alpha_i,\,\alpha_{i+1}]$ we look for an upper bound $M_i$ such that
    \begin{equation}
        M_i \geq \max_{\alpha_i \leq \alpha \leq \alpha_{i+1}}\min\{g_i(\alpha),\,g_{i+1}(\alpha)\}.
    \end{equation}
    It is easy to see that $\max_{1\leq i \leq N - 1} M_i\geq M_{\gS}^2$ and hence setting
    \begin{equation}
        \sqrt{M}:=\max_{1\leq i \leq N - 1} \sqrt{M_i}
        \label{eq:sqrt-M}
    \end{equation}
    is a valid upper bound to $M_{\gS}$. The problem has thus reduced to computing the upper bounds $M_i$ associated with each first-level interval $[\alpha_i,\,\alpha_{i+1}]$. To that end, we now continue with a second-level sampling within the interval $[\alpha_i,\,\alpha_{i+1}]$ for each $i$.
    Namely, let $\{\gamma_{i,j}\}_{j=1}^n$ be parameters dividing $[\alpha_i,\,\alpha_{i+1}]$ uniformly, i.e.,
    \begin{equation}
        \gamma_{i,j} = \alpha_i + (\alpha_{i+1}-\alpha_i)\cdot\frac{j-1}{n-1},\hspace{1em}j=1,\,\ldots,\,n.
        \label{eq:gamma-i-j}
    \end{equation}
    Now, suppose that $L$ is a global Lipschitz constant for all functions $\{g_i\}_{i=1}^N$. By definition, for any $1\leq i \leq N-1$, $L$ satisfies
    \begin{equation}
        \label{eq:lipschitz_constant}
        \resizebox{\linewidth}{!}{
        $
            L \ge \max\left\{\underset{c,d \in [\alpha_i, \alpha_{i+1}]}{\max} \left|\frac{g_i(c) - g_i(d)}{c-d}\right|,\underset{c,d \in [\alpha_i, \alpha_{i+1}]}{\max} \left| \frac{g_{i+1}(c) - g_{i+1}(d)}{c-d} \right| \right\}.
        $
        }
    \end{equation}
    In the following, we will derive explicit expressions for $L$ for rotation and scaling.
    Given the Lipschitz constant $L$, one can show the following closed-form expression for $M_i$:
    \begin{equation}
        \label{eq:M_i_expression}
        \begin{split}
            M_i &= \frac{1}{2}\max_{1\leq j \leq n-1}\bigg(\,\min\left\{g_i(\gamma_{i,j}) + g_i(\gamma_{i,j+1}),\right.\\&\left.\hspace{2em}g_{i+1}(\gamma_{i,j}) + g_{i+1}(\gamma_{i,j+1})\right\}\,\bigg) + L\cdot\frac{b-a}{(N-1)(n-1)}.
        \end{split}
    \end{equation}
    An illustration of this bounding technique using stratified sampling is shown in~\Cref{fig:stratified-sampling}.
    We notice that, as the number $N$ of first-level samples is increased, the interpolation error $M_i$ becomes smaller by shrinking the sampling interval $[\alpha_i, \alpha_{i+1}]$; similarly, increasing the number of second-level samples $n$ makes the upper bound of the interpolation error $M_i$ tighter since the term $L (b-a) / \left((N-1)(n-1)\right)$ decreases. Furthermore, it is easy to see that as $N\to\infty$ or $n\to\infty$ we have $M \to M_{\gS}^2$, i.e., our interpolation error estimation is \emph{asymptotically tight}.
    Finally, this tendency also highlights an important advantage of our two-level sampling approach: without stratified sampling, it is required to sample $N\times n$ $\alpha_i$'s in order to achieve the same level of approximation accuracy. As a consequence, these $N \times n$ $\alpha_i$'s in turn require to evaluate the smoothed classifier in~\Cref{cor:rotations_scaling_certificate} $N\times n$ times, compared to just $N$ times in our case.

    It thus remains to find a way to efficiently compute the Lipschitz constant $L$ for different transformations. In the following, we derive closed form expressions for rotation and scaling transformations.

        \begin{figure}[!t]
            \centering
            \includegraphics[width=0.85\linewidth]{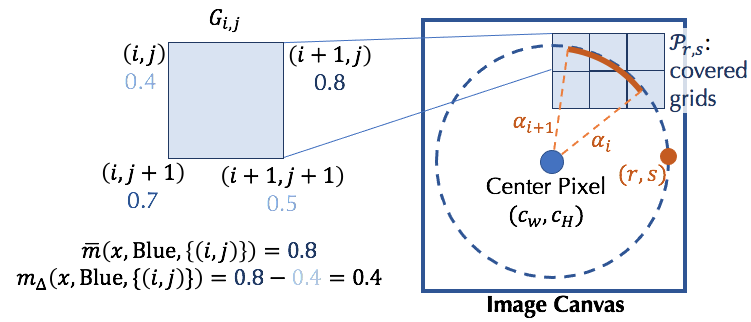}
            \caption{\small An illustration of Grid Pixel Generator $G_{i,j}$, Color Extractors $\bar m$ and $m_{\Delta}$~(take blue channel as example), and the set $\gP_{r,s}$.}
            \label{fig:grid-extractors}
            \vspace{-4mm}
        \end{figure}

    \subsection{Computing the Lipschitz Constant}
        \label{subsec:computing-lipschitz-constant}
        Here, we derive a global Lipschitz constant $L$ for the functions $\{g_i\}_{i=1}^N$ defined in~\eqref{eq:g_i_functions_def}, for rotation and scaling transformations.
        In the following, we define $K$-channel images of width $W$ and height $H$ to be tensors $x\in\R^{K\times W \times H}$ and define the region of valid pixel indices as $\Omega:=[0,\,W-1]\times[0,\,H-1] \cap \N^2$. Furthermore, for $(r,\,s)\in \Omega$, we define $d_{r,s}$ to be the $\ell_2$-distance to the center of an image, i.e.,
        \begin{equation}
            d_{r,s}=\sqrt{\left(r-(W-1)/2\right)^2 + \left(s - (H-1)/2\right)^2}.
        \end{equation}
        For ease of notation we make the following definitions that are illustrated in~\Cref{fig:grid-extractors}.
        \begin{defn}[Grid Pixel Generator]
            \label{def:grid_pixel_generator}
            For pixels $(i,\,j)\in\Omega$, we define the grid pixel generator $G_{ij}$ as
            \begin{equation}
               G_{ij}:=\{(i,\,j),\,(i+1,\,j),\,(i,\,j+1),\,(i+1,\,j+1)\}.
            \end{equation}
        \end{defn}
        \begin{defn}[Max-Color Extractor]
            \label{def:max-color-extractor}
            We define the operator that extracts the channel-wise maximum pixel wise on a grid $S\subseteq\Omega$ as the map $\overline{m}\colon\R^{K\times W \times H}\times \{0,\ldots,K-1\} \times 2^{\Omega}\to \R$ with
            \begin{equation}
                \begin{aligned}
                    \overline{m}(x,\,k,\,S):= \max_{(i,j)\in S}(\max_{(r,s)\in G_{ij}} x_{k,r,s}).
                \end{aligned}
            \end{equation}
        \end{defn}
        \begin{defn}[Max-Color Difference Extractor]
            \label{def:max-color-difference-extractor}
            We define the operator that extracts the channel-wise maximum change in color on a grid $S\subseteq\Omega$ as the map $m_\Delta\colon\R^{K\times W \times H}\times \{0,\ldots,K-1\} \times 2^{\Omega}\to \R$ with
            \begin{equation}
                m_\Delta(x,\,k,\,S):= \max_{(i,j)\in S}(\max_{(r,s)\in G_{ij}} x_{k,r,s}-\min_{(r,s)\in G_{ij}} x_{k,r,s}).
            \end{equation}
        \end{defn}

        \paragraph{Rotation} \label{subsec:rotation}
        The rotation transformation is defined as rotating an image by an angle $\alpha$ counter-clock wise, followed by bilinear interpolation $I$.
        Clearly, when rotating an image, some pixels may be padded that results in a sudden change of pixel colors.
        To mitigate this issue, we apply black padding to all pixels that are outside the largest centered circle in a given image (see~\Cref{fig:bounding-illustration} for an illustration).
        We define the rotation transformation $\phi_R$ as the (raw) rotation $\tilde{\phi}_R$ followed by interpolation and the aforementioned preprocessing step $P$ so that $\phi_R = P \circ I \circ \tilde{\phi}_R$ and refer the reader to Appendix~\ref{sec:apx_transformation_details} for details.
        We remark that our certification is independent of different rotation padding mechanisms, since these padded pixels are all refilled by black padding during preprocessing.
        The following lemma provides a closed form expression for $L$ in~\eqref{eq:M_i_expression} for rotation.
        A detailed proof is given in Appendix~\ref{sec:apx_proofs_interpolation_bound}.
        \begin{lem}
            \label{lem:rotation_lipschitz}
            Let $x\in\R^{K\times W \times H}$ be a $K$-channel image and
            let $\phi_R = P \circ I \circ \tilde{\phi}_R$ be the rotation transformation.
            Then, a global Lipschitz constant $L$ for the functions $\{g_i\}_{i=1}^N$ is given by
            \begin{equation}
                \label{eq:lem_rotation_lipschitz}
                L_{r} = \max_{1\leq i \leq N-1}\sum_{k=0}^{K-1}\sum_{r,s\in V} 2 d_{r,s}\cdot m_\Delta(x,k,\gP_{r,s}^{(i)})\cdot \overline{m}(x,\,k,\,\gP_{r,s}^{(i)})
            \end{equation}
            where $V = \left\{(r,s)\in\N^2\lvert\,d_{r,s} < \frac{1}{2}(\min\left\{W,H\right\}-1)\right\}$. The set $\gP_{r,s}^{(i)}$ is given by all integer grid pixels that are covered by the trajectory of source pixels of $(r,s)$ when rotating from angle $\alpha_i$ to $\alpha_{i+1}$.
        \end{lem}

        \vspace{-2mm}
        \paragraph{Scaling} \label{subsec:scaling}
        In \Cref{appendix:lipschitz-bound-scaling} we introduce how to compute the Lipschitz bound for the scaling transformation and provide the certification.
        The process is similar to that for rotation.
        \vspace{-2mm}

        \paragraph{Computational complexity}
        We provide pseudo-code for computing bound $M$ in Appendix~\ref{apx:algorithms}.
        The algorithm is composed of two main parts, namely the computation of the Lipschitz constant $L$, and the computation of the interpolation error bound $M$ based on $L$.
        The former is of computational complexity $\gO(N\cdot KWH)$, and the latter is of $\gO(NR\cdot KWH)$, for both scaling and rotation.
        We note that $\cP_{r,s}$ contains only a constant number of pixels since each interval $[\alpha_i,\alpha_{i+1}]$ is small.
        Thus, the bulk of costs come from the transformation operation.
        We improve the speed by implementing a fast and fully-parallelized \texttt{C} kernel for rotation and scaling of images.
        As a result, on CIFAR-10, the algorithm takes less than $\SI{2}{s}$ on average with $10$ processes for rotation with $N = 556$ and $n = 200$ and the time for scaling is faster.
        We refer readers to \Cref{sec:exp} for detailed experimental evaluation.
        Also, we remark that the algorithm is model-independent.
        Thus, we can precompute $M$ for test set and reuse for any models that need a certification.

        \subsection{Discussion}
            \label{subsec:differentially-resolvable-discussion}
            Here, we briefly summarize the computation procedure of robustness certification, introduce an acceleration strategy---progressive sampling---and discuss the extensions beyond rotation and scaling.

            \vspace{-0.5em}
            \subsubsection{Computation of Robustness Certification}
                \label{subsec:computation-robust-certification}
                With the methodology mentioned above,
                for differentially resolvable transformations such as rotation and scaling, computing robustness certification follows two steps: (1)~computing the interpolation error bound $M$; (2)~generate transformed samples $\{\phi(x,\alpha_i)\}_{i=1}^N$, compute $p_A^{(i)}$ and $p_B^{(i)}$ for each sample, and check whether $M_{\gS} < R$ holds for each sample according to \Cref{cor:rotations_scaling_certificate}.

            \vspace{-0.5em}
            \subsubsection{Acceleration: Progressive Sampling}
                \label{subsec:progressive-sampling-with-interpolation-bound}
                In step (2) above, we need to estimate $p_A^{(i)}$ and $p_B^{(i)}$ for each sample $\phi(x,\alpha_i)$ to check whether $M_{\gS} < R$.
                In the brute-force approach,
                to obtain a high-confidence bound on $p_A^{(i)}$ and $p_B^{(i)}$, we typically sample $n_s = 10,000$ or more~\cite{cohen2019certified} then apply the binomial statistical test.
                In total, we thus need to sample the classifier's prediction $N \times n_s$ times, which is costly.

                To accelerate the computation, we design a \emph{progressive sampling strategy} from the following two insights: (1)~we only need to check whether $R > M_{\gS}$, but are not required to compute $R$ precisely; (2)~for any sample $\phi(x,\alpha_i)$ if the check fails, the model is not certifiably robust and there is no need to proceed.
                Based on (1), for the current $\phi(x,\,\alpha_i)$, we sample $n_s$ samples in batches and maintain  high-confidence lower bound of $R$ based on  existing estimation.
                Once the lower bound exceeds $M_{\gS}$ we proceed to the next $\phi(x,\,\alpha_{i+1})$.
                Based on (2), we terminate early if the check $R > M_{\gS}$ for the current $\phi(x,\,\alpha_i)$ fails.
                More details are provided
                in Appendix~\ref{apx:algorithms}.

            \vspace{-0.5em}
            \subsubsection{Extension to More Transformations}
                \label{subsec:interpolation-bound-compositions}
                For other transformations that involve interpolation, we can similarly compute the interpolation error bound using intermediate results in our above lemmas.
                For transformation compositions, we extend our certification pipeline for the composition of (1)~rotation/scaling with brightness, and (2)~rotation/scaling with brightness and $\ell_p$-bounded additive perturbations.
                These compositions simulate an attacker who does not precisely perform the specified transformation.
                We present these extensions in \Cref{appendix:scaling_rotation_brightness_detail} and \Cref{appendix:scaling_rotation_brightness_l2_detail} in detail, and in \Cref{appendix:discussion_on_more_compositions} we discuss how to analyze possible new transformations and then extend \framework to provide certification.

            \vspace{-0.5em}
\section{Experiments}
    \label{sec:exp}

    We validate our framework \framework by certifying robustness over semantic transformations experimentally. We compare with state of the art for each transformation, highlight our main results, and present some interesting findings and ablation studies.

            \vspace{-0.5em}
    \subsection{Experimental Setup}

        \subsubsection{Dataset}
            Our experiments are conducted on three classical image classification datasets: MNIST, CIFAR-10, and ImageNet.
            For all images, the pixel color is normalized to $[0,1]$.
            We follow common practice to resize and center cropping the ImageNet images to $224 \times 224$ size~\cite{torchvisionzoo,cohen2019certified,yang2020randomized,jeong2020consistency}.
            To our best knowledge, we are the \emph{first} to provide rigorous certifiable robustness against semantic transformations on the large-scale \emph{standard} ImageNet dataset.

        \vspace{-0.5em}
        \subsubsection{Model}
            The undefended model is very vulnerable even under simple random semantic attacks.
            Therefore, we apply existing data augmentation training~\cite{cohen2019certified} combined with consistency regularization~\cite{jeong2020consistency} to train the base classifiers.
            We then use the introduced smoothing strategies, to obtain the models for robustness certification.
            On MNIST and CIFAR-10, the models are trained from scratch while on ImageNet, we either finetune undefended models in \texttt{torchvision} library or finetune from state-of-the-art certifiably robust models against $\ell_2$ perturbations~\cite{salman2019provably}.
            Details are available in \Cref{adxsec:model-preparation}.
            We remark that our framework focuses on robustness certification and
            did not fully explore the training methods for improving the certified robustness or tune the hyperparameters.

        \vspace{-0.5em}
        \subsubsection{Implementation and Hardware}
            We implement our framework \framework based on \texttt{PyTorch}.
            We improve the running efficiency by tensor parallelism and embedding \texttt{C} modules.
            Details are available in \Cref{adxsec:implementation_details}.
            All experiments were run on $24$-core Intel Xeon Platinum 8259CL CPU and one Tesla T4 GPU with $\SI{15}{GB}$ RAM.

        \vspace{-0.5em}
        \subsubsection{Evaluation Metric}
            \label{subsec:evaluation-metric}
            \label{subsec:random-attack}
            On each dataset, we uniformly pick $500$ samples from the test set and evaluate all results on this \emph{test subset} following Cohen et al~\cite{cohen2019certified}.
            In line with related work~\cite{cohen2019certified,yang2020randomized,salman2019provably,jeong2020consistency}, we report the \textbf{certified
            robust accuracy} that is defined as the fraction of samples (within the test subset) that are both \emph{certified robust} and \emph{classified correctly}, and set the certification confidence level to $p=0.1\%$.
            We use $n_s=10^5$ samples to obtain a confidence lower bound $\underline{p_A}$ for resolvable transformations,
            and $n_s=10^4$ samples to obtain each $\underline{p_A}^{(i)}$ for differentially resolvable transformations.
            Due to Progressive Sampling~(\Cref{algo:progressive-sampling-cert}), the actual samples used for differentially resolvable transformations are usually far fewer than $n_s$.
            In addition, we report the \textbf{benign accuracy} in \Cref{adxsec:benign-acc} defined as the fraction of \emph{correctly classified} samples when no attack is present,
            and the \textbf{empirical robust accuracy}, defined as the fraction of samples in the test subset that are classified correctly under either a simple random attack~(following \cite{balunovic2019certifying,fischer2019statistical}) or two adaptive attacks~(namely Random+ Attack and PGD Attack).
            We introduce all these attacks in \Cref{adxsec:attack_details} and provide a detailed comparison in \Cref{appendix:exp-compare-of-adaptive-attacks}.
            Note that the empirical robust accuracy under any attacks is lower bounded by the certified accuracy.

        \vspace{-0.5em}
        \subsubsection{Notations for Robust Radii}
            In the tables, we use these notations:
            $\alpha$ for squared kernel radius for Gaussian blur;
            $\sqrt{\Delta x^2 + \Delta y^2}$ for translation distance;
            $b$ and $c$ for brightness shift and contrast change respectively as in $x \mapsto (1+c)x + b$;
            $r$ for rotation angle;
            $s$ for size scaling ratio;
            and $\|\delta\|_2$ for $\ell_2$ norm of additional perturbations.

        \vspace{-0.5em}
        \subsubsection{Vanilla Models and Baselines}
            \label{subsec:baselines}
            We compare with vanilla (undefended) models and baselines from related work.
            The vanilla models are trained to achieve high accuracy only on clean data.
            For fairness, on all datasets we use the same model architectures as in our approach.
            On the test subset, the \emph{benign accuracy} of vanilla models is $98.6\%$/$88.6\%$/$74.4\%$ on MNIST/CIFAR-10/ImageNet.
            We also report their empirical robust accuracy under attacks in \Cref{tab:main-attack}.
            Since vanilla models are not smoothed, we cannot have certified robust accuracy for them.
            In terms of baselines, we consider the approaches that provide certification against semantic transformations: DeepG~\cite{balunovic2019certifying}, Interval~\cite{singh2019abstract}, VeriVis~\cite{pei2017towards}, Semantify-NN~\cite{mohapatra2019towards}, and DistSPT~\cite{fischer2019statistical}.
            In~\Cref{adxsec:baselines}, we provide more detailed discussion and comparison with these baseline  approaches, and list how we run these approaches for fair comparison.

\begin{table*}[!t]
	\centering
	\caption{\small Comparison of certified robust accuracy achieved by our framework \framework and other known baselines and empirical robust accuracy achieved by \framework and vanilla models under random and adaptive attacks.
	``-'' denotes the settings where the baselines cannot support.
	The parentheses show the weaker baseline settings.
	For certified robust accuracy, the existing state of the art is \textbf{bolded}.
	For empirical robust accuracy, the higher accuracy under each setting are \textbf{bolded}.
	}
		\centering
		\resizebox{\textwidth}{!}{
		\begin{tabular}{c c c c | c c c c c c | c c c c}
			\toprule
			\multirow{3}{*}{Transformation} & \multirow{3}{*}{Type} & \multirow{3}{*}{Dataset} & \multirow{3}{*}{Attack Radius} & \multicolumn{6}{c|}{\textbf{Certified Robust Accuracy}} & \multicolumn{4}{c}{Empirical Robust Accuracy} \\
			& & & & \multirow{2}{*}{\framework} & \multirow{2}{*}{DeepG~\cite{balunovic2019certifying}} & \multirow{2}{*}{Interval~\cite{singh2019abstract}} & \multirow{2}{*}{VeriVis~\cite{pei2017towards}} & \multirow{2}{*}{Semantify-NN~\cite{mohapatra2019towards}} & \multirow{2}{*}{DistSPT~\cite{fischer2019statistical}} &
			\multicolumn{2}{c}{Random Attack} & \multicolumn{2}{c}{Adaptive Attacks} \\
			& & & & & & & & & & \framework & Vanilla & \framework & Vanilla \\
			\toprule

			\multirow{3}{*}{Gaussian Blur} & \multirow{3}{*}{\shortstack[c]{Resolvable}}

			& MNIST    & Squared Radius $\alpha \le 36$ & $\mathbf{90.6\%}$  & - & - & - & - & -
			& $\textbf{91.4\%}$ & $ 12.2\%$ & $\textbf{91.2\%}$ & $ 12.2\%$ \\
			& & CIFAR-10  & Squared Radius $\alpha \le 16$ & $\mathbf{63.6\%}$ & - & - & - & - & -
			& $\textbf{65.8\%}$ & $ 3.4\%$ & $\textbf{65.8\%}$ & $ 3.4\%$ \\
			& & ImageNet & Squared Radius $\alpha \le 36$ & $\mathbf{51.6\%}$  & - & - & - & - & -
			& $\textbf{52.8\%}$ & $ 8.4\%$ & $\textbf{52.6\%}$ & $ 8.2\%$ \\

			\midrule
			\multirow{3}{*}{\shortstack[c]{Translation\\ (Reflection Pad.)}} &
			\multirow{3}{*}{\shortstack[c]{Resolvable,\\ Discrete}}

			& MNIST    & $\sqrt{\Delta x^2 + \Delta y^2} \le 8$ & $\mathbf{99.6}\%$  & - & - & $98.8\%$ & $98.8\%$ & -
			& $ \textbf{99.6\%}$ & $ 0.0\%$ & $ \textbf{99.6\%}$ & $ 0.0\%$ \\
			& & CIFAR-10  & $\sqrt{\Delta x^2 + \Delta y^2} \le 20$ & $\mathbf{80.8}\%$ & - & - & $65.0\%$ & $65.0\%$ & -
			& $ \textbf{86.2\%}$ & $ 4.4\%$ & $ \textbf{86.0\%}$ & $ 4.2\%$ \\
			& & ImageNet & $\sqrt{\Delta x^2 + \Delta y^2} \le 100$ & $\mathbf{50.0}\%$ & - & - & $43.2\%$ & $43.2\%$ & -
			& $ \textbf{69.2\%}$ & $ 46.6\%$ & $ \textbf{69.2\%}$ & $ 46.2\%$ \\

			\midrule
			\multirow{3}{*}{\shortstack[c]{Brightness}} &
			\multirow{3}{*}{\shortstack[c]{Resolvable}}

			& MNIST    & $b\pm 50\%$ & $\mathbf{98.2\%}$  & - & - & - & - & -
			& $ \textbf{98.2\%}$ & $ 96.6\%$ & $ \textbf{98.2\%}$ & $ 96.6\%$ \\
			& & CIFAR-10 & $b\pm 40\%$ & $\mathbf{87.0\%}$ & - & - & - & - & -
			& $ \textbf{87.2\%}$ & $ 44.4\%$ & $ \textbf{87.4\%}$ & $ 42.6\%$ \\
			& & ImageNet & $b\pm 40\%$ & $\mathbf{70.0\%}$ & - & - & - & - & -
			& $ \textbf{70.4\%}$ & $ 19.6\%$ & $ \textbf{70.4\%}$ & $ 18.4\%$ \\

			\midrule
			\multirow{5}{*}{\shortstack[c]{Contrast\\ and\\ Brightness}} &
			\multirow{5}{*}{\shortstack[c]{Resolvable,\\ Composition}}

			& \multirow{2}{*}{MNIST}    & \multirow{2}{*}{$c\pm 50\%, b\pm 50\%$} & \multirow{2}{*}{$\mathbf{97.6\%}$}  & $\le 0.4\%$ & $0.0\%$ & \multirow{2}{*}{-} & $\le 74\%$ & \multirow{2}{*}{-}
			& \multirow{2}{*}{$\textbf{98.0\%}$} & \multirow{2}{*}{$94.6\%$} & \multirow{2}{*}{$\textbf{98.0\%}$} & \multirow{2}{*}{$93.2\%$} \\
			& & & &  &($c,b\pm 30\%$) & ($c,b\pm 30\%$) & & ($c\pm 5\%,b\pm 50\%$) & \\
			& & \multirow{2}{*}{CIFAR-10} & \multirow{2}{*}{$c\pm 40\%, b\pm 40\%$} & \multirow{2}{*}{$\mathbf{82.4\%}$} & $0.0\%$ & $0.0\%$ & \multirow{2}{*}{-} & \multirow{2}{*}{-} & \multirow{2}{*}{-}
			& \multirow{2}{*}{$\textbf{86.0\%}$} & \multirow{2}{*}{$21.0\%$} & \multirow{2}{*}{$\textbf{85.8\%}$} & \multirow{2}{*}{$9.6\%$} \\
			& & & &  &($c,b\pm 30\%$) & ($c,b\pm 30\%$) & & &  \\
			& & ImageNet & $c\pm 40\%, b\pm 40\%$ & $\mathbf{61.4\%}$ & - & - & - & - & -
			& $\textbf{68.4\%}$ & $ 1.2\%$ & $\textbf{68.4\%}$ & $ 0.0\%$ \\

			\midrule
			\multirow{3}{*}{\shortstack[c]{Gaussian Blur,\\ Translation, Bright-\\ ness, and Contrast}} & \multirow{3}{*}{\shortstack[c]{Resolvable,\\ Composition}}
			&  MNIST    &  $\alpha \le 1, \sqrt{\Delta x^2 + \Delta y^2} \le 5, c,b \pm 10\%$ &  $\textbf{90.2\%}$ & - & - & - & - & -
			& $ \textbf{97.2\%}$ & $ 0.4\%$ & $ \textbf{97.0\%}$ & $ 0.4\%$ \\
			& &  CIFAR-10  &  $\alpha \le 1, \sqrt{\Delta x^2 + \Delta y^2} \le 5, c,b \pm 10\%$ &  $\textbf{58.2\%}$ & - & - & - & - & -
			& $ \textbf{67.6\%}$ & $ 9.6\%$ & $ \textbf{67.8\%}$ & $ 5.6\%$ \\
			& &  ImageNet &  $\alpha \le 10, \sqrt{\Delta x^2 + \Delta y^2} \le 10, c,b\pm 20\%$ &  $\textbf{32.8\%}$ & - & - & - & - & -
			& $ \textbf{48.8\%}$ & $ 9.4\%$ & $ \textbf{47.4\%}$ & $ 4.0\%$ \\

			\midrule
			\multirow{5}{*}{Rotation} &
			\multirow{5}{*}{\shortstack[c]{Differentially\\ Resolvable}}

			& \multirow{2}{*}{MNIST}    & \multirow{2}{*}{$r\pm 50^\circ$} & \multirow{2}{*}{$\mathbf{97.4\%}$} & $\le 85.8\%$ & $\le 6.0\%$ & \multirow{2}{*}{-} & \multirow{2}{*}{$\le 92.48\%$} & \multirow{2}{*}{$82\%$}
			& \multirow{2}{*}{$\textbf{98.4\%}$} & \multirow{2}{*}{$12.2\%$} & \multirow{2}{*}{$\textbf{98.2\%}$} & \multirow{2}{*}{$11.0\%$} \\
			& & & & & ($r\pm 30^\circ$) & ($r\pm 30^\circ$) &  & & \\
			& & \multirow{2}{*}{CIFAR-10} & $r\pm 10^\circ$ & $\mathbf{70.6\%}$ & $62.5\%$ & $20.2\%$ & - & - & $37\%$
			& $ \textbf{76.6\%}$ & $ 65.6\%$ & $ \textbf{76.4\%}$ & $ 65.4\%$ \\
			& & & $r\pm 30^\circ$ & $\mathbf{63.6\%}$ & $10.6\%$ & $0.0\%$ & - & $\le 49.37\%$ & $22\%$
			& $ \textbf{69.2\%}$ & $ 21.6\%$ & $ \textbf{69.4\%}$ & $ 21.4\%$ \\
			& & ImageNet & $r\pm 30^\circ$ & $\mathbf{30.4\%}$ & - & - & - & - & $16 \%$  (rand. attack)
			& $ 37.8\%$ & $ \textbf{40.0\%}$ & $ \textbf{37.8\%}$ & $ 37.0\%$ \\

			\midrule
			\multirow{3}{*}{\shortstack[c]{Scaling}} &
			\multirow{3}{*}{\shortstack[c]{Differentially\\ Resolvable}}
			& MNIST    & $s \pm 30\%$ & $\mathbf{97.2\%}$ & $85.0\%$ & $16.4\%$ & - & - & -
			& $ \textbf{99.2\%}$ & $ 90.2\%$ & $ \textbf{99.2\%}$ & $ 89.2\%$ \\
			& & CIFAR-10 & $s \pm 30\%$ & $\mathbf{58.8\%}$ & $0.0\%$ & $0.0\%$ & - & - & -
			& $ \textbf{67.2\%}$ & $ 51.6\%$ & $ \textbf{67.0\%}$ & $ 51.2\%$ \\
			& & ImageNet & $s \pm 30\%$ & $\mathbf{26.4\%}$ & - & - & - & - & -
			& $ 37.4\%$ & $ \textbf{50.0\%}$ & $ 36.4\%$ & $ \textbf{49.8\%}$ \\

			\midrule
			\multirow{4}{*}{\shortstack[c]{Rotation\\ and\\ Brightness}} &
			\multirow{4}{*}{\shortstack[c]{Differentially\\ Resolvable,\\ Composition}}

			& MNIST    & $r\pm 50^\circ, b\pm 20\%$ & $\mathbf{97.0}\%$ & - & - & - & - & -
			& $ \textbf{98.2\%}$ & $ 11.0\%$ & $ \textbf{98.0\%}$ & $ 10.4\%$ \\
			& & \multirow{2}{*}{CIFAR-10} & $r\pm 10^\circ, b\pm 10\%$ & $\mathbf{70.2}\%$ & - & - & - & - & -
			& $ \textbf{76.6\%}$ & $ 59.4\%$ & $ \textbf{76.0\%}$ & $ 56.8\%$ \\
			& & & $r\pm 30^\circ, b\pm 20\%$ & $\mathbf{61.4}\%$ & - & - & - & - & -
			& $ \textbf{68.4\%}$ & $ 13.0\%$ & $ \textbf{68.2\%}$ & $ 9.0\%$ \\
			& & ImageNet & $r\pm 30^\circ, b\pm 20\%$ & $\mathbf{26.8}\%$ & - & - & - & - & -
			& $ \textbf{37.4\%}$ & $ 22.4\%$ & $ \textbf{36.8\%}$ & $ 21.2\%$ \\

			\midrule
			\multirow{3}{*}{\shortstack[c]{Scaling\\ and\\ Brightness}} &
			\multirow{3}{*}{\shortstack[c]{Differentially\\ Resolvable,\\ Composition}}
			& MNIST    & $s \pm 50\%, b\pm 50\%$ & $\mathbf{96.6}\%$ & - & - & - & - & -
			& $ \textbf{97.8\%}$ & $ 24.8\%$ & $ \textbf{97.8\%}$ & $ 15.6\%$ \\
			& & CIFAR-10 & $s \pm 30\%, b\pm 30\%$ & $\mathbf{54.2}\%$  & - & - & - & - & -
			& $ \textbf{67.2\%}$ & $ 17.4\%$ & $ \textbf{66.8\%}$ & $ 11.6\%$ \\
			& & ImageNet & $s \pm 30\%, b\pm 30\%$ & $\mathbf{23.4}\%$  & - & - & - & - & -
			& $ \textbf{36.4\%}$ & $ 16.0\%$ & $ \textbf{36.0\%}$ & $ 8.8\%$ \\

			\midrule
			\multirow{4}{*}{\shortstack[c]{Rotation,\\ Brightness,\\ and $\ell_2$}} &
			\multirow{4}{*}{\shortstack[c]{Differentially\\ Resolvable,\\ Composition}}

			& MNIST    & $r\pm 50^\circ, b\pm 20\%, \|\delta\|_2\le .05$ & $\mathbf{96.6\%}$ & - & - & - & - & -
			& $ \textbf{97.6\%}$ & $ 10.8\%$ & $ \textbf{97.4\%}$ & $ 9.0\%$ \\
			& & \multirow{2}{*}{CIFAR-10} & $r\pm 10^\circ, b\pm 10\%, \|\delta\|_2\le .05$ & $\mathbf{64.2\%}$ & - & - & - & - & -
			& $ \textbf{71.6\%}$ & $ 31.8\%$ & $ \textbf{71.2\%}$ & $ 29.6\%$ \\
			& & & $r\pm 30^\circ, b\pm 20\%, \|\delta\|_2\le .05$ & $\mathbf{55.2\%}$ & - & - & - & - & -
			& $ \textbf{65.2\%}$ & $ 0.8\%$ & $ \textbf{64.0\%}$ & $ 0.4\%$ \\
			& & ImageNet & $r\pm 30^\circ, b\pm 20\%, \|\delta\|_2\le .05$ & $\mathbf{26.6\%}$ & - & - & - & - & -
			& $ \textbf{37.0\%}$ & $ 17.6\%$ & $ \textbf{36.4\%}$ & $ 14.0\%$ \\

			\midrule
			\multirow{3}{*}{\shortstack[c]{Scaling,\\ Brightness, \\ and $\ell_2$}} &
			\multirow{3}{*}{\shortstack[c]{Differentially\\ Resolvable,\\ Composition}}
			& MNIST    & $s \pm 50\%, b\pm 50\%, \|\delta\|_2\le .05$ & $\mathbf{96.4\%}$ & - & - & - & - & -
			& $ \textbf{97.6\%}$ & $ 22.2\%$ & $ \textbf{97.6\%}$ & $ 12.2\%$ \\
			& & CIFAR-10 & $s \pm 30\%, b\pm 30\%, \|\delta\|_2\le .05$ & $\mathbf{51.2\%}$  & - & - & - & - & -
			& $ \textbf{65.0\%}$ & $ 4.4\%$ & $ \textbf{61.8\%}$ & $ 2.6\%$ \\
			& & ImageNet & $s \pm 30\%, b\pm 30\%, \|\delta\|_2\le .05$ & $\mathbf{22.6\%}$ & - & - & - & - & -
			& $\textbf{36.0\%}$ & $ 7.4\%$ & $\textbf{35.6\%}$ & $ 4.8\%$ \\
			\bottomrule
		\end{tabular}
		}
	\label{tab:main}
    \label{tab:main-attack}
	\vspace{-2mm}
\end{table*}

    \subsection{Main Results}
        \label{sec:main_results}
        Here, we present our main results from five aspects:
            (1)~certified robustness compared to baselines;
            (2)~empirical robustness comparison;
            (3)~certification time statistics;
            (4)~empirical robustness under unforeseen physical attacks;
            (5)~certified robustness under attacks exceeding the certified radii.

        \vspace{-0.5em}
        \subsubsection{Certified Robustness Compared to Baselines}

            Our results are summarized in \Cref{tab:main}.
            For each transformation, we ensure that our setting is either the same as or strictly stronger than all other baselines.\footnote{The only exception is Semantify-NN~\cite{mohapatra2019towards} on brightness and contrast changes, where Semantify-NN considers these changes composed with clipping to $[0,1]$ while we consider pure brightness and contrast changes to align with other baselines. We refer the reader to \Cref{adxsec:baselines} for a detailed discussion.}
            When our setting is strictly stronger, the baseline setting is shown in corresponding parentheses, and our certified robust accuracy implies a higher or equal certified robust accuracy in the corresponding baseline setting.
            To our best knowledge, we are the first to provide certified robustness for Gaussian blur, brightness, composition of rotation and brightness, etc.
            Moreover, on the  large-scale standard ImageNet dataset, we are the first to provide nontrivial certified robustness against certain semantic attacks.
            Note that DistSPT~\cite{fischer2019statistical} is theoretically feasible to provide robustness certification for the ImageNet dataset.
            However, its certification is not tight enough to handle ImageNet  and it provides  robustness certification for only a certain random attack instead of arbitrarily worst-case attacks~\cite[Section 7.4]{fischer2019statistical}.
            We observe that,
            across transformations, our framework \emph{significantly} outperforms the state of the art, if present, in terms of robust accuracy.
            For example, on the composition of contrast and brightness, we improve the certified robust accuracy from $74\%$ to $97.6\%$ on MNIST, from $0.0\%$~(failing to certify) to $82.4\%$ on CIFAR-10, and from $0\%$~(absence of baseline) to $61.4\%$ on ImageNet.
                On the rotation transformation, we improve the certified robust accuracy from $92.48\%$ to $97.4\%$ on MNIST, from $49.37\%$ to $63.6\%$ on CIFAR-10~(rotation angle within $30^\circ$), and from $16\%$ against a certain random attack to $30.4\%$ against arbitrary attacks on ImageNet.
                Some baselines are able to provide certification under other certification goals and
               the readers can refer  to~\Cref{adxsec:baselines} for a detailed discussion.

        \vspace{-0.5em}
        \subsubsection{Comparison of Empirical Robust Accuracy}

            In \Cref{tab:main-attack}, we report the empirical robust accuracy for both (undefended) vanilla models and trained \framework models.
            The empirical robust accuracy is either evaluated under random attack or two adaptive attacks--Random+ and PGD attack.
            When it is under adaptive attacks, we report the lower accuracy to evaluate against stronger attackers.
            \begin{enumerate}[leftmargin=*]
                \item For almost all settings, \framework models have significantly higher \textit{empirical robust accuracy}, which means that
                \framework models are also practical in terms of defending against existing attacks.
                The only exception is rotation and scaling on ImageNet.
                The reason is that a single rotation/scaling transformation is too weak to attack even an undefended model. At the same time, our robustness certification comes at the cost of benign accuracy, which also affects the empirical robust accuracy.
                This exception is eliminated when rotation and scaling are composed with other transformations.

                \item Similar observations arise when comparing the \textit{empirical robust accuracy of the vanilla model with the certified robust accuracy of ours}.
                Hence, even compared to \emph{empirical} metrics, our \emph{certified} robust accuracy is nontrivial and guarantees high accuracy.

                \item
                Our \textit{certified} robust accuracy is always lower or equal compared to the \textit{empirical} one, verifying the validity of our robustness certification.
                The gaps range from $\sim2\%$ on MNIST to $\sim10\%$ - $20\%$ on ImageNet.
                Since empirical robust accuracy is an upper bound of the certified accuracy, this implies that our certified bounds are usually tight, particularly on small datasets.

                    \item
                    The adaptive attack decreases the empirical accuracy of \framework models \emph{slightly}, while it decreases that of vanilla models {significantly}.
                    Taking contrast and brightness on CIFAR-10 as example, \framework accuracy decreases from $86\%$ to $85.8\%$ while the vanilla model accuracy decreases from $21.0\%$ to $9.6\%$.
                    Thus, \framework is still robust against adaptive attacks.
                    Indeed, \framework has robustness guarantee against any attack within the certified radius.
            \end{enumerate}

        \vspace{-0.5em}
        \subsubsection{Certification Time Statistics}
            \label{subsec:certification-time-statistics}
            Our robustness certification time is usually less than $\SI{100}{s}$ on MNIST and $\SI{200}{s}$ on CIFAR-10; on ImageNet it is around $\SI{200}{s}$ - $\SI{2000}{s}$.
            Compared to other baselines, ours is slightly faster and achieves much higher certified robustness.
            For fairness, we give $\SI{1000}{s}$ time limit per instance when running baselines on MNIST and CIFAR-10.
            Note that other baselines cannot scale up to ImageNet.
            Our approach is scalable due to the blackbox nature of smoothing-based certification, the tight interpolation error upper bound, and the efficient progressive sampling strategy.
            Details on hyperparameters including smoothing variance and average certification time are given in~\Cref{appendix:exp-dist-running-time}.

        \vspace{-0.5em}
        \subsubsection{Generalization to Unforeseen Common Corruptions}

            \label{subsec:generalization-to-unforeseen-physical-attacks}
            \emph{Are \framework models still more robust when it comes to potential unforeseen physical attacks?}
            To answer this question, we evaluate the robustness of \framework models on the realistic CIFAR-10-C and ImageNet-C datasets~\cite{hendrycks2019benchmarking}.
            These two datasets are comprised of corrupted images from CIFAR-10 and ImageNet.
            They apply around 20 types of common corruptions to model \textit{physical attacks}, such as fog, snow, and frost.
            We evaluate the \emph{empirical robust accuracy} against the highest corruption level~(level 5) to model the strongest physical attacker.
            We apply \framework models trained against a transformation composition attack, Gaussian blur + brightness + contrast + translation, to defend against these  corruptions.
            We select two baselines: vanilla models and AugMix~\cite{hendrycks2019augmix}.
            AugMix is the state of the art model on CIFAR-10-C and ImageNet-C~\cite{croce2020robustbench}.

            The results are shown in \Cref{tab:cifar10-imagenet-c}.
            The answer is \emph{yes}---\framework models are more robust than undefended vanilla models.
            It even exceeds the state of the art, AugMix, on CIFAR-10-C.
            On ImageNet-C, \framework model's empirical accuracy is between vanilla and AugMix.
            We emphasize that in contrast to \framework, both vanilla and AugMix fail to provide  robustness certification.
            Details on evaluation protocols and additional findings are in \Cref{appendix:exp-corruption}.

            \begin{table}[t]
                \centering
                \caption{\small
                Comparison of \textbf{empirical accuracy} of different models under physical corruptions (CIFAR-10-C and ImageNet-C) and \textbf{certified accuracy} against composition of transformations.
                \framework  achieves higher or comparable empirical accuracy against unforeseen corruptions and significantly higher certified accuracy
                (under attack radii in \Cref{tab:main}).
                }
                \label{tab:cifar10-imagenet-c}
                \resizebox{\linewidth}{!}{
                \begin{tabular}{c|ccg|ccg}
                    \toprule
                     & \multicolumn{3}{c|}{CIFAR-10} & \multicolumn{3}{c}{ImageNet} \\
                    \hline
                    & Vanilla & AugMix~\cite{hendrycks2019augmix} & \framework
                    & Vanilla & AugMix~\cite{hendrycks2019augmix} & \framework \\
                    \hline
                    \multirow{2}{*}{\shortstack[c]{\textbf{Empirical Accuracy}\\ on CIFAR-10-C and ImageNet-C}}  & & & & & & \\
                    &
                    \multirow{-2}{*}{$53.9\%$} & \multirow{-2}{*}{$65.6\%$} & \multirow{-2}{*}{$\textbf{67.4\%}$} & \multirow{-2}{*}{$18.3\%$} & \multirow{-2}{*}{$\textbf{25.7\%}$} & \multirow{-2}{*}{$21.9\%$} \\
                    \hline
                    \multirow{3}{*}{\shortstack[c]{\textbf{Certified Accuracy} against\\ Composition of Gaussian Blur, \\ Translation, Brightness, and Contrast}} &
                    & & & & & \\
                    & & & & & & \\
                    & \multirow{-3}{*}{$0.0\%$} & \multirow{-3}{*}{$0.4\%$} & \multirow{-3}{*}{$\textbf{58.2\%}$} & \multirow{-3}{*}{$0.0\%$} & \multirow{-3}{*}{$0.0\%$} & \multirow{-3}{*}{$\textbf{32.8\%}$} \\
                    \bottomrule
                \end{tabular}
                }
            \end{table}

        \vspace{-0.5em}
        \subsubsection{Evaluation on Attacks Beyond Certified Radii}
            \label{subsec:generalization-to-attack-beyond-radii}

            The semantic attacker in the physical world may not constrain itself to be within the specified attack radii.
            In \Cref{appendix:exp-beyond-radii} we present a thorough evaluation of \framework's robustness when the attack radii go beyond the certified ones.
            We show, for example, for \framework model defending against $\pm 40\%$ brightness change on ImageNet, when the radius increases to $50\%$, the certified accuracy only slightly drops from $70.4\%$ to $70.0\%$.
            In a nutshell, there is no significant or immediate degradation on both certified robust accuracy and empirical robust accuracy when the attack radii go beyond the certified ones.

\begin{figure}[!tbp]
    \centering
    \begin{subfigure}{.3848\linewidth}
        \centering
        \includegraphics[width=\linewidth]{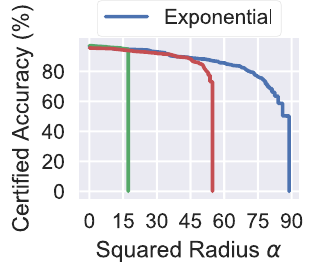}
        \caption{MNIST}
        \label{fig:exp-dist-comparison-mnist}
    \end{subfigure}
    \begin{subfigure}{.2964\linewidth}
        \centering
        \includegraphics[width=\linewidth]{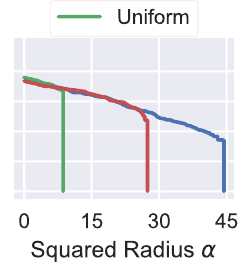}
        \caption{CIFAR-10}
        \label{fig:exp-dist-comparison-cifar}
    \end{subfigure}
    \begin{subfigure}{.286\linewidth}
        \centering
        \includegraphics[width=\linewidth]{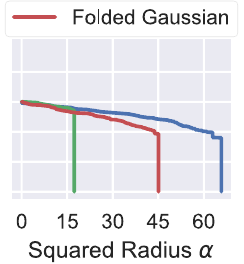}
        \caption{ImageNet}
        \label{fig:exp-dist-comparison-imagenet}
    \end{subfigure}
    \vspace{-2mm}
    \caption{\small
    Certified accuracy for different smoothing distributions for Gaussian blur. On MNIST/CIFAR-10/ImageNet the noise std. is $10/5/10$.
    }
    \label{fig:exp-dist-comparison}
    \vspace{-4mm}
\end{figure}

\begin{table}[t]
    \centering
    \caption{\small Study of the impact of different smoothing variance levels on certified robust accuracy and benign accuracy on \textbf{ImageNet} for \framework. The attack radii are consistent with \Cref{tab:main}.  ``Dist.'' refers to both training and smoothing distribution. The variance used in \Cref{tab:main} is labeled in gray.}

    \resizebox{0.96\linewidth}{!}{

        \begin{tabular}{cccccc}
            \toprule
            \multirow{2}{*}{Transformation} & \multirow{2}{*}{\shortstack[c]{Attack\\ Radii}} & \multicolumn{4}{c}{Certified Accuracy and Benign Accuracy} \\
            & & \multicolumn{4}{c}{under Different Variance Levels} \\
            \midrule

            \multirow{3}{*}{\shortstack[c]{Gaussian Blur}} & \multirow{3}{*}{$\alpha \le 36$} & Dist. of $\alpha$ & $\Exp(1/5)$ & $\Exp(1/10)$ \gray & $\Exp(1/20)$ \\
            \cline{3-6}
            & & Cert. Rob. Acc. & $0.0\%$ & $\textbf{51.6\%}$ & $48.4\%$ \\
            & & Benign Acc. & $\textbf{63.4\%}$ & $59.2\%$ & $53.2\%$ \\
            \hline

            \multirow{3}{*}{\shortstack[c]{Translation\\ (Reflection Pad.)}} & \multirow{3}{*}{\shortstack[c]{$\sqrt{\Delta x^2 + \Delta y^2}$\\  $\le 100$}} & Dist. of $(\Delta x, \Delta y)$ & $\gN(0,20^2 I)$ & $\gN(0, 30^2 I)$ \gray & $\gN(0, 40^2 I)$ \\
            \cline{3-6}
            & & Cert. Rob. Acc. & $0.0\%$ & $50.0\%$ & $\textbf{55.4\%}$ \\
            & & Benign Acc. & $70.0\%$ & $\textbf{72.6\%}$ & $70.0\%$ \\
            \hline

            \multirow{3}{*}{\shortstack[c]{Brightness}} & \multirow{3}{*}{$b \pm 40\%$} & Dist. of $(c,b)$ & $\gN(0, 0.3^2 I)$ & $\gN(0, 0.4^2 I)$ \gray & $\gN(0, 0.5^2 I)$ \\
            \cline{3-6}
            & & Cert. Rob. Acc. & $\textbf{70.2\%}$ & $70.0\%$ & $67.6\%$ \\
            & & Benign Acc. & $\textbf{73.2\%}$ & $72.2\%$ & $69.4\%$ \\
            \hline

            \multirow{3}{*}{\shortstack[c]{Contrast}} & \multirow{3}{*}{$c \pm 40\%$} &  Dist. of $(c,b)$ & $\gN(0, 0.3^2 I)$ & $\gN(0, 0.4^2 I)$ \gray & $\gN(0, 0.5^2 I)$ \\
            \cline{3-6}
            & & Cert. Rob. Acc. & $58.4\%$ & $63.6\%$ & $\textbf{65.0\%}$ \\
            & & Benign Acc. & $\textbf{72.8\%}$ & $71.4\%$ & $68.6\%$ \\
            \hline

            \multirow{3}{*}{Rotation} & \multirow{3}{*}{$r \pm 30^\circ$} & Dist. of $\epsilon$ & $\gN(0, 0.25^2 I)$ & $\gN(0, 0.50^2 I)$ \gray & $\gN(0, 1.00^2 I)$ \\
            \cline{3-6}
            & & Cert. Rob. Acc. & $9.8\%$ & $\textbf{30.4\%}$ & $20.0\%$ \\
            & & Benign Acc. & $\textbf{55.6\%}$ & $46.2\%$ & $32.2\%$ \\
            \hline

            \multirow{3}{*}{Scaling} & \multirow{3}{*}{$s \pm 30\%$} & Dist. of $\epsilon$ & $\gN(0, 0.25^2 I)$ & $\gN(0, 0.50^2 I)$ \gray & $\gN(0, 1.00^2 I)$ \\
            \cline{3-6}
            & & Cert. Rob. Acc. & $7.2\%$ & $\textbf{26.4\%}$ & $17.4\%$ \\
            & & Benign Acc. & $\textbf{58.8\%}$ & $50.8\%$ & $33.8\%$ \\
            \bottomrule
        \end{tabular}
    }
    \label{tab:var-level-imagenet}
\end{table}

    \subsection{Ablation Studies}
    \label{subsec:ablation-studies}
    Here, we provide two ablation studies:
    (1)~Comparison of different smoothing distributions;
    (2)~Comparison of different smoothing variances.
    In \Cref{appendix:exp-tightness-efficiency-trade-off}, we present another ablation study on different numbers of samples for differentially resolvable transformations, which reveals a tightness-efficiency trade-off.

    \vspace{-0.5em}
    \subsubsection{Comparison of Smoothing Distributions}
        \label{subsec:different-smoothing-dist}
        To study the effects of different smoothing distributions, we compare the certified robust accuracy for Gaussian blur when the model is smoothed by different smoothing distributions.
        We consider three smoothing distributions, namely exponential~({blue} line), uniform~({green} line), and folded Gaussian~({red} line).
        On each dataset, we adjust the distribution parameters such that each distribution has the same variance. All other hyperparameters are kept the same throughout training and certification.
        As shown~\Cref{fig:exp-dist-comparison}, we notice that on all three datasets, the exponential distribution has the highest average certified radius. This observation is in line with our theoretical reasoning in~\Cref{sec:noise_distributions}.

    \vspace{-0.5em}
    \subsubsection{Comparison of Different Smoothing Variances}
        \label{subsec:different-smoothing-variance}

        The variance of the smoothing distribution is a hyperparameter that controls the accuracy-robustness trade-off.
        In \Cref{tab:var-level-imagenet}, we evaluate different smoothing variances for several transformations on ImageNet and report both the certified accuracy and benign accuracy.
        The results on MNIST and CIFAR-10 and more discussions are in \Cref{appendix:exp-different-variance-levels}.
        From these results, we observe that usually, when the smoothing variance increases, the benign accuracy drops and the certified robust accuracy first rises and then drops.
        This tendency is also observed in classical randomized smoothing~\cite{cohen2019certified,yang2020randomized}.
        However, the range of acceptable variance is usually wide.
        Thus, without careful tuning the smoothing variances, we are able to achieve high certified  and  benign accuracy as reported in \Cref{tab:main} and \Cref{tab:benign-acc}.

\section{Related Work}
    \label{sec:related-work}

    \paragraph{Certified Robustness against $\ell_p$ perturbations.}
    Since the studies of adversarial vulnerability of neural networks~\cite{szegedy2013intriguing,goodfellow2014explaining}, there has emerged a rich body of research on evasion attacks~(e.g., \cite{carlini2017towards,chaowei2018generating,athalye2018obfuscated,tramer2020adaptive}) and  empirical defenses~(e.g., \cite{madry2017towards,samangouei2018defense,shafahi2019adversarial}).
    To provide robustness certification, different robustness training and verification approaches have been proposed.
    In particular, interval bound propagation~\cite{gowal2018effectiveness,zhang2019towards}, linear relaxations~\cite{weng2018towards,kolter2017provable,wong2018scaling,mirman2018differentiable, xu2020automatic}, and semidefinite programming~\cite{raghunathan2018semidefinite,dathathri2020enabling}  have been applied to certify NN robustness.
    Recently, robustness certification based on randomized smoothing has shown to be scalable and with tight guarantees~\cite{lecuyer2019certified,li2019certified,cohen2019certified}.
    With improvements on optimizing the smoothing distribution~\cite{yang2020randomized,teng2020ell,Dvijotham2020A} and better training mechanisms~\cite{carmon2019unlabeled,salman2019provably,Zhai2020MACER,jeong2020consistency}, the verified robustness of randomized smoothing is further improved.
    A recent survey summarizes certified robustness approaches~\cite{li2020sok}.

    \vspace{-0.5em}
    \paragraph{Semantic Attacks for Neural Networks.}
    Recent work has shown that semantic transformations are able to mislead ML models~\cite{chaowei2018spatially,hosseini2018semantic,ghiasi2020breaking}.
    For instance, image rotations and translations can attack ML models with $40\%$ - $99\%$ degradation on MNIST, CIFAR-10, and ImageNet on both vanilla models and models that are robust against $\ell_p$-bounded perturbations~\cite{engstrom2019exploring}.
    Brightness/contrast attacks can achieve $91.6$\% attack success on CIFAR-10, and $71\%$-$100\%$ attack success rate on ImageNet~\cite{hendrycks2019benchmarking}.
    Our evaluation on empirical robust accuracy~(\Cref{tab:main-attack}) for vanilla models also confirms these observations.
    Moreover, brightness attacks have been shown to be of practical concern in autonomous driving~\cite{pei2017deepxplore}.
    Empirical defenses against semantic transformations have been investigated~\cite{engstrom2019exploring,hendrycks2019benchmarking}.

    \vspace{-0.5em}
    \paragraph{Certified Robustness against Semantic Transformations.}
    While heuristic defenses against semantic attacks have been proposed,
    \emph{provable} robustness requires further investigation.
    Existing certified robustness against transformations is based on heuristic enumeration, interval bound propagation, linear relaxation, or smoothing.
    Efficient enumeration in VeriVis~\cite{pei2017towards} can handle only discrete transformations.
    Interval bound propagation
    has been used to certify common semantic transformations~\cite{singh2019abstract,balunovic2019certifying,fischer2019statistical}.
    To tighten the interval bounds, linear relaxations are introduced.
    DeepG~\cite{balunovic2019certifying} optimizes linear relaxations for given semantic transformations, and Semantify-NN~\cite{mohapatra2019towards} encodes semantic transformations by neural networks and applies linear relaxations for NNs~\cite{weng2018towards,zhang2019towards}.
    However, linear relaxations are loose and computationally intensive compared to our \framework.
    Recently, Fischer et al~\cite{fischer2019statistical} have applied a smoothing scheme to provide provable robustness against transformations but on the large ImageNet dataset, it can provide certification only against random attacks that draw transformation parameters from a pre-determined distribution.
    More details are available in \Cref{adxsec:baselines}.

\section{Conclusion}
\label{sec:conclusion}
In this paper, we have presented a unified framework, \framework, for certifying ML robustness against general semantic adversarial transformations.
Extensive experiments have shown that
\framework
significantly outperforms the state of the art or, if no previous work exists,
set new baselines.
In future work, we plan to further improve the efficiency and tightness of our robustness certification and explore more transformation-specific smoothing strategies.

\begin{acks}
    This work was performed under the auspices of the U.S. Department of Energy by the Lawrence Livermore National Laboratory under Contract No. DE-AC52-07NA27344, Lawrence Livermore National Security, LLC.
    The views and opinions of the authors expressed herein do not necessarily state or reflect those of the United States Government or Lawrence Livermore National Security, LLC, and shall not be used for advertising or product endorsement purposes. This work was supported by LLNL Laboratory Directed Research and Development project 20-ER-014 and released with LLNL tracking number LLNL-CONF-822465.
    This work is supported in part by NSF under grant no. CNS-2046726, CCF-1910100, CCF-1816615, and 2020 Amazon Research Award.

    CZ and the DS3Lab gratefully acknowledge the support from the Swiss National Science Foundation (Project Number 200021\_184628, and 197485), Innosuisse/SNF BRIDGE Discovery (Project Number 40B2-0\_187132), European Union Horizon 2020 Research and Innovation Programme (DAPHNE, 957407), Botnar Research Centre for Child Health, Swiss Data Science Center, Alibaba, Cisco, eBay, Google Focused Research Awards, Kuaishou Inc., Oracle Labs, Zurich Insurance, and the Department of Computer Science at ETH Zurich.

    The authors thank the anonymous reviewers for valuable feedback, Adel Bibi (University of Oxford) for pointing out a bug in the initial implementation, and the authors of \cite{fischer2019statistical}, especially Marc Fischer and Martin Vechev, for insightful discussions and support on experimental evaluation.
\end{acks}


\bibliographystyle{ACM-Reference-Format}
\balance
\bibliography{paper}

\appendix
\newpage

\allowdisplaybreaks

\textbf{\Cref{appendix:lipschitz-bound-scaling}} introduces our method for Lipschitz bound computation for scaling transformation.
\textbf{\Cref{appendix:transformation_compositions}} introduces the certification procedure of \framework for common transformation compositions and discusses the extension to more compositions.
\textbf{\Cref{appendix:proof_thrms}} contains the proofs for \framework general framework, which is introduced in \Cref{sec:framework}.
\textbf{\Cref{adx-sec:distribution_proofs}} contains the theorem statements and proofs for the robustness conditions derived for common smoothing distributions. These statements are instantiations of \Cref{thm:main}, and serve for both certifying resolvable transformations and differentially resolvable transformations.
\textbf{\Cref{adx-sec:smoothing-distributions}} compares the closed-form expressions of the robustness radii derived for different smoothing distributions, which corresponds to \Cref{fig:bound_compare}.
\textbf{\Cref{appendix:proofs_resolvable}} contains the proofs of robustness conditions for various resolvable transformations.
\textbf{\Cref{apx:proofs_differentially_resolvable}} contains the proofs of general theorems for certifying differentially resolvable transformations.
\textbf{\Cref{sec:apx_transformation_details}} formally defines the rotation and scaling transformations, two typical differentially resolvable transformations.
\textbf{\Cref{sec:apx_proofs_interpolation_bound}} proves our supporting theorems for computing the interpolation bound, which is used when certifying differentially resolvable transformations.
\textbf{\Cref{apx:algorithms}} contains concrete algorithm descriptions for certifying differentially resolvable transformations.
Finally, \textbf{\Cref{adxsec:experiment-details}} presents the omitted details in experiments, including experiment settings, detailed discussion of baseline approaches, implementation details, and additional results.

    \section{Computing the Lipschitz Bound for Scaling Transformation}
        \label{appendix:lipschitz-bound-scaling}

        The scaling transformation $\phi_S$ first stretches height and width of the input image by a factor $\alpha\in\R_+$ where values $\alpha < 1$~($> 1$) correspond to shrinking~(enlarging) an image.
        Then, bilinear interpolation is applied, followed by black padding to determine pixel values.
        We refer the reader to Appendix~\ref{sec:apx_transformation_details} for a formal definition.
        Due to black padding, the functions $g_i$ may contain discontinuities.
        To circumvent this issue, we enumerate all these discontinuities as $\gD$. It can be shown that $\gD$ contains at most $H+W$ elements.
        Hence, for large enough $N$, the interval $[\alpha_i,\,\alpha_{i+1}]$ contains at most one discontinuity.
        We thus modify the upper bounds $M_i$ in~\eqref{eq:M_i_expression} as
        \begin{align}
            \small
            M_i :=
                \begin{cases}
                    \max\limits_{\alpha_{i}\leq\alpha\leq\alpha_{i+1}}\min\{g_i(\alpha),\,g_{i+1}(\alpha)\} \,&[\alpha_{i},\,\alpha_{i+1}]\cap \gD = \varnothing\\
                    \max\left\{
                        \max\limits_{\alpha_{i}\leq\alpha\leq t_i} g_{i+1}(\alpha),\,
                        \max\limits_{t_i\leq\alpha \leq \alpha_{i+1}} g_{i}(\alpha)\right
                    \}
                    &[\alpha_{i},\,\alpha_{i+1}]\cap \gD = \{t_i\}
                \end{cases}
        \end{align}
        In either case, the quantity $M_i$ can again be bounded by a Lipschitz constant. With this definition, the following lemma provides a closed form expression for the Lipschitz constant $L$ in~\eqref{eq:M_i_expression} for scaling. A detailed proof is given in Appendix~\ref{sec:apx_proofs_interpolation_bound}.
        \begin{lem}
            \label{lem:scaling_lipschitz}
            Let $x\in\R^{K\times W \times H}$ be a $K$-channel image and
            let $\phi_S$ be the scaling transformation.
            Then, a global Lipschitz constant $L$ for the functions $\{g_i\}_{i=1}^N$ is given by
            \begin{equation}
                \small
                L_{\mathrm{s}} = \max_{1\leq i \leq N-1}\sum_{k=0}^{K-1}\sum_{r,s\in \Omega\cap\N^2} \frac{\sqrt{2} d_{r,s}}{a^2}\cdot m_\Delta(x,k,\gP_{r,s}^{(i)})\cdot \overline{m}(x,\,k,\,\gP_{r,s}^{(i)})
            \end{equation}
            where $\Omega=[0,\,W-1]\times[0,\,H-1]$ and $a$ is the lower boundary value in $\gS=[a,\,b]$. The set $\gP_{r,s}^{(i)}$ is given by all integer grid pixels that are covered by the trajectory of source pixels of $(r,s)$ when scaling with factors from $\alpha_{i+1}$ to $\alpha_{i}$.
        \end{lem}

\section{Certification of Transformation Compositions}
    \label{appendix:transformation_compositions}

    Here we state how \framework certifies typical transformation compositions in detail and discuss how \framework can be directly extended for providing robustness certificates of other transformations or their compositions.

    \subsection{Brightness and Contrast}
        \label{appendix:brightness_contrast_detail}
        As noted in \Cref{sec:main_brightness_and_contrast}, we certify the composition of brightness and contrast based on \Cref{lem:bc_lower_bound} and \Cref{lem:bc_certification}.
        To this end, we first obtain $p_A$, a lower bound of $q(y_A|\,x,\,\varepsilon_0)$ by Monte-Carlo sampling, where $\epsilon_0\sim\gN(0,\,\diag(\sigma^2,\,\tau^2))$ is the smoothing distribution.
        For the given $k, b \in \R$ that we would like to certify $g(\phi_{BC}(x,\,(k,\,b)^T);\,\varepsilon_0) = y_A$,
        we then trigger \Cref{lem:bc_lower_bound} to get $\Tilde{p}_A$, a lower bound of $q(y_A|\,x,\,\varepsilon_1)$, and set $\Tilde{p}_B = 1 - \Tilde{p}_A$.
        Finally, we use the explicit condition in \Cref{lem:bc_certification} to obtain the certification.

        In the actual computation, instead of certifying a single pair $(k,\,b)$, we usually certify the robustness against a set of transformation parameters
        \begin{equation}
            \gS_{\mathrm{adv}} = \{ (k,\,b) |\, k\in [-k_0,\,k_0], b\in [-b_0,\,b_0]\},
            \label{eq:bc_robustness_S}
        \end{equation}
        which stands for any contrast change within $e^{k_0}$ and brightness change within $b_0$.
        It is infeasible to check every $(k,\,b) \in \gS_{\mathrm{adv}}$.
        To mitigate this, we relax the robustness condition in \Cref{lem:bc_certification} from
        \begin{equation}
            \label{eq:bc_robustness_bound_repeat}
            \sqrt{\left({k}/{\sigma}\right)^2 + \left({b}/({e^{-k}\tau})\right)^2} < \frac{1}{2}\left(\Phi^{-1}\left(\Tilde{p}_A\right)
            -\Phi^{-1}\left(\tilde{p}_B\right)\right)
        \end{equation}
        to
        \begin{equation}
            \label{eq:bc_robustness_bound_relax}
            \sqrt{\left({k}/{\sigma}\right)^2 + \left({b}/({\min\{e^{-k},\,1\}\tau})\right)^2} < \frac{1}{2}\left(\Phi^{-1}\left(\Tilde{p}_A\right)
            -\Phi^{-1}\left(\tilde{p}_B\right)\right).
        \end{equation}
        Thus, we only need to check the condition \eqref{eq:bc_robustness_bound_relax} for $(k_0,\,b_0)$ and $(-k_0,\,b_0)$ to certify the robustness for any $(k,\,b)$ in \eqref{eq:bc_robustness_S}.
        This is because the LHS of \eqref{eq:bc_robustness_bound_relax} is monotonically increasing w.r.t. $|k|$ and $|b|$, and the RHS of \eqref{eq:bc_robustness_bound_relax} is equal to $\Phi^{-1}(\Tilde{p}_A)$ that is monotonically decreasing w.r.t. $|k|$.
        Throughout the experiments, we use this strategy for certification of brightness and contrast.

    \subsection{Gaussian Blur, Brightness, Contrast, and Translation}

        \label{appendix:BTBC_detail}
        The certification generally follows the same procedure as in certifying brightness and contrast.
        In the following, we first provide a formal definition of this transformation composition.
        Specifically, the transformation $\phi_{BTBC}$ is defined as:
        \begin{equation}
            \phi_{BTBC}(x,\alpha) := \phi_B( \phi_T( \phi_{BC}(x, \alpha_k, \alpha_b), \alpha_{Tx}, \alpha_{Ty}), \alpha_B ),
        \end{equation}
        where $\phi_B$, $\phi_T$ and $\phi_{BC}$ are Gaussian blur, translation, and brightness and contrast transformations respectively as defined before;
        $\alpha := (\alpha_k, \alpha_b, \alpha_{Tx}, \alpha_{Ty}, \alpha_B)^T \in \R^4 \times \R_{\ge 0}$ is the transformation parameter.

        Our certification relies on the following corollary~(extended from \Cref{lem:bc_lower_bound}) and lemma, which are proved in \Cref{appendix:proofs_resolvable}.
        \begin{cor}
            Let $x \in \gX$, $k\in\R$
            and let $\epsilon_0 := (\epsilon_0^a, \epsilon_0^b)^T$ be a random variable defined as
            \begin{equation}
                \epsilon_0^a \sim \gN(0, \diag(\sigma_k^2,\,\sigma_b^2,\,\sigma_{T}^2,\,\sigma_{T}^2))
                \,\text{and}\,
                \epsilon_0^b \sim \Exp(\lambda_B).
                \label{eq:epsilon_0_BTBC}
            \end{equation}
            Similarly, let $\epsilon_1 := (\epsilon_1^a, \epsilon_1^b)$ be a random variable with
            \begin{equation}
                \epsilon_1^a \sim \gN(0, \diag(\sigma_k^2,\,e^{-2k}\sigma_b^2,\,\sigma_{T}^2,\,\sigma_{T}^2))
                \,\text{and}\,
                \epsilon_1^b \sim \Exp(\lambda_B).
            \end{equation}
            For either random variable~(denoted as $\epsilon$), recall that $q(y|x;\,\epsilon) := \E(p(y|\phi_{BTBC}(x,\,\epsilon)))$.
            Suppose that $q(y|x;\,\epsilon_0) \ge p$ for some $p\in [0,\,1]$ and $y\in \cY$. Then $q(y|x;\,\epsilon_1)$ satisfies Eq.~(11).
            \label{cor:BTBC_lb}
        \end{cor}

        \begin{lemma}
            Let $\epsilon_0$ and $\epsilon_1$ be as in \Cref{cor:BTBC_lb} and suppose that
            \begin{equation}
                q(y_A|x;\,\varepsilon_1) \ge \tilde p_A > \tilde p_B \ge \max_{y\neq y_A} q(y|x;\,\varepsilon_1).
            \end{equation}
            Then it is guaranteed that $y_A = g(\phi_{BTBC}(x,\,\alpha);\varepsilon_0)$
            as long as $p_A' > p_B'$, where
            \begin{equation}
                p_A' = \left\{
                \begin{aligned}
                    & 0, & \hspace{-4.5em} \text{if }\tilde p_A \le 1 - \exp(-\lambda_B \alpha_B), \\
                    &\begin{aligned}
                        &\Phi\left(\Phi^{-1}\left( 1 - (1 - \tilde p_A)\exp(\lambda_B \alpha_B) \right) \right.\\
                        &\hspace{1em}\left. - \sqrt{\nicefrac{\alpha_k^2}{\sigma_k^2} + \nicefrac{\alpha_b^2}{(e^{-2\alpha_k}\sigma_b^2)} + \nicefrac{(\alpha_{Tx}^2+\alpha_{Ty}^2)}{\sigma_{T}^2}}\right)
                    \end{aligned}&\text{otherwise}
                \end{aligned}
                \right.
                \label{eq:BTBC_bound_p_A'}
            \end{equation}
            and
            \begin{equation}
                p_B' = \left\{
                \begin{aligned}
                    & 1, & \hspace{-4.5em} \text{if } \tilde p_B \ge \exp(-\lambda_B\alpha_B), \\
                    & \begin{aligned}
                        &1 - \Phi\left( \Phi^{-1}\left( 1 - \tilde p_B \exp(\lambda_B\alpha_B)\right) \right.\\
                        &\hspace{1em}\left. - \sqrt{\nicefrac{\alpha_k^2}{\sigma_k^2} + \nicefrac{\alpha_b^2}{(e^{-2\alpha_k}\sigma_b^2)} + \nicefrac{(\alpha_{Tx}^2+\alpha_{Ty}^2)}{\sigma_{T}^2}}\right).
                    \end{aligned}&\text{otherwise}
                \end{aligned}
                \right.
                \label{eq:BTBC_bound_p_B'}
            \end{equation}
            \label{lem:BTBC_bound}
        \end{lemma}

        The $\epsilon_0$ specified by \eqref{eq:epsilon_0_BTBC} is the smoothing distribution.
        Similar as in brightness and contrast certification, we first obtain $p_A$, a lower bound of $q(y_A|\,x,\,\varepsilon_0)$ by Monte-Carlo sampling.
        For a given transformation parameter $\alpha := (\alpha_k,\,\alpha_b,\,\alpha_{Tx},\,\alpha_{Ty},\,\alpha_B)^T$, we then trigger \Cref{cor:BTBC_lb} to get $\Tilde{p}_A$, a lower bound of $q(y_A|\,x,\,\varepsilon_1)$ and set $\Tilde{p}_B = 1 - \tilde{p}_A$.
        Finally, we use the explicit condition in \Cref{lem:BTBC_bound} to obtain the certification.
        Indeed, with $\Tilde{p}_B = 1 - \Tilde{p}_A$, \Cref{lem:BTBC_bound} can be simplified to the following corollary.

        \begin{cor}
            Let $\epsilon_0$ and $\epsilon_1$ be as in \Cref{cor:BTBC_lb} and suppose that
            \begin{equation}
                q(y_A|x;\,\varepsilon_1) \ge \tilde p_A.
            \end{equation}
            Then it is guaranteed that $y_A = g(\phi_{BTBC}(x,\,\alpha);\varepsilon_0)$ as long as
            \begin{equation}
                \begin{aligned}
                    \tilde p_A > 1 - &\exp(-\lambda_B \alpha_B) \Bigg( 1 - \\
                    &\hspace{3em}\Phi\left(\sqrt{\frac{\alpha_k^2}{\sigma_k^2} + \frac{\alpha_b^2}{e^{-2\alpha_k}\sigma_b^2} + \frac{\alpha_{Tx}^2+\alpha_{Ty}^2}{\sigma_{T}^2}}\right) \Bigg).
                \end{aligned}
                \label{eq:cor-BTBC_bound}
            \end{equation}
            \label{cor:BTBC_bound}
        \end{cor}

        To certify against a set of transformation parameters
        \begin{equation}
            \begin{aligned}
                \gS_{\mathrm{adv}} = & \{ (\alpha_k,\,\alpha_b,\,\alpha_{Tx},\,\alpha_{Ty},\,\alpha_B)^T | \\
                & \hspace{2em} \alpha_k \in [-k_0,\,k_0], \alpha_b \in [-b_0,\,b_0], \\
                & \hspace{2em} \|(\alpha_{Tx},\alpha_{Ty})\|_2 \le T, \alpha_B \le B_0 \},
            \end{aligned}
        \end{equation}
        we relax the robust condition in \eqref{eq:cor-BTBC_bound} to
        \begin{equation}
            \begin{aligned}
                \tilde p_A > 1 - &\exp(-\lambda_B \alpha_B) \Bigg( 1 - \\
                &\hspace{2em}\Phi\left(\sqrt{\frac{\alpha_k^2}{\sigma_k^2} + \frac{\alpha_b^2}{\min\{e^{-2\alpha_k},\,1\}\sigma_b^2} + \frac{\alpha_{Tx}^2+\alpha_{Ty}^2}{\sigma_{T}^2}}\right) \Bigg).
            \end{aligned}
            \label{eq:BTBC_bound_relax}
        \end{equation}
        The LHS of \Cref{eq:BTBC_bound_relax} is monotonically decreasing w.r.t. $|\alpha_k|$ and the RHS is monotonically increasing w.r.t. $|\alpha_k|, |\alpha_b|, \|(\alpha_{Tx},\alpha_{Ty}\|_2$, and $|\alpha_B|$, and the RHS is symmetric w.r.t. $\alpha_b$ and $\|(\alpha_{Tx},\alpha_{Ty})\|_2$.
        As a result, we only need to check the condition for $(-k_0,\,b_0,\,T,\,0,\,B_0)$ and $(k_0,\,b_0,\,T,\,0,\,B_0)$ to certify the entire set $\gS_{\mathrm{adv}}$.
        Throughout the experiments, we use this strategy for certification.

    \subsection{Scaling/Rotation and Brightness}

        \label{appendix:scaling_rotation_brightness_detail}
        To certify the composition of scaling and brightness or rotation and brightness, we follow the same methodology as certifying scaling or rotation alone and reuse the computed interpolatation error $M_\gS$.
        We only make the following two changes:
        (1)~alter the smoothing distribution from additive Gaussian noise $\psi(x,\delta)=x+\delta$ where $\delta\sim\gN(0,\sigma^2\Id_d)$ to additive Gaussian noise and Gaussian brightness change $\psi(x,\delta,\delta_b) = x+\delta + b \cdot \Id_d$ where $\delta\sim\gN(0,\sigma^2 \Id_d), b \sim\gN(0,\sigma_b^2)$;
        (2)~change the robustness condition from $R > M_{\gS}$ in \Cref{cor:rotations_scaling_certificate} to $R > \sqrt{M_{\gS}^2 + (\nicefrac{\sigma^2}{\sigma_b^2})b_0^2}$.
        We formalize this robustness condition in the following corollary, and the proof is entailed in \Cref{apx:proofs_differentially_resolvable}.

        \begin{cor}
            \label{cor:rotations_scaling_brightness_certificate}
            Let $\psi_B(x,\,\delta,\,b) = x + \delta + b\cdot \Id_d$ and let $\varepsilon\sim\gN(0,\,\sigma^2 \Id_d)$, $\varepsilon_b\sim\gN(0,\,\sigma_b^2)$. Furthermore, let $\phi$ be a transformation with parameters in $\gZ_\phi\subseteq\R^m$ and let $\gS\subseteq\gZ_\phi$ and $\{\alpha_i\}_{i=1}^N\subseteq\gS$. Let $y_A\in\cY$ and suppose that for any $i$, the $(\varepsilon,\varepsilon_b)$-smoothed classifier $q(y\lvert\,x;\,\varepsilon,\varepsilon_b):=\E(p(y\lvert\,\psi_B(x,\,\varepsilon,\,\varepsilon_b))$ satisfies
            \begin{equation}
                q(y_A\lvert\,x;\,\varepsilon,\varepsilon_b) \geq p_A^{(i)} > p_B^{(i)} \geq \max_{y\neq y_A}q(y\lvert\,x;\,\varepsilon,\varepsilon_b).
            \end{equation}
            for each $i$. Let
            \begin{equation}
                R := \frac{\sigma}{2}\min_{1\leq i \leq N}\left(\Phi^{-1}\left(p_A^{(i)}\right)-\Phi^{-1}\left(p_B^{(i)}\right)\right)
            \end{equation}
            Then, $\forall \alpha \in \gS$ and $\forall b \in [-b_0, b_0]$ it is guaranteed that $y_A = \argmax_y q(y\lvert\,\phi(x, \alpha) + b\cdot\Id_d;\, \varepsilon, \varepsilon_b)$ as long as
            \begin{equation}
                R > \sqrt{M_{\gS}^2 + \dfrac{\sigma^2}{\sigma_b^2}b_0^2},
                \label{eq:general_Ms_brightness_stmt}
            \end{equation}
            where $M_\gS$ is defined as in \Cref{cor:rotations_scaling_certificate}.
        \end{cor}

    \subsection{Scaling/Rotation, Brightness, and \texorpdfstring{$\ell_2$}{L2} Perturbations}

        \label{appendix:scaling_rotation_brightness_l2_detail}
        We use the same smoothing distribution as above, and the following corollary directly allows us to certify the robustness against the composition of scaling/rotation, brightness, and an additional $\ell_2$-bounded perturbations---we only need to change the robustness condition from $R > \sqrt{M_{\gS}^2 + (\nicefrac{\sigma^2}{\sigma_b^2})b_0^2}$ to $R > \sqrt{(M_{\gS}+r)^2 + (\nicefrac{\sigma^2}{\sigma_b^2})b_0^2}$.
        The proof is given in \Cref{apx:proofs_differentially_resolvable}.

        \begin{cor}
            \label{cor:rotations_scaling_brightness_l2_certificate}
            Under the same setting as in~\Cref{cor:rotations_scaling_brightness_certificate},
            for $\forall \alpha \in \gS$, $\forall b \in [-b_0, b_0]$ and $\forall \delta\in\R^d$ such that $\|\delta\|_2 \le r$, it is guaranteed that $y_A = \argmax_k q(y\lvert\,\phi(x,\alpha) + b\cdot\Id_d + \delta;\,\varepsilon,\varepsilon_d)$
            as long as
            \begin{equation}
                \label{eq:general_Ms_brightness_l2_stmt}
                R > \sqrt{(M_{\gS}+r)^2 + \dfrac{\sigma^2}{\sigma_b^2}b_0^2},
            \end{equation}
            where $M_\gS$ is defined as in \Cref{cor:rotations_scaling_certificate}.
        \end{cor}

    \subsection{Discussion on More Transformations and Compositions}

        \label{appendix:discussion_on_more_compositions}
        Our \framework is not limited to specific transformations.
        Here we briefly discuss how to extend \framework for any new transformations or new compositions.

        For a new transformation, we first identify the parameter space $\gZ$, where the identified parameter should completely and deterministically decide the output after transformation for any given input.
        Then, we use ~\Cref{def:resolvable_transform} to check whether the transformation is resolvable.
        If so, we can write down the function $\gamma_\alpha$. 
        Next, we choose a smoothing distribution, i.e., the distribution of the random variable $\varepsilon_0$, and identify the distribution of $\varepsilon_1 = \gamma_\alpha(\varepsilon_0)$.
        Finally, we use \Cref{thm:main} to derive the robustness certificates and follow the two-step template~(\Cref{sec:computing_certification_summary_resolvable}) to compute the robustness certificate.

        If the transformation is not resolvable, we identify a dimension in $\gZ$ for which the transformation is \emph{resolvable}.
        For example, the composition of rotation and brightness has a rotation and a brightness axis, where the brightness axis is itself resolvable.
        As a result, we can write the parameter space as Cartesian product of non-resolvable subspace and resolvable subspace: $\gZ := \gZ_{\text{no-resolve}} \times \gZ_{\text{resolve}}$.
        We perform smoothing on the resolvable subspace and sample enough points in the non-resolvable subspace.
        Next, we bound the interpolation error between sampled points and arbitrary points in the non-resolvable subspace, using either $\ell_p$ difference as we did for rotation and scaling or other regimes.
        Specifically, our \Cref{lem:lipschitz_constant} shown in \Cref{sec:apx_proofs_interpolation_bound} is a useful tool to bound $\ell_p$ difference caused by interpolation error.
        Finally, we instantiate \Cref{thm:main2} to compute the robustness certificate.

        Theoretically, we can certify against the composition of all the discussed transformations: Gaussian blur, brightness, contrast, translation, rotation, and scaling.
        However, as justified in \cite[Figure 3]{hendrycks2019augmix}, the composition of more than two transformations leads to unrealistic images that are even hard to distinguish by humans.
        Moreover,
        if the composition contains too many transformations, the parameter space would be no longer low dimensional. Therefore, there would be much more axes that are differentially resolvable (instead of resolvable).
        As a consequence, much more samples are required to obtain a small bound on interpolation error~(which is necessary for a nontrivial robustness certification).
        Therefore, we mainly evaluate on single transformation or composition of two transformations to simulate a practical attack.

\section{Proofs for the general certification framework}
\label{appendix:proof_thrms}

    Here we provide the proof for Theorem~\ref{thm:main}. For that purpose, recall the following definition from the main part of this paper:
    \begin{customdef}{\ref{def:smoothed_classifier}}[restated]
        Let $\phi\colon\gX\times\gZ\to\gX$ be a transformation, $\varepsilon\sim\Prob_\varepsilon$ a random variable taking values in $\gZ$ and let $h\colon\gX\to\cY$ be a base classifier. We define the $\varepsilon$-smoothed classifier $g\colon\gX\to\cY$ as $g(x;\varepsilon) = \arg\max_{y\in\gY} q(y\lvert\,x;\,\varepsilon)$
    	where $q$ is given by the expectation with respect to the smoothing distribution $\varepsilon$, i.e.,
    	\begin{equation}
    		q(y\lvert\,x;\,\varepsilon):=\E(p(y\lvert\,\phi(x,\,\varepsilon))).
    	\end{equation}
    \end{customdef}
    Here, we additionally define the notion of level sets separately. These sets originate from statistical hypothesis testing correspond to rejection regions of likelihood ratio tests.
    \begin{defn}[Lower level sets]
        \label{def:lower_level_sets}
        Let $\varepsilon_0\sim\Prob_0$, $\varepsilon_1\sim\Prob_1$ be $\gZ$-valued random variables with probability density functions $f_0$ and $f_1$ with respect to a measure $\mu$. For $t\geq0$ we define lower and strict lower level sets as
        \begin{equation}
            \begin{gathered}
                \underline{S_t} := \left\{z\in\gZ\colon\,\Lambda(z) < t \right\},
                \hspace{1em}\overline{S_t} := \left\{z\in\gZ\colon\,\Lambda(z) \leq t \right\},\\
                \text{where}\hspace{1em}\Lambda(z):=\frac{f_1(z)}{f_0(z)}.
            \end{gathered}
        \end{equation}
    \end{defn}
    We also make the following definition in order to reduce clutter and simplify the notation. This definition will be used throughout the proofs presented here.
    \begin{defn}[$(p_A,p_B)$-Confident Classifier]
        \label{def:pa_pb_confidence}
    	Let $x\in\gX$, $y_A\in\gY$ and $p_A,\,p_B\in[0,\,1]$ with $p_A > p_B$. We say that the $\varepsilon$-smoothed classifier $q$ is $(p_A,\,p_B)$-confident at $x$ if
    	\begin{equation}
    		\label{eq:pa_pb_effective}
    		q(y_A\lvert\,x;\,\varepsilon) \geq p_A \geq p_B \geq \max_{y\neq y_A}q(y\lvert\,x;\,\varepsilon).
    	\end{equation}
    \end{defn}

    \subsection{Auxiliary Lemmas}
    \begin{lem}
        \label{lem:sandwich}
        Let $\varepsilon_0$ and $\varepsilon_1$ be random variables taking values in $\gZ$ and with probability density functions $f_0$ and $f_1$ with respect to a measure $\mu$. Denote by $\Lambda$ the likelihood ratio $\Lambda(z) = f_1(z)/f_0(z)$. For $p\in[0,\,1]$ let $\tau_p:=\inf\{t\geq0\colon\,\Prob_0(\overline{S_t}) \geq p\}$. Then, it holds that
        \begin{equation}
            \label{eq:sandwich}
            \Prob_0\left(\underline{S}_{\tau_p}\right) \leq p \leq \Prob_0\left(\overline{S}_{\tau_p}\right).
        \end{equation}
    \end{lem}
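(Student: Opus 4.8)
The plan is to recognize that the map $t\mapsto\Prob_0(\overline{S_t})$ is precisely the cumulative distribution function of the likelihood ratio $\Lambda(\varepsilon_0)$ under $\Prob_0$, and then to read off the claim from the standard relationship between a distribution function and its generalized inverse (the quantile function). Concretely, I would write $\zeta(t):=\Prob_0(\overline{S_t})=\Prob_0(\Lambda(\varepsilon_0)\le t)$; this is non-decreasing in $t$, takes values in $[0,\,1]$, and $t_p=\inf\{t\ge0\colon\,\zeta(t)\ge p\}$ is exactly its $p$-quantile. The two inequalities in \eqref{eq:sandwich} then amount to the two one-sided continuity statements $\zeta(t_p^-)\le p\le\zeta(t_p)$, since $\Prob_0(\overline{S}_{t_p})=\zeta(t_p)$ and $\Prob_0(\underline{S}_{t_p})=\zeta(t_p^-)$.

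For the upper bound $p\le\Prob_0(\overline{S}_{t_p})=\zeta(t_p)$, I would first establish that $\zeta$ is right-continuous: because $\overline{S_t}=\bigcap_{t'>t}\overline{S_{t'}}$ is a decreasing intersection as $t'\downarrow t$, continuity from above of $\Prob_0$ gives $\zeta(t)=\lim_{t'\downarrow t}\zeta(t')$. By monotonicity of $\zeta$ the defining set $\{t\ge0\colon\,\zeta(t)\ge p\}$ is an interval unbounded above, so I can choose a sequence $t_n\downarrow t_p$ with $\zeta(t_n)\ge p$; right-continuity then yields $\zeta(t_p)=\lim_n\zeta(t_n)\ge p$, i.e.\ the infimum is attained.

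For the lower bound $\Prob_0(\underline{S}_{t_p})\le p$, note that $\underline{S}_{t_p}=\{\Lambda<t_p\}=\bigcup_{t<t_p}\overline{S_t}$ is an increasing union, so continuity from below gives $\Prob_0(\underline{S}_{t_p})=\lim_{t\uparrow t_p}\zeta(t)=\zeta(t_p^-)$. By the very definition of $t_p$ as an infimum, every $t<t_p$ violates the condition $\zeta(t)\ge p$, hence $\zeta(t)<p$ for all such $t$; passing to the limit $t\uparrow t_p$ gives $\zeta(t_p^-)\le p$. The degenerate case $t_p=0$ is handled separately: there $\underline{S}_0=\{\Lambda<0\}=\emptyset$, because $\Lambda$ is a ratio of densities and hence nonnegative, so $\Prob_0(\underline{S}_0)=0\le p$ trivially.

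The argument is essentially bookkeeping with monotone limits of probability measures; the only genuinely delicate points are (i) ensuring the infimum defining $t_p$ is actually attained, which is exactly where right-continuity (continuity from above of $\Prob_0$) is invoked, and (ii) the boundary case $t_p=0$, where the left limit $\zeta(t_p^-)$ is not meaningful and one instead appeals to nonnegativity of $\Lambda$ directly. No measure-theoretic subtlety beyond the standard continuity-from-above and continuity-from-below properties of $\Prob_0$ is required.
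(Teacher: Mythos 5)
Your proof is correct and follows essentially the same route as the paper's: the upper bound comes from right-continuity of $t\mapsto\Prob_0(\overline{S_t})$ established via continuity from above of $\Prob_0$ on a decreasing family of level sets, and the lower bound comes from writing $\underline{S}_{t_p}$ as an increasing union of level sets each having $\Prob_0$-mass below $p$ (by the definition of the infimum), then invoking continuity from below. Your explicit CDF/quantile framing and the separate treatment of the boundary case $t_p=0$ are just cleaner packaging of the same argument.
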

    \begin{proof}
        We first show the RHS of inequality~(\ref{eq:sandwich}). This follows directly from the definition of $\tau_p$ if we show that the function $t\mapsto \Prob_0\left(\overline{S}_{t}\right)$ is right-continuous. For that purpose, let $t\geq 0$ and let $\{t_n\}_n$ be a sequence in $\R_{\geq0}$ such that $t_n\downarrow t$. Define the sets $A_n:=\{z\colon\,\Lambda(z) \leq t_n\}$ and note that $A_{n+1}\subseteq A_n$. Clearly, if $z\in \overline{S}_{t}$, then $\forall\,n\colon\,\Lambda(z) \leq t \leq t_n$, thus $z\in\cap_n A_n$ and hence $\overline{S}_{t} \subseteq \cap_n A_n$. If on the other hand $z\in \cap_n A_n$, then $\forall n\colon \Lambda(z) \leq t_n \to t$ as $n\to \infty$ and thus $z\in \overline{S}_{t}$, yielding $\overline{S}_{t} = \cap_n A_n$. Hence for any $t\geq0$ we have that
        \begin{equation}
            \lim_{n\to\infty}\Prob_0\left(A_n\right) = \Prob_0\left(\bigcap_n\,A_n\right) = \Prob_0\left(\underline{S}_{t}\right).
        \end{equation}
        Thus, the function $t\mapsto \Prob_0\left(\overline{S}_{t}\right)$ is right continuous and in particular it follows that $\Prob_0\left(\overline{S}_{\tau_p}\right) \geq p$. We now show the LHS of inequality~(\ref{eq:sandwich}). Consider the sets $B_n:=\{z\colon\,\Lambda(z) < \tau_p - \nicefrac{1}{n}\}$ and note that $B_n \subseteq B_{n+1}$. Clearly, if $z\in\cup_n\,B_n$, then $\exists\,n$ such that $\Lambda(z) < \tau_p - \nicefrac{1}{n} < \tau_p$ and thus $z\in \underline{S}_{\tau_p}$. If on the other hand $z\in \underline{S}_{\tau_p}$, then we can choose $n$ large enough such that $\Lambda(z) < \tau_p - \nicefrac{1}{n}$ and thus $z\in\cup_n\,B_n$ yielding $\underline{S}_{\tau_p} = \cup_n\,B_n$. Furthermore, by the definition of $\tau_p$ and since for any $n\in \N$ we have that $\Prob_0(B_n) = \Prob_0\left(\underline{S}(\tau_p-\nicefrac{1}{n})\right) < p$ it follows that
        \begin{equation}
            \Prob_0\left(\underline{S}_{\tau_p}\right) = \Prob_0\left(\bigcup_n\,B_n\right) = \lim_{n\to\infty}\Prob_0\left(B_n\right) \leq p
        \end{equation}
        concluding the proof.
    \end{proof}
    \begin{lem}
        \label{lem:np_extended}
        Let $\varepsilon_0$ and $\varepsilon_1$ be random variables taking values in $\gZ$ and with probability density functions $f_0$ and $f_1$ with respect to a measure $\mu$. Let $h\colon\gZ\to[0,\,1]$ be a determinstic function. Then, for any $t\geq 0$ the following implications hold:
        \begin{enumerate}[leftmargin=*]
            \item[(i)] For any $S\subseteq\gZ$ with $\underline{S_t} \subseteq S \subseteq \overline{S_t}$ the following implication holds:
            \begin{equation}
                \E[h(\varepsilon_0)] \geq \Prob_0(S)\Rightarrow\E[h(\varepsilon_1)] \geq \Prob_1(S).
            \end{equation}
    	   \item[(i)] For any $S\subseteq\gZ$ with $\overline{S_t}^c \subseteq S \subseteq \underline{S_t}^c$ the following implication holds:
    	   \begin{equation}
    	       \E[h(\varepsilon_0)] \leq \Prob_0(S)\Rightarrow\E[h(\varepsilon_1)] \leq \Prob_1(S).
    	   \end{equation}
        \end{enumerate}
    \end{lem}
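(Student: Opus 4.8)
The plan is to recognize this as a Neyman--Pearson-type comparison lemma and to prove part (i) by a direct pointwise estimate of the integrands, then obtain part (ii) from (i) by a complementation trick. The first step is to translate the sandwich condition $\underline{S_t}\subseteq S\subseteq\overline{S_t}$ into sign information about $f_1 - t f_0$. Since $S\subseteq\overline{S_t} = \{\Lambda\leq t\}$, every $z\in S$ satisfies $f_1(z)\leq t f_0(z)$; and since $\underline{S_t}\subseteq S$ gives $S^c\subseteq\underline{S_t}^c = \{\Lambda\geq t\}$, every $z\in S^c$ satisfies $f_1(z)\geq t f_0(z)$. Writing the desired conclusion as $\E[h(\varepsilon_1)] - \Prob_1(S) = \int(h-\1_S)f_1\,\diff\mu$ and the hypothesis as $\E[h(\varepsilon_0)] - \Prob_0(S) = \int(h-\1_S)f_0\,\diff\mu\geq0$, it suffices to prove the pointwise bound $(h-\1_S)f_1 \geq t\,(h-\1_S)f_0$ holding $\mu$-almost everywhere.

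I would verify this bound on $S$ and $S^c$ separately. On $S$ the factor $h-\1_S = h-1$ is $\leq 0$ and $f_1\leq t f_0$, so multiplying the latter inequality by the nonpositive factor reverses it to give $(h-1)f_1\geq t(h-1)f_0$; on $S^c$ the factor $h-\1_S = h$ is $\geq 0$ and $f_1\geq t f_0$, so multiplying by the nonnegative factor preserves it. Integrating the pointwise bound and using $t\geq 0$ together with the hypothesis then yields $\E[h(\varepsilon_1)] - \Prob_1(S)\geq t\,(\E[h(\varepsilon_0)] - \Prob_0(S))\geq 0$, which is exactly (i).

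Finally, I would deduce (ii) from (i) rather than repeat the computation: taking complements in $\overline{S_t}^c\subseteq S\subseteq\underline{S_t}^c$ gives precisely $\underline{S_t}\subseteq S^c\subseteq\overline{S_t}$, so $S^c$ satisfies the hypothesis of (i). Applying (i) with the admissible function $1-h$ and the set $S^c$, and using $\E[(1-h)(\varepsilon_j)] = 1 - \E[h(\varepsilon_j)]$ and $\Prob_j(S^c) = 1 - \Prob_j(S)$ for $j\in\{0,1\}$, converts the hypothesis $\E[h(\varepsilon_0)]\leq\Prob_0(S)$ into the conclusion $\E[h(\varepsilon_1)]\leq\Prob_1(S)$. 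I expect the only genuine subtlety to be the region where $f_0 = 0$, on which $\Lambda$ is infinite or undefined: points with $f_0 = 0 < f_1$ have $\Lambda = +\infty$ and hence lie in $S^c$ in case (i), where the required inequality $f_1\geq t f_0$ reduces to $f_1\geq 0$; points with $f_0 = f_1 = 0$ make every integrand vanish and can be assigned to either set harmlessly. Carefully handling this null-set bookkeeping is the main obstacle, but it does not affect the integral inequalities driving the argument.
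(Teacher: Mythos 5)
Your proposal is correct, and part (i) is essentially the paper's own argument: both proofs extract from the sandwich condition that $f_1 \leq t f_0$ on $S$ and $f_1 \geq t f_0$ on $S^c$, bound $\E[h(\varepsilon_1)] - \Prob_1(S)$ below by $t\left(\E[h(\varepsilon_0)] - \Prob_0(S)\right)$, and conclude using $t \geq 0$ and the hypothesis; your pointwise inequality $(h - \1_S)f_1 \geq t\,(h - \1_S)f_0$ is exactly the integrand form of the paper's decomposition into $\int_{S^c} h f_1 \, d\mu - \int_{S} (1-h) f_1 \, d\mu$. The one place you genuinely diverge is part (ii): the paper dismisses it as ``analogous and omitted,'' whereas you deduce it from (i) by complementation, applying (i) to the admissible function $1-h$ and the set $S^c$ (which satisfies $\underline{S_t} \subseteq S^c \subseteq \overline{S_t}$); this is a tidier route that avoids duplicating the computation, and your closing remark about the $f_0 = 0$ region (such points lie outside $\overline{S_t}$ for finite $t$, hence in $S^c$, where the inequality is trivial) handles a measure-theoretic corner the paper passes over silently.
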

    \begin{proof}
    	We first prove (i). For that purpose, consider
    	\begin{align}
    	    \E[f(\varepsilon_1)] - \Prob_1(S) &= \int hf_1\,d\mu - \int_{S}f_1\,d\mu\\
    	    &= \int_{S^c}hf_1\,d\mu - \left(\int_{S}(1 - h)f_1\,d\mu\right)\\
    	    &=\int_{S^c}h\Lambda f_0\,d\mu - \left(\int_{S}(1 - h)\Lambda f_0\,d\mu\right)\\
    	    &\geq t\cdot\int_{S^c}h f_0\,d\mu - t\cdot \left(\int_{S}(1 - h) f_0\,d\mu\right)\label{eq:proof_np_inequality_1}\\
    	    &= t\cdot\left(\int h f_0\,d\mu - \int_{S}f_0\,d\mu\right)\\
    	    &= t\cdot\left(\E[f(\varepsilon_0)] - \Prob_0(S)\right) \geq 0 \label{proof:proof_np_inequality_2}.
    	\end{align}
    	The inequality in~(\ref{eq:proof_np_inequality_1}) follows from the fact that whenever $z\in S^c$, then $f_1(z) \geq t\cdot f_0(z)$ and if $z\in S$, then $f_1(z) \leq t\cdot f_0(z)$ since $S$ is a lower level set. Finally, the inequality in~(\ref{proof:proof_np_inequality_2}) follows from the assumption. The proof of $(ii)$ is analogous and omitted here.
    \end{proof}

    \subsection{Proof of Theorem~\ref{thm:main}}
    \begin{customthm}{\ref{thm:main}}[restated]
        Let $\varepsilon_0\sim\Prob_0$ and $\varepsilon_1\sim\Prob_1$ be $\gZ$-valued random variables with probability density functions $f_0$ and $f_1$ with respect to a measure $\mu$ on $\gZ$ and let $\phi\colon\gX\times\gZ\to\gX$ be a semantic transformation.
        Suppose that $y_A = g(x;\,\varepsilon_0)$ and let $p_A,\,p_B\in[0,\,1]$ be bounds to the class probabilities, i.e.,
        \begin{equation}
            q(y_A\lvert\,x,\,\epsilon_0) \geq p_A > p_B \geq \max_{y\neq y_A} q(y\lvert\,x,\,\epsilon_0).
        \end{equation}
        For $t\geq0$, let $\underline{S}_{t},\,\overline{S}_{t} \subseteq \gZ$ be the sets defined as $\underline{S}_{t} := \{f_1/f_0 < t\}$ and $\overline{S}_{t} := \{f_1/f_0 \leq t\}$ and define the function $\xi\colon[0,\,1] \to [0,\,1]$ by
        \begin{equation}
            \begin{gathered}
                \xi(p) := \sup\{\Prob_1(S)\colon\,\underline{S}_{\tau_p}\subseteq S \subseteq\overline{S}_{\tau_p}\}\\
                \mathrm{where}\hspace{1em}
                \tau_p := \inf\{t\geq 0\colon\,\Prob_0(\overline{S}_{t}) \geq p\}.
            \end{gathered}
        \end{equation}
        If the condition
        \begin{equation}
            \xi(p_A) + \xi(1-p_B) > 1
        \end{equation}
        is satisfied, then it is guaranteed that $g(x;\,\varepsilon_1) = g(x;\,\varepsilon_0)$.

    \end{customthm}
    \begin{proof}
        For ease of notation, let $\zeta$ be the function defined by
        \begin{equation}
            t\mapsto \zeta(t) := \Prob_0(\overline{S}_t)
        \end{equation}
        and notice that $\tau_p = \zeta^{-1}(p)$ where $\zeta^{-1}$ denotes the generalized inverse of $\zeta$.
        Furthermore, let $\tau_A:=\tau_{p_A}$, $\tau_B:=\tau_{1-p_B}$, $\underline{S}_A:=\underline{S}(\tau_A)$, $\underline{S}_B:=\underline{S}(\tau_B)$, $\overline{S}_A:=\overline{S}(\tau_A)$ and $\overline{S}_B:=\overline{S}(\tau_B)$.
        We first show that $q(y_A\lvert\,x,\,\varepsilon_1)$ is lower bounded by $\xi(\tau_A)$. For that purpose, note that by Lemma~\ref{lem:sandwich} we have that $\zeta(\tau_A) = \Prob_0(\overline{S}_A) \geq p_A \geq \Prob_0(\underline{S}_A)$. Thus, the collection of sets
        \begin{equation}
            \gS_A := \{S\subseteq\gZ\colon \underline{S}_A \subseteq S \subseteq \overline{S}_A,\,\Prob_0(S) \leq p_A\}
        \end{equation}
        is not empty. Pick some $A \in\gS_A$ arbitrary and note that, since by assumption $g(\cdot;\,\varepsilon_0)$ is $(p_A,\,p_B)$-confident at $x$ it holds that
        \begin{equation}
            \E(p(y_A\lvert\,\phi(x,\,\varepsilon_0))) = q(y_A\lvert\,x;\,\varepsilon_0) \geq p_A \geq \Prob_0(A).
        \end{equation}
        Since $\underline{S}_A \subseteq A \subseteq \overline{S}_A$ we can apply part $(i)$ of Lemma~\ref{lem:np_extended} and obtain the lower bound
        \begin{equation}
            q(y_A\lvert\,x,\,\varepsilon_1) = \E(p(y_A\lvert\,\phi(x,\,\varepsilon_1))) \geq \Prob_1(A).
        \end{equation}
        Since $A\in\gS_A$ was arbitrary, we take the $\sup$ over all $A\in\gS_A$ and obtain
        \begin{equation}
            \label{eq:main_proof_lower_bound}
            q(y_A\lvert\,x;\,\varepsilon_1) \geq \sup_{A\in\gS_A}\Prob_1(A) = \xi(p_A)
        \end{equation}
        We now show that for any $y\neq y_A$ the prediction $q(y\lvert\,x;\,\varepsilon_1)$ is upper bounded by $1 - \xi(1-p_B)$. For that purpose, note that by Lemma~\ref{lem:sandwich} we have that $\zeta(\tau_B) = \Prob_0(\overline{S}_A) \geq 1 - p_B \geq \Prob_0(\underline{S}_B)$. Thus, the collection of sets
        \begin{equation}
            \gS_B := \{S\subseteq\gZ\colon \underline{S}_B \subseteq S \subseteq \overline{S}_B,\,\Prob_0(S) \leq 1 - p_B\}
        \end{equation}
        is not empty. Pick some $B \in\gS_A$ arbitrary and note that, since by assumption $g(\cdot;\,\varepsilon_0)$ is $(p_A,\,p_B)$-confident at $x$ it holds that
        \begin{equation}
            \begin{aligned}
                \E(p(y\lvert\,\phi(x,\,\varepsilon_0))) &= q(y\lvert\,x;\,\varepsilon_0) \leq p_B \\
                &= 1 - (1-p_B) \leq 1 - \Prob_0(B).
            \end{aligned}
        \end{equation}
        Since $\underline{S}_B^c \subseteq B^c \subseteq \overline{S}_B^c$ we can apply part $(ii)$ of Lemma~\ref{lem:np_extended} and obtain the upper bound
        \begin{equation}
            q(y\lvert\,x;\varepsilon_1) = \E(p(y\lvert\,\phi(x,\,\varepsilon_1))) \leq 1 - \Prob_1(B).
        \end{equation}
        Since $B\in\gS_B$ was arbitrary, we take the $\inf$ over all $B\in\gS_B$ and obtain
        \begin{equation}
            \label{eq:main_proof_upper_bound}
            q(y\lvert\,x;\varepsilon_1) \leq \inf_{B\in\gS_B}(1 - \Prob_1(B)) = 1 - \xi(1-p_B).
        \end{equation}
        Combining together~(\ref{eq:main_proof_upper_bound}) and~(\ref{eq:main_proof_lower_bound}), we find that, whenever
        \begin{equation}
            \xi(p_A) + \xi(1-p_B) > 1
        \end{equation}
        it is guaranteed that
        \begin{equation}
            q(y_A\lvert\,x;\,\varepsilon_1) > \max_{y\neq y_A}q(y_\lvert\,x;\,\varepsilon_1)
        \end{equation}
        what concludes the proof.
    \end{proof}

\section{Proofs for Certification with Different Smoothing Distributions}
    \label{adx-sec:distribution_proofs}

    \ifnum\ArXiv=0
    Hereby we only list selected main results and leave the proofs to the arXiv version~\cite{arxiv} due to space limit.

    \begin{cor}[Non-Isotropic Gaussian Smoothing]
        \label{cor:gaussian_noise}
        Suppose $\gZ=\R^m$, $\Sigma:=\mathrm{diag}(\sigma_1^2,\,\ldots,\sigma_m^2)$, $\varepsilon_0\sim\gN(0,\,\Sigma)$ and $\varepsilon_1:=\alpha+\varepsilon_0$ for some $\alpha\in\R^m$.
        Suppose that the $\varepsilon_0$-smoothed classifier $g$ is $(p_A,\,p_B)$-confident at $x\in\gX$ for some $y_A\in\gY$. Then, it holds that $q(y_A\lvert\,x;\,\varepsilon_1) > \max_{y\neq y_A}q(y\lvert\,x;\,\varepsilon_1)$ if $\alpha$ satisfies
        \begin{equation}
            \label{eq:gaussian_condition}
            \sqrt{\sum_{i=1}^{m}\left(\dfrac{\alpha_i}{\sigma_i}\right)^2} < \dfrac{1}{2}\left(\Phi^{-1}(p_A) - \Phi^{-1}(p_B) \right).
        \end{equation}
    \end{cor}

    \begin{cor}[Exponential Smoothing]
        \label{cor_apx:exponential}
        Suppose $\gZ=\R^m_{\geq 0}$, fix some $\lambda>0$ and let $\varepsilon_{0,i}\overset{\mathrm{iid}}{\sim}\mathrm{Exp}(1/\lambda)$, $\varepsilon_0:=(\varepsilon_{0,1},\,\ldots,\,\varepsilon_{0,m})^T$ and $\varepsilon_1:=\alpha + \varepsilon_0$ for some $\alpha\in\R^m_{\geq0}$.
        Suppose that the $\varepsilon_0$-smoothed classifier $g$ is $(p_A,\,p_B)$-confident at $x\in\gX$ for some $y_A\in\gY$. Then, it holds that $q(y_A\lvert\,x;\,\varepsilon_1) > \max_{y\neq y_A}q(y\lvert\,x;\,\varepsilon_1)$ if $\alpha$ satisfies
        \begin{equation}
            \label{eq:exponential_condition}
            \|\alpha\|_1 < -\dfrac{\log(1 - p_A + p_B)}{\lambda}.
        \end{equation}
    \end{cor}

    \fi

    \ifnum\ArXiv=1
    Here, we instantiate Theorem~\ref{thm:main} with different smoothing distributions and solve the robustness condition~\eqref{eq:robustness_condition} for the case where the distribution of $\varepsilon_1$ results from shifting the distribution of $\varepsilon_0$, i.e., $\varepsilon_1 = \alpha + \varepsilon_0$.
    For ease of notation, let $\zeta\colon\R_{\geq0}\to[0,\,1]$ be the function defined by
    \begin{equation}
        t\mapsto\zeta(t):=\Prob_0\left(\overline{S}_{t}\right)
    \end{equation}
    where $\Prob_0$ is the distribution of $\varepsilon_0$ and $\overline{S}_{t}$ is a lower level set; recall that the definitions of lower level sets is
    \begin{equation}
        \begin{gathered}
            \underline{S}_{t} := \left\{z\in\gZ\colon\,\Lambda(z) < t \right\},
            \hspace{1em}\overline{S}_{t} := \left\{z\in\gZ\colon\,\Lambda(z) \leq t \right\},\\
            \text{where}\hspace{1em}\Lambda(z):=\frac{f_1(z)}{f_0(z)}.
        \end{gathered}
    \end{equation}
    Note that the generalized inverse of $\zeta$ corresponds to $\tau_p$, i.e.,
    \begin{equation}
        \zeta^{-1}(p) = \inf\{t\geq 0\lvert\,\zeta(t) \geq p\} = \tau_p
    \end{equation}
    and the function $\xi$ is correspondingly given by
    \begin{equation}
        \xi(p) = \sup\{\Prob_1(S)\lvert\,\underline{S}(\zeta^{-1}(p)) \subseteq S \subseteq \overline{S}(\zeta^{-1}(p))\}
    \end{equation}

    \subsection{Gaussian Smoothing}
    \begin{cor}
        \label{cor:gaussian_noise}
        Suppose $\gZ=\R^m$, $\Sigma:=\mathrm{diag}(\sigma_1^2,\,\ldots,\sigma_m^2)$, $\varepsilon_0\sim\gN(0,\,\Sigma)$ and $\varepsilon_1:=\alpha+\varepsilon_0$ for some $\alpha\in\R^m$.
        Suppose that the $\varepsilon_0$-smoothed classifier $g$ is $(p_A,\,p_B)$-confident at $x\in\gX$ for some $y_A\in\gY$. Then, it holds that $q(y_A\lvert\,x;\,\varepsilon_1) > \max_{y\neq y_A}q(y\lvert\,x;\,\varepsilon_1)$ if $\alpha$ satisfies
        \begin{equation}
            \label{eq:gaussian_condition}
            \sqrt{\sum_{i=1}^{m}\left(\dfrac{\alpha_i}{\sigma_i}\right)^2} < \dfrac{1}{2}\left(\Phi^{-1}(p_A) - \Phi^{-1}(p_B) \right).
        \end{equation}
    \end{cor}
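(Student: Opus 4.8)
The plan is to specialize Theorem~\ref{thm:main} to the pair $\varepsilon_0\sim\gN(0,\Sigma)$ and $\varepsilon_1=\alpha+\varepsilon_0\sim\gN(\alpha,\Sigma)$, and to show that for these Gaussians the abstract robustness condition~\eqref{eq:robustness_condition} collapses to the explicit inequality~\eqref{eq:gaussian_condition}. The first step is to compute the likelihood ratio: writing out the two Gaussian densities, the normalizing constants and the quadratic term $z^\T\Sigma^{-1}z$ cancel, leaving $\Lambda(z)=f_1(z)/f_0(z)=\exp(z^\T\Sigma^{-1}\alpha-\tfrac12\alpha^\T\Sigma^{-1}\alpha)$, which is a strictly increasing function of the single linear functional $z\mapsto z^\T\Sigma^{-1}\alpha$. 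Hence every level set $\overline{S}_t$ (and $\underline{S}_t$) is a half-space of the form $\{z:z^\T\Sigma^{-1}\alpha\le c\}$.

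Second, I would standardize via the substitution $w=\Sigma^{-1/2}z$, $a=\Sigma^{-1/2}\alpha$, so that $\Sigma^{-1/2}\varepsilon_0\sim\gN(0,\Id_m)$ and $\Sigma^{-1/2}\varepsilon_1\sim\gN(a,\Id_m)$, and the relevant linear functional becomes $\langle w,a\rangle$. The key quantity is $\|a\|_2=\sqrt{\sum_i(\alpha_i/\sigma_i)^2}$, which is exactly the left-hand side of~\eqref{eq:gaussian_condition}. Under $\Prob_0$ the scalar $\langle\Sigma^{-1/2}\varepsilon_0,a\rangle\sim\gN(0,\|a\|^2)$, while under $\Prob_1$ it is $\gN(\|a\|^2,\|a\|^2)$; dividing by $\|a\|$ reduces everything to comparing a standard normal against a $\gN(\|a\|,1)$ along a single axis.

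Third, I would evaluate $\zeta$ and $\xi$. Because a hyperplane is Gaussian-null, the likelihood ratio has no atoms, so $\zeta(t)=\Prob_0(\overline{S}_t)$ is continuous and strictly increasing from $(0,\infty)$ onto $(0,1)$; consequently $\zeta^{-1}$ is a genuine inverse, the constraint $\underline{S}_t\subseteq S\subseteq\overline{S}_t$ together with $\Prob_0(S)\le p$ forces $S=\overline{S}_{\zeta^{-1}(p)}$ up to a null set, and the supremum defining $\xi(\zeta^{-1}(p),p)$ equals $\Prob_1(\overline{S}_{\zeta^{-1}(p)})$. Computing the threshold $c$ from $\Phi(c/\|a\|)=p$ and evaluating the half-space probability under $\Prob_1$ gives $\xi(\zeta^{-1}(p_A),p_A)=\Phi(\Phi^{-1}(p_A)-\|a\|)$ and, after the identities $1-\Phi(x)=\Phi(-x)$ and $-\Phi^{-1}(1-p_B)=\Phi^{-1}(p_B)$, $1-\xi(\zeta^{-1}(1-p_B),1-p_B)=\Phi(\Phi^{-1}(p_B)+\|a\|)$. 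Plugging these into~\eqref{eq:robustness_condition} and using strict monotonicity of $\Phi$ turns the condition into $\Phi^{-1}(p_B)+\|a\|<\Phi^{-1}(p_A)-\|a\|$, i.e.\ $\|a\|<\tfrac12(\Phi^{-1}(p_A)-\Phi^{-1}(p_B))$, which is precisely~\eqref{eq:gaussian_condition}.

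I expect the only real subtlety to be the third step: carefully justifying that the abstract optimization in the definition of $\xi$ is solved by the half-space level set. This rests on the no-atom property of $\Lambda$ (equivalently, continuity of $\zeta$), which makes $\underline{S}_t$ and $\overline{S}_t$ coincide up to measure zero and pins down the optimal $S$; the degenerate case $\alpha=0$ (where $\|a\|=0$ and $\varepsilon_1=\varepsilon_0$) should be noted separately but is immediate, since~\eqref{eq:gaussian_condition} then just reduces to $p_A>p_B$. Everything else is one-dimensional Gaussian tail bookkeeping reproducing the computation of Cohen et al.~\cite{cohen2019certified}.
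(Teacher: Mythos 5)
Your proposal is correct and takes essentially the same route as the paper's own proof: both compute the Gaussian likelihood ratio $\Lambda(z)=\exp\bigl(\langle z,\alpha\rangle_{\Sigma^{-1}}-\tfrac12\langle\alpha,\alpha\rangle_{\Sigma^{-1}}\bigr)$, use that its level sets are half-spaces carrying no atoms so that $\xi(\zeta^{-1}(p),p)=\Prob_1\bigl(\overline{S}_{\zeta^{-1}(p)}\bigr)=\Phi\bigl(\Phi^{-1}(p)-\sqrt{\langle\alpha,\alpha\rangle_{\Sigma^{-1}}}\bigr)$, and finish with the same one-dimensional Gaussian algebra. Your whitening substitution $a=\Sigma^{-1/2}\alpha$ is just a cosmetic re-parametrization of the paper's inner product $\langle\cdot,\cdot\rangle_A$ with $A=\Sigma^{-1}$, and your separate note on the degenerate case $\alpha=0$ is a small extra care the paper leaves implicit.
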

    \begin{proof}
        By Theorem~\ref{thm:main} we know that if $\varepsilon_1$ satisfies
        \begin{equation}
            \xi(p_A) + \xi(1-p_B) > 1\label{eq:gaussian_condition_proof},
        \end{equation}
        then it is guaranteed that
        \begin{equation}
            q(y_A\lvert\,x;\,\varepsilon_1) > \max_{y\neq y_A}q(y\lvert\,x;\,\varepsilon_1).
        \end{equation}
        The proof is thus complete if we show that~\eqref{eq:gaussian_condition_proof} reduces to~\eqref{eq:gaussian_condition}.
        For that purpose, denote by $f_0$ and $f_1$ density functions of $\varepsilon_0$ and $\varepsilon_1$,  respectively. Let $A:=\Sigma^{-1}$ and note that the bilinear form $(z_1,\,z_2)\mapsto z_1^T A z_2 =: \langle z_1,\,z_2 \rangle_A$ defines an inner product on $\R^m$. Let $z\in\R^m$ and consider
        \begin{align}
        	\Lambda(z) = \frac{f_1(z)}{f_0(z)} &= \frac{\exp\left(-\frac{1}{2}\langle z - \alpha,\,z-\alpha\rangle_A\right)}{\exp\left(-\frac{1}{2}\langle z,\,z\rangle_A\right)}\\
        	&= \exp\left(\langle z,\,\alpha\rangle_A - \frac{1}{2}\langle \alpha,\,\alpha\rangle_A\right)
        \end{align}
        and thus
        \begin{align}
            \Lambda(z) \leq t \iff \langle z,\,\alpha\rangle_A \leq \log(t) + \frac{1}{2}\langle \alpha,\,\alpha\rangle.
        \end{align}
        Let $Z\sim\gN(0,\,1)$ and notice that
        \begin{equation}
            \frac{\langle \varepsilon_0,\,\alpha\rangle_A}{\sqrt{\langle \alpha,\,\alpha\rangle_A}} \overset{d}{=}Z \overset{d}{=} \frac{\langle \varepsilon_1,\,\alpha\rangle_A - \langle \alpha,\,\alpha\rangle_A}{\sqrt{\langle \alpha,\,\alpha\rangle_A}}.
        \end{equation}
        Let $\partial_t := \overline{S}_t\setminus\underline{S}_t = \{z\colon\,\Lambda(z) = t\}$ and notice that $\Prob_0\left(\partial_t\right) = \Prob_1\left(\partial_t\right) = 0$ and $\Prob_0(\underline{S}_t) = \Prob_0(\overline{S}_t)$. Similarly, it holds that $\Prob_1(\underline{S}_t) = \Prob_1(\overline{S}_t)$. The function $p\mapsto \xi(p)$ is thus given by
        \begin{align}
            \xi(p) = \Prob_1\left(\overline{S}_{\zeta^{-1}(p)}\right).
        \end{align}
        We compute $\zeta$ as
        \begin{align}
        	\zeta(t) &= \Prob\left(\Lambda(\varepsilon_0) \leq t\right) = \Prob\left(\langle \varepsilon_0,\,\alpha\rangle_A \leq \log(t) + \frac{1}{2}\langle \alpha,\,\alpha\rangle_A\right)\\
        	&= \Phi\left(\dfrac{\log(t) + \frac{1}{2}\langle \alpha,\,\alpha\rangle_A}{\sqrt{\langle \alpha,\,\alpha\rangle_A}}\right)
        \end{align}
        and for $p\in[0,\,1]$ its inverse is
        \begin{align}
        	\zeta^{-1}(p) &= \exp\left(\Phi^{-1}(p)\sqrt{\langle \alpha,\,\alpha\rangle_A} - \dfrac{1}{2}\langle \alpha,\,\alpha\rangle_A\right).
        \end{align}
        Thus
        \begin{align}
        	&\Prob\left(\Lambda(\varepsilon_1) \leq \zeta^{-1}(p)\right)\\
        	&\hspace{2em}= \Prob\left(\frac{\langle \varepsilon_1,\,\alpha\rangle_A - \langle \alpha,\,\alpha\rangle_A}{\sqrt{\langle \alpha,\,\alpha\rangle_A}} \leq \frac{\log(\zeta^{-1}(p)) - \frac{1}{2}\langle \alpha,\,\alpha\rangle_A }{\sqrt{\langle \alpha,\,\alpha\rangle_A}}\right)\\
        	&\hspace{2em}= \Phi\left(\dfrac{\left(\Phi^{-1}(p)\sqrt{\langle \alpha,\,\alpha\rangle_A} - \frac{1}{2}\langle \alpha,\,\alpha\rangle_A\right) - \frac{1}{2}\langle \alpha,\,\alpha\rangle_A}{\sqrt{\langle \alpha,\,\alpha\rangle_A}}\right)\\
        	&\hspace{2em}= \Phi\left(\Phi^{-1}(p) - \sqrt{\langle \alpha,\,\alpha\rangle_A}\right).
        \end{align}
        Finally, algebra shows that
        \begin{equation}
            \Phi\left(\Phi^{-1}(p_A) - \sqrt{\langle \alpha,\,\alpha\rangle_A}\right) + \Phi\left(\Phi^{-1}(1-p_B) - \sqrt{\langle \alpha,\,\alpha\rangle_A}\right) > 1
        \end{equation}
        is equivalent to
        \begin{gather}
            \begin{aligned}
        	\sqrt{\sum_{i=1}^{m}\left(\dfrac{\alpha_i}{\sigma_i}\right)^2} < \dfrac{1}{2}\left(\Phi^{-1}(p_A) - \Phi^{-1}(p_B)\right)
        	\end{aligned}
        \end{gather}
        what concludes the proof.
    \end{proof}

    \subsection{Exponential Smoothing}
    \begin{cor}
        \label{cor_apx:exponential}
        Suppose $\gZ=\R^m_{\geq 0}$, fix some $\lambda>0$ and let $\varepsilon_{0,i}\overset{\mathrm{iid}}{\sim}\mathrm{Exp}(1/\lambda)$, $\varepsilon_0:=(\varepsilon_{0,1},\,\ldots,\,\varepsilon_{0,m})^T$ and $\varepsilon_1:=\alpha + \varepsilon_0$ for some $\alpha\in\R^m_{\geq0}$.
        Suppose that the $\varepsilon_0$-smoothed classifier $g$ is $(p_A,\,p_B)$-confident at $x\in\gX$ for some $y_A\in\gY$. Then, it holds that $q(y_A\lvert\,x;\,\varepsilon_1) > \max_{y\neq y_A}q(y\lvert\,x;\,\varepsilon_1)$ if $\alpha$ satisfies
        \begin{equation}
            \label{eq:exponential_condition}
            \|\alpha\|_1 < -\dfrac{\log(1 - p_A + p_B)}{\lambda}.
        \end{equation}
    \end{cor}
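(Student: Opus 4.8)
The plan is to instantiate Theorem~\ref{thm:main} with the specific exponential smoothing setup and reduce the abstract robustness condition~(\ref{eq:robustness_condition}) to the clean inequality~(\ref{eq:exponential_condition}). This mirrors the structure of the Gaussian proof (Corollary~\ref{cor:gaussian_noise}): the entire task is to compute the likelihood ratio $\Lambda$, the level-set function $\zeta$, its generalized inverse $\zeta^{-1}$, and finally the quantity $\xi(\zeta^{-1}(p),\,p) = \Prob_1(\overline{S}_{\zeta^{-1}(p)})$ explicitly for this distribution pair. Since $\varepsilon_0$ has iid $\mathrm{Exp}(1/\lambda)$ coordinates on $\R^m_{\geq0}$ and $\varepsilon_1 = \alpha + \varepsilon_0$ is a pure shift by $\alpha\in\R^m_{\geq0}$, the density of $\varepsilon_1$ is just the shifted product-exponential density, supported on $\{z\colon z_i \geq \alpha_i \,\forall i\}$.

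First I would write down $f_0(z) = \lambda^{-m}\exp(-\|z\|_1/\lambda)$ on $\R^m_{\geq0}$ and $f_1(z) = \lambda^{-m}\exp(-\|z-\alpha\|_1/\lambda)$ on the shifted orthant. The key simplification is that on the support of $f_1$ (where every $z_i \geq \alpha_i \geq 0$) we have $\|z-\alpha\|_1 = \|z\|_1 - \|\alpha\|_1$, so the likelihood ratio collapses to the constant $\Lambda(z) = \exp(\|\alpha\|_1/\lambda)$ there, while $\Lambda(z)=0$ wherever some coordinate satisfies $z_i < \alpha_i$. This degeneracy — $\Lambda$ taking only two relevant values rather than varying continuously as in the Gaussian case — is what makes the level sets here behave differently, and it is the step I expect to require the most care. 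Because $\Lambda$ is essentially an indicator (zero off the shifted orthant, a single positive constant on it), the function $\zeta(t) = \Prob_0(\overline{S}_t)$ is a step function: it equals the $\varepsilon_0$-probability of the complement of the shifted orthant for $0 \leq t < e^{\|\alpha\|_1/\lambda}$ and jumps to $1$ at $t = e^{\|\alpha\|_1/\lambda}$.

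Next I would evaluate $\Prob_0(\{z\colon \exists i,\, z_i < \alpha_i\})$, which by independence and the complement is $1 - \prod_{i=1}^m \Prob(\varepsilon_{0,i} \geq \alpha_i) = 1 - \prod_i e^{-\alpha_i/\lambda} = 1 - e^{-\|\alpha\|_1/\lambda}$. With this, I can identify $\zeta^{-1}(p)$ and then compute $\xi(\zeta^{-1}(p),\,p) = \Prob_1(\overline{S}_{\zeta^{-1}(p)})$; the subtlety is handling the atom of $\zeta$ at the jump and choosing the measurable set $S$ between $\underline{S}_t$ and $\overline{S}_t$ that attains the supremum in $\xi$ subject to $\Prob_0(S)\leq p$, exactly as in the Gaussian proof but now with the level-set boundary carrying positive mass. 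I anticipate that $\xi(\zeta^{-1}(p_A),\,p_A)$ and $1-\xi(\zeta^{-1}(1-p_B),\,1-p_B)$ each reduce to affine expressions in $e^{\|\alpha\|_1/\lambda}$, so that condition~(\ref{eq:robustness_condition}) becomes an inequality that, after taking logarithms, rearranges directly into $\|\alpha\|_1 < -\log(1 - p_A + p_B)/\lambda$.

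The main obstacle, then, is bookkeeping the two-valued likelihood ratio and the atom in $\zeta$ correctly through the definitions of $\underline{S}_t$, $\overline{S}_t$, and $\xi$ — in particular verifying that the optimal $S$ in the supremum defining $\xi$ is the shifted orthant (possibly together with a sub-region to meet the $\Prob_0(S)\leq p$ constraint) and evaluating $\Prob_1$ of that set. Once the explicit forms of $\xi(\zeta^{-1}(p),\,p)$ are in hand, the remaining algebra to obtain~(\ref{eq:exponential_condition}) is routine, so I would relegate it to a final line rather than expand it in full.
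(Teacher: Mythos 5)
Your strategy is the same as the paper's proof: observe that the likelihood ratio is two-valued ($0$ off the shifted orthant, a single constant on it), compute the resulting step function $\zeta$ and its generalized inverse, evaluate $\xi$ by writing each admissible set as a disjoint union of the zero-ratio region and a subset of the shifted orthant, and reduce condition~(\ref{eq:robustness_condition}) to the $\ell_1$ bound. The paper carries out exactly these steps, including the case analysis you anticipate: when $p\leq\Prob_0(S_0)$ one gets $\xi(\zeta^{-1}(p),\,p)=0$, which forces both $p_A$ and $1-p_B$ above that threshold before the affine-in-$e^{\lambda\|\alpha\|_1}$ expressions apply.

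One concrete slip, though: your density convention is inconsistent with the bound you are proving. You take $f_0(z)=\lambda^{-m}\exp(-\|z\|_1/\lambda)$, i.e.\ rate $1/\lambda$, whereas the paper's $\mathrm{Exp}(1/\lambda)$ has density proportional to $\exp(-\lambda\|z\|_1)$, i.e.\ rate $\lambda$. Under your convention the likelihood ratio on the shifted orthant is $e^{\|\alpha\|_1/\lambda}$, $\Prob_0(S_0)=1-e^{-\|\alpha\|_1/\lambda}$, and the robustness condition reduces to $1-p_A+p_B<e^{-\|\alpha\|_1/\lambda}$, which rearranges to $\|\alpha\|_1<-\lambda\log(1-p_A+p_B)$ rather than the claimed~(\ref{eq:exponential_condition}); the two differ by a factor of $\lambda^2$, so your final step would not close as written. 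With the rate-$\lambda$ density everything you outline goes through verbatim and yields the stated bound. A second, smaller imprecision: the supremum defining $\xi$ is not attained at ``the shifted orthant plus a sub-region''; the admissible sets must \emph{contain} $\underline{S}_{\zeta^{-1}(p)}=S_0$ (the complement of the shifted orthant, which carries zero $\Prob_1$-mass), and the optimum is $S_0\,\dot\cup\,T$ with $T$ a subset of the shifted orthant satisfying $\Prob_0(T)\leq p-\Prob_0(S_0)$, giving $\xi(\zeta^{-1}(p),\,p)=e^{\lambda\|\alpha\|_1}\left(p-\Prob_0(S_0)\right)$.
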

    \begin{proof}
        By Theorem~\ref{thm:main} we know that if $\varepsilon_1$ satisfies
        \begin{equation}
            \xi(p_A) + \xi(1-p_B) > 1\label{eq:exponential_condition_proof},
        \end{equation}
        then it is guaranteed that
        \begin{equation}
            q(y_A\lvert\,x;\,\varepsilon_1) > \max_{y\neq y_A}q(y\lvert\,x;\,\varepsilon_1).
        \end{equation}
        The proof is thus complete if we show that~(\ref{eq:exponential_condition_proof}) reduces to~(\ref{eq:exponential_condition}).
        For that purpose, denote by $f_0$ and $f_1$ density functions of $\varepsilon_0$ and $\varepsilon_1$,  respectively,
        and note that
        \begin{align}
            f_1(z) &=
                \begin{cases}
                    \lambda\cdot\exp(-\lambda\|z - \alpha\|_1),\hspace{0.5em} &\min_i(z_i - \alpha_i) \geq 0,\\
                    0, &\mathrm{otherwise},
                \end{cases}\\
            f_0(z) &=
                \begin{cases}
                    \lambda\cdot\exp(-\lambda\|z\|_1),\hspace{0.5em} &\min_i(z_i) \geq 0,\\
                    0, &\mathrm{otherwise},
                \end{cases}
        \end{align}
        and $\forall i$, $z_i - \alpha_i \leq z_i$ and hence $f_0(z) = 0 \Rightarrow f_1(z) = 0$.
        Thus
        \begin{equation}
            \Lambda(z) = \frac{f_1(z)}{f_0(z)} =
                \begin{cases}
                    \exp\left(\lambda\cdot\|\alpha\|_1\right) \hspace{0.5em} &\min_i(z_i-\alpha_i) \geq 0,\\
                    0, &\mathrm{otherwise.}
                \end{cases}
        \end{equation}
        Let $S_0:=\{z\in\R^m_{\geq0}\colon\,\min_i(z_i-\alpha_i) < 0\}$ and note that due to independence
        \begin{align}
            \Prob_0\left(S_0 \right) &= \Prob\left(\bigcup_{i=1}^{m}\{\varepsilon_{0,i} < \alpha_i\}\right)\\
            &= 1 - \Prob\left(\bigcap_{i=1}^{m}\{\varepsilon_{0,i} \geq \alpha_i\}\right)
            = 1 - \prod_{i=1}^{m}\Prob\left(\varepsilon_{0,i} \geq \alpha_i\right)\\
            &= 1 - \prod_{i=1}^{m}\left(1 - \left(1 - \exp\left(-\lambda\,\alpha_i\right)\right)\right)\\
            &= 1 - \exp\left(-\lambda\|\alpha\|_1\right). \label{eq:exponential_proof_s0}
        \end{align}
        Let $t_\alpha:=\exp(\lambda\|\alpha\|_1)$ and compute $\zeta$ as
        \begin{align}
            \zeta(t) &= \Prob\left(\Lambda(\varepsilon_0) \leq t\right)\\
            &= \Prob\left(\Id\{\min_i(\varepsilon_{0,i} - \alpha_i) \geq 0\} \leq t\cdot\exp\left(-\lambda\|\alpha\|_1\right)\right)\\
            &=\begin{cases}
                    1 - \exp\left(-\lambda\|\alpha\|_1\right) \hspace{0.5em} &t < t_\alpha,\\
                    1 \hspace{0.5em} &t \geq t_\alpha.
                \end{cases}
        \end{align}
        Recall that $\zeta^{-1}(p) := \inf\{t\geq 0\colon\,\zeta(t)\geq p\}$ for $p\in[0,\,1]$ and hence
        \begin{align}
            \zeta^{-1}(p) =
                \begin{cases}
                    0 \hspace{1em} &p\leq 1 - \exp(-\lambda\|\alpha\|_1),\\
                    \exp(\lambda\|\alpha\|_1) &p > 1 - \exp(-\lambda\|\alpha\|_1).
                \end{cases}
        \end{align}
        In order to evaluate $\xi$ we compute the lower and strict lower level sets at $t=\zeta^{-1}(p)$.
        Recall that $\underline{S}_{t} = \{z\in\R^m_{\geq0}\colon\, \Lambda(z) < t\}$ and $\overline{S}_{t} = \{z\in\R^m_{\geq0}\colon\, \Lambda(z) \leq t\}$ and consider
        \begin{align}
            \begin{split}
            \underline{S}_{\zeta^{-1}(p)} &=
                \left(S_0^c \cap \left\{z\in\R^m_{\geq0}\colon\, \exp(\lambda\|\alpha\|_1) < \zeta^{-1}(p)\right\}\right) \\
                &\hspace{4em}\cup \left(S_0 \cap \left\{z\in\R^m_{\geq 0}\,|\, 0 < \zeta^{-1}(p)\right\}\right)
            \end{split}\\
            &=\begin{cases}
                \varnothing \hspace{1em} &p \leq 1 - \exp(-\lambda\|\alpha\|_1),\\
                S_0 &p > 1 - \exp(-\lambda\|\alpha\|_1)
            \end{cases}
        \end{align}
        and
        \begin{align}
            \begin{split}
                \overline{S}_{\zeta^{-1}(p)} &= \left(S_0^c \cap \left\{z\in\R^m_{\geq 0}\colon\,\exp(\lambda\|\alpha\|_1) \leq \zeta^{-1}(p)\right\}\right)\\ &\hspace{4em}\dot\cup \left(S_0 \cap \left\{z\in\R^m_{\geq 0}\colon\, 0 \leq \zeta^{-1}(p)\right\}\right)
            \end{split}\\
            &=\begin{cases}
                S_0 \hspace{1em} &p \leq 1 - \exp(-\lambda\|\alpha\|_1),\\
                \R^m_+ &p > 1 - \exp(-\lambda\|\alpha\|_1).
            \end{cases}
        \end{align}
        Suppose that $p\leq 1 - \exp(-\lambda\|\alpha\|_1)$. Then we have that $\underline{S}_{\zeta^{-1}(p)} = \varnothing$ and  $\overline{S}_{\zeta^{-1}(p)} = S_0$ and hence
        \begin{equation}
            \begin{aligned}
                p&\leq 1 - \exp(-\lambda\|\alpha\|_1)\\
                &\Rightarrow \xi(p) = \sup\{\Prob_1(S)\colon\,S\subseteq S_0 \land \Prob_0(S)\leq p\} = 0.
            \end{aligned}
        \end{equation}
        Condition~(\ref{eq:exponential_condition_proof}) can thus be satisfied only if $p_A > 1 - \exp(-\lambda\|\alpha\|_1)$ and $1 - p_B > 1 - \exp(-\lambda\|\alpha\|_1)$.
        In this case  $\underline{S}_{\zeta^{-1}(p)} = S_0$ and  $\overline{S}_{\zeta^{-1}(p)} = \R^m_{\geq_0}$.
        For $p\in[0,\,1]$ let $\gS_p=\{S\subseteq\R^m_{\geq0}\colon\,S_0\subseteq S \subseteq \R^m_{\geq0},\,\Prob_0(S)\leq p\}$.
        Then
        \begin{equation}
            p > 1 - \exp(-\lambda\|\alpha\|_1) \Rightarrow \xi(p) = \sup_{S\in\gS_p}\Prob_1(S).
        \end{equation}
        We can write any $S\in\gS_p$ as the disjoint union $S=S_0\,\dot\cup\,T$ for some $T\subseteq\R^m_{\geq0}$ such that $\Prob_0(S_0\,\dot\cup\,T) \leq p$.
        Note that $\Prob_1\left(S_0\right) = 0$ and since $S_0\cap T = \varnothing$ any $z\in T$ satisfies $0 \leq\min_i\,(z_i - \alpha_i) \leq \min_i\,z_i$ and hence $\Lambda(z) = \exp(\lambda\|\alpha\|_1)$.
        Thus
        \begin{align}
            \Prob_1\left(S\right) &= \Prob_1\left(T\right) = \int_{T}f_1(z)\,dz\\
            & = \int_{T}\exp(\lambda\|\alpha\|_1)f_0(z)\,dz
            = \exp(\lambda\|\alpha\|_1)\cdot\Prob_0\left(T\right).
        \end{align}
        Thus, The supremum of the left hand side over all $S\in\gS_p$ equals the supremum of the right hand side over all $T\in\{T'\subseteq S_0^c\colon\,\Prob_0(T') \leq 1 - \Prob_0(S_0)\}$
        \begin{align}
            \begin{split}
                &\sup_{S\in\gS_p}\,\Prob_1\left(S\right) =\\
                &\exp(\lambda\|\alpha\|_1)
                \cdot \sup\,\{\Prob_1(T')\colon\,T'\subseteq S_0^c,\,\Prob_0(T') \leq p - \Prob_0(S_0)\}
            \end{split}\\
            &= \exp(\lambda\|\alpha\|_1)\cdot(p - \Prob_0(S_0)).
        \end{align}
        Computing $\xi$ at $p_A$ thus yields
        \begin{align}
            \xi(p_A) &= \sup_{S\in\gS_{p_A}}\,\Prob_1\left(S\right) = \exp(\lambda\|\alpha\|_1)\cdot\left(p_A - \Prob_0\left(S_0\right)\right)\\
            &= \exp(\lambda\|\alpha\|_1)\cdot\left(p_A - \left(1 - \exp\left(-\lambda\|\alpha\|_1\right)\right)\right)\\
            &= \exp(\lambda\|\alpha\|_1)\cdot (p_A + \exp\left(-\lambda\|\alpha\|_1\right) - 1)
        \end{align}
        where the third equality follows from~(\ref{eq:exponential_proof_s0}).
        Similarly, computing $\xi$ at $1-p_B$ yields
        \begin{align}
            \xi(1 - p_B) &= \sup_{S\in\gS_{1-p_B}}\,\Prob_1\left(S\right)\\
            &= \exp(\lambda\|\alpha\|_1)\cdot\left(1 - p_B - \Prob_0\left(S_0\right)\right)\\
            &= \exp(\lambda\|\alpha\|_1)\cdot\left(1 - p_B - \left(1 - \exp\left(-\lambda\|\alpha\|_1\right)\right)\right)\\
            &= \exp(\lambda\|\alpha\|_1)\cdot\left(-p_B + \exp\left(-\lambda\|\alpha\|_1\right)\right).
        \end{align}
        Finally, condition~(\ref{eq:exponential_condition_proof}) is satisfied whenever $\alpha$ satisfies
        \begin{gather}
            \begin{split}
                &\exp(\lambda\|\alpha\|_1)\cdot\left(p_A + \exp\left(-\lambda\|\alpha\|_1\right) - 1\right)\\
                &\hspace{4em}+ \exp(\lambda\|\alpha\|_1)\cdot\left(-p_B + \exp\left(-\lambda\|\alpha\|_1\right)\right) > 1
            \end{split}\\
            \iff\nonumber\\
            \begin{split}
                &\exp(-\lambda\|\alpha\|_1) + p_B - \exp(-\lambda\|\alpha\|_1) \\
                &\hspace{6em}< p_A + \exp\left(-\lambda\|\alpha\|_1\right) - 1
            \end{split}\\
            \iff\nonumber\\
            1 - p_A + p_B < \exp\left(-\lambda\|\alpha\|_1\right)\\
            \iff\nonumber\\
            \|\alpha\|_1 < -\dfrac{\log(1 - p_A + p_B)}{\lambda}
        \end{gather}
        what completes the proof.
    \end{proof}

    \subsection{Uniform Smoothing}
    \begin{cor}
        \label{cor_apx:uniform}
        Suppose $\gZ=\R^m$, and $\varepsilon_0\sim\gU([a,\, b]^m)$ for some $a < b$. Set $\varepsilon_1:=\alpha + \varepsilon_0$ for $\alpha\in\R^m$.
        Suppose that the $\varepsilon_0$-smoothed classifier $g$ is $(p_A,\,p_B)$-confident at $x\in\gX$ for some $y_A\in\gY$. Then, it holds that $q(y_A\lvert\,x;\,\varepsilon_1) > \max_{y\neq y_A}q(y\lvert\,x;\,\varepsilon_1)$ if $\alpha$ satisfies
        \begin{equation}
            \label{eq:uniform_condition}
            1 - \left(\dfrac{p_A - p_B}{2}\right) < \prod_{i=1}^m\left(1 - \frac{|\alpha_i|}{b-a}\right)_+
        \end{equation}
        where $(x)_+:=\max\{x,\,0\}$.
    \end{cor}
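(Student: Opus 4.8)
The plan is to mirror the structure of the exponential-smoothing proof (Corollary~\ref{cor_apx:exponential}): invoke Theorem~\ref{thm:main}, compute the likelihood ratio together with the functions $\zeta$ and $\xi$ explicitly, and then reduce the abstract robustness condition
\begin{equation}
    1 - \xi(\zeta^{-1}(1-p_B),\,1-p_B) < \xi(\zeta^{-1}(p_A),\,p_A)
\end{equation}
to the stated inequality~(\ref{eq:uniform_condition}). First I would write down the densities: since $\varepsilon_0\sim\gU([a,\,b]^m)$ and $\varepsilon_1 = \alpha + \varepsilon_0$, the density $f_0$ equals the constant $(b-a)^{-m}$ on the box $[a,\,b]^m$ while $f_1$ equals the same constant on the shifted box $\prod_{i=1}^m [a+\alpha_i,\,b+\alpha_i]$. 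The central object is the overlap region $R := \prod_{i=1}^m\left([a,\,b]\cap[a+\alpha_i,\,b+\alpha_i]\right)$, whose $i$-th factor has length $\left((b-a)-|\alpha_i|\right)_+$; consequently $\Prob_0(R) = \prod_{i=1}^m\left(1 - |\alpha_i|/(b-a)\right)_+ =: q$, which is exactly the right-hand side of~(\ref{eq:uniform_condition}). On the support of $f_0$ the likelihood ratio $\Lambda = f_1/f_0$ takes only the values $1$ (on $R$) and $0$ (on $[a,\,b]^m\setminus R$), while the region where $f_0=0$ but $f_1>0$ carries $\Prob_0$-measure zero and may be ignored when computing $\zeta$.

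Next I would evaluate $\zeta(t) = \Prob_0(\overline{S}_t)$. Since $\Lambda$ is $\Prob_0$-essentially two-valued, $\zeta(t) = 1-q$ for $0\le t < 1$ and $\zeta(t)=1$ for $t\ge 1$, so its generalized inverse is $\zeta^{-1}(p) = 0$ for $p\le 1-q$ and $\zeta^{-1}(p)=1$ for $p>1-q$. For $p\le 1-q$ one has $\overline{S}_{\zeta^{-1}(p)} = \{\Lambda=0\}$, on which $f_1\equiv 0$, so $\xi(\zeta^{-1}(p),\,p)=0$; hence the robustness condition can only hold when $p_A > 1-q$ and $1-p_B > 1-q$. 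In the remaining case $p>1-q$, I would decompose any admissible set as $S = \underline{S}_1\,\dot\cup\,T$ with $T\subseteq R$, observe that $\Prob_1(\underline{S}_1)=0$ and that $\Lambda\equiv 1$ on $R$ forces $\Prob_1(T)=\Prob_0(T)$, and then maximize $\Prob_0(T)$ subject to $\Prob_0(T)\le p-(1-q)$ to obtain $\xi(\zeta^{-1}(p),\,p) = p-(1-q)$.

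Finally, substituting $p=p_A$ and $p=1-p_B$ gives $\xi(\zeta^{-1}(p_A),\,p_A) = p_A-1+q$ and $\xi(\zeta^{-1}(1-p_B),\,1-p_B) = q-p_B$, and simplifying the robustness condition collapses it to $1-\tfrac{p_A-p_B}{2} < q$, which is precisely~(\ref{eq:uniform_condition}). I would also verify self-consistency: this inequality forces $q>\tfrac12$ and, using $p_A\le 1$, both case assumptions $p_A>1-q$ and $p_B<q$, so the branch of the computation we used is the correct one. The main obstacle is the careful treatment of the likelihood-ratio atom at $\Lambda=1$: because $\Lambda$ places positive $\Prob_0$-mass exactly at the level $t=1$, the level sets jump there and $\xi$ must be evaluated through a Neyman--Pearson-type optimization over the "tie" region $R$, exactly as in the exponential case but with the atom sitting at $1$ rather than at $+\infty$.
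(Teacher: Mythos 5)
Your proposal is correct and follows essentially the same route as the paper's proof: write down the two constant densities, observe the likelihood ratio takes only the values $0$ and $1$ ($\Prob_0$-almost surely), compute $\zeta$, its generalized inverse, and $\xi$ via the Neyman--Pearson-style optimization over the tie region $I_0\cap I_1$, and reduce the abstract condition of Theorem~\ref{thm:main} to the stated product inequality. The only differences are cosmetic---you parameterize by the overlap mass $q=\prod_{i=1}^m\left(1-|\alpha_i|/(b-a)\right)_+$ where the paper uses $\Prob_0(S_0)=1-q$ with $S_0=I_0\setminus I_1$, and your closing self-consistency check (that the stated inequality forces the branch $p_A>1-q$ and $p_B<q$) makes explicit a step the paper leaves implicit.
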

    \begin{proof}
        By Theorem~\ref{thm:main} we know that if $\varepsilon_1$ satisfies
        \begin{equation}
            \xi(p_A) + \xi(1-p_B) > 1\label{eq:uniform_condition_proof},
        \end{equation}
        then it is guaranteed that
        \begin{equation}
            q(y_A\lvert\,x;\,\varepsilon_1) > \max_{y\neq y_A}q(y\lvert\,x;\,\varepsilon_1).
        \end{equation}
        The proof is thus complete if we show that~(\ref{eq:uniform_condition_proof}) reduces to~(\ref{eq:uniform_condition}).
        For that purpose, denote by $f_0$ and $f_1$ density functions of $\varepsilon_0$ and $\varepsilon_1$,  respectively,
        and let $I_0=[a,\,b]^m$ and $I_1:=\prod_{i=1}^m[a+\alpha_i,\,b+\alpha_i]$ bet the support of $\varepsilon_0$ and $\varepsilon_1$. Consider
        \begin{align}
            f_0(z) &=
                \begin{cases}
                (b - a)^{-m} \hspace{1em} &z\in I_0,\\
                0 &\mathrm{otherwise}
                \end{cases}\\
            f_1(z) &=
                \begin{cases}
                (b-a)^{-m} \hspace{1em} &z\in I_1,\\
                0 &\mathrm{otherwise}.
                \end{cases}
        \end{align}
        Let $S_0:=I_0\setminus I_1$. Then, for any $z\in I_0\cup I_1$
        \begin{align}
            \Lambda(z) = \frac{f_1(z)}{f_0(z)} =
                \begin{cases}
                    0 \hspace{1em} &z\in S_0,\\
                    1 & z \in I_0\cap I_1,\\
                    \infty &z \in I_1 \setminus I_0.
                \end{cases}
        \end{align}
        Note that
        \begin{align}
            \Prob_0\left(S_0\right) &= 1 - \Prob_0\left(I_1\right)\\
            & = 1 - \prod_{i=1}^m\Prob\left(a+\alpha_i \leq \varepsilon_{0,i} \leq b+\alpha_i\right)\\
            & = 1 -  \prod_{i=1}^m\left(1 - \dfrac{|\alpha_i|}{b-a}\right)_+
        \end{align}
        where $(x)_+ = \max\{x,\,0\}$. We then compute $\zeta$ for $t\geq 0$
        \begin{align}
            \zeta(t) &= \Prob\left(\Lambda(\varepsilon_0) \leq t\right) =
                \begin{cases}
                    \Prob_0\left(S_0\right) \hspace{0.5em}&t<1,\\
                    \Prob_0\left(I_0\right) & t\geq 1.
                \end{cases}\\
                &=
                \begin{cases}
                    1 -  \prod_{i=1}^m\left(1 - \frac{|\alpha_i|}{b-a}\right)_+ \hspace{0.5em}&t<1,\\
                    1 & t\geq 1.
                \end{cases}
        \end{align}
        Recall that $\zeta^{-1}(p) := \inf\{t\geq 0\colon\,\zeta(t)\geq p\}$ for $p\in[0,\,1]$ and hence
        \begin{align}
            \zeta^{1}(p) =
                \begin{cases}
                    0 \hspace{0.5em} &p \leq 1 -  \prod_{i=1}^m\left(1 - \frac{|\alpha_i|}{b-a}\right)_+,\\
                    1 \hspace{0.5em} &p > 1 -  \prod_{i=1}^m\left(1 - \frac{|\alpha_i|}{b-a}\right)_+.
                \end{cases}
        \end{align}
        In order to evaluate $\xi$, we compute the lower and strict lower level sets at $t=\zeta^{-1}(p)$.
        Recall that $\underline{S}_{t} = \{z\in\R^m_{\geq0}\colon\, \Lambda(z)< t\}$ and $\overline{S}_{t} = \{z\in\R^m_{\geq 0}\colon\, \Lambda(z) \leq t\}$ and consider
        \begin{align}
            \underline{S}_{\zeta^{-1}(p)} =
                \begin{cases}
                    \varnothing \hspace{0.5em} & p \leq 1 -  \prod_{i=1}^m\left(1 - \frac{|\alpha_i|}{b-a}\right)_+,\\
                    S_0 \hspace{0.5em} & p > 1 -  \prod_{i=1}^m\left(1 - \frac{|\alpha_i|}{b-a}\right)_+
                \end{cases}
        \end{align}
        and
        \begin{align}
            \overline{S}_{\zeta^{-1}(p)} =
                \begin{cases}
                    S_0 \hspace{0.5em} & p \leq 1 -  \prod_{i=1}^m\left(1 - \frac{|\alpha_i|}{b-a}\right)_+,\\
                    I_0 \hspace{0.5em} & p > 1 -  \prod_{i=1}^m\left(1 - \frac{|\alpha_i|}{b-a}\right)_+
                \end{cases}
        \end{align}
        Suppose $p \leq 1 -  \prod_{i=1}^m\left(1 - \frac{|\alpha_i|}{b-a}\right)_+$. Then $\underline{S}_{\zeta^{-1}(p)}=\varnothing$ and $\overline{S}_{\zeta^{-1}(p)}=S_0$ and hence
        \begin{equation}
            \begin{split}
                p &\leq 1 -  \prod_{i=1}^m\left(1 - \frac{|\alpha_i|}{b-a}\right)_+\\ &\hspace{2em}\Rightarrow \xi(p) = \sup\{\Prob_1(S)\colon\, S \subseteq S_0,\,\Prob_0(S)\leq p\} = 0.
            \end{split}
        \end{equation}
        Condition~(\ref{eq:uniform_condition_proof}) can thus be satisfied only if $p_A > 1 -  \prod_{i=1}^m\left(1 - \frac{|\alpha_i|}{b-a}\right)_+$ and $1 - p_B > 1 -  \prod_{i=1}^m\left(1 - \frac{|\alpha_i|}{b-a}\right)_+$.
        In this case  $\underline{S}_{\zeta^{-1}(p)} = S_0$ and  $\overline{S}_{\zeta^{-1}(p)} = I_0$.
        For $p\in[0,\,1]$ let $\gS_p=\{S\subseteq\R^m\colon\,S_0\subseteq S \subseteq I_0,\,\Prob_0(S)\leq p\}$.
        Then
        \begin{align}
            p > 1 -  \prod_{i=1}^m\left(1 - \frac{|\alpha_i|}{b-a}\right)_+\Rightarrow \xi(p) = \sup_{S\in\gS_p}\Prob_1(S).
        \end{align}
        We can write any $S\in\gS_p$ as the disjoint union $S=S_0\,\dot\cup\,T$ for some $T\subseteq I_0 \cap I_1$ such that $\Prob_0(S_0\,\dot\cup\,T) \leq p$.
        Note that $\Prob_1\left(S_0\right) = 0$ and for any $z\in T$, we have $f_0(z) = f_1(z)$.
        Hence
        \begin{align}
            \Prob_1\left(S\right) &= \Prob_1(T) = \Prob_0(T)\\
            &\leq p - \Prob_0(S_0) = p - \left(1 - \prod_{i=1}^m\left(1 - \frac{|\alpha_i|}{b-a}\right)_+\right).
        \end{align}
        Thus, The supremum of the left hand side over all $S\in\gS_p$ equals the supremum of the right hand side over all $T\in\{T'\subseteq I_0\cap I_1\colon\,\Prob_0(T') \leq 1 - \Prob_0(S_0)\}$
        \begin{align}
            \begin{split}
                \sup_{S\in\gS_p}\,\Prob_1\left(S\right) &= \sup\,\{\Prob_1(T')\colon\,T'\subseteq I_0\cap I_1,\\
                &\hspace{6em}\Prob_0(T') \leq p - \Prob_0(S_0)\}
            \end{split}\\
            &= p - \left(1 - \prod_{i=1}^m\left(1 - \frac{|\alpha_i|}{b-a}\right)_+\right).
        \end{align}
        Hence, computing $\xi$ at $p_A$ and $1 - p_B$ yields
        \begin{align}
            \xi(p_A) &= p_A - \left(1 - \prod_{i=1}^m\left(1 - \frac{|\alpha_i|}{b-a}\right)_+\right),\\
            \xi(1 - p_B) &= 1 - p_B - \left(1 - \prod_{i=1}^m\left(1 - \frac{|\alpha_i|}{b-a}\right)_+\right).
        \end{align}
        Finally, condition~(\ref{eq:uniform_condition_proof}) is satisfied whenever $\alpha$ satisfies
        \begin{gather}
            \begin{split}
            &1 - \left(1 - p_B - \left(1 - \prod_{i=1}^m\left(1 - \frac{|\alpha_i|}{b-a}\right)_+\right)\right)\\
            &\hspace{8em} < p_A - \left(1 - \prod_{i=1}^m\left(1 - \frac{|\alpha_i|}{b-a}\right)_+\right)
            \end{split}\\
            \iff\nonumber\\
            p_B + 1 - \prod_{i=1}^m\left(1 - \frac{|\alpha_i|}{b-a}\right)_+<p_A - 1 + \prod_{i=1}^m\left(1 - \frac{|\alpha_i|}{b-a}\right)_+\\
            \iff 2 - p_A + p_B < 2\cdot \prod_{i=1}^m\left(1 - \frac{|\alpha_i|}{b-a}\right)_+\\
            \iff 1 - \left(\dfrac{p_A - p_B}{2}\right) < \prod_{i=1}^m\left(1 - \frac{|\alpha_i|}{b-a}\right)_+
        \end{gather}
        what concludes the proof.
    \end{proof}
    \subsection{Laplacian Smoothing}
    \begin{cor}
        \label{cor_apx:laplacian}
        Suppose $\gZ=\R$ and $\varepsilon_0\sim\gL(0,\,b)$ follows a Laplace distribution with mean $0$ and scale parameter $b>0$. Let $\varepsilon_1:=\alpha + \varepsilon_0$ for $\alpha\in\R$.
        Suppose that the $\varepsilon_0$-smoothed classifier $g$ is $(p_A,\,p_B)$-confident at $x\in\gX$ for some $y_A\in\gY$. Then, it holds that $q(y_A\lvert\,x;\,\varepsilon_1) > \max_{y\neq y_A}q(y\lvert\,x;\,\varepsilon_1)$ if $\alpha$ satisfies
        \begin{align}
            \label{eq:laplace_condition}
            |\alpha| <
                \begin{cases}
                    -b\cdot\log\left(4\,p_B\,(1-p_A)\right)\,&
                    \!\begin{aligned}
                       &(p_A =\frac{1}{2} \,\land \ \, p_B < \frac{1}{2})\\
                       &\hspace{0.3em}\lor\, (p_A> \frac{1}{2}\,\land\,p_B = \frac{1}{2}),
                    \end{aligned}\\
                    -b\cdot \log\left(1 - p_A + p_B\right) & p_A > \frac{1}{2} \,\land\, p_B < \frac{1}{2}.
                \end{cases}
        \end{align}
    \end{cor}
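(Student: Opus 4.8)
The plan is to instantiate Theorem~\ref{thm:main} in the one-dimensional case $\gZ = \R$ with $\varepsilon_0 \sim \gL(0,b)$ and $\varepsilon_1 = \alpha + \varepsilon_0$, and to reduce the abstract robustness condition $1 - \xi(\zeta^{-1}(1-p_B),1-p_B) < \xi(\zeta^{-1}(p_A),p_A)$ to an explicit inequality in $|\alpha|$. Since the Laplace density is symmetric, I would first use the reflection $z\mapsto -z$ to assume without loss of generality that $\alpha \ge 0$, so that the final certificate depends only on $|\alpha|$. With $f_0(z) = \tfrac{1}{2b}e^{-|z|/b}$ and $f_1(z) = \tfrac{1}{2b}e^{-|z-\alpha|/b}$, the first computation is the likelihood ratio
\begin{equation}
\Lambda(z) = \frac{f_1(z)}{f_0(z)} = \exp\!\left(\frac{|z| - |z-\alpha|}{b}\right),
\end{equation}
together with the observation that $\Lambda$ is nondecreasing in $z$: it is constant and equal to $e^{-\alpha/b}$ on $(-\infty,0]$, increases strictly from $e^{-\alpha/b}$ to $e^{\alpha/b}$ on $(0,\alpha)$, and is constant and equal to $e^{\alpha/b}$ on $[\alpha,\infty)$.

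The crux of the argument, and the reason the statement splits into cases, is that $\Lambda$ is constant on the two half-lines $(-\infty,0]$ and $[\alpha,\infty)$, each carrying positive mass ($\Prob_0((-\infty,0]) = \tfrac12$ and $\Prob_0([\alpha,\infty)) = \tfrac12 e^{-\alpha/b}$). Consequently the distribution of $\Lambda(\varepsilon_0)$ has two atoms, so $\zeta(t) = \Prob_0(\overline S_t)$ is piecewise, jumping at $t = e^{-\alpha/b}$ and $t = e^{\alpha/b}$, and on these atoms the set-optimization defining $\xi$ is nontrivial: among $S$ with $\underline S_t \subseteq S \subseteq \overline S_t$ and $\Prob_0(S) \le p$, the supremum of $\Prob_1(S)$ is realized by filling the atom only up to the probability budget $p$, which is tractable because $\Lambda$ is a known constant there (so $\Prob_1$ restricted to the atom is that constant times $\Prob_0$). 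I would therefore compute $\zeta$ and its generalized inverse $\zeta^{-1}$ explicitly and then evaluate $\xi(\zeta^{-1}(p),p)$ by determining, for a given $p$, whether $\zeta^{-1}(p)$ lands on the lower atom, in the strictly increasing interior, or on the upper atom. This trichotomy is governed by comparing $p$ with $\tfrac12$ and with $1 - \tfrac12 e^{-\alpha/b}$: the lower atom corresponds to $p \le \tfrac12$, the interior to $\tfrac12 < p < 1 - \tfrac12 e^{-\alpha/b}$, and the upper atom to $p \ge 1 - \tfrac12 e^{-\alpha/b}$.

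Applying this evaluation with $p = p_A$ (for the lower bound on $g^{\varepsilon_1}_{k_A}$) and $p = 1-p_B$ (for the upper bound on the competing classes) yields closed forms for both sides of the robustness condition, and the branches of the statement correspond exactly to which regime each of the two thresholds occupies: the boundary hypotheses $p_A = \tfrac12$ or $p_B = \tfrac12$ pin one threshold to the lower atom, where $\xi$ takes the special value $\tfrac12 e^{-\alpha/b}$, whereas the strict hypotheses $p_A > \tfrac12$ and $p_B < \tfrac12$ place the thresholds on the nondegenerate (upper-atom/interior) pieces. In each regime I would substitute the evaluated $\xi$ values into $1 - \xi(\zeta^{-1}(1-p_B),1-p_B) < \xi(\zeta^{-1}(p_A),p_A)$, obtaining an elementary inequality in the single unknown $e^{-|\alpha|/b}$; solving it and taking logarithms produces the stated thresholds on $|\alpha|$. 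The main obstacle is precisely the atom bookkeeping: one must handle the degenerate level sets carefully, keep track of which regime each threshold falls into as $\alpha$ varies, and verify at the end that the solved-for $|\alpha|$ is consistent with the regime assumed in deriving it; the individual algebraic steps are routine once the correct regime is identified.
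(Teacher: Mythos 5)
Your proposal follows the paper's proof essentially step for step: instantiate Theorem~\ref{thm:main} with $\varepsilon_1=\alpha+\varepsilon_0$, reduce to $\alpha\ge 0$ by symmetry, compute the three-piece likelihood ratio with its two atoms at $e^{-\alpha/b}$ and $e^{\alpha/b}$ (of $\Prob_0$-masses $\tfrac{1}{2}$ and $\tfrac{1}{2}e^{-\alpha/b}$), derive $\zeta$ and its generalized inverse, evaluate $\xi$ according to the trichotomy $p\le\tfrac{1}{2}$, $\tfrac{1}{2}<p<1-\tfrac{1}{2}e^{-\alpha/b}$, $p\ge 1-\tfrac{1}{2}e^{-\alpha/b}$, and substitute $p=p_A$ and $p=1-p_B$ into condition~(\ref{eq:robustness_condition}). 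For the branch $p_A>\tfrac{1}{2}\,\wedge\,p_B<\tfrac{1}{2}$ this is exactly the paper's argument and does yield $e^{\alpha/b}(1-p_A+p_B)<1$: the cross-regime bookkeeping you mention reduces, via $(1-p_A+p_B)^2\ge 4p_B(1-p_A)$, to the single binding inequality $e^{\alpha/b}p_B<1-e^{\alpha/b}(1-p_A)$, so that part of your plan is sound.

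The problem is the boundary branch. You assert --- correctly, and this is the genuine Neyman--Pearson value --- that on the lower atom $\xi(\zeta^{-1}(\tfrac{1}{2}),\tfrac{1}{2})=\tfrac{1}{2}e^{-\alpha/b}$, since the optimal admissible set is $(-\infty,0]$, on which $f_1=e^{-\alpha/b}f_0$. But then your final claim that solving the resulting inequality ``produces the stated thresholds'' fails: for $p_A=\tfrac{1}{2}$, $p_B<\tfrac{1}{2}$ the condition becomes $e^{\alpha/b}p_B<\tfrac{1}{2}e^{-\alpha/b}$, which solves to $|\alpha|<-\tfrac{b}{2}\log(2p_B)$, i.e.\ \emph{half} of the stated threshold $-b\log\left(4p_B(1-p_A)\right)=-b\log(2p_B)$; the case $p_B=\tfrac{1}{2}$ behaves the same way. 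Your plan as written therefore cannot establish the first branch of the corollary. For comparison, the paper's own proof reaches that branch only by setting $\xi(\zeta^{-1}(\tfrac{1}{2}),\tfrac{1}{2})=\tfrac{1}{2}$ in equation~(\ref{eq:p_equal_1half}), a value that contradicts the sentence immediately preceding it there (namely $\Prob_1(S)=e^{-\alpha/b}\Prob_0(S)$ for all $S\subseteq(-\infty,0]$); with your value, which is the correct one, the level-set machinery of Theorem~\ref{thm:main} provably cannot certify the larger stated radius, since the bound is attained by base classifiers whose components are indicators of $(-\infty,0]$ and of a right tail of $\Prob_0$-mass $p_B$. So this is not merely skipped algebra: carrying out your plan faithfully forces either the weaker (halved) boundary thresholds or an unjustified substitution, and you would need to resolve this discrepancy explicitly rather than assert that the stated thresholds come out.
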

    \begin{proof}
        By Theorem~\ref{thm:main} we know that if $\varepsilon_1$ satisfies
        \begin{equation}
            \xi(p_A) + \xi(1-p_B) > 1\label{eq:laplace_condition_proof},
        \end{equation}
        then it is guaranteed that
        \begin{equation}
            q(y_A\lvert\,x;\,\varepsilon_1) > \max_{y\neq y_A}q(y\lvert\,x;\,\varepsilon_1).
        \end{equation}
        The proof is thus complete if we show that~(\ref{eq:laplace_condition_proof}) reduces to~(\ref{eq:laplace_condition}).
        For that purpose denote by $f_0$ and $f_1$ density functions of $\varepsilon_0$ and $\varepsilon_1$,  respectively,
        and consider
        \begin{align}
            f_0(z) = \frac{1}{2b}\exp\left(-\frac{|z|}{b}\right), \hspace{1em} f_1(z) = \frac{1}{2b}\exp\left(-\frac{|z - \alpha|}{b}\right).
        \end{align}
        Due to symmetry, assume without loss of generality that $\alpha \geq 0$.
        Then for $z\in\R$
        \begin{align}
            \Lambda(z) &= \frac{f_1(z)}{f_0(z)} = \exp\left(-\frac{|z-\alpha| - |z|}{b}\right)\\&=
                \begin{cases}
                    \exp\left(-\frac{\alpha}{b}\right)\hspace{0.5em} & z < 0,\\
                    \exp\left(\frac{2z - \alpha}{b}\right)\hspace{0.5em} & 0 \leq z < \alpha,\\
                    \exp\left(\frac{\alpha}{b}\right)\hspace{0.5em} & z \geq \alpha.
                \end{cases}
        \end{align}
        Note that the CDFs for $\varepsilon_0$ and $\varepsilon_1$ are given by
        \begin{align}
            F_0(z) &=
                \begin{cases}
                    \frac{1}{2}\exp\left(\frac{z}{b}\right) \hspace{0.5em} & z\leq 0,\\
                    1 - \frac{1}{2}\exp\left(-\frac{z}{b}\right) & z>0,
                \end{cases}\\
            F_1(z) &=
                \begin{cases}
                    \frac{1}{2}\exp\left(\frac{z-\alpha}{b}\right) \hspace{0.5em} & z\leq \alpha,\\
                    1 - \frac{1}{2}\exp\left(-\frac{z-\alpha}{b}\right) & z>\alpha.
                \end{cases}
        \end{align}
        Note that for $\exp\left(-\frac{\alpha}{b}\right) \leq t < \exp\left(\frac{\alpha}{b}\right)$ we have
        \begin{align}
            \begin{split}
                &\Prob_0\left(\exp\left(\frac{2\varepsilon_0 - \alpha}{b}\right) \leq t \,\land \, 0 \leq \varepsilon_0 < \alpha\right) \\
                &\hspace{4em}= \Prob_0\left(\exp\left(-\frac{\alpha}{b}\right) \leq \exp\left(\frac{2\varepsilon_0 - \alpha}{b}\right) \leq t\right)
            \end{split}\\
            &\hspace{4em}=\Prob_0\left(0 \leq \varepsilon_0 \leq \frac{b\log(t) + \alpha}{2}\right)\\
            &\hspace{4em}= F_0\left(\frac{b\log(t) + \alpha}{2}\right) - F_0(0)\\
            &\hspace{4em}= \frac{1}{2} - \frac{1}{2}\exp\left(-\frac{1}{b}\left(\frac{b\log(t) + \alpha}{2}\right)\right)\\
            &\hspace{4em}= \frac{1}{2} - \frac{1}{2\sqrt{t}}\exp\left(-\frac{\alpha}{2b}\right).
        \end{align}
        Computing $\zeta$ yields
        \begin{align}
            \zeta(t) &= \Prob\left(\Lambda(\varepsilon_0) \leq t\right)\\
            \begin{split}
                &= \Prob\left(\exp\left(-\frac{\alpha}{b}\right) \leq t \,\land \, \varepsilon_0 < 0\right)\\
                &\hspace{2em} + \Prob\left(\exp\left(\frac{\alpha}{b}\right) \leq t \,\land \, \varepsilon_0 \geq \alpha\right)\\
                &\hspace{4em} + \Prob\left(\exp\left(\frac{2\varepsilon_0 - \alpha}{b}\right) \leq t \,\land \, 0 \leq \varepsilon_0 < \alpha\right)
            \end{split}\\
            &= \begin{cases}
                    0 \hspace{0.5em} &t < \exp\left(-\frac{\alpha}{b}\right),\\
                    1 - \frac{1}{2\sqrt{t}}\exp\left(-\frac{\alpha}{2b}\right) &\exp\left(-\frac{\alpha}{b}\right) \leq t < \exp\left(\frac{\alpha}{b}\right),\\
                    1 &t \geq \exp\left(\frac{\alpha}{b}\right).
                \end{cases}
        \end{align}
        The inverse is then given by
        \begin{align}
            \zeta^{-1}(p) =
                \begin{cases}
                    0 \hspace{0.5em} & p< \frac{1}{2},\\
                    \frac{1}{4(1-p)^2}\exp\left(-\frac{\alpha}{b}\right) & \frac{1}{2} \leq p < 1 - \frac{1}{2}\exp(-\frac{\alpha}{b}),\\
                    \exp\left(\frac{\alpha}{b}\right) & p \geq 1 - \frac{1}{2}\exp(-\frac{\alpha}{b}).
                \end{cases}
        \end{align}
        In order to evaluate $\xi$, we compute the lower and strict lower level sets at $t=\zeta^{-1}(p)$.
        Recall that $\underline{S}_{t} = \{z\in\R\colon\, \Lambda(z) < t\}$ and $\overline{S}_{t} = \{z\in\R\colon\, \Lambda(z)\leq t\}$ and consider
        \begin{align}
            \underline{S}_{\zeta^{-1}(p)} =
                \begin{cases}
                    \varnothing\hspace{0.5em}&p \leq \frac{1}{2},\\
                    \left(-\infty,\,b\cdot\log\left(\frac{1}{2(1-p)}\right)\right)&\frac{1}{2} < p< 1 - \frac{1}{2}\exp\left(-\frac{\alpha}{b}\right),\\
                    \left(-\infty,\,\alpha\right], &p \geq 1 - \frac{1}{2}\exp\left(-\frac{\alpha}{b}\right)
                \end{cases}
        \end{align}
        and
        \begin{align}
            \overline{S}_{\zeta^{-1}(p)} =
                \begin{cases}
                    \varnothing \hspace{0.5em} &p < \frac{1}{2},\\
                    \left(-\infty,\,b\cdot\log\left(\frac{1}{2(1-p)}\right)\right]&\frac{1}{2} \leq p< 1 - \frac{1}{2}\exp\left(-\frac{\alpha}{b}\right),\\
                    \R &p \geq 1 - \frac{1}{2}\exp\left(-\frac{\alpha}{b}\right).
                \end{cases}
        \end{align}
        Suppose $p < \nicefrac{1}{2}$.
        Then $\underline{S}_{\zeta^{-1}(p)} = \overline{S}_{\zeta^{-1}(p)} = \varnothing$ and hence $\xi(p)=0$ and condition~(\ref{eq:laplace_condition_proof}) cannot be satisfied.
        If $p=\nicefrac{1}{2}$, then $\underline{S}_{\zeta^{-1}(p)} = \varnothing$ and $\overline{S}_{\zeta^{-1}(p)} = (-\infty,\,0]$.
        Note that for $z\leq 0$ we have $f_1(z) = f_0(z)\exp(-\nicefrac{\alpha}{b})$ and hence for any $S\subseteq\overline{S}_{\zeta^{-1}(\nicefrac{1}{2})}$ we have $\Prob_1(S) = \exp(-\nicefrac{\alpha}{b})\cdot\Prob_0(S)$.
        We can thus compute $\xi$ at $1/2$ as
        \begin{align}
            \begin{split}
                &p=\frac{1}{2} \\
                &\Rightarrow \xi\left(1/2\right) = \sup\,\left\{\Prob_1(S)\colon\, S \subseteq\,(-\infty,\,0],\,\Prob_0(S)\leq \frac{1}{2}\right\} = \frac{1}{2}.
            \end{split}\label{eq:p_equal_1half}
        \end{align}
        Now suppose $\nicefrac{1}{2} < p < 1 - \nicefrac{1}{2}\exp(-\nicefrac{\alpha}{b})$.
        In this case, $\underline{S}_{\zeta^{-1}(p)} = (-\infty,\,b\cdot\log(\nicefrac{1}{2(1-p)}))$ and $\overline{S}_{\zeta^{-1}(p)} = \left(-\infty,\,b\cdot\log\left(\nicefrac{1}{2(1-p)}\right)\right]$.
        Since the singleton $\{b\cdot\log(\nicefrac{1}{2(1-p)})\}$ has no probability mass under both $\Prob_0$ and $\Prob_1$, the function $\xi$ is straight forward to compute: if $\frac{1}{2} < p < 1 - \frac{1}{2}\exp(-\frac{\alpha}{b})$, then
        \begin{align}
            \xi(p) &= \Prob\left(\varepsilon_1 \leq b\cdot\log\left(\frac{1}{2(1-p)}\right)\right)\\
            &= \frac{1}{2}\exp\left(\frac{b\cdot\log\left(\frac{1}{2(1-p)}\right)-\alpha}{b}\right)\\
            &=\frac{1}{4(1-p)}\exp\left(-\frac{\alpha}{b}\right). \label{eq:p_bigger_1half}
        \end{align}
        Finally, consider the case where $p \geq 1 - \nicefrac{1}{2}\exp(-\nicefrac{\alpha}{b})$.
        Then $\underline{S}_{\zeta^{-1}(p)} = (-\infty,\,\alpha]$ and $\overline{S}_{\zeta^{-1}(p)} = \R$.
        Any $(-\infty,\,\alpha] \subseteq S \subseteq \R$ can then be written as $S=(-\infty,\,\alpha]\,\dot\cup\,T$ for some $T \subseteq (\alpha,\,\infty)$.
        Hence
        \begin{align}
            \Prob_1(S) &= \Prob(\varepsilon_1 \leq \alpha) + \Prob_1(T) = \frac{1}{2} + \exp\left(\frac{\alpha}{b}\right)\Prob_0(T),\\
            \Prob_0(S) &= \Prob(\varepsilon_0 \leq \alpha) + \Prob_0(T) = 1 - \frac{1}{2}\exp(-\frac{\alpha}{b}) + \Prob_0(T).
        \end{align}
        Thus, if $p \geq 1 - \frac{1}{2}\exp(-\frac{\alpha}{b})$, then
        \begin{align}
            \xi\left(p\right) &= \sup\left\{\Prob_1(S)\colon \,(-\infty,\,\alpha] \subseteq S \subseteq \R,\,\Prob_0(S)\leq p\right\}\\
            \begin{split}
                &= \frac{1}{2} + \sup\bigg\{\Prob_1(T)\colon\,T\subseteq (\alpha,\,\infty),\\
                &\hspace{6em}\,\Prob_0(T)\leq p - 1  + \frac{1}{2}\exp\left(-\frac{\alpha}{b}\right)\bigg\}
            \end{split}\\
            &= \frac{1}{2} + \exp\left(\frac{\alpha}{b}\right)\left(p - 1 + \frac{1}{2}\exp\left(-\frac{\alpha}{b}\right)\right)\\
            &= 1 - \exp\left(\frac{\alpha}{b}\right)\left(1 - p\right). \label{eq:p_bigger_1minus_exp}
        \end{align}
        In order to evaluate condition~(\ref{eq:laplace_condition_proof}), consider
        \begin{align}
             &1 - \xi\left(1-p_B\right) =
                \begin{cases}
                    1 \hspace{0.5em} & p_B > \frac{1}{2}\\
                    \frac{1}{2} & p_B=\frac{1}{2}\\
                    1 - \frac{1}{4p_B}\exp\left(-\frac{\alpha}{b}\right) & \frac{1}{2} > p_B > \exp\left(-\frac{\alpha}{b}\right)\\
                    \exp\left(\frac{\alpha}{b}\right)p_B & \exp\left(-\frac{\alpha}{b}\right) \geq p_B,
                \end{cases}\\
            &\xi\left(p_A\right) =
                \begin{cases}
                    0 \hspace{0.5em} & p_A < \frac{1}{2}\\
                    \frac{1}{2} & p_A=\frac{1}{2}\\
                    \dfrac{1}{4(1-p_A)}\exp\left(-\dfrac{\alpha}{b}\right) &\frac{1}{2} < p_A < 1 - \frac{1}{2}\exp(-\frac{\alpha}{b})\\
                    1 - \exp\left(\dfrac{\alpha}{b}\right)\left(1 - p_A\right) &p_A \geq 1 - \frac{1}{2}\exp(-\frac{\alpha}{b}).
                \end{cases}
        \end{align}
        Note that the case $p_B>\nicefrac{1}{2}$ can be ruled out, since by assumption $p_A\geq p_B$.
        If $p_A = \nicefrac{1}{2}$, then we need $p_B < \nicefrac{1}{2}$.
        Thus, if $p_A=\nicefrac{1}{2}$, then condition~(\ref{eq:laplace_condition_proof}) is satisfied if $p_B < \nicefrac{1}{2}$ and
        \begin{gather}
            \max\left\{1 - \frac{1}{4p_B}\exp\left(-\frac{\alpha}{b}\right),\, \exp\left(\frac{\alpha}{b}\right)\cdot p_B \right\} < \frac{1}{2}\\
            \iff p_B\cdot\exp\left(\frac{\alpha}{b}\right) < \frac{1}{2}\\
            \iff \alpha < -b\cdot\log\left(2p_B\right).
        \end{gather}
        Now consider the case where $p_A > \nicefrac{1}{2}$. If $p_B=\nicefrac{1}{2}$, then condition~(\ref{eq:laplace_condition_proof}) is satisfied if
        \begin{gather}
            \frac{1}{2} < \min\left\{\frac{1}{4(1-p_A)}\exp\left(-\frac{\alpha}{b}\right),\,1 - \exp\left(\frac{\alpha}{b}\right)\left(1 - p_A\right)\right\}\\
            \iff \frac{1}{2} < 1 - \exp\left(\frac{\alpha}{b}\right)(1-p_A)\\
            \iff \alpha < -b\cdot\log\left(2(1-p_A)\right).
        \end{gather}
        If on the other hand, $p_A > \nicefrac{1}{2}$ and $p_B < \nicefrac{1}{2}$, condition~(\ref{eq:laplace_condition_proof}) is satisfied if
        \begin{gather}
            \begin{split}
                &\max\left\{1 - \frac{1}{4p_B}\exp\left(-\frac{\alpha}{b}\right),\, \exp\left(\frac{\alpha}{b}\right)\cdot p_B \right\} < \\
                &\hspace{2em}\min\left\{\frac{1}{4(1-p_A)}\exp\left(-\frac{\alpha}{b}\right),\,1 - \exp\left(\frac{\alpha}{b}\right)\left(1 - p_A\right)\right\}
            \end{split}\\
            \iff\nonumber\\
            p_B\cdot\exp\left(\frac{\alpha}{b}\right) < 1 - \exp\left(\frac{\alpha}{b}\right)(1-p_A)\\
            \iff\nonumber\\
            \alpha < -b \cdot \log\left(1 - p_A + p_B\right).
        \end{gather}
        Finally, we get that condition~(\ref{eq:laplace_condition_proof}) is satisfied, if
        \begin{align}
            |\alpha| <
                \begin{cases}
                    -b\cdot\log\left(4\,p_B\,(1-p_A)\right)\,&
                    \!\begin{aligned}
                        &(p_A =\frac{1}{2} \,\land \ \, p_B < \frac{1}{2})\\
                        &\lor\, (p_A> \frac{1}{2}\,\land\,p_B = \frac{1}{2})
                    \end{aligned}\\
                    -b\cdot \log\left(1 - p_A + p_B\right) & p_A > \frac{1}{2} \,\land\, p_B < \frac{1}{2}
                \end{cases}
        \end{align}
        what concludes the proof.
    \end{proof}
    \subsection{Folded Gaussian Smoothing}
    \begin{cor}
        \label{cor_apx:folded_gaussian}
        Suppose $\gZ=\R_{\geq0}$, $\varepsilon_0\sim|\gN(0,\,\sigma)|$ and $\varepsilon_1:=\alpha+\varepsilon_0$ for some $\alpha>0$.
        Suppose that the $\varepsilon_0$-smoothed classifier $g$ is $(p_A,\,p_B)$-confident at $x\in\gX$ for some $y_A\in\gY$. Then, it holds that $q(y_A\lvert\,x;\,\varepsilon_1) > \max_{y\neq y_A}q(y\lvert\,x;\,\varepsilon_1)$ if $\alpha$ satisfies
        \begin{equation}
            \label{eq:folded_gaussian_condition}
            \alpha < \sigma\cdot\left(\Phi^{-1}\left(\dfrac{1 + \min\{p_A,\,1-p_B\}}{2}\right)-\Phi^{-1}\left(\dfrac{3}{4}\right)\right).
        \end{equation}
    \end{cor}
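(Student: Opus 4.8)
The plan is to follow the same template as the preceding smoothing corollaries: invoke Theorem~\ref{thm:main} to reduce the claim to the inequality
\begin{equation*}
1 - \xi(\zeta^{-1}(1-p_B),\,1-p_B) < \xi(\zeta^{-1}(p_A),\,p_A),
\end{equation*}
and then show this inequality is implied by the stated bound on $\alpha$. First I would record the two densities: writing $\varepsilon_0 = |Y|$ with $Y\sim\gN(0,\sigma^2)$, we have $f_0(z) = \tfrac{2}{\sigma\sqrt{2\pi}}\exp(-z^2/2\sigma^2)$ on $\R_{\geq0}$ and $f_1(z) = f_0(z-\alpha)$ supported on $[\alpha,\infty)$, so that the folded-Gaussian CDF is $z\mapsto 2\Phi(z/\sigma)-1$.

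The computational core is evaluating $\xi(\zeta^{-1}(p),p)$, which I would carry out exactly as in the exponential and uniform cases (Corollaries~\ref{cor_apx:exponential} and~\ref{cor_apx:uniform}), since $f_1$ vanishes on the ``dead zone'' $S_0 := [0,\alpha)$. On $S_0$ the likelihood ratio $\Lambda = f_1/f_0$ is $0$, while on $[\alpha,\infty)$ it equals $\exp(\alpha(2z-\alpha)/2\sigma^2)$, which is strictly increasing. Hence every relevant level set is an interval $[0,z_p]$, both $\Prob_0$ and $\Prob_1$ assign no mass to its boundary, and the supremum defining $\xi$ is attained at $\overline{S}_{\zeta^{-1}(p)}$ itself (via Lemma~\ref{lem:sandwich}). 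Using $\zeta(t) = \Prob_0(\overline{S}_t)$ together with the folded CDF, I would solve $\Prob_0([0,z_p]) = p$ to get $z_p = \sigma\Phi^{-1}(\tfrac{1+p}{2})$, and then evaluate $\Prob_1([0,z_p]) = \Prob(\varepsilon_1 \leq z_p)$ to obtain, for $p$ exceeding $\Prob_0(S_0) = 2\Phi(\alpha/\sigma)-1$,
\begin{equation*}
\xi(\zeta^{-1}(p),\,p) = 2\Phi\!\left(\Phi^{-1}\!\left(\tfrac{1+p}{2}\right) - \tfrac{\alpha}{\sigma}\right) - 1,
\end{equation*}
with $\xi(\zeta^{-1}(p),p)=0$ for $p \leq \Prob_0(S_0)$.

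The clean way to reach the stated bound (and the source of both the $\min$ and the $\Phi^{-1}(3/4)$) is to \emph{relax} the robustness inequality to the sufficient condition that each $\xi$-term exceed $\tfrac12$: if $\xi(\zeta^{-1}(p_A),p_A) > \tfrac12$ and $\xi(\zeta^{-1}(1-p_B),1-p_B) > \tfrac12$, then the left-hand side of the robustness inequality is $<\tfrac12$ and the right-hand side is $>\tfrac12$, so the inequality holds. From the closed form, $\xi(\zeta^{-1}(p),p) > \tfrac12$ is equivalent to $\Phi(\Phi^{-1}(\tfrac{1+p}{2}) - \alpha/\sigma) > \tfrac34$, i.e. $\alpha < \sigma(\Phi^{-1}(\tfrac{1+p}{2}) - \Phi^{-1}(\tfrac34))$; note this automatically forces $p > \Prob_0(S_0)$, so the dead-zone case never interferes. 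Applying this with $p = p_A$ and $p = 1-p_B$ and keeping the more restrictive (smaller) bound, which corresponds to $\min\{p_A,1-p_B\}$ by monotonicity of $\Phi^{-1}$, yields exactly condition~(\ref{eq:folded_gaussian_condition}).

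I expect the main obstacle to be the careful bookkeeping of the level sets $\underline{S}_t,\overline{S}_t$ and the verification that the supremum in the definition of $\xi$ is attained on the full interval $\overline{S}_{\zeta^{-1}(p)}$ --- in particular handling the jump of $\Lambda$ across $z=\alpha$ and confirming that the boundary carries no probability mass. The reduction to the two ``$\xi > \tfrac12$'' inequalities is the conceptual shortcut that avoids having to solve the exact, non-separable inequality $\Phi(\cdots)+\Phi(\cdots) > \tfrac32$ in closed form, at the (acceptable, since the corollary only asserts sufficiency) cost of some tightness.
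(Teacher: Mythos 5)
Your proposal is correct and follows essentially the same route as the paper's proof: the same reduction via Theorem~\ref{thm:main}, the same likelihood-ratio and level-set computations yielding $\xi(\zeta^{-1}(p),\,p) = 2\Phi\left(\Phi^{-1}\left(\frac{1+p}{2}\right) - \frac{\alpha}{\sigma}\right) - 1$ above the dead zone (and $0$ below it), and the same final relaxation, since your requirement that each $\xi$-term exceed $\tfrac12$ is exactly the paper's requirement that each $\Phi$-term exceed $\tfrac34$ in its condition $\Phi(\cdot)+\Phi(\cdot)>\tfrac32$. Your observation that the stated bound on $\alpha$ automatically rules out the dead-zone case also mirrors the first inequality the paper verifies.
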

    \begin{proof}
        By Theorem~\ref{thm:main} we know that if $\varepsilon_1$ satisfies
        \begin{equation}
            \xi(p_A) + \xi(1-p_B) > 1,\label{eq:folded_gaussian_condition_proof}
        \end{equation}
        then it is guaranteed that
        \begin{equation}
            q(y_A\lvert\,x;\,\varepsilon_1) > \max_{y\neq y_A}q(y\lvert\,x;\,\varepsilon_1).
        \end{equation}
        The proof is thus complete if we show that~(\ref{eq:folded_gaussian_condition_proof}) reduces to~(\ref{eq:folded_gaussian_condition}).
        For that purpose denote by $f_0$ and $f_1$ density functions of $\varepsilon_0$ and $\varepsilon_1$,  respectively,
        and consider
        \begin{align}
            f_0(z) &=
                \begin{cases}
                    \frac{2}{\sqrt{2\pi}\sigma}\exp\left(-\frac{z^2}{2\sigma^2}\right) \hspace{0.5em} &z\geq 0\\
                    0 & z < 0
                \end{cases}\\
            f_1(z) &=
                \begin{cases}
                    \frac{2}{\sqrt{2\pi}\sigma}\exp\left(-\frac{(z-\alpha)^2}{2\sigma^2}\right) \hspace{0.5em}&z\geq \alpha\\
                    0 & z<\alpha.
                \end{cases}
        \end{align}
        Then, for $z\geq 0$,
        \begin{align}
            \Lambda(z) = \frac{f_1(z)}{f_0(z)} =
                \begin{cases}
                    0 & z < \alpha,\\
                    \exp\left(\frac{z\alpha}{\sigma^2} - \frac{\alpha^2}{2\sigma^2}\right) \hspace{0.5em} &z\geq \alpha.
                \end{cases}
        \end{align}
        Let $t_\alpha := \exp\left(\frac{\alpha^2}{2\sigma^2}\right)$ and suppose $t<t_\alpha$. Then
        \begin{align}
            \zeta(t) &= \Prob\left(\Lambda(\varepsilon_0)\leq t\right]
            = \Prob\left(\varepsilon_0 < \alpha\right)\\
            &= \int_{0}^{\alpha}\frac{2}{\sqrt{2\pi}\sigma}\exp\left(-\frac{z^2}{2\sigma^2}\right)\,dz \\
            &= 2\cdot\int_{0}^{\alpha/\sigma}\frac{1}{\sqrt{2\pi}}\exp\left(-\frac{s^2}{2}\right)\,ds
            = 2\cdot\Phi\left(\frac{\alpha}{\sigma}\right) - 1.
        \end{align}
        If $t\geq t_\alpha$, then
        \begin{align}
            \zeta(t) &= \Prob\left(\Lambda(\varepsilon_0)\leq t\right)\\
            &= \Prob\left(\frac{\varepsilon_0\,\alpha}{\sigma^2} - \frac{\alpha^2}{2\sigma^2} \leq \log(t)\,\land\,\varepsilon_0 \geq \alpha\right) + \Prob\left(\varepsilon_0 < \alpha\right)\\
            &= \Prob\left(\varepsilon_0  \leq \frac{\sigma^2}{\alpha}\log(t) + \frac{1}{2}\alpha\right)\\
            &= 2\cdot\Phi\left(\frac{\sigma}{\alpha}\log(t) + \frac{\alpha}{2\sigma}\right) - 1
        \end{align}
        and hence
        \begin{align}
            \zeta(t) =
                \begin{cases}
                    2\cdot\Phi\left(\frac{\alpha}{\sigma}\right) - 1 & t < t_\alpha\\
                    2\cdot\Phi\left(\frac{\sigma}{\alpha}\log\left(t\right) + \frac{\alpha}{2\sigma}\right) - 1 & t \geq t_\alpha.
                \end{cases}
        \end{align}
        Note that $\zeta(t_\alpha) = 2\cdot\Phi\left(\frac{\alpha}{\sigma}\right) - 1$ and let $p_\alpha := \zeta(t_\alpha)$.
        Recall that $\zeta^{-1}(p) := \inf\{t\geq 0\colon\,\zeta(t) \geq p\}$, which yields
        \begin{align}
            \zeta^{-1}(p) =
                \begin{cases}
                    0 & p \leq p_\alpha\\
                    \exp\left(\frac{\alpha}{\sigma}\Phi^{-1}\left(\frac{1 + p}{2}\right) - \frac{\alpha^2}{2\sigma^2}\right) & p > p_\alpha.
                \end{cases}
        \end{align}
        In order to evaluate $\xi$ we compute the lower and strict lower level sets at $t=\zeta^{-1}(p)$. Recall that $\underline{S}_{t} = \{z\in\R_{\geq0}\colon\, \Lambda(z) < t\}$ and $\overline{S}_{t} = \{z\in\R_{\geq0}\colon\, \Lambda(z) \leq t\}$.
        Let $S_0:=[0,\,\alpha)$ and note that if $p \leq p_\alpha$, we have $\zeta^{-1}(p) = 0$ and hence
        $\underline{S}_{\zeta^{-1}(p)}=\varnothing$ and $\overline{S}_{\zeta^{-1}(p)} = S_0$.
        If, on the other hand $p>p_\alpha$, then
        \begin{align}
            \underline{S}_{\zeta^{-1}(p)} &= \left\{z\geq 0\colon\,\Lambda(z) < \zeta^{-1}(p)\right\}\\
            &\hspace{-1em}= S_0 \cup \left\{z\geq \alpha\colon\,\frac{z\,\alpha}{\sigma^2} - \frac{\alpha^2}{2\sigma^2} < \frac{\alpha}{\sigma}\Phi^{-1}\left(\frac{1 + p}{2}\right) -  \frac{\alpha^2}{2\sigma^2}\right\}\\
            &\hspace{-1em}= S_0 \cup \left\{z\geq \alpha\colon\, z < \sigma\cdot\Phi^{-1}\left(\frac{1+p}{2}\right)\right\}\\
            &\hspace{-1em}= S_0 \cup \left[\alpha,\,\sigma\cdot\Phi^{-1}\left(\frac{1+p}{2}\right)\right)
        \end{align}
        and
        \begin{align}
            \overline{S}_{\zeta^{-1}(p)} &= \left\{z\geq 0\colon\,\Lambda(z)\leq\zeta^{-1}(p)\right\}\\
            &\hspace{-1em}= S_0 \cup \left\{z\geq \alpha\colon\,\frac{z\,\alpha}{\sigma^2} - \frac{\alpha^2}{2\sigma^2} \leq \frac{\alpha}{\sigma}\Phi^{-1}\left(\frac{1 + p}{2}\right) - \frac{\alpha^2}{2\sigma^2}\right\}\\
            &\hspace{-1em}= S_0 \cup \left\{z\geq \alpha\colon\,z\leq\sigma\cdot\Phi^{-1}\left(\frac{1+p}{2}\right)\right\}\\
            &\hspace{-1em}= S_0 \cup \left[\alpha,\,\sigma\cdot\Phi^{-1}\left(\frac{1+p}{2}\right)\right]\\
            &\hspace{-1em}= \underline{S}_{\zeta^{-1}(p)} \cup \left\{\sigma\cdot\Phi^{-1}\left(\frac{1+p}{2}\right)\right\}.
        \end{align}
        In other words
        \begin{align}
            \underline{S}_{\zeta^{-1}(p)} &=
                \begin{cases}
                    \varnothing \hspace{0.5em} &p\leq p_\alpha,\\
                    S_0 \cup \left[\alpha,\,\sigma\cdot\Phi^{-1}\left(\frac{1+p}{2}\right)\right) &p >p_\alpha,
                \end{cases}\\
            \overline{S}_{\zeta^{-1}(p)} &=
                \begin{cases}
                    S_0 \hspace{0.5em} &p\leq p_\alpha,\\
                    S_0 \cup \left[\alpha,\,\sigma\cdot\Phi^{-1}\left(\frac{1+p}{2}\right)\right] &p > p_\alpha.
                \end{cases}
        \end{align}
        Let $\gS_{p}:=\{S\subseteq\R_{\geq0}\colon\,\underline{S}_{\zeta^{-1}(p)} \subseteq S \subseteq \overline{S}_{\zeta^{-1}(p)},\,\Prob_0(S) \leq p\}$ and recall that $\xi(p) = \sup_{S \in \gS_{p}}\,\Prob_1(S)$.
        Note that for $p\leq p_\alpha$, we have $\gS_{p} = \{S\subseteq\R_{\geq0}\colon\, S \subseteq S_0\,\land\,\Prob_0(S) \leq p\}$ and for $S\subseteq S_0$, it holds that $\Prob_1(S) = 0$.
        Hence
        \begin{align}
            p\leq p_\alpha \Rightarrow \xi\left(p\right) = \sup_{S\in \gS_{p}}\,\Prob_1(S) = 0.
        \end{align}
        If $p> p_\alpha$, then
        \begin{align}
            \begin{split}
                \gS_{p} &=\bigg\{S\subseteq\R_{\geq0}\colon\,S_0 \cup \left[\alpha,\,\sigma\cdot\Phi^{-1}\left(\nicefrac{1+p}{2}\right)\right) \subseteq S\\
                &\hspace{4em}\subseteq S_0 \cup \left[\alpha,\,\sigma\cdot\Phi^{-1}\left(\nicefrac{1+p}{2}\right)\right],
                \,\land\,\Prob_0(S) \leq p\bigg\}.
            \end{split}
        \end{align}
        Since the singleton $\left\{\sigma\cdot\Phi^{-1}\left(\frac{1+p}{2}\right)\right\}$ has no mass under both $\Prob_0$ and $\Prob_1$, we find that if $p> p_\alpha$, then
        \begin{align}
            \xi\left(p\right) &= \Prob\left(0 \leq \varepsilon_1 \leq\sigma\cdot\Phi^{-1}\left(\frac{1+p}{2}\right)\right)\\
            & = \Prob\left(0 \leq \varepsilon_0\leq\sigma\cdot\Phi^{-1}\left(\frac{1+p}{2}\right) -\alpha \right)\\
            & = 2\cdot\Phi\left( \Phi^{-1}\left(\dfrac{1 + p}{2}\right) - \dfrac{\alpha}{\sigma}\right) - 1.
        \end{align}
        Condition~(\ref{eq:folded_gaussian_condition_proof}) can thus be satisfied only if $p_B < p_A$ and
        \begin{gather}
            \begin{split}
                2\cdot\Phi\left(\frac{\alpha}{\sigma}\right) - 1 < \min\{p_A,\,1-p_B\}\\
                \land\ \xi\left(p_A\right) + \xi\left(1-p_b\right) > 1
            \end{split}
        \end{gather}
        that is equivalent to
        \begin{align}
            \begin{split}
                &\alpha < \sigma\cdot \Phi^{-1}\left(\frac{1 + \min\{p_A,\,1-p_B\}}{2}\right)\,\\
                &\hspace{4em}\land\,\Phi\left( \Phi^{-1}\left(\frac{1 + (1-p_B)}{2}\right) - \frac{\alpha}{\sigma}\right) \\
                &\hspace{8em}+ \Phi\left( \Phi^{-1}\left(\frac{1+p_A}{2}\right) - \frac{\alpha}{\sigma}\right) > \frac{3}{2}.\label{eq:folded_gaussian_inequalitites}
            \end{split}
        \end{align}
        Thus, the following is a sufficient condition for the two inequalities in~(\ref{eq:folded_gaussian_inequalitites}) and hence~(\ref{eq:folded_gaussian_condition_proof}) to hold
        \begin{align}
            \alpha < \sigma\cdot\left(\Phi^{-1}\left(\frac{1 + \min\{p_A,\,1-p_B\}}{2}\right)-\Phi^{-1}\left(\frac{3}{4}\right)\right)
        \end{align}
        what completes the proof.
    \end{proof}
    \fi

\section{Comparison of Different Smoothing Distributions}
    \label{adx-sec:smoothing-distributions}
    Here, we provide robustness radii derived from different smoothing distributions in \Cref{adx-table:smoothing-distributions}, each scaled to have unit variance.
    The figure comparison is in \Cref{fig:bound_compare} in main text.

    \begin{table}[!t]
        \centering
        \caption{\small Comparison of certification radii with $p_A + p_B = 1$, the variance and noise dimension are set to $1$ for each distribution.
        }
        \resizebox{\linewidth}{!}{
        \begin{tabular}{c  c  c}
            \toprule
            Distribution & Value Space & Robust Radius\\
            \midrule
            Gaussian$(0,\,1)$ & $(-\infty,\,\infty)$ & $\Phi^{-1}(p_A)$\\
            \midrule
            Laplace$(0,\,{1}/{\sqrt{2}})$ & $(-\infty,\,\infty)$ & $-\log(2-2p_A)/{\sqrt{2}}$\\
            \midrule
            Uniform$[-\sqrt{3},\,-\sqrt{3}]$ & $(-\infty,\,\infty)$ & $2\sqrt{3}\cdot(p_A - {1}/{2})$\\
            \midrule
            Exponential($1$) & $[0,\,\infty)$ & $-\log(2-2p_A)$\\
            \midrule
            FoldedGaussian$(0,\,\sqrt{\frac{\pi}{\pi - 2}})$ & $[0,\,\infty)$ & $\sqrt{\frac{\pi}{\pi - 2}}\cdot\left(\Phi^{-1}\left(\frac{1 + p_A}{2}\right) - \Phi^{-1}\left(\frac{3}{4}\right)\right)$\\
            \bottomrule
        \end{tabular}}
        \label{adx-table:smoothing-distributions}
    \end{table}

\section{Proofs for Certifying Resolvable Transformations}
    \label{appendix:proofs_resolvable}

    \ifnum\ArXiv=0
    We leave the proofs to the arXiv version~\cite{arxiv} due to space limit.
    \fi

    \ifnum\ArXiv=1
    Here, we state the proofs and technical details concerning our results for resolvable transformations.
    Recall the definition of resolvable transformations from the main part of this paper.
    \begin{customdef}{\ref{def:resolvable_transform}}[restated]
        A transformation $\phi\colon\gX\times\gZ\to\gX$ is called resolvable if for any $\alpha\in\gZ$ there exists a resolving function $\gamma_\alpha\colon\gZ\to\gZ$ that is injective, continuously differentiable, has non-vanishing Jacobian and for which
        \begin{equation}
            \phi(\phi(x,\,\alpha),\,\beta) = \phi(x,\,\gamma_\alpha(\beta))\hspace{1em}x\in\gX,\,\beta\in\gZ.
        \end{equation}
        We say that $\phi$ is additive, if $\gamma_\alpha(\beta) = \alpha + \beta$.
    \end{customdef}
    \subsection{Proof of Corollary~\ref{cor:main_thm_resolvable}}
    \begin{customcor}{\ref{cor:main_thm_resolvable}}[restated]
        Suppose that the transformation $\phi$ in Theorem~\ref{thm:main} is resolvable with resolving function $\gamma_\alpha$. Let $\alpha\in\gZ$ and set $\varepsilon_1:=\gamma_\alpha(\varepsilon_0)$ in the definition of the functions $\zeta$ and $\xi$. Then, if $\alpha$ satisfies condition~(\ref{eq:robustness_condition}), it is guaranteed that $g(\phi(x,\alpha);\,\varepsilon_0)=g(x;\,\varepsilon_0)$.
    \end{customcor}
    \begin{proof}
        Since $\phi$ is a resolvable transformation, by definition $\gamma_\alpha$ is injective, continuously differentiable and has non-vanishing Jacobian. By Jacobi's transformation formula~(see e.g., \cite{protter2000}), it follows that the density of $\varepsilon_1$ vanishes outside the image of $\gamma_\alpha$ and is elsewhere given by
        \begin{equation}
            f_1(z) = f_0(\gamma_\alpha^{-1}(z))\lvert\mathrm{det}(J_{\gamma_\alpha^{-1}(z)})\rvert \hspace{1em}\text{for any } z \in \mathrm{Im}(\gamma_\alpha)
        \end{equation}
        where $J_{\gamma_\alpha^{-1}(z)}$ is the Jacobian of $\gamma_\alpha^{-1}(z)$. Since $f_1$ is paramterized by $\alpha$, it follows from Theorem~\ref{thm:main} that if $\alpha$ satisfies~\eqref{eq:robustness_condition} it is guaranteed that $\argmax_y q(y\lvert\,x,\,\varepsilon_1)=\argmax_y q(y\lvert\,x,\,\varepsilon_0)$. The statement of the corollary then follows immediately from the observation that for any $y\in\gY$ we have
        \begin{align}
            q(y\lvert\,x;\,\varepsilon_1) &= \E(p(y\lvert\,\phi(x,\,\varepsilon_1)))\\
            &= \E(p(y\lvert\,\phi(x,\,\gamma_\alpha(\varepsilon_0))))\\
            &= \E(p(y\lvert\,\phi(\phi(x,\,\alpha),\,\varepsilon_0)))\\
            &= q(y\lvert\,\phi(x,\,\alpha);\,\varepsilon_0).
        \end{align}
    \end{proof}

    \subsection{Gaussian Blur}
    Recall that the Gaussian blur transformation is given by a convolution with a Gaussian kernel
    \begin{equation}
        G_\alpha(k) = \dfrac{1}{\sqrt{2\pi\alpha}}\exp\left(-\dfrac{k^2}{2\alpha}\right)
    \end{equation}
    where $\alpha>0$ is the squared kernel radius. Here we show that the transformation $x\mapsto \phi_B(x):= x\,*\,G$ is additive.
    \begin{customlem}{\ref{lem:gaussian_blur_additive}}[restated]
        The Gaussian blur transformation is additive, i.e., for any $\alpha,\,\beta \geq 0$, we have $\phi_B(\phi_B(x,\,\alpha),\,\beta) = \phi_B(x,\,\alpha + \beta)$.
    \end{customlem}

    \begin{proof}
    Note that associativity of the convolution operator implies that
    \begin{align}
        \phi_B(\phi_B(x,\,\alpha),\,\beta) &= (\phi_B(x,\,\alpha)\,*\,G_\beta)\\
        &= ((x\,*\,G_\alpha)\,*\,G_\beta)\\
        &= (x\,*\,(G_\alpha\,*\,G_\beta)).
    \end{align}
    The claim thus follows, if we can show that $(G_\alpha\,*\,G_\beta) = G_{\alpha + \beta}$. Let $\gF$ denote the Fourier transformation and $\gF^{-1}$ the inverse Fourier transformation and note that by the convolution theorem $(G_\alpha\,*\,G_\beta) = \gF^{-1}\{\gF(G_\alpha)\cdot\gF(G_\beta)\}$. Therefore we have to show that $\gF(G_\alpha)\cdot\gF(G_\beta) = \gF(G_{\alpha+\beta})$. For that purpose, consider
    \begin{align}
        \gF(G_\alpha)(\omega) &= \int_{-\infty}^{\infty}\,G_\alpha(y)\exp(-2\,\pi i \omega y)\,dy \\
        &\hspace{-4em}= \int_{-\infty}^{\infty}\dfrac{1}{\sqrt{2\pi\alpha}}\exp\left(-\dfrac{y^2}{2\alpha}\right)\exp\left(-2\,\pi i \omega y\right)\,dy\\
        &\hspace{-4em}=\dfrac{1}{\sqrt{2\pi\alpha}}\int_{-\infty}^{\infty}\exp\left(-\dfrac{y^2}{2\alpha}\right)\left(\cos\left(2\pi\omega y\right) + i \sin\left(2\pi \omega y\right)\right)\,dy\\
        &\hspace{-4em}\overset{(i)}{=}\dfrac{1}{\sqrt{2\pi\alpha}}\int_{-\infty}^{\infty}\exp\left(-\dfrac{y^2}{2\alpha}\right)\,\cos\left(2\pi\omega y\right)\,dy\\
        &\hspace{-4em}\overset{(ii)}{=} \exp\left(-\omega^2\pi^2 2\alpha\right),
    \end{align}
    where $(i)$ follows from the fact that the second term is an integral of an odd function over a symmetric range and $(ii)$ follows from $\int_{-\infty}^{\infty}\exp\left(-a\,y^2\right)\,\cos\left(2\pi\omega y\right)\,dy = \sqrt{\frac{\pi}{a}}\exp(\frac{-(\pi\omega)^2}{a})$ with $a=\frac{1}{2\alpha}$~(see p.~302, eq.~7.4.6 in \cite{abramowitz1972handbook}). This concludes our proof since
    \begin{align}
        (\gF(G_\alpha)\cdot\gF(G_\beta))(\omega) &= \exp\left(-\omega^2\pi^2 2\alpha\right) \cdot \exp\left(-\omega^2\pi^2 2\beta\right)\\
        &= \exp\left(-\omega^2\pi^2 2(\alpha+\beta)\right)\\
        &= \gF(G_{\alpha+\beta})(\omega)
    \end{align}
    and hence
    \begin{align}
        (G_\alpha\,*\,G_\beta) &= \gF^{-1}\{\gF(G_\alpha)\cdot\gF(G_\beta)\} \\
        &= \gF^{-1}\{\gF(G_{\alpha+\beta})\}\\
        &= G_{\alpha+\beta}.
    \end{align}
    \end{proof}
    \begin{rem}
        We notice that the preceding theorem naturally extends to higher dimensional Gaussian kernels of the form
        \begin{equation}
            G_\alpha(k) = \dfrac{1}{(2\pi\alpha)^{\frac{m}{2}}} \exp\left(-\dfrac{\|k\|^2}{2\alpha}\right),\hspace{2em}k\in\R^m.
        \end{equation}
        Consider
        \begin{align}
            \gF(G_\alpha)(\omega) &= \int_{\R^m}G_\alpha(y)\exp\left(-2\pi i \langle\omega,\,y\rangle\right)\,dy\\
            &= \dfrac{1}{(2\pi\alpha)^\frac{m}{2}}\int_{\R^m}\exp\left(-\frac{\left\|y\right\|_2^2}{2\alpha}-2\pi i \langle\omega,\,y\rangle\right)\,dy\\
            &=\prod_{j=1}^m\left(\dfrac{1}{\sqrt{2\pi\alpha}}\int_{\R}\exp\left(-\frac{y_j^2}{2\alpha} - 2\pi i \omega_j y_j\right)\,dy_j\right)\\
            &= \exp\left(-\left\|\omega\right\|_2^2\pi^22\alpha\right)
        \end{align}
        that leads to $(G_\alpha * G_\beta) = G_{\alpha + \beta}$, and hence additivity.
    \end{rem}

    \subsection{Brightness and contrast}
    Recall that the brightness and contrast transformation is defined as
    \begin{equation}
        \begin{aligned}
            \phi_{BC}\colon\gX\times\R^2\to\gX,\hspace{1em}
            (x,\,\alpha)\mapsto e^{\alpha_1}(x + \alpha_2).
        \end{aligned}
    \end{equation}
    \begin{customlem}{\ref{lem:bc_lower_bound}}[restated]
        Let $x\in\gX$, $k\in\R$, $\varepsilon_0\sim\gN(0,\,\mathrm{diag}(\sigma^2,\,\tau^2))$ and $\varepsilon_1\sim\gN(0,\,\mathrm{diag}(\sigma^2,\,e^{-2k}\tau^2))$.
        Suppose that $q(y\lvert\,x;\,\varepsilon_0)\geq p$ for some $p\in[0,\,1]$ and $y\in\gY$. Then
        \begin{align}
            q(y\lvert\,x;\,\varepsilon_1) \geq
                \begin{cases}
                     2\Phi\left(e^k\Phi^{-1}\left(\frac{1+p}{2}\right)\right) - 1 & k \leq 0\\
                    2\left(1 - \Phi\left(e^k\Phi^{-1}(1 - \frac{p}{2})\right)\right) &k>0.
                \end{cases}
        \end{align}
    \end{customlem}
    \begin{proof}
        Note that $\varepsilon_0\sim\gN(0,\,\Sigma)$ and $\varepsilon_1=A\,\varepsilon_0\sim\gN(0,\,A^2\,\Sigma)$ where
        \begin{align}
            A = \begin{pmatrix}
                1 & 0\\
                0 & e^{-k}
                \end{pmatrix},
            \hspace{1em}
            \Sigma = \begin{pmatrix}
                \sigma^2 & 0\\
                0 & \tau^2
                \end{pmatrix}
        \end{align}
        and denote by $f_0$ and $f_1$ the probability density functions of $\varepsilon_0$ and $\varepsilon_1$,  respectively, and denote by $\Prob_0$ and $\Prob_1$ the corresponding probability measures.
        Recall the Definition of Lower Level sets (Definition~\ref{def:lower_level_sets}: for $t\geq0$, (strict) lower level sets are defined as
        \begin{equation}
            \begin{gathered}
                \underline{S_t} := \left\{z\in\gZ\colon\,\Lambda(z) < t \right\},
                \hspace{1em}\overline{S_t} := \left\{z\in\gZ\colon\,\Lambda(z) \leq t \right\},\\
                \text{where}\hspace{1em}\Lambda(z):=\frac{f_1(z)}{f_0(z)}.
            \end{gathered}
        \end{equation}
        Furthermore, recall that the function $\zeta$ is given by
        \begin{equation}
            t\mapsto\zeta(t):=\Prob_0\left(\overline{S}_{t}\right)
        \end{equation}
        where $\Prob_0$ is the distribution of $\varepsilon_0$ and note that the generalized inverse of $\zeta$ corresponds to $\tau_p$, i.e.,
        \begin{equation}
            \zeta^{-1}(p) = \inf\{t\geq 0\lvert\,\zeta(t) \geq p\} = \tau_p
        \end{equation}
        and the function $\xi$ is correspondingly given by
        \begin{equation}
            \xi(p) = \sup\{\Prob_1(S)\lvert\,\underline{S}(\zeta^{-1}(p)) \subseteq S \subseteq \overline{S}(\zeta^{-1}(p))\}.
        \end{equation}

        By assumption we know that $\E(p(y\lvert\,\phi(x,\,\varepsilon_0))) = q(y\lvert\,x;\,\varepsilon_o)\geq p$.
        Note that by Lemma~\ref{lem:sandwich}, for any $p\in[0,\,1]$ we have that
        \begin{equation}
            \Prob_0(\underline{S}_{\zeta^{-1}(p)}) \leq p.
        \end{equation}
        Let $S\subseteq\gZ$ be such that $\underline{S}_{\zeta^{-1}(p)}\subseteq S \subseteq \overline{S}_{\zeta^{-1}(p)}$ and $\Prob_0(S) \leq p$. Then, from part $(i)$ of Lemma~\ref{lem:np_extended}, it follows that $\E(p(y\lvert\,\phi(x,\,\varepsilon_1))) = q(y\lvert\,x;\,\varepsilon_1) \geq \Prob_1(S)$. Note that
        \begin{align}
            \Lambda(z) &= \frac{f_1(z)}{f_0(z)} \\
            &= \frac{\left((2\pi)^2|A^2\,\Sigma|\right)^{-\frac{1}{2}}\exp(-\frac{1}{2}(z^T(A^2\Sigma)^{-1}z))}{\left((2\pi)^2|\Sigma|\right)^{-\frac{1}{2}}\exp(-\frac{1}{2}(z^T(\Sigma)^{-1}z))}\\
            &=\dfrac{1}{|A|}\exp\left(-\frac{1}{2}z^T\,((A^2\,\Sigma)^{-1}  - \Sigma^{-1})\,z\right)\\
            &= \exp\left(k-\frac{z_2^2}{2\tau^2}\left(e^{2k} - 1\right)\right).
        \end{align}
        Note that, if $k=0$, then $\varepsilon_1=\varepsilon_0$ and hence the statement holds in this case.
        Suppose that $k > 0$ and consider
        \begin{align}
            \zeta(t) &= \Prob_0\left(\overline{S}_t\right) = \Prob\left(\exp\left(k-\frac{\varepsilon_{0,2}^2}{2\tau^2}\left(e^{2k} - 1\right)\right) \leq t\right)\\
            &= 1 - \Prob\left(\left(\frac{\varepsilon_{0,2}}{\tau}\right)^2 \leq 2\cdot\frac{k - \log(t)}{e^{2k} - 1}\right)\\
            &= 1 - F_{\chi^2}\left(2\cdot\frac{k - \log(t)}{e^{2k} - 1}\right)\\
            &=\begin{cases}
                0 \hspace{0.5em} &t = 0,\\
                1 - F_{\chi^2}\left(2\cdot\frac{k - \log(t)}{e^{2k} - 1}\right) & 0 < t < e^k,\\
                1 &t \geq e^k,
            \end{cases}
        \end{align}
        where $F_{\chi^2}$ denotes the CDF of the $\chi^2$-distribution with one degree of freedom. Note that for any $t\geq 0$ we have that $\Prob_0(\overline{S}_t) = \Prob_0(\underline{S}_t)$ and thus
        the inverse $\zeta^{-1}(p)=\inf\{t\geq 0\colon\,\zeta(t)\geq p\}$ is given by
        \begin{align}
            \zeta^{-1}(p) =
                \begin{cases}
                    0 & p= 0\\
                    \exp\left(k - F_{\chi^2}^{-1}(1 - p)\cdot\frac{e^{2k} - 1}{2}\right) & 0 < p < 1\\
                    e^k &p=1.
                \end{cases}
        \end{align}
        Thus, for any $p\in[0,\,1]$, we find that
        \begin{align}
            \Prob_0(\overline{S}_{\zeta^{-1}(p)})=\Prob_0(\underline{S}_{\zeta^{-1}(p)}) = \zeta(\zeta^{-1}(p)) = p
        \end{align}
        and
        \begin{align}
            \E_0(p(y\lvert\,\phi(x,\,\varepsilon_0))) = q(y\lvert\,x;\,\varepsilon_0)\geq p = \Prob_0(\overline{S}_{\zeta^{-1}(p)}).
        \end{align}
        Part $(i)$ of  Lemma~\ref{lem:np_extended} implies that $q(y\lvert\,x;\,\varepsilon_1) \geq \Prob_1(\overline{S}_{\zeta^{-1}(p)})$.
        Computing $\Prob_1(\overline{S}_{\zeta^{-1}(p)})$ yields
        \begin{align}
            q(y\lvert\,x;\,\varepsilon_1) &\geq \Prob_1(\overline{S}_{\zeta^{-1}(p)}) \\
            &\hspace{-2em}= 1 - \Prob\left(\left(\frac{\varepsilon_{1,2}}{\tau^2}\right)^2 \leq (k - \log(\zeta^{-1}(p)))\frac{2}{e^{2k} - 1}\right)\\
            &\hspace{-2em}= 1 - \Prob\left(\left(\dfrac{\varepsilon_{0,2}}{\tau^2}\right)^2 \leq (k - \log(\zeta^{-1}(p)))\dfrac{2e^{2k}}{e^{2k} - 1}\right)\\
            &\hspace{-2em}= 1 - F_{\chi^2}\left((k - \log(\zeta^{-1}(p)))\dfrac{2e^{2k}}{e^{2k} - 1}\right)\\
            &\hspace{-2em}= 1- F_{\chi^2}\left(\left(k - \left(k - \dfrac{e^{2k} - 1}{2}F_{\chi^2}^{-1}(1 - p)\right)\right)\dfrac{2e^{2k}}{e^{2k} - 1}\right)\\
            &\hspace{-2em}=1 - F_{\chi^2}\left(e^{2k}F_{\chi^2}^{-1}(1-p)\right).
        \end{align}
        If, on the other hand, $k < 0$, then
        \begin{align}
            \zeta(t) &= \Prob_0\left(\overline{S}_t\right)\\
            &= \Prob\left(\exp\left(k+\frac{\varepsilon_{0,2}^2}{2\tau^2}\left|e^{2k} - 1\right|\right) \leq t\right)\\
            &=\Prob\left(\left(\frac{\varepsilon_{0,2}}{\tau}\right)^2 \leq 2\cdot\frac{\log(t) - k}{\left|e^{2k} - 1\right|}\right)\\
            &= F_{\chi^2}\left(2\cdot\frac{\log(t) - k}{\left|e^{2k} - 1\right|}\right)\\
            &=
            \begin{cases}
                0  &t \leq e^k,\\
                F_{\chi^2}\left(2\cdot\frac{\log(t) - k}{\left|e^{2k} - 1\right|}\right) &t > e^k.
            \end{cases}
        \end{align}
        A similar computation as in the case where $k>0$ leads to an expression for the inverse $\zeta^{-1}(p)=\inf\{t\geq0\lvert\, \zeta(t)\geq p\}$
        \begin{align}
            \zeta^{-1}(p) =
                \begin{cases}
                    0 \hspace{0.5em}&p=0,\\
                    \exp\left(k + F^{-1}_{\chi^2}(p)\cdot\frac{\left|e^{2k} - 1\right|}{2}\right) &p > 0.
                \end{cases}
        \end{align}
        Thus, for any $p\in[0,\,1]$, we find that
        \begin{equation}
            \Prob_0(\overline{S}_{\zeta^{-1}(p)})=\Prob_0(\underline{S}_{\zeta^{-1}(p)}) = \zeta(\zeta^{-1}(p)) = p
        \end{equation}
        and
        \begin{equation}
            \E(p(y\lvert\,\phi(x,\,\varepsilon_0))) = q(y\lvert\,x;\,\varepsilon_0)\geq p = \Prob_0(\overline{S}_{\zeta^{-1}(p)}).
        \end{equation}
        Part $(i)$ of  Lemma~\ref{lem:np_extended} implies that $g_c^{\varepsilon_1}(x) \geq \Prob_1(\overline{S}_{\zeta^{-1}(p)})$.
        Computing $\Prob_1(\overline{S}_{\zeta^{-1}(p)})$ yields
        \begin{align}
             q(y\lvert\,x;\,\varepsilon_1) &\geq \Prob_1(\overline{S}_{\zeta^{-1}(p)})\\
             &= \Prob\left(\left(\frac{\varepsilon_{1,2}}{\tau}\right)^2 \leq 2\cdot\frac{\log(\zeta^{-1}(p)) - k}{|e^{2k} - 1|}\right)\\
             &=\Prob\left(\left(\frac{\varepsilon_{0,2}}{\tau}\right)^2 \leq 2e^{2k}\cdot\frac{\log(\zeta^{-1}(p)) - k}{|e^{2k} - 1|}\right)\\
             &=F_{\chi^2}\left(\left(\left(k + F^{-1}_{\chi^2}(p)\frac{|e^{2k} - 1|}{2}\right) - k\right)\frac{2\,e^{2k}}{|e^{2k} - 1|}\right)\\
             &=F_{\chi^2}\left(e^{2k}F_{\chi^2}^{-1}\left(p\right)\right).
        \end{align}
        Finally, note the following relation between the $\chi^2(1)$ and the standard normal distribution.
        Let $Z\sim\gN(0,1)$ and denote by $\Phi$ the CDF of $Z$.
        Then, for any $z\geq 0$, $F_{\chi^2}(z) = \Prob(Z^2 \leq z) = \Prob(-\sqrt{z} \leq Z \leq \sqrt{z}) = \Phi(\sqrt{z})-\Phi(-\sqrt{z}) = 2\Phi(\sqrt{z}) - 1$ and the inverse is thus given by $F_{\chi^2}^{-1}(p) = (\Phi^{-1}(\frac{1+p}{2}))^2$.
        It follows that
        \begin{align}
            q(y\lvert,\,x;\,\varepsilon_1)\geq
                \begin{cases}
                    2\Phi\left(e^k\Phi^{-1}\left(\frac{1+p}{2}\right)\right) - 1 & k \leq 0,\\
                    2\left(1 - \Phi\left(e^k\Phi^{-1}(1 - \frac{p}{2})\right)\right) &k>0,
                \end{cases}
        \end{align}
        what concludes the proof.
    \end{proof}

    The following lemma establishes another useful property of the distribution of $\varepsilon_1$.
    \begin{lem}
        \label{lem:bc_distribution_shift}
        Let $\varepsilon_0\sim\gN(0,\,\mathrm{diag}(\sigma^2,\,\tau^2))$, $\alpha=(k,\,b)^T\in\R^2$ and $\varepsilon_1\sim\gN(0,\,\mathrm{diag}(\sigma^2,\,e^{-2k}\tau^2))$. Then, for all $x\in\gX$, it holds that $g(\phi_{BC}(x,\,\alpha);\,\varepsilon_0) = g(x;\,\alpha + \varepsilon_1)$.
    \end{lem}
    \begin{proof}
        Let $x\in\gX$, and write $\varepsilon_i = (\varepsilon_{i,1},\,\varepsilon_{i,2})^T$ for $i=0,\,1$. Note that
        \begin{align}
            \phi_{BC}(\phi_{BC}(x,\,\alpha),\,\varepsilon_0)
            &= e^{\varepsilon_{0,1}}\left(\phi_{BC}(x,\,\alpha) + \varepsilon_{0,2}\right)
            = e^{\varepsilon_{0,1}}\left(e^{k}\left(x + b\right) + \varepsilon_{0,2}\right)\\
            &= e^{\varepsilon_{0,1} + k}\left(x + \left(b + e^{-k}\varepsilon_{0,2}\right)\right)
            = \phi_{BC}(x, \alpha + \Tilde{\varepsilon}_0)
        \end{align}
        where $\Tilde{\varepsilon}_0=(\varepsilon_{0,1},\,e^{-k}\varepsilon_{0,2})^T$. Note that  $\Tilde{\varepsilon}_0$ follows a Gaussian distribution since
        \begin{align}
            \Tilde{\varepsilon}_0 = A\cdot \varepsilon_0,\hspace{1em} A = \begin{pmatrix}1 & 0 \\ 0 & e^{-k}\end{pmatrix}
        \end{align}
        and hence $\E\left(\Tilde{\varepsilon}_0\right) = A\cdot\E\left(\varepsilon_0\right) = 0$ and
        \begin{align}
            \mathrm{Cov}\left(\Tilde{\varepsilon}_0\right) = \E\left(\varepsilon_0\,A\,A^T\,\varepsilon_0^T\right) = A^2\cdot \begin{pmatrix}\sigma^2 & 0 \\ 0 & \tau^2\end{pmatrix} = \begin{pmatrix}\sigma^2 & 0 \\ 0 & e^{-2k}\tau^2\end{pmatrix}.
        \end{align}
        The choice $\varepsilon_1 \equiv \Tilde{\varepsilon}_0\sim\gN(0,\,\mathrm{diag}(\sigma_1^2,\,e^{-2k}\sigma_2^2))$ shows that for any $y\in\gY$
        \begin{align}
            q(y\lvert\,\phi_{BC}(x,\,\alpha);\,\varepsilon_0) &= \E\left(p(y\lvert\,\phi(\phi(x,\,\alpha),\,\varepsilon_0)\right) \\
            &= \E\left(p(y\lvert\,\phi(x,\,\alpha+\varepsilon_1)\right)\\
            &= q(y\lvert\,x;\,\alpha+\varepsilon_1)
        \end{align}
        what concludes the proof.
    \end{proof}
    These observations, together with the Gaussian robustness bound from Corollary~\ref{cor:gaussian_noise} allow us to prove Lemma~\ref{lem:bc_certification}.
    \begin{customlem}{\ref{lem:bc_certification}}[restated]
        Let $\varepsilon_0$ and $\varepsilon_1$ be as in Lemma~\ref{lem:bc_lower_bound} and suppose that
        \begin{equation}
            q(y_A\lvert\,x;\,\varepsilon_1) \geq \tilde{p}_A > \tilde{p}_B \geq \max_{y\neq y_A}q(y\lvert\,x;\,\varepsilon_1).
        \end{equation}
        Then it is guaranteed that $y_A = g(\phi_{BC}(x,\,{\alpha});\,\varepsilon_0)$ as long as ${\alpha}=(k,\,b)^T$ satisfies
        \begin{equation}
            \sqrt{\left(\frac{k}{\sigma}\right)^2 + \left(\frac{b}{e^{-k}\tau}\right)^2} < \frac{1}{2}\left(\Phi^{-1}\left(\Tilde{p}_A\right)
            -\Phi^{-1}\left(\tilde{p}_B\right)\right).
        \end{equation}
    \end{customlem}
    \begin{proof}
        Since $\varepsilon_1\sim\gN(0,\,\mathrm{diag}(\sigma^2,\,e^{-2k}\tau^2))$, it follows from Corollary~\ref{cor:gaussian_noise} that whenever $\alpha=(k,\,b)^T$ satisfies
        \begin{equation}
            \sqrt{\left(\frac{k}{\sigma}\right)^2 + \left(\frac{b}{e^{-k}\tau}\right)^2} < \frac{1}{2}\left(\Phi^{-1}\left(\Tilde{p}_A\right)
            -\Phi^{-1}\left(\tilde{p}_B\right)\right),
        \end{equation}
        then it is guaranteed that $y_A = g(x;\,\varepsilon_1)$. The statement now directly follows from Lemma~\ref{lem:bc_distribution_shift}.
    \end{proof}

    \subsection{Gaussian Blur, Brightness, Contrast, and Translation}
        \label{adxsec:proofs-BTBC}
        Recall that the composition of Gaussian Blur, with brightness, contrast and translation is defined as
        \begin{equation}
            \phi_{BTBC}(x,\alpha) := \phi_B( \phi_T( \phi_{BC}(x, \alpha_k, \alpha_b), \alpha_{Tx}, \alpha_{Ty}), \alpha_B ),
        \end{equation}
        where $\phi_B$, $\phi_T$ and $\phi_{BC}$ are Gaussian blur, translation, and brightness and contrast transformations respectively as defined before and
        $\alpha := (\alpha_k, \alpha_b, \alpha_{Tx}, \alpha_{Ty}, \alpha_B)^T \in \R^4 \times \R_{\ge 0}$ is the transformation parameter.
        It is easy to see that this transformation composition satisfies the following properties:
        \begin{itemize}[leftmargin=*]
            \item \textbf{(P1)}~For arbitrary $\alpha^{(1)}, \alpha^{(2)} \in \R^4 \times \R_{\ge 0}$,
            \begin{align}
                \phi_{BTBC}(\phi_{BTBC}(x, \alpha^{(1)}), \alpha^{(2)})=\phi_{BTBC}(x, \alpha)
            \end{align}
            where
            \begin{align}
                \alpha &= \left( \alpha_k^{(1)} + \alpha_k^{(2)}, \alpha_b^{(1)} + \alpha_b^{(2)} / e^{\alpha_k^{(1)}}, \right. \nonumber \\
                & \left. \hspace{1em} \alpha_{Tx}^{(1)} + \alpha_{Tx}^{(2)}, \alpha_{Tx}^{(1)} + \alpha_{Tx}^{(2)}, \alpha_B^{(1)} + \alpha_B^{(2)} \right) .
            \end{align}

            \item \textbf{(P2)}~For an arbitrary $\alpha \in \R^4 \times \R_{\ge 0}$, define the parameterized operators:
            \begin{equation}
                \begin{gathered}
                    \phi_B^\alpha := \phi_B(\cdot; \alpha_B), \hspace{1em}
                    \phi_T^\alpha := \phi_T(\cdot; \alpha_{Tx}, \alpha_{Ty}),\\
                    \phi_{BC}^\alpha := \phi_{BC}(\cdot; \alpha_{k}, \alpha_b)
                \end{gathered}
            \end{equation}
            and let $\phi_1^\alpha, \phi_2^\alpha, \phi_3^\alpha$ be an arbitrary permutation of the above three operators. Then, we have that
            \begin{equation}
                \phi_{BTBC}(x,\alpha) = \phi_1^\alpha \circ \phi_2^\alpha \circ \phi_3^\alpha (x).
            \end{equation}
        \end{itemize}
        \hspace{0em}\\
        The property \textbf{(P1)} states that $\phi_{BTBC}$ is almost additive where the exception happens only on the brightness dimension~($\alpha_b$).
        The brightness dimension is subject to the same contrast effect implied and proved in Lemma~8~(in main paper).
        The property \textbf{(P2)} states that all the three transformations $\phi_B$, $\phi_T$, and $\phi_{BC}$ are commutative.
        The reason is that: (1)~$\phi_{BC}$ is a per-pixel color shift and independent of $\phi_B$ and $\phi_T$;
        (2)~$\phi_B$, Gaussian blur, relies on relative position of pixels and the translation with reflection padding, $\phi_T$, does not change it.

        Based on these two properties, we prove the key results as follows.
        \begin{customcor}{\ref{cor:BTBC_lb}}[restated]
            Let $x \in \gX$, $k\in\R$ and
            let $\epsilon_0 := (\epsilon_0^a, \epsilon_0^b)^T$ be a random variable defined as
            \begin{equation}
                \epsilon_0^a \sim \gN(0, \diag(\sigma_k^2,\,\sigma_b^2,\,\sigma_{T}^2,\,\sigma_{T}^2))
                \,\text{and}\,
                \epsilon_0^b \sim \Exp(\lambda_B).
            \end{equation}
            Similarly, let $\epsilon_1 := (\epsilon_1^a, \epsilon_1^b)$ be a random variable with
            \begin{equation}
                \epsilon_1^a \sim \gN(0, \diag(\sigma_k^2,\,e^{-2k}\sigma_b^2,\,\sigma_{T}^2,\,\sigma_{T}^2))
                \,\text{and}\,
                \epsilon_1^b \sim \Exp(\lambda_B).
            \end{equation}
            For either random variable~(denoted as $\epsilon$), recall that $q(y|x;\,\epsilon) := \E(p(y|\phi_{BTBC}(x,\,\epsilon)))$.
            Suppose that $q(y|x;\,\epsilon_0) \ge p$ for some $p\in [0,\,1]$ and $y\in \cY$. Then $q(y|x;\,\epsilon_1)$ satisfies Eq.~(11).
        \end{customcor}
        \begin{proof}
            According to the commutative property \textbf{(P2)}, we can view $q(y|x;\,\epsilon)$ as
            \begin{align}
                q(y|x,\,\epsilon) & = \E_\epsilon p(y|\phi_{BTBC}(x,\,\epsilon)) \\
                & \hspace{-2.4em} = \E_{\epsilon_k, \epsilon_b} \underbrace{\E_{\epsilon_{Tx},\epsilon_{Ty},\epsilon_B} p(y|\phi_{BC}(\phi_T(\phi_B(x,\epsilon_B),\epsilon_{Tx},\epsilon_{Ty}),\epsilon_k,\epsilon_b))}_{=:q'(y|x;\,\epsilon_k,\,\epsilon_b)}.
            \end{align}
            Notice that $q'(y|x;\,\epsilon_k,\,\epsilon_b)$ is a deterministic value in $[0,\,1]$.
            Its value is dependent on the distribtuion of $\epsilon_{Tx}, \epsilon_{Ty}, \epsilon_B$ and the underlying base classifier.
            Luckily, the random variables $\epsilon_0$ and $\epsilon_1$ have the same distribution over the components $\epsilon_{Tx}, \epsilon_{Ty}$ and $\epsilon_B$.
            Thus, they share the same $q'$ and we write $q(y|x;\,\epsilon_0)$ and $q(y|x;\,\epsilon_1)$ as
            \begin{align}
                q(y|x;\,\epsilon_0) &=  \underset{(\epsilon_k,\epsilon_b)\sim\gN(0,\diag(\sigma_k^2,\sigma_b^2))}{\E} q'(y|x;\,\epsilon_k,\epsilon_b), \\
                q(y|x;\,\epsilon_1) &=  \underset{(\epsilon_k,\epsilon_b)\sim\gN(0,\diag(\sigma_k^2,e^{-2k}\sigma_b^2))}{\E}  q'(y|x;\,\epsilon_k,\epsilon_b).
            \end{align}
            Now, we directly apply \Cref{lem:bc_lower_bound} and the desired lower bound for $q(y|x;\,\epsilon_1)$ follows.
        \end{proof}

        \begin{customlem}{\ref{lem:BTBC_bound}}[restated]
            Let $\epsilon_0$ and $\epsilon_1$ be as in \Cref{cor:BTBC_lb} and suppose that
            \begin{equation}
                q(y_A|x;\,\varepsilon_1) \ge \tilde p_A > \tilde p_B \ge \max_{y\neq y_A} q(y|x;\,\varepsilon_1).
            \end{equation}
            Then it is guaranteed that $y_A = g(\phi_{BTBC}(x,\,\alpha);\varepsilon_0)$
            as long as $p_A' > p_B'$,
            \begin{equation}
                p_A' = \left\{
                \begin{aligned}
                    &0, &\hspace{-5em} \text{if }\tilde p_A \le 1 - \exp(-\lambda_B \alpha_B), \\
                    &\begin{aligned}
                        &\Phi\left(\Phi^{-1}\left( 1 - (1 - \tilde p_A)\exp(\lambda_B \alpha_B) \right) \right.\\
                        &\hspace{1em} \left. - \sqrt{\nicefrac{\alpha_k^2}{\sigma_k^2} + \nicefrac{\alpha_b^2}{(e^{-2\alpha_k}\sigma_b^2)} + \nicefrac{(\alpha_{Tx}^2+\alpha_{Ty}^2)}{\sigma_{T}^2}}\right),
                    \end{aligned}&\text{otherwise}
                \end{aligned}
                \right.
                \label{eq:BTBC_bound_p_A'_pf}
            \end{equation}
            and
            \begin{equation}
                p_B' = \left\{
                \begin{aligned}
                    & 1, & \hspace{-4.5em} \text{if } \tilde p_B \ge \exp(-\lambda_B\alpha_B), \\
                    & \begin{aligned}
                        &1 - \Phi\left( \Phi^{-1}\left( 1 - \tilde p_B \exp(\lambda_B\alpha_B)\right) \right.\\
                        & \hspace{1em} \left. - \sqrt{\nicefrac{\alpha_k^2}{\sigma_k^2} + \nicefrac{\alpha_b^2}{(e^{-2\alpha_k}\sigma_b^2)} + \nicefrac{(\alpha_{Tx}^2+\alpha_{Ty}^2)}{\sigma_{T}^2}}\right).
                    \end{aligned}&\text{otherwise}
                \end{aligned}
                \right.
                \label{eq:BTBC_bound_p_B'_pf}
            \end{equation}
        \end{customlem}

        \begin{proof}
            We notice that for any $y\in\cY$,
            \begin{align}
                q(y|&\phi_{BTBC}(x,\,\alpha);\,\epsilon_0)\nonumber \\
                &=\E_{\epsilon_0} p(y|\phi_{BTBC}(\phi_{BTBC}(x,\,\alpha),\,\epsilon_0)) \\
                \begin{split}
                    &\overset{(a.)}{=} \E_{\epsilon_0} p\Big(y\big|x;\,\alpha + \\
                    &\hspace{3em}((\epsilon_0)_k, (\epsilon_0)_b / e^{\alpha_k}, (\epsilon_0)_{Tx}, (\epsilon_0)_{Ty}, (\epsilon_0)_B)^T \Big)
                \end{split}\\
                &\overset{(b.)}{=}E_{\epsilon_1} p\left(y|x; \alpha + \epsilon_1\right).
            \end{align}
            The step $(a.)$ uses the property \textbf{(P1)} of transformation $\phi_{BTBC}$,
            and the step $(b.)$ follows the definition of $\varepsilon_1$ in \Cref{cor:BTBC_lb}~(we define $k := \alpha_k$ hereinafter for simplicity).
            Thus, $g(\phi_{BTBC}(x,\,\alpha);\epsilon_0) = g(x;\alpha + \epsilon_1)$, and the robustness condition is equivalent to $g(x;\alpha + \epsilon_1) = g(x;\epsilon_1) = y_A$.

            According to \Cref{thm:main}, to prove the robustness, we only need to show that $\xi(\tilde p_A) + \xi(1 - \tilde p_B) > 1$ given $p_A' > p_B'$.
            Note that in the definition of $\xi$, the density functions $f_0$ and $f_1$ are for distributions of $\epsilon_1 \sim \Prob_0$ and $(\alpha + \epsilon_1) \sim \Prob_1$ respectively.

            In the proof below, we will compute the closed-form solution of $\xi(p)$ for any $0\le p\le 1$, and show that $\xi(\tilde p_A) + \xi(1 - \tilde p_B) > 1$ given $p_A' > p_B'$.
            To begin with, we write down $f_0$ and $f_1$.
            \begin{align}
                & f_0(z) =  \dfrac{\lambda_B}{(2\pi)^2 \sigma_k \sigma_b \sigma_T^2} \exp\left(
                -\lambda_B z_B - \nicefrac{(z_{Tx}^2+z_{Ty}^2)}{2\sigma_T^2} \right. \nonumber \\
                & \hspace{12.5em} \left. - \nicefrac{z_k^2}{2\sigma_k^2} - \nicefrac{z_b^2}{2e^{-2k}\sigma_b^2}
                \right), \\
                & f_1(z) = \left\{
                \begin{aligned}
                    \begin{split}
                        &\dfrac{\lambda_B \exp(\lambda_B\alpha_B)}{(2\pi)^2 \sigma_k \sigma_b \sigma_T^2} \exp\left(
                        -\lambda_B z_B \right.\\
                        &\hspace{2.5em} - \nicefrac{(z_{Tx}-\alpha_{Tx})^2}{2\sigma_T^2} - \nicefrac{(z_{Ty}-\alpha_{Ty})^2}{2\sigma_T^2} \\
                        &\hspace{2.5em} \left.  - \nicefrac{(z_k-\alpha_k)^2}{2\sigma_k^2} - \nicefrac{(z_b-\alpha_b)^2}{2e^{-2k}\sigma_b^2}
                        \right),
                    \end{split}& \text{if } z_B \ge \alpha_B,\\
                    & 0, & \text{otherwise}, \\
                \end{aligned}
                \right. \label{eq:BTBC_pf_f_1}
            \end{align}
            where $z = (z_k,z_b,z_{Tx},z_{Ty},z_B)^T \in \R^4 \times R_{\ge 0}$.
            As a result, function $\Lambda = f_1/f_0$ in \Cref{thm:main} writes as
            \begin{align}
                \Lambda(z) = \left\{
                \begin{aligned}
                    \begin{split}
                        &\exp\left(\lambda_B \alpha_B - \frac{\alpha_{Tx}^2}{2\sigma_T^2} - \frac{\alpha_{Ty}^2}{2\sigma_T^2} - \frac{\alpha_k^2}{2\sigma_k^2} - \frac{\alpha_b^2}{2e^{-2k}\sigma_b^2} \right. \\
                        & \hspace{0em} \left. + \frac{\alpha_{Tx} z_{Tx}}{\sigma_T^2} + \frac{\alpha_{Ty} z_{Ty}}{\sigma_T^2} + \frac{\alpha_k z_k}{\sigma_k^2} + \frac{\alpha_b z_b}{e^{-2k}\sigma_b^2}\right),
                    \end{split}&\text{if } z_B \ge \alpha_B,\\
                    & 0 & \text{otherwise}.
                \end{aligned}
                \right.
            \end{align}
            It turns out that for any $t > 0$,
            \begin{align}
                & \underline{S}_t = \{f_1/f_0 < t\} \nonumber \\
                = & \left\{(\hat z_k\sigma_k,\hat z_be^{-k}\sigma_b,\hat z_{Tx}\sigma_T,\hat z_{Ty}\sigma_T)^T \right. \nonumber \\
                & \hspace{1em} \,|\, \hat\alpha_{Tx}\hat z_{Tx} + \hat\alpha_{Ty}\hat z_{Ty} + \hat\alpha_k\hat z_k + \hat\alpha_b\hat z_b \nonumber \\
                & \hspace{1em} < \left. \ln t + \hat\alpha_{Tx}^2/2 + \hat\alpha_{Ty}^2/2 + \hat\alpha_k^2/2 + \hat\alpha_b^2/2 - \lambda_B \alpha_B \right\} \times [\alpha_B, +\infty) \nonumber \\
                & \cup \R^4 \times [0,\alpha_B), \\
                & \overline{S}_t = \{f_1/f_0 \le t\} \nonumber \\
                = & \left\{(\hat z_k\sigma_k,\hat z_be^{-k}\sigma_b,\hat z_{Tx}\sigma_T,\hat z_{Ty}\sigma_T)^T \right. \nonumber \\
                & \hspace{1em} \,|\, \hat\alpha_{Tx}\hat z_{Tx} + \hat\alpha_{Ty}\hat z_{Ty} + \hat\alpha_k\hat z_k + \hat\alpha_b\hat z_b \nonumber \\
                & \hspace{1em} \le \left. \ln t + \hat\alpha_{Tx}^2/2 + \hat\alpha_{Ty}^2/2 + \hat\alpha_k^2/2 + \hat\alpha_b^2/2 - \lambda_B \alpha_B \right\} \times [\alpha_B, +\infty) \nonumber \\
                & \cup \R^4 \times [0,\alpha_B),
            \end{align}
            where $\hat \alpha_{Tx} = \alpha_{Tx} / \sigma_T, \hat \alpha_{Ty} = \alpha_{Ty} / \sigma_T, \hat \alpha_k = \alpha_k / \sigma_k, \hat \alpha_b = \alpha_b / (e^{-k}\sigma_b)$.
            When $t = 0$, $\underline{S}_t = \emptyset$ and $\overline{S}_t = \R^4 \times [0,\alpha_B)$.
            Then, the probability integration shows that
            \begin{align}
                & \tau_p = \inf\{t \ge 0: \Prob_0(\overline{S}_t) \ge p \} \nonumber \\
                = &
                \left\{
                \begin{aligned}
                    & 0, \hspace{14em} \text{if } p \le 1 - \exp(-\lambda_B\alpha_B), \\
                    & \exp\left( \lambda_B \alpha_B + \|\hat \alpha_{:-1}\| \Phi^{-1}\left( 1 - \exp(\lambda_B \alpha_B) (1-p) \right) - 1/2 \|\hat\alpha_{:-1}\|^2 \right), \\
                    & \hspace{15em} \text{otherwise,}
                \end{aligned}
                \right.
                \label{eq:BTBC_pf_tau_p}
            \end{align}
            where $\|\hat \alpha_{:-1}\| = \sqrt{\hat\alpha_{Tx}^2 + \hat\alpha_{Ty}^2 + \hat\alpha_k^2 + \hat\alpha_b^2}$.
            Now we are ready to compute $\xi(p) = \sup\{ \Prob_1(S):\, \underline{S}_{\tau_p} \subset S \subset \overline{S}_{\tau_p} \}$.
            When $p \le 1 - \exp(-\lambda_B \alpha_B)$, we have $S \subset \R^4 \times [0,\alpha_B)$ and $\Prob_1(S) = 0$ because $\Prob_1$ has zero mass for any $z_B < \alpha_B$~(see (\ref{eq:BTBC_pf_f_1})).
            When $p > 1 - \exp(-\lambda_B \alpha_B)$, $\tau_p > 0$.
            Again, from probability integration, we get
            \begin{equation}
                \Prob_1(\underline{S}_{\tau_p}) = \Prob_1(\overline{S}_{\tau_p}) = \Phi\left( \frac{\ln \tau_p - \lambda_B \alpha_B}{\|\hat\alpha_{:-1}\|} - \frac{1}{2} \|\hat\alpha_{:-1}\| \right).
            \end{equation}
            We inject the closed-form solution of $\tau_p$ in (\ref{eq:BTBC_pf_tau_p}) and yield
            \begin{equation}
                \xi(p) = \Prob_1(S) = \Phi\left( \Phi^{-1}\left(1 - (1-p)\exp(\lambda_B \alpha_B)\right) - \|\hat\alpha_{:-1}\| \right)
            \end{equation}
            for any $S$ satisfying $\underline{S}_{\tau_p} \subset S \subset \overline{S}_{\tau_p}$.
            We summarize the above equations and write down the closed-form solution of $\xi(p)$ as such:
            \begin{equation}
                \xi(p) = \left\{
                \begin{aligned}
                    & 0, & \hspace{-10em} \text{if } p \le 1 - \exp(-\lambda_B \alpha_B), \\
                    & \Phi\left( \Phi^{-1}\left(1 - (1-p)\exp(\lambda_B \alpha_B)\right) - \|\hat\alpha_{:-1}\| \right), & \text{otherwise}.
                \end{aligned}
                \right.
            \end{equation}
            We can easily observe that $p_A'$ in lemma statement~(\ref{eq:BTBC_bound_p_A'_pf}) is indeed $\xi(\tilde p_A)$,
            and $p_B'$~(\ref{eq:BTBC_bound_p_B'_pf}) is indeed $1 - \xi(1 - \tilde p_B)$.
            Therefore,
            \begin{equation}
                \xi(\tilde p_A) + \xi(1 - \tilde p_B) > 1
                \iff
                p'_A > p'_B
            \end{equation}
            and using \Cref{thm:main} concludes the proof.
        \end{proof}

        Finally, we simplify the statement of \Cref{lem:BTBC_bound} with slight relaxation.
        \begin{customcor}{\ref{cor:BTBC_bound}}[restated]
            Let $\epsilon_0$ and $\epsilon_1$ be as in \Cref{cor:BTBC_lb} and suppose that
            \begin{equation}
                q(y_A|x;\,\varepsilon_1) \ge \tilde p_A.
            \end{equation}
            Then it is guaranteed that $y_A = g(\phi_{BTBC}(x,\,\alpha);\varepsilon_0)$ as long as
            \begin{equation}
                \tilde p_A > 1 - \exp(-\lambda_B \alpha_B) \left( 1 - \Phi\left(\sqrt{\frac{\alpha_k^2}{\sigma_k^2} + \frac{\alpha_b^2}{e^{-2\alpha_k}\sigma_b^2} + \frac{\alpha_{Tx}^2+\alpha_{Ty}^2}{\sigma_{T}^2}}\right) \right).
                \label{eq:cor-BTBC_bound_pf}
            \end{equation}
        \end{customcor}

        \begin{proof}
            Since $q(y_A|x;\,\varepsilon_1) \ge \tilde p_A$, according to the complement rule, $\max_{y\neq y_A} q(y|x;\,\varepsilon_1) < 1 - \tilde p_A =: \tilde p_B$.
            Inject the $\tilde p_A$ and $\tilde p_B$ into \Cref{lem:BTBC_bound} we find that $p_A' + p_B' = 1$ always hold.
            Therefore, $p_A' > 0.5$ guarantees that $p_A' > p_B'$ and thus the robustness.
            Indeed, from simple algebra,
            $
                p_A' > 0.5 \iff \text{(\ref{eq:cor-BTBC_bound_pf}}).
            $
        \end{proof}
    \fi

\section{Proofs for Differentially Resolvable Transformations}
    \label{apx:proofs_differentially_resolvable}

    \ifnum\ArXiv=0
    In the corresponding appendix of the arXiv version~\cite{arxiv},
    we provide proofs and technical details for theoretical results about certifying differentially resolvable transformations.
    \fi

    \ifnum\ArXiv=1
    Here we provide proofs and technical details for theoretical results about certifying differentially resolvable transformations. First, let us recall the definition of differentially resolvable transformations.
    \begin{customdef}{\ref{def:differentially_resolvable}}[restated]
        Let $\phi\colon\gX\times\gZ_\phi\to\gX$ be a transformation with noise space $\gZ_\phi$ and let $\psi\colon\gX\times\gZ_\psi\to\gX$ be a resolvable transformation with noise space $\gZ_\psi$. We say that $\phi$ can be resolved by $\psi$ if for any $x\in\gX$ there exists function $\delta_x\colon\gZ_\phi\times \gZ_\phi \to \gZ_\psi$ such that for any $\beta\in\gZ_\phi$
        \begin{equation}
            \phi(x,\,\alpha) = \psi(\phi(x,\,\beta),\,\delta_x(\alpha,\,\beta)).
        \end{equation}
    \end{customdef}
    \begin{customthm}{\ref{thm:main2}}[restated]
            Let $\phi\colon\gX\times\gZ_\phi\to\gX$ be a transformation that is resolved by $\psi\colon\gX\times\gZ_\psi\to\gX$. Let $\varepsilon\sim\Prob_\varepsilon$ be a $\gZ_\psi$-valued random variable and suppose that the smoothed classifier $g: \gX \to \gY$ given by $q(y\lvert\,x;\varepsilon) = \E(p(y\lvert\,\psi(x,\,\varepsilon)))$ predicts $g(x;\,\varepsilon) = y_A = \argmax_y q(y\lvert\,x;\varepsilon)$.
            Let $\gS\subseteq\gZ_\phi$ and $\{\alpha_i\}_{i=1}^N\subseteq\gS$ be a set of transformation parameters such that for any $i$, the class probabilities satisfy
            \begin{equation}
                q(y_A\lvert\,\phi(x,\,\alpha_i);\,\varepsilon) \geq p_A^{(i)}\geq p_B^{(i)} \geq \max_{y\neq y_A} q(y\lvert\,\phi(x,\,\alpha_i);\,\varepsilon).
            \end{equation}
            Then there exists a set $\Delta^*\subseteq\gZ_\psi$ with the property that, if for any $\alpha\in\gS,\,\exists\alpha_i$ with $\delta_x(\alpha,\,\alpha_i)\in\Delta^*$, then it is guaranteed that
            \begin{equation}
                q(y_A\lvert\,\phi(x,\,\alpha);\varepsilon) > \max_{y\neq y_A}q(y\lvert\,\phi(x,\,\alpha);\varepsilon).
            \end{equation}
    \end{customthm}
    \begin{proof}
        We prove the theorem by explicitly constructing a region $\Delta^*$ with the desired property by applying Theorem~\ref{thm:main}. For that purpose let $\delta\in\gZ_\psi$ and denote by $\gamma_\delta\colon\gZ_\psi\to\gZ_\psi$ the resolving function of $\psi$, i.e.,
        \begin{equation}
            \psi(\psi(x,\,\delta),\,\delta') = \psi(x,\,\gamma_\delta(\delta')).
        \end{equation}
        Let $\Prob_\gamma$ be the distribution of the random variable $\gamma:=\gamma_\delta(\varepsilon)$ with density function $f_\gamma$ and let
        \begin{gather}
            \underline{S}_t = \{z\in\gZ_\psi\colon\,\Lambda(z) < t\},\hspace{1em}
            \overline{S}_t = \{z\in\gZ_\psi\colon\,\Lambda(z) \leq t\}\\
            \mathrm{where}\hspace{1em}
            \Lambda(z) = \frac{f_\gamma(z)}{f_\varepsilon(z)}.
        \end{gather}
        Furthermore, recall the definition of the function $\zeta\colon\R_{\geq0}\to[0,\,1]$ that is given by $t\mapsto\zeta(t):=\Prob_\varepsilon(\overline{S}_t)$ with generalized inverse $\zeta^{-1}(p):=\inf\{t\geq0\colon\,\zeta(t) \geq p\}$. For $t\geq 0$ and the function $\xi\colon[0,\,1]\to[0,\,1]$ is given by by

        \begin{equation}
            \xi(p):=\sup\{\Prob_\gamma(S)\colon\,\underline{S}_{\zeta^{-1}(p)}\subseteq S \subseteq \overline{S}_\zeta^{-1}(p),\,\Prob_\varepsilon(S)\leq p\}.
        \end{equation}
        By assumption, for every $i=1,\,\ldots,\,n$, the $\varepsilon$-smoothed classifier $g$ is $(p_A^{(i)},\,p_B^{(i)})$-confident at $\phi(x,\,\alpha_i)$. Identify $\Delta_i\subseteq\gZ_\psi$ with the set of perturbations that satisfy the robustness condition~\eqref{eq:robustness_condition} in Theorem~\ref{thm:main}, i.e.,
        \begin{equation}
            \Delta_i\equiv \{\delta\in\gZ_\psi\colon\,1 - \xi(1-p_B^{(i)}) < \xi(p_A^{(i)})\}.
        \end{equation}
        Thus, by Theorem~\ref{thm:main}, we have that for any $\delta \in \Delta_i$
        \begin{equation}
            q(y_A\lvert\,\psi(\phi(x,\,\alpha_i),\,\delta);\,\varepsilon) > \max_{y\neq y_A}q(y\lvert\,\psi(\phi(x,\,\alpha_i),\,\delta);\,\varepsilon).
        \end{equation}
        Finally, note that for the set
        \begin{equation}
           \Delta^* \equiv \bigcap_{i=1}^N \Delta_i
       \end{equation}
       it holds that, if for $\alpha\in\gS$ there exists $\alpha_i$ with $\delta_x(\alpha,\,\alpha_i)\in\Delta^*$, then in particular $\delta_x(\alpha,\,\alpha_i)\in\Delta_i$ and hence, by Theorem~\ref{thm:main} it is guaranteed that
       \begin{align}
           q(y_A\lvert\,\phi(x,\,\alpha);\,\varepsilon) &= q(y_A\lvert\,\psi(\phi(x,\,\alpha_i),\,\delta_x(\alpha,\,\alpha_i));\,\varepsilon)\\
           &> \max_{y\neq y_A} q(y\lvert\,\psi(\phi(x,\,\alpha_i),\,\delta_x(\alpha,\,\alpha_i));\,\varepsilon)\\
           &= \max_{y\neq y_A} q(y\lvert\,\phi(x,\,\alpha);\varepsilon)
       \end{align}
       what concludes the proof.
    \end{proof}

    \begin{customcor}{\ref{cor:rotations_scaling_certificate}}[restated]
            Let $\psi(x,\,\delta) = x + \delta$ and let $\varepsilon\sim\gN(0,\,\sigma^2 \Id_d)$. Furthermore, let $\phi$ be a transformation with parameters in $\gZ_\phi\subseteq\R^m$ and let $\gS\subseteq\gZ_\phi$ and $\{\alpha_i\}_{i=1}^N\subseteq\gS$. Let $y_A\in\cY$ and suppose that for any $i$, the $\varepsilon$-smoothed classifier defined by $q(y\lvert\,x;\varepsilon):=\E(p(y\lvert\,x+\varepsilon))$ has class probabilities that satisfy
            \begin{equation}
                 q(y_A\lvert\,\phi(x,\,\alpha_i);\,\varepsilon) \geq p_A^{(i)} \geq p_B^{(i)} \geq \max_{y\neq y_A}q(y\lvert\,\phi(x,\,\alpha_i);\,\varepsilon).
             \end{equation}
            Then it is guaranteed that $\forall\alpha\in\gS\colon\,y_A = \argmax_y q(y\lvert\,\phi(x,\,\alpha);\varepsilon)$ if the maximum interpolation error
            \begin{equation}
                M_{\gS}:=\max_{\alpha\in\gS}\min_{1\leq i \leq N}\left\|\phi(x,\,\alpha) - \phi(x,\,\alpha_i)\right\|_2
            \end{equation}
            satisfies
            \vspace{-0.7em}
            \begin{equation}
                \begin{gathered}
                    M_{\gS} < R:=\frac{\sigma}{2}\min_{1\leq i \leq N}\left(\Phi^{-1}\left(p_A^{(i)}\right)-\Phi^{-1}\left(p_B^{(i)}\right)\right).
                \end{gathered}
            \end{equation}
    \end{customcor}
    \begin{proof}
        Since the resolvable transformation $\psi$ is given by $\psi(x,\,\delta) = x + \delta$ we can write
        \begin{equation}
            \phi(x,\,\alpha) = \phi(x,\,\alpha_i) + \underbrace{(\phi(x,\,\alpha) - \phi(x,\,\alpha_i))}_{=:\delta_x(\alpha,\,\alpha_i)}.
        \end{equation}
        Furthermore, by assumption $\varepsilon\sim\gN(0,\,\sigma^2\Id_d)$ and $g(\cdot;\,\varepsilon)$ is $(p_A^{(i)},\,p_B^{(i)})$-confident at $\phi(x,\,\alpha_i)$ for $y_A$ and for all $i$. Thus, by Corollary~\ref{cor:gaussian_noise}, if $\delta$ satisfies
        \begin{equation}
            \left\|\delta\right\|_2 < R_i := \frac{\sigma}{2}\left(\Phi^{-1}\left(p_A^{(i)}\right)-\Phi^{-1}\left(p_B^{(i)}\right)\right)
        \end{equation}
        then it is guaranteed that $y_A = \argmax_y q(y\lvert\,\phi(x,\,\alpha_i) + \delta;\,\varepsilon)$.
        Let $\Delta_i:=B_{R_i}(0)$ and notice that $R \equiv \min_i R_i$ and thus
        \begin{equation}
            \bigcap_{i=1}^N B_{R_i}(0) = B_R(0) = \Delta^*.
        \end{equation}
        To see that $\Delta^*$ has the desired property, consider
        \begin{gather}
            \forall\alpha\in\gS\,\exists\alpha_i\colon\,\delta_x(\alpha,\,\alpha_i) \in\Delta^*\\
            \iff\,\forall\alpha\in\gS\,\exists\alpha_i\colon\,\left\|\phi(x,\,\alpha) - \phi(x,\,\alpha_i)\right\|_2 < R.
        \end{gather}
        Since $R \leq R_i$ it follows that for $\delta_i = \phi(x,\,\alpha) - \phi(x,\,\alpha_i)$ it is guaranteed that
        \begin{align}
            y_A &= \argmax_y q(y\lvert\,\phi(x,\,\alpha_i) + \delta_i;\,\varepsilon)\\
            &=\argmax_y q(y\lvert\,\phi(x,\,\alpha);\,\varepsilon).
        \end{align}
        Thus, the set $\Delta^*$ has the desired property. In particular, since
        \begin{gather}
            \forall\alpha\in\gS\,\exists\alpha_i\colon\,\left\|\phi(x,\,\alpha) - \phi(x,\,\alpha_i)\right\|_2 < R\\
            \iff \max_{\alpha\in\gS}\min_{1\leq i \leq N}\left\|\phi(x,\,\alpha) - \phi(x,\,\alpha_i)\right\|_2 < R
        \end{gather}
        the statement follows.
    \end{proof}

    \begin{customcor}{\ref{cor:rotations_scaling_brightness_certificate}}[restated]
        Let $\psi_B(x,\,\delta,\,b) = x + \delta + b\cdot \Id_d$ and let $\varepsilon\sim\gN(0,\,\sigma^2 \Id_d)$, $\varepsilon_b\sim\gN(0,\,\sigma_b^2)$. Furthermore, let $\phi$ be a transformation with parameters in $\gZ_\phi\subseteq\R^m$ and let $\gS\subseteq\gZ_\phi$ and $\{\alpha_i\}_{i=1}^N\subseteq\gS$. Let $y_A\in\cY$ and suppose that for any $i$, the $(\varepsilon,\varepsilon_b)$-smoothed classifier $q(y\lvert\,x;\,\varepsilon,\varepsilon_b):=\E(p(y\lvert\,\psi_B(x,\,\varepsilon,\,\varepsilon_b))$ satisfies
        \begin{equation}
            q(y_A\lvert\,x;\,\varepsilon,\varepsilon_b) \geq p_A^{(i)} > p_B^{(i)} \geq \max_{y\neq y_A}q(y\lvert\,x;\,\varepsilon,\varepsilon_b).
        \end{equation}
        for each $i$. Let
        \begin{equation}
            R := \frac{\sigma}{2}\min_{1\leq i \leq N}\left(\Phi^{-1}\left(p_A^{(i)}\right)-\Phi^{-1}\left(p_B^{(i)}\right)\right)
        \end{equation}
        Then, $\forall \alpha \in \gS$ and $\forall b \in [-b_0, b_0]$ it is guaranteed that $y_A = \argmax_y q(y\lvert\,\phi(x, \alpha) + b\cdot\Id_d;\, \varepsilon, \varepsilon_b)$ as long as
        \begin{equation}
            R > \sqrt{M_{\gS}^2 + \dfrac{\sigma^2}{\sigma_b^2}b_0^2},
            \label{eq:general_Ms_brightness}
        \end{equation}
        where $M_\gS$ is defined as in \Cref{cor:rotations_scaling_certificate}.
    \end{customcor}

    \begin{proof}
        Since the resolvable transformation $\psi_B$ is given by
        \begin{equation}
            \psi_B(x,\,\delta,\,b) = x + \delta + b \cdot \Id_d,
        \end{equation}
        we can write
        \begin{equation}
            \phi(x,\,\alpha) + b \cdot \Id_d = \phi(x,\,\alpha_i) + \underbrace{\left(\phi(x,\,\alpha) - \phi(x,\,\alpha_i)\right) + b \cdot \Id_d}_{=: \delta_x((\alpha, b),\,(\alpha_i,0))}.
        \end{equation}
        Furthermore, by assumption $\varepsilon \sim \gN(0,\sigma^2\Id_d)$, $\varepsilon_b \sim \gN(0,\sigma_b^2)$ and $g(\cdot;\,\varepsilon, \varepsilon_b)$ is $(p_A^{(i)}, p_B^{(i)})$-confident at $\phi(x,\alpha_i)$ for $y_A$ and all $i$.
        Thus, by Corollary~\ref{cor:gaussian_noise}, if $\delta$ and $b$ satisfy
        \begin{equation}
            \label{eq:cor_compisition1_vanilla_condition}
            \sqrt{\dfrac{\|\delta\|_2^2}{\sigma^2} + \dfrac{b^2}{\sigma_b^2}} < \frac{1}{2} \left( \Phi^{-1}(p_A^{(i)}) - \Phi^{-1}(p_B^{(i)}) \right),
        \end{equation}
        then it is guaranteed that
        \begin{equation}
            y_A = \argmax_y q(y| \phi(x,\alpha_i) + \delta + b \cdot \Id_d;\, \varepsilon, \varepsilon_b).
        \end{equation}
        Let
        \begin{equation}
            R_i:=\frac{\sigma}{2} \left(\Phi^{-1}(p_A^{(i)}) - \Phi^{-1}(p_B^{(i)})\right)
        \end{equation}
        and note that without loss of generality we can assume that $R_i > {\sigma}/{\sigma_b}b_0$, because otherwise the robustness condition is violated.
        Rearranging terms in~\eqref{eq:cor_compisition1_vanilla_condition} leads to the condition
        \begin{equation}
            \|\delta\|_2 < \sqrt{R_i^2 - \frac{\sigma^2}{\sigma_b^2} b^2}
        \end{equation}
        that can be turned into a sufficient robustness condition holding for any $b\in[-b_0,\,b_0]$ simultaneously
        \begin{equation}
            \|\delta\|_2 < \sqrt{R_i^2 - \frac{\sigma^2}{\sigma_b^2} b_0^2}
        \end{equation}
        Note that, without loss of generality
        For each $i$ let $\Delta_i$ be the set defined as
        \begin{equation}
            \Delta_i := \left\{\delta + b\cdot \mathbf{1}_d \in\R^d\colon\,\|\delta\|_2 < \sqrt{R_i^2 - \frac{\sigma^2}{\sigma_b^2} b_0^2},\,\left|b\right| \leq b_0\right\}
        \end{equation}
        and note that
        \begin{align}
            \Delta^*&:=\bigcap_{i=1}^N \Delta_i\\
            &= \left\{\delta + b\cdot\mathbf{1}_d\in\R^d\colon\,\|\delta\|_2 < \sqrt{R^2 - \frac{\sigma^2}{\sigma_b^2} b_0^2},\,\left|b\right| \leq b_0\right\}
        \end{align}
        with $R:=\min_i R_i$. Clearly, if $\forall \alpha\in \gS,\,\forall b \in [-b_0,\,b_0]\,\exists i$ such that
        \begin{equation}
            \delta_x((\alpha, b),\,(\alpha_i,0)) \in \Delta^*
        \end{equation}
        then it is guaranteed that
        \begin{align}
            y_A &= \argmax_y q(y\lvert\,\phi(x,\alpha_i) + \delta_x((\alpha, b),\,(\alpha_i,0)))\\
            &=\argmax_y q(y\lvert\,\phi(x,\,\alpha) + b \cdot \Id_d).
        \end{align}
        We can thus reformulate the robustness condition as
        \begin{gather}
            \forall \alpha\in \gS,\,\forall b \in [-b_0,\,b_0]\,\exists i \,\mathrm{\,s.t.\,}\,\delta_x((\alpha, b),\,(\alpha_i,0)) \in \Delta^*\\
            \iff\nonumber\\
            \begin{gathered}
                \forall \alpha\in \gS,\,\forall b \in [-b_0,\,b_0]\,\exists i \,\mathrm{\,s.t.\,}\\\,\|\phi(x,\,\alpha) - \phi(x,\,\alpha_i)\|_2 < \sqrt{R^2 - \frac{\sigma^2}{\sigma_b^2} b_0^2}
            \end{gathered}\\
            \iff\nonumber\\
            \max_{\alpha\in\gS}\min_{1\leq i\leq N}\|\phi(x,\,\alpha) - \phi(x,\,\alpha_i)\|_2 < \sqrt{R^2 - \frac{\sigma^2}{\sigma_b^2} b_0^2}
        \end{gather}
        that, written in terms of the maximum $\ell_2$ interpolation error $M_\gS$, is equivalent to
        \begin{equation}
            R > \sqrt{M_{\gS}^2 + \dfrac{\sigma^2}{\sigma_b^2}b_0^2}
        \end{equation}
        what concludes the proof.
    \end{proof}

    \begin{customcor}{\ref{cor:rotations_scaling_brightness_l2_certificate}}[restated]
        Under the same setting as in~\Cref{cor:rotations_scaling_brightness_certificate},
        for $\forall \alpha \in \gS$, $\forall b \in [-b_0, b_0]$ and $\forall \delta\in\R^d$ such that $\|\delta\|_2 \le r$, it is guaranteed that $y_A = \argmax_k q(y\lvert\,\phi(x,\alpha) + b\cdot\Id_d + \delta;\,\varepsilon,\varepsilon_d)$
        as long as
        \begin{equation}
            \label{eq:general_Ms_brightness_l2}
            R > \sqrt{(M_{\gS}+r)^2 + \dfrac{\sigma^2}{\sigma_b^2}b_0^2},
        \end{equation}
        where $M_\gS$ is defined as in \Cref{cor:rotations_scaling_certificate}.
    \end{customcor}

    \begin{proof}
        Note that we can write the transformed input as
        \begin{align}
            \begin{split}
                \phi(x,\alpha) + & b\cdot \Id_d + \delta \\
                    = & \phi(x,\alpha_i) +
                    \underbrace{
                    (\phi(x,\alpha) - \phi(x,\alpha_i) + \delta) + b\cdot \Id_d
                    }_{=: \delta_x((\alpha,b,\delta),\,(\alpha_i,0,0))}.
            \end{split}
        \end{align}

        Since we use the same smoothing protocol as in \Cref{cor:rotations_scaling_brightness_certificate},
        the general proof idea is similar to \Cref{cor:rotations_scaling_brightness_certificate} ---
        we use the same resolvable transformation $\psi_B$ and define the same set $\Delta_i$, namely
        \begin{equation}
            \begin{split}
                \Delta_i &:= \bigg\{\delta' + b\cdot \mathbf{1}_d + \delta \in\R^d\colon\\
                &\hspace{1em}\,\|\delta'+ \delta\|_2 < \sqrt{R_i^2 - \frac{\sigma^2}{\sigma_b^2} b_0^2},\,\left|b\right| \leq b_0,\,\left\|\delta\right\|_2 \leq r\bigg\}.
            \end{split}
        \end{equation}
        and set
        \begin{align}
            \Delta^*&:=\bigcap_{i=1}^N \Delta_i\\
            \begin{split}
                &= \bigg\{\delta' + b\cdot\mathbf{1}_d + \delta \in\R^d\colon\\
                &\hspace{1em}\,\|\delta' + \delta\|_2 < \sqrt{R^2 - \frac{\sigma^2}{\sigma_b^2} b_0^2},\,\left|b\right| \leq b_0,\,\left\|\delta\right\|_2 \leq r\bigg\}
            \end{split}
        \end{align}
        with $R:=\min_i R_i$. Clearly, if $\forall \alpha\in \gS,\,\forall b \in [-b_0,\,b_0],\,\left\|\delta\right\|_2 \leq r\,\exists i$ such that
        \begin{equation}
            \delta_x((\alpha, b,\,\delta),\,(\alpha_i,0)) \in \Delta^*
        \end{equation}
        then it is guaranteed that
        \begin{align}
            y_A &= \argmax_y q(y\lvert\,\phi(x,\alpha_i) + \delta_x((\alpha, b),\,(\alpha_i,0)))\\
            &=\argmax_y q(y\lvert\,\phi(x,\,\alpha) + b \cdot \Id_d).
        \end{align}
        We can thus reformulate the robustness condition as
        \begin{gather}
            \begin{gathered}
                \forall \alpha\in \gS,\,\forall b \in [-b_0,\,b_0],\,\left\|\delta\right\|_2 \leq r\,\exists i\\
                \mathrm{\,s.t.\,}\,\delta_x((\alpha, b,\,\delta),\,(\alpha_i,\,0,\,0)) \in \Delta^*
            \end{gathered}\\
            \iff\nonumber\\
            \begin{gathered}
                \forall \alpha\in \gS,\,\forall b \in [-b_0,\,b_0],\,\left\|\delta\right\|_2 \leq r\,\exists i \,\mathrm{\,s.t.\,}\\\,\|\phi(x,\,\alpha) - \phi(x,\,\alpha_i) + \delta\|_2 < \sqrt{R^2 - \frac{\sigma^2}{\sigma_b^2} b_0^2}
            \end{gathered}\\
            \iff\nonumber\\
            \max_{\alpha\in\gS}\min_{1\leq i\leq N}\|\phi(x,\,\alpha) - \phi(x,\,\alpha_i) + \delta\|_2 < \sqrt{R^2 - \frac{\sigma^2}{\sigma_b^2} b_0^2}.
        \end{gather}
        Note that by the triangle inequality have
        \begin{align}
            \max_{\alpha\in\gS}\min_{1\leq i\leq N}\|\phi(x,\,\alpha) - \phi(x,\,\alpha_i) + \delta\|_2 &\leq M_\gS + \|\delta\|_2\\
            &\leq M_\gS + r
        \end{align}
        and thus, robustness is implied by
        \begin{equation}
            R > \sqrt{(M_{\gS} + r)^2 + \dfrac{\sigma^2}{\sigma_b^2}b_0^2}
        \end{equation}
        what concludes the proof.
    \end{proof}
    \fi

\section{Transformation Details}
\label{sec:apx_transformation_details}

\ifnum\ArXiv=0
    Due to space limit, we provide detailed definitions of rotation and scaling transformation in the corresponding appendix of the arXiv version~\cite{arxiv}.
\fi

\ifnum\ArXiv=1
In this section, we provide detailed definitions of rotation and scaling transformation.
\subsection{Bilinear Interpolation}
    Let $\Omega_K:=\{0,\,\ldots,\,K-1\}$ and $\Omega:=[0,\,W-1]\times[0,\,H-1]$. We define bilinear interpolation to be the map $Q\colon\R^{K\times W\times H}\to L^2(\Omega_K\times\R^2,\,\R),\,x\mapsto Q(x)=: Q_x$ where $Q_x$ is given by
    \begin{equation}
        \label{eq:bilinear_interpolation}
        \begin{aligned}
        (k,\,i,\,j) \mapsto Q_x(k,\,i,\,j):=
            \begin{cases}
                0\hspace{0.5em}&(i,\,j)\notin \Omega\\
                x_{k,i,j} & (i,\,j)\in \Omega\cap\N^2\\
                \Tilde{x}_{k,\,i,\,j} & (i,\,j)\in \Omega\setminus\N^2.
            \end{cases}
        \end{aligned}
    \end{equation}
    and where
    \begin{equation}
        \begin{aligned}
            \Tilde{x}_{k,i,j}:=&\left(1-(i-\floor{i})\right)\cdot\left(\left(1-(j-\floor{j})\right)\cdot x_{k,\floor{i},\floor{j}}\right.\\
             & \hspace{2em} \left. + (j-\floor{j})\cdot x_{k,\floor{i},\floor{j}+1}\right)\\
             &+ (i-\floor{i})\cdot\left(\left(1-(j-\floor{j})\right)\cdot x_{k,\floor{i}+1,\floor{j}} \right. \\
             & \hspace{2em} + \left. (j-\floor{j})\cdot x_{k,\floor{i}+1,\floor{j}+1}\right).
        \end{aligned}
    \end{equation}
\subsection{Rotation}
    \label{appendix:rotation_details}
    The rotation transformation is denoted as $\phi_R\colon\R^{K\times W\times H}\times\R\to \R^{K\times W\times H}$ and acts on an image in three steps that we will highlight in greater detail. First, it rotates the image by $\alpha$ degrees counter-clockwise. After rotation, pixel values are determined using bilinear interpolation~(\ref{eq:bilinear_interpolation}). Finally, we apply black padding to all pixels $(i,\,j)$ whose $\ell_2$-distance to the center pixel is larger than half of the length of the shorter side, and denote this operation by $P$. Let $c_W$ and $c_H$ be the center pixels
    \begin{equation}
        c_W := \dfrac{W-1}{2},\hspace{2em} c_H := \dfrac{H-1}{2}.
    \end{equation}
    and
    \begin{equation}
        \begin{aligned}
        d_{i,j} &= \sqrt{\left(i - c_W\right)^2 + \left(j - c_H\right)^2},\\
        g_{i,j} &= \atan2\left(j - c_H,\,i - c_W\right).
    \end{aligned}
    \end{equation}
    We write $\Tilde{\phi}_R$ for the rotation transformation before black padding and decompose $\phi_R$ as $\phi_R=P\circ \Tilde{\phi}_R$, where $\Tilde{\phi}_R\colon\R^{K\times W\times H}\times\R\to \R^{K\times W\times H}$ is defined by
    \begin{equation}
        \begin{aligned}
            \Tilde{\phi}_R(x,\,\alpha)_{k,i,j}&:=Q_x(k,\,c_W + d_{i,j}\,\cos(g_{i,j} - \alpha),\\
            &\hspace{6em}\,c_H + d_{i,j}\,\sin(g_{i,j} - \alpha))
        \end{aligned}
    \end{equation}
    and $P\colon \R^{K\times W\times H}\to \R^{K\times W\times H}$ by
    \begin{equation}
        \label{eq:black_padding}
        f\mapsto P(f)_{k,i,j} =
            \begin{cases}
                f(k,\,i,\,j) \hspace{0.5em} & d_{i,j} < \min\left\{c_W,\,c_H\right\}\\
                0 &\mathrm{otherwise}
            \end{cases}.
    \end{equation}

        The rotation transformation in practice may use different padding mechanisms.
        For example, the rotation in the physical world may fill in boundary pixels with real elements captured by the camera.
        We remark that our \framework against the transformation $\phi_R$ implies the defense against rotation with \emph{any other} padding mechanisms, because we first apply black-padding $P$ to any such rotated input and then feed into \framework models so that \framework models always receive black-padded inputs.

\subsection{Scaling}
    \label{appendix:scaling_details}
    The scaling transformation is denoted as $\phi_S\colon\R^{K\times W\times H}\times\R\to \R^{K\times W\times H}$. Similar as for rotations, $\phi_S$ acts on an image in three steps. First, it stretches height and width by a fixed ratio $\alpha\in\R$. Second, we determine missing pixel values with bilinear interpolation. Finally, we apply black padding to regions with missing pixel values if the image is scaled by a factor smaller than 1. Let $c_W$ and $c_H$ be the center pixels
    \begin{equation}
        c_W := \dfrac{W-1}{2},\hspace{2em} c_H := \dfrac{H-1}{2}.
    \end{equation}
    We notice that black padding is naturally applied during bilinear interpolation in cases where the scaling factor is smaller than 1 (that is, when we make images smaller). We can thus write the scaling operation as $\phi_S\colon\R^{K\times W\times H}\times\R_{>0}\to \R^{K\times W\times H},\,(x,\,\alpha)\mapsto\phi(x,\,\alpha)$ where
    \begin{equation}
        \phi_S(x,\,\alpha)_{k,i,j}:=Q_x\left(k,\,c_W + \frac{i - c_W}{\alpha},\,c_H + \frac{j - c_H}{\alpha}\right).
    \end{equation}

        When the scaling transformation in practice uses different padding mechanisms, we can simply apply black padding to the outer pixels during preprocessing.
        For example, if we know the semantic attacker could choose $0.7$ as the smallest scaling ratio, we can apply black padding to all pixels that are out of canvas after $0.7$ scaling.
        Therefore, we overwrite all different padding mechanisms and ensure the generalizability.
        As a trade-off, the classifier has a narrower reception field that affects the clean accuracy.
\fi

\section{Proofs for Interpolation Bound Computation}
    \label{sec:apx_proofs_interpolation_bound}

    \ifnum\ArXiv=0
        We state the proofs for the theoretical results governing our approach to certifying rotations and scaling transformations using randomized smoothing in the corresponding appendix of the arXiv version~\cite{arxiv}.
        As a quick glance, the following auxiliary lemma is used for both rotation and scaling:
    \begin{lem}
        \label{lem:lipschitz_constant}
        Let $x\in \R^{K\times W \times H}$, $-\infty < t_1<t_2 < \infty$  and suppose $\rho\colon[t_1,\,t_2]\to [0,\,W-1]\times[0,\,H-1]$ is a curve of class $C^1$. Let
        \begin{align}
            \psi_k\colon[t_1,\,t_2] \to \R,\hspace{0.5em} \psi_k(t):=Q_x(k,\,\rho_1(t),\,\rho_2(t))
        \end{align}
        where $k\in\Omega_K$ and $Q_x$ denotes bilinear interpolation. Then $\psi_k$ is $L_k$-Lipschitz continuous with constant
        \begin{align}
            L_k = \max_{t\in[t_1,t_2]}\left(\sqrt{2}\left\|\dot\rho(t)\right\|_2\cdot m_\Delta(x,\,k,\,\floor{\rho(t)})\right)
        \end{align}
    \end{lem}
    \fi

    \ifnum\ArXiv=1
    In this section we state the proofs for the theoretical results governing our approach to certifying rotations and scaling transformations using randomized smoothing. We first define the maximum $\ell_2$ interpolation error. First, let us recall the following definitions from the main part of this paper.
    \begin{defn}[$\ell_2$ interpolation error]
        Let $x\in\gX$, $\phi\colon\gX\times\gZ\to\gX$ a transformation, $\gS=[a,\,b]$, $N\in\N$ and suppose $\{\alpha_i\}_{i=1}^N\subseteq\gS$. The maximum $\ell_2$ interpolation error is defined as
        \begin{equation}
            M_{\gS}:=\max_{a\leq \alpha \leq b}\,\min_{1\leq i \leq N}\|\phi(x,\,\alpha) - \phi(x,\,\alpha_i) \|_2.
        \end{equation}
    \end{defn}
    \begin{customdef}{\ref{def:grid_pixel_generator}}[restated]
        For pixels $(i,\,j)\in\Omega$, we define the grid pixel generator $G_{ij}$ as
        \begin{equation}
           G_{ij}:=\{(i,\,j),\,(i+1,\,j),\,(i,\,j+1),\,(i+1,\,j+1)\}.
        \end{equation}
    \end{customdef}
    \begin{customdef}{\ref{def:max-color-extractor}}[restated]
        We define the operator that extracts the channel-wise maximum pixel wise on a grid $S\subseteq\Omega$ as the map $\overline{m}\colon\R^{K\times W \times H}\times \{0,\ldots,K-1\} \times 2^{\Omega}\to \R$ with
        \begin{equation}
            \begin{aligned}
                \overline{m}(x,\,k,\,S):= \max_{(i,j)\in S}\left(\max_{(r,s)\in G_{ij}} x_{k,r,s}\right)
            \end{aligned}
        \end{equation}
    \end{customdef}
    \begin{customdef}{\ref{def:max-color-difference-extractor}}[restated]
        We define the operator that extracts the channel-wise maximum change in color on a grid $S\subseteq\Omega$ as the map $m_\Delta\colon\R^{K\times W \times H}\times \{0,\ldots,K-1\} \times 2^{\Omega}\to \R$ with
        \begin{equation}
            m_\Delta(x,\,k,\,S):= \max_{(i,j)\in S}\left(\max_{(r,s)\in G_{ij}} x_{k,r,s}-\min_{(r,s)\in G_{ij}} x_{k,r,s}\right)
        \end{equation}
    \end{customdef}
    The following auxiliary lemma is used for both rotation and scaling:
    \begin{lem}
        \label{lem:lipschitz_constant}
        Let $x\in \R^{K\times W \times H}$, $-\infty < t_1<t_2 < \infty$  and suppose $\rho\colon[t_1,\,t_2]\to [0,\,W-1]\times[0,\,H-1]$ is a curve of class $C^1$. Let
        \begin{align}
            \psi_k\colon[t_1,\,t_2] \to \R,\hspace{0.5em} \psi_k(t):=Q_x(k,\,\rho_1(t),\,\rho_2(t))
        \end{align}
        where $k\in\Omega_K$ and $Q_x$ denotes bilinear interpolation. Then $\psi_k$ is $L_k$-Lipschitz continuous with constant
        \begin{align}
            L_k = \max_{t\in[t_1,t_2]}\left(\sqrt{2}\left\|\dot\rho(t)\right\|_2\cdot m_\Delta(x,\,k,\,\floor{\rho(t)})\right)
        \end{align}
    \end{lem}
    \begin{proof}
        Note that the function $t\mapsto\floor{\rho(t)}$ is piecewise constant and let $t_1=:u_1<u_2<\ldots<u_{N_0}:=t_2$ such that $\floor{\rho(t)}$ is constant on $\left[u_i,\,u_{i+1}\right)$ for all $1\leq i\leq N_0-1$ and $\dot\cup_{i=1}^{N_0}\left[u_i,\,u_{i+1}\right) = [t_1,\,t_2)$. We notice that $\psi_k$ is a continuous real-valued function since it is the composition of the continuous $Q_x$ and $C^1$-curve $\rho$. $L_k$-Lipschitz continuity on $[t_1,\,t_2)$ thus follows if we show that $\psi_k$ is $L_k$-Lipschitz on each interval in the partition. For that purpose, let $1\leq i\leq N_0$ be arbitrary and fix some $t\in\left[u_i,\,u_{i+1}\right)$. Let $(w,\,h):=\floor{\rho(t)}$ and $\gamma(t) := \rho(t) - \floor{\rho(t)}$ and notice that $\gamma(t)\in[0,\,1)^2$. Let
        \begin{gather}
            V_1:=x_{k,w,h},\,V_2:=x_{k,w,h+1},\\ V_3:=x_{k,w+1,h},\,V_4:=x_{k,w+1,h+1},\,
        \end{gather}
        Then, for any $u\in\left[u_i,\,u_{i+1}\right)$
        \begin{align}
            \psi_k(u) &= Q_x(k,\,\rho_1(u),\,\rho_2(u))\\
            \begin{split}
                &=(1-\gamma_1(u))\cdot((1-\gamma_2(u))\cdot V_1 + \gamma_2(u)\cdot V_2)\\
                &\hspace{4em} + \gamma_1(u)\cdot((1-\gamma_2(u)\cdot V_3 + \gamma_2(u)\cdot V_4).
            \end{split}
        \end{align}
        Let $m_\Delta:=m_\Delta(x,\,k\,\floor{\rho(t)})$ and notice that by definition
        \begin{equation}
            m_\Delta = \max_i\,V_i - \min_i\,V_i
        \end{equation}
        and in particular
        \begin{equation}
            \left|V_i - V_j\right| \leq m_\Delta \hspace{1em}\forall\,i,j.
        \end{equation}
        Since $V_i$ is constant for each $i$ and $\gamma$ is differentiable, $\psi_k$ is differentiable on $\left[u_i,\,u_{i+1}\right)$ and hence
        \begin{align}
            \dot\psi_k(u) &= (\dot\gamma_1(u)\gamma_2(u) + \gamma_1(u)\dot\gamma_2(u))(V_1 - V_2 - V_3 + V_4) \\
            &\hspace{3em} + \dot\gamma_1(u) (V_3 - V_1) + \dot\gamma_2(u) (V_2 - V_1).
        \end{align}
        Note that the derivative $\dot\psi_k$ is linear in $\gamma_1$ and $\gamma_2$ and hence its extreme values are bounded when evaluated at extreme values of $\gamma$, that is $(\gamma_1,\,\gamma_2)\in\{0,\,1\}^2$. We treat each case separately:
        \begin{itemize}[leftmargin=*]
            \item $\gamma_1 = \gamma_2 = 0$. Then,
                \begin{align}
                    \left|\dot\psi_k\right| &\leq \left|\dot\gamma_1(V_3 - V_1) + \dot\gamma_2(V_2 - V_1)\right|\\
                    &\leq \left|\dot\gamma_1\right|\cdot\left|V_3 - V_1\right| + \left|\dot\gamma_2\right|\cdot \left|V_2-V_1\right| \\
                    &\leq m_\Delta(\left|\dot\gamma_1\right| + \left|\dot\gamma_2\right|)
                \end{align}
            \item $\gamma_1 = \gamma_2 = 1$. Then,
                \begin{align}
                    \left|\dot\psi_k\right| &\leq \left|\dot\gamma_1(V_4 - V_2) + \dot\gamma_2(V_4 - V_3)\right|\\
                    &\leq \left|\dot\gamma_1\right|\cdot\left|V_4 - V_2\right| + \left|\dot\gamma_2\right|\cdot \left|V_4-V_3\right|\\
                    &\leq m_\Delta(\left|\dot\gamma_1\right| + \left|\dot\gamma_2\right|)
                \end{align}
            \item $\gamma_1 = 0,\, \gamma_2 = 1$. Then,
                \begin{align}
                    \left|\dot\psi_k\right| &\leq \left|\dot\gamma_1(V_4 - V_2) + \dot\gamma_2(V_2 - V_1)\right|\\
                    &\leq \left|\dot\gamma_1\right|\cdot\left|V_4 - V_2\right| + \left|\dot\gamma_2\right|\cdot\left|V_2 - V_1\right|\\
                    &\leq m_\Delta(\left|\dot\gamma_1\right| + \left|\dot\gamma_2\right|)
                \end{align}
            \item $\gamma_1 = 1,\, \gamma_2 = 0$. Then,
                \begin{align}
                    \left|\dot\psi_k\right| &\leq \left|\dot\gamma_1(V_3 - V_1) + \dot\gamma_2(V_4 - V_3)\right|\\
                    &\leq \left|\dot\gamma_1\right|\cdot\left|V_3 - V_1\right| + \left|\dot\gamma_2\right|\cdot\left|V_4 - V_3\right|\\
                    &\leq m_\Delta(\left|\dot\gamma_1\right| + \left|\dot\gamma_2\right|)
                \end{align}
        \end{itemize}
        Hence, for any $u\in\left[u_i,\,u_{i+1}\right)$, the modulus of the derivative is bounded by $m_\Delta(\left|\dot\gamma_1\right| + \left|\dot\gamma_2\right|)$. We can further bound this by observing the following connection between $\ell_1$ and $\ell_2$ distance
        \begin{equation}
            \forall\,x\in\R^n:\hspace{1em} \left\|x\right\|_1 = \left|\langle\left|x\right|,\,\mathbf{1}\rangle\right| \leq \left\|x\right\|_2\left\|\mathbf{1}\right\|_2 = \sqrt{n}\left\|x\right\|_2
        \end{equation}
        and hence $\forall\,u\in \left[u_i,\,u_{i+1}\right)$
        \begin{align}
            \left|\psi_k(u)\right| &\leq m_\Delta\left\|\dot\gamma(u)\right\|_1\\
            &\leq m_\Delta\sqrt{2}\left\|\dot\gamma(u)\right\|_2\\
            &= m_\Delta\sqrt{2}\left\|\dot\rho(u)\right\|_2.
        \end{align}
        Since $\psi_k$ is differentiable on $\left[u_i,\,u_{i+1}\right)$, its Lipschitz constant is bounded by the maximum absolute value of its derivative. Hence
        \begin{align}
            \begin{split}
                &\max_{u\in[u_i,\,u_{i+1})}m_\Delta\sqrt{2}\left\|\dot\rho(u)\right\|_2\\
                &\hspace{4em}= \max_{u\in[u_i,\,u_{i+1})}m_\Delta(x,\,k,\,\floor{\rho(u)})\sqrt{2}\left\|\dot\rho(u)\right\|_2
            \end{split}\\
            &\hspace{4em}\leq
            \max_{u\in[t_1,\,t_2)}m_\Delta(x,\,k,\,\floor{\rho(u)})\sqrt{2}\left\|\dot\rho(u)\right\|_2 = L_k
        \end{align}
        is a Lipschitz constant for $\psi_k$ on $\left[u_i,\,u_{i+1}\right)$. Note that $L_k$ does not depend on $i$. Furthermore, $i$ was chosen arbitrarily and hence $L_k$ is a Lipschitz constant for $\psi_k$ on $\left[t_1,\,t_2\right)$ and due to continuity on $\left[t_1,\,t_2\right]$, concluding the proof.
    \end{proof}

    \subsection{Rotation}
    \begin{customlem}{\ref{lem:rotation_lipschitz}}[restated]
        Let $x\in\R^{K\times W \times H}$ be a $K$-channel image and
        let $\phi_R = P \circ I \circ \tilde{\phi}_R$ be the rotation transformation.
        Then, a global Lipschitz constant $L$ for the functions $\{g_i\}_{i=1}^N$ is given by
        \begin{equation}
            L_{r} = \max_{1\leq i \leq N-1}\sum_{k=0}^{K-1}\sum_{r,s\in V} 2 d_{r,s}\cdot m_\Delta(x,k,\gP_{r,s}^{(i)})\cdot \overline{m}(x,\,k,\,\gP_{r,s}^{(i)})
        \end{equation}
        where $V = \left\{(r,s)\in\N^2\lvert\,d_{r,s} < \frac{1}{2}(\min\left\{W,H\right\}-1)\right\}$. The set $\gP_{r,s}^{(i)}$ is given by all integer grid pixels that are covered by the trajectory of source pixels of $(r,s)$ when rotating from angle $\alpha_i$ to $\alpha_{i+1}$.
    \end{customlem}
    \begin{proof}
        Recall that $\phi_R$ acts on images $x\in\R^{K\times W\times H}$ and that $g_i$ is defined as
        \begin{equation}
            \label{eq:gi_expanded}
            \begin{aligned}
                g_i(\alpha) &= \left\|\phi_R(x,\,\alpha) - \phi_R(x,\,\alpha_i)\right\|_2^2\\
            &= \sum_{k=0}^{K-1}\sum_{r=0}^{W-1}\sum_{s=0}^{H-1}\,\left(\phi_R(x,\,\alpha)_{k,r,s}-\phi_R(x,\,\alpha_i)_{k,r,s}\right)^2
            \end{aligned}
        \end{equation}
        Let $c_W$ and $c_H$ denote the center pixels
        \begin{equation}
            c_W := \dfrac{W-1}{2},\hspace{2em} c_H := \dfrac{H-1}{2}.
        \end{equation}
        and recall the following quantities from the definition of $\phi_R$ (\Cref{appendix:rotation_details}):
        \begin{equation}
            \begin{aligned}
                d_{r,s} &= \sqrt{\left(r - c_W\right)^2 + \left(s - c_H\right)^2},\\
                g_{r,s} &= \arctan2\left(s - c_H,\,r - c_W\right)
            \end{aligned}
        \end{equation}
        Note that
        \begin{equation}
            d_{r,s} \geq \min\{c_W,\,c_H\}\hspace{0.5em}\Rightarrow\hspace{0.5em}\phi_R(x,\,\alpha)_{k,r,s}=0.
        \end{equation}
        We thus only need to consider pixels that lie inside the centered disk. We call the collection of such pixels \emph{valid} pixels, denoted by V:
        \begin{equation}
            V:=\left\{(r,\,s)\in\N^2\mid d_{r,s} < \min\{c_W,\,c_H\}\right\}.
        \end{equation}
        Let $f_1^{r,s}\colon\R\to\R$ and $f_2^{r,s}\colon\R\to\R$ be functions defined as
        \begin{equation}
            \begin{aligned}
                f_1^{r,s}(\alpha) &= c_W + d_{r,s}\,\cos(g_{r,s} - \alpha),\\
                f_2^{r,s}(\alpha) &= c_H + d_{r,s}\,\sin(g_{r,s} - \alpha).
            \end{aligned}
        \end{equation}
        Then for any valid pixel $(r,\,s)\in V$, the value of the rotated image $\phi_R(x,\,\alpha)$ is given by
        \begin{equation}
            \label{eq:rotation_short_def}
            \phi_R(x,\,\alpha)_{k,r,s} = Q_x(k,\,f_1^{r,s}(\alpha),\,f_2^{r,s}(\alpha))
        \end{equation}
        where $Q_x$ denotes bilinear interpolation. We define the shorthand
        \begin{equation}
            g_i^{k,r,s}(\alpha) := \left(\phi_R(x,\,\alpha)_{k,r,s}-\phi_R(x,\,\alpha_i)_{k,r,s}\right)^2
        \end{equation}
        and denote by $L_i^{k,r,s}$ and $L_{i+1}^{k,r,s}$ the Lipschitz constants of $g_i^{k,r,s}$ and $g_{i+1}^{k,r,s}$ on $[\alpha_i,\,\alpha_{i+1}]$. We can write~(\ref{eq:gi_expanded}) as
        \begin{equation}
            \begin{aligned}
                g_i(\alpha) &= \sum_{k=0}^{K-1}\sum_{(r,\,s)\in V} g_i^{k,r,s}(\alpha),\\
                g_{i+1}(\alpha) &= \sum_{k=0}^{K-1}\sum_{(r,\,s)\in V} g_{i+1}^{k,r,s}(\alpha)
            \end{aligned}
        \end{equation}
        and note that Lipschitz constants of $g_i$ and $g_{i+1}$ on $[\alpha_i,\,\alpha_{i+1}]$ are given by
        \begin{align}
                \max_{c,\,d\in[\alpha_i,\,\alpha_{i+1}]}\dfrac{\left|g_i(c) - g_i(d)\right|}{\left|c - d\right|} &\leq \left(\sum_{k=0}^{K-1}\sum_{(r,\,s)\in V} L_i^{k,r,s}\right) =: L_i\\
                \max_{c,\,d\in[\alpha_i,\,\alpha_{i+1}]}\dfrac{\left|g_{i+1}(c) - g_{i+1}(d)\right|}{\left|c - d\right|}&\leq \left(\sum_{k=0}^{K-1}\sum_{(r,\,s)\in V} L_{i+1}^{k,r,s}\right)=:L_{i+1}
        \end{align}
        We can hence determine $L$ according to equation~(\ref{eq:lipschitz_constant}) as
        \begin{equation}
            L = \max_i\left\{\max\left\{L_i,\,L_{i+1}\right\}\right\}.
        \end{equation}
        Without loss of generality, consider $L_i^{k,r,s}$ and note that
        \begin{align}
            &\max_{c,d\in[\alpha_i,\,\alpha_{i+1}]}\left|\dfrac{g_i^{k,r,s}(c) - g_i^{k,r,s}(d) }{c - d}\right|\\
            \begin{split}
                &\hspace{2em}=\max_{c,d\in[\alpha_i,\,\alpha_{i+1}]}\left|\dfrac{\phi_R(x,\,c)_{k,r,s} - \phi_R(x,\,d)_{k,r,s}}{c - d}\right|\\
                &\hspace{4em}\cdot\left|\phi_R(x,\,c)_{k,r,s} + \phi_R(x,\,d)_{k,r,s} - 2\phi_R(x,\,\alpha_i)_{k,r,s}\right|
            \end{split}\\
            \begin{split}
                &\hspace{2em}\leq \max_{c,d\in[\alpha_i,\,\alpha_{i+1}]}\underbrace{\left|\dfrac{\phi_R(x,\,c)_{k,r,s} - \phi_R(x,\,d)_{k,r,s}}{c - d}\right|}_{(\mathrm{I})}\\
                &\hspace{4em}\cdot2 \max_{\theta\in[\alpha_i,\,\alpha_{i+1}]}\underbrace{\left|\phi_R(x,\,\theta)_{k,r,s} - \phi_R(x,\,\alpha_i)_{k,r,s}\right|}_{(\mathrm{II})}\label{eq:upper_bound_gi}.
            \end{split}
        \end{align}
        To compute a Lipschitz constant for $g_i^{k,r,s}$ on the interval $[\alpha_i,\,\alpha_{i+1}]$ we thus only need to compute a Lipschitz constant for $\phi_R(x,\,\cdot)$ on $[\alpha_i,\,\alpha_{i+1}]$ and an upper bound on (II). For that purpose, note that $\phi_R$ takes only positive values and consider
        \begin{align}
            \label{eq:lipschitz_rot_upper_bound_2}
            \mathrm{(II)} &\leq \max_{\theta\in[\alpha_i,\,\alpha_{i+1}]}\left\{\phi_R(x,\,\theta)_{k,r,s},\,\phi_R(x,\,\alpha_i)_{k,r,s}\right\} \\
            &= \max_{\theta\in[\alpha_i,\,\alpha_{i+1}]}\,\phi_R(x,\,\theta)_{k,r,s}
        \end{align}
        Notice that now both $L_i^{k,r,s}$ and $L_{i+1}^{k,r,s}$ share the same upper bound.
        Recall~\eqref{eq:rotation_short_def}, i.e.,
        \begin{align}
            \phi_R(x,\,\theta)_{k,r,s} =  Q_x(k,\,f_1^{r,s}(\theta),\,f_2^{r,s}(\theta)).
        \end{align}
        Now, we upper bound~\eqref{eq:lipschitz_rot_upper_bound_2} by finding all integer grid pixels that are covered by the trajectory $(f_1^{r,s}(\theta),\,f_2^{r,s}(\theta))$. Specifically, let
        \begin{equation}
            \gP_{r,s}^{(i)}:=\bigcup_{\theta\in[\alpha_i,\,\alpha_{i+1}]}\left(\floor{f_1^{r,s}(\theta)},\, \floor{f_2^{r,s}(\theta)}\right).
        \end{equation}
        Since $\phi_R$ is interpolated from integer pixels, we can consider the maximum over $\gP_{r,s}^{(i)}$ in order to upper bound ~(\ref{eq:lipschitz_rot_upper_bound_2}):
        \begin{align}
            \begin{split}
                &\max_{\theta\in[\alpha_i,\,\alpha_{i+1}]}\,\phi_R(x,\,\theta)_{k,r,s} =\\ &\hspace{4em}\max_{\theta\in[\alpha_i,\,\alpha_{i+1}]}\,Q_x(k,\,f_1^{r,s}(\theta),\,f_2^{r,s}(\theta))
            \end{split}\\
            \begin{split}
                &\hspace{4em} \leq \max_{(i,j)\in\gP_{r,s}}\max\big\{x(k,\,i,\,j),\,x(k,\,i+1,\,j),\\
                &\hspace{10em} \,x(k,\,i,\,j+1),\,x(k,\,i+1,\,j+1)\big\}
            \end{split}\\
            &\hspace{4em}= \Bar{m}(x,\,k,\,\gP_{r,s}^{(i)}).
        \end{align}
        We now have to find an upper bound of (I), that is, a Lipschitz constant of $\phi_R(x,\,\cdot)_{k,r,s}$ on the interval $[\alpha_i,\,\alpha_{i+1}]$. For that purpose, consider the following. Note that the curve $\rho\colon[\alpha_i,\,\alpha_{i+1}]\to \R^2$, $\rho(t):=(f_1^{r,s}(t),\,f_2^{r,s}(t))$ is of class $C^1$ and
        \begin{align}
            \frac{df_1^{r,s}(t)}{dt} &= \frac{d}{dt} \left(c_W + d_{r,s}\,\cos(g_{r,s} - t)\right)\\
            &= d_{r,s}\,\sin(g_{r,s} - t)\\
            \frac{df_2^{r,s}(t)}{dt} &= \frac{d}{dt} \left(c_H + d_{r,s}\,\sin(g_{r,s} - t)\right)\\
            &= -d_{r,s}\,\cos(g_{r,s} - t)
        \end{align}
        and hence
        \begin{align}
            \left\|\dot\rho(t)\right\|_2 = \sqrt{\left(\dfrac{df_1^{r,s}(t)}{dt}\right)^2 + \left(\dfrac{df_2^{r,s}(t)}{dt}\right)^2} = \sqrt{2}\,d_{r,s}.
        \end{align}
        By Lemma~\ref{lem:lipschitz_constant} a Lipschitz constant for the function $\phi_R(x,\,\cdot)_{k,r,s}$ is thus given by
        \begin{equation}
            \begin{aligned}
                &\max_{c,d\in[\alpha_i,\,\alpha_{i+1}]}\left|\dfrac{\phi_R(x,\,c)_{k,r,s} - \phi_R(x,\,d)_{k,r,s}}{c - d}\right| \\
                &\hspace{6em}\leq 2\,d_{r,s}\cdot m_\Delta(x,\,k,\,\gP_{r,s}^{(i)}).
            \end{aligned}
        \end{equation}
        We can thus upper bound (I) and (II) in~(\ref{eq:upper_bound_gi}) yielding a Lipschitz constant for $g_i^{k,r,s}$ and $g_{i+1}^{k,r,s}$ on $[\alpha_i,\,\alpha_{i+1}]$
        \begin{align}
            \begin{split}
                &\max_{c,d\in[\alpha_i,\,\alpha_{i+1}]}\left|\dfrac{g_i^{k,r,s}(c) - g_i^{k,r,s}(d) }{c - d}\right|\\
                &\hspace{6em}\leq 2\,d_{r,s}\cdot m_\Delta(x,\,k,\,\gP_{r,s}^{(i)}) \cdot \Bar{m}(x,\,k,\,\gP_{r,s}^{(i)})
            \end{split}\\
            &\hspace{6em}= L_i^{k,r,s} (=L_{i+1}^{k,r,s}).
        \end{align}
        Finally, we can compute $L_r$ as
        \begin{equation}
            \begin{aligned}
                L&=\max_{1\leq i \leq N-1} \,\sum_{k=0}^{K-1}\sum_{(r,s)\in V} L_i^{k,r,s}\\
                &= \max_{1\leq i \leq N-1}\sum_{k=0}^{K-1}\sum_{r,s\in V} 2 d_{r,s}\cdot m_\Delta(x,k,\gP_{r,s}^{(i)})\cdot \overline{m}(x,\,k,\,\gP_{r,s}^{(i)}))
            \end{aligned}
        \end{equation}
        what concludes the proof.
    \end{proof}

\subsection{Scaling}
    \begin{customlem}{\ref{lem:scaling_lipschitz}}[restated]
        Let $x\in\R^{K\times W \times H}$ be a $K$-channel image and
        let $\phi_S$ be the scaling transformation.
        Then, a global Lipschitz constant $L$ for the functions $\{g_i\}_{i=1}^N$ is given by
        \begin{equation}
            L_{\mathrm{s}} = \max_{1\leq i \leq N-1}\sum_{k=0}^{K-1}\sum_{r,s\in \Omega\cap\N^2} \frac{\sqrt{2} d_{r,s}}{a^2}\cdot m_\Delta(x,k,\gP_{r,s}^{(i)})\cdot \overline{m}(x,\,k,\,\gP_{r,s}^{(i)})
        \end{equation}
        where $\Omega=[0,\,W-1]\times[0,\,H-1]$ and $a$ is the lower boundary value in $\gS=[a,\,b]$. The set $\gP_{r,s}^{(i)}$ is given by all integer grid pixels that are covered by the trajectory of source pixels of $(r,s)$ when scaling with factors from $\alpha_{i+1}$ to $\alpha_{i}$.
    \end{customlem}
    \begin{proof}
        Recall the Definition of the Scaling transformation $\phi_S$ given by $\phi_S\colon\R^{K\times W \times H}\times\R \to \R^{K\times W \times H}$, where
        \begin{equation}
            \phi_S(x,\,\alpha)_{k,r,s}:=Q_x\left(k,\,c_W + \frac{r - c_W}{s},\,c_H + \frac{s - c_H}{s}\right).
        \end{equation}
        Recall that the set $\Omega$ is given by $\Omega=[0,\,W-1]\times[0,\,H-1]=\{1,\,\ldots,\,K\}$ and let
        \begin{equation}
            \Omega_\N:=\Omega\cap\N^2
        \end{equation}
        be the set of integers in $\Omega$. Let $f_1^{r}\colon[a,\,b]\to\R$ and $f_2^{r,s}\colon[a,\,b]\to\R$ be functions defined as
        \begin{equation}
            \begin{aligned}
                f_1^{r}(\alpha) &:= c_W + \dfrac{r - c_W}{\alpha},\\
                f_2^{s}(\alpha) &:= c_H + \dfrac{s - c_H}{\alpha}.
            \end{aligned}
        \end{equation}
        Then, the value of the scaled image $\phi_S(x,\,\alpha)$ is given by
        \begin{equation}
            \label{eq:scaling_short_def}
            \phi_S(x,\,\alpha)_{k,r,s} = Q_x(k,\,f_1^{r}(\alpha),\,f_2^{s}(\alpha))
        \end{equation}
        where $Q_x$ denotes bilinear interpolation. Let
        \begin{align}
            \psi_k\colon[a,\,b]\to\R,\ \alpha\mapsto Q_x(k,\,f_1^{r}(\alpha),\,f_2^{s}(\alpha)).
        \end{align}
        We notice that, in contrast to rotations, $\psi_k$ is \emph{not} continuous at every $\alpha\in\R_{>0}$. Namely, when considering scaling factors in $(0,\,1)$, bilinear interpolation applies black padding to some $(r,\,s)\in\Omega$ resulting in discontinuities of $\psi_k$. To see this, consider the following. The interval $[\alpha_{i+1},\,\alpha_i]$ contains a discontinuity of $\psi_k$, if
        \begin{align}
            \begin{cases}
                \alpha_{i+1} < \dfrac{r - c_W}{c_W} < \alpha_{i},\hspace{1em}& r > c_W,\\
                \alpha_{i+1} < \dfrac{c_W - r}{c_W} < \alpha_{i},\hspace{1em}& r < c_W,
            \end{cases}
        \end{align}
        because then $\exists\,\alpha_0\in[\alpha_{i+1},\,\alpha_i]$ such that $f_1^{r}(\alpha_0)\in\{0,\,W-1\}\subseteq\Omega$ and hence
        \begin{equation}
            \phi_S(x,\,\alpha_0)_{k,r,s} \neq 0
        \end{equation}
        but, for $r > c_W$,
        \begin{equation}
            \phi_S(x,\,\alpha_0 + \varepsilon)_{k,r,s} = 0 \hspace{1em}\forall\varepsilon > 0
        \end{equation}
        or, when $r< c_W$,
        \begin{equation}
            \phi_S(x,\,\alpha_0 - \varepsilon)_{k,r,s} = 0 \hspace{1em}\forall\varepsilon > 0.
        \end{equation}
        A similar reasoning leads to a discontinuity in the $s$-coordinates. We can thus define the set of discontinuities of $\psi_k$ as
        \begin{equation}
            \gD:=\left(\bigcup_{r=0}^{W-1}\gD^{r}_1\right)\cup\left(\bigcup_{s=0}^{H-1}\gD^{s}_2\right)
        \end{equation}
        where
        \begin{equation}
            \begin{aligned}
                \gD^{r}_1 &:= \left\{\alpha_0\in [a,\,b]\mid f_1^r(\alpha_0) \in \{0,\,W-1\}\right\}\\
                \gD^{s}_2 &:= \left\{\alpha_0\in [a,\,b]\mid f_2^s(\alpha_0) \in \{0,\,H-1\} \right\}.
        \end{aligned}
        \end{equation}
        We notice that $|\gD| \leq H + W$ and hence for large enough $N$, each interval $[\alpha_i,\,\alpha_{i+1}]$ contains at most 1 discontinuity.

        Due to these continuities, we need to modify the general upper bound $M$ of the interpolation error $M_\gS$
        Recall that for $a<b$ and $\{\alpha_i\}_{i=1}^N$, the maximum $L_2$-sampling error $M_{a,b}$ is given by
        \begin{equation}
            \label{eq:maximum_l2_sampling_error_scale}
            M_{\gS} := \max_{a\leq \alpha \leq b}\min_{1\leq i \leq N}\left|\left|\phi_{S}(x,\,\alpha) - \phi_{S}(x,\,\alpha_i)\right|\right|_2.
        \end{equation}
        In order to compute an upper bound on~(\ref{eq:maximum_l2_sampling_error_scale}) for scaling, we are interested in finding $M\geq0$ such that
        \begin{equation}
            M_{\gS}^2 \leq M
        \end{equation}
        For scaling, similar as in the case for rotations, we sample $\alpha_i$ uniformly from $[a,\,b]$:
        \begin{equation}
            \alpha_i = a + \dfrac{b-a}{N-1}(i-1) \text{ for } 1 \le i \le N.
        \end{equation}
        and note that $\alpha_1 = b$ and $\alpha_N=a$. For $1\leq i\leq N$ Let $g_i$ be the functions $g_i\colon[a,\,b]\to\R_{\geq0}$ defined by
        \begin{align}
            g_i(\alpha) := \left\|\phi_S(x,\,\alpha) - \phi_S(x,\,\alpha_i)\right\|_2^2.
        \end{align}
        Note that $\forall\,\alpha\in[a,\,b]$, $\exists\,i$ such that $\alpha\in[\alpha_{i+1},\,\alpha_i]$. Suppose that $N$ is large enough such that $\forall\,i\colon |\gD\cap[\alpha_{i+1},\,\alpha_i]|\leq 1$ and denote the discontinuity in interval $[\alpha_{i+1},\,\alpha_i]$ by $t_i$ if it exists. Let
        \begin{align}
            \small
            M_i :=
                \begin{cases}
                    \max\limits_{\alpha_{i}\leq\alpha\leq\alpha_{i+1}}\min\{g_i(\alpha),\,g_{i+1}(\alpha)\} \hspace{1em}&[\alpha_{i},\,\alpha_{i+1}]\cap \gD = \varnothing\\
                    \max\left\{
                        \max\limits_{\alpha_{i}\leq\alpha\leq t_i} g_{i+1}(\alpha),\,
                        \max\limits_{t_i\leq\alpha \leq \alpha_{i+1}} g_{i}(\alpha)\right
                    \}
                    \hspace{1em} &[\alpha_{i},\,\alpha_{i+1}]\cap \gD = \{t_i\}
                \end{cases}
        \end{align}
        Similarly as in the case for rotations, we find
        \begin{equation}
            M_{\gS}^2 \leq \max_{1\leq i\leq N-1}\,M_i.
        \end{equation}
        For simplicity, we assume for the sequel that $\gD=\varnothing$. The case where discontinuities exist can be treated analogously. We further divide each interval $[\alpha_{i},\,\alpha_{i+1}]$ by sampling $n\in\N$ points $\{\gamma_{i,j}\}_{j=1}^n$ according to
        \begin{equation}
            \gamma_{i,j} := \alpha_i + \dfrac{\alpha_{i+1} - \alpha_i}{n-1}(j-1) \text{ for } 1 \le j \le n
        \end{equation}
        and define
        \begin{equation}
            m_{i,j}:=\max_{\gamma_{i,j}\leq \gamma \leq \gamma_{i,j+1}}\min\left\{g_i(\gamma),\,g_{i+1}(\gamma)\right\}.
        \end{equation}
        We can thus upper bound each $M_i$ by
        \begin{equation}
            M_i \leq \max_{1 \leq j \leq n-1}\,m_{i,j}.
        \end{equation}
        In order to find an upper bound on $M_{\gS}^2$, we thus need to find an upper bound on $m_{i,j}$ and can proceed analogously to rotations. Namely, setting
        \begin{align}
            \begin{split}
                &M:=\max_{1\leq i\leq N-1}\bigg\{\max_{1\leq j\leq n-1}\bigg\{\frac{1}{2}\cdot\left(\min\left\{g_i(\gamma_{i,j}) + g_i(\gamma_{i,j+1}),\right.\right.\\
                &\hspace{4em}\left.\left.\,g_{i+1}(\gamma_{i,j}) + g_{i+1}(\gamma_{i,j+1})\right\}\right) + L\cdot\frac{\gamma_{i,\,j+1} - \gamma_{i,\,j}}{2}\bigg\}\bigg\}
            \end{split}
        \end{align}
        yields a computable upper bound of the maximum $\ell_2$ interpolation error. Computing a Lipschitz constant for $g_i$ and $g_{i+1}$ is also analogous to rotations. The difference lies only in computing a Lipschitz constant for $\phi_S$ what we will explain in greater detail.

        Recall that Lemma~\ref{lem:lipschitz_constant} provides a Lipschitz constant for the function $t\mapsto\psi_k(t):=Q_x(k,\,\rho_1(t),\,\rho_2(t))$ where $\rho$ is a differentiable curve with values in $\R^2$. Namely, a Lipschitz constant for $\psi_k$ is given by
        \begin{equation}
            L_k = \max_{t\in[t_1,t_2]}\left(\sqrt{2}\left\|\dot\rho(t)\right\|_2\cdot m_\Delta(x,\,k,\,\floor{\rho(t)})\right).
        \end{equation}
        Consider the curve
        \begin{equation}
            \rho(t) := (f_1^r(t),\,f_2^s(t)),\hspace{1em}t>0
        \end{equation}
        and note that it is differentiable with derivatives
        \begin{align}
            \dfrac{df_1^r(t)}{dt} &= \dfrac{d}{dt}\left(c_W + \dfrac{r - c_W}{t}\right) = \dfrac{c_W - r}{t^2}\\
            \dfrac{df_2^s(t)}{dt} &= \dfrac{d}{dt}\left(c_H + \dfrac{s - c_H}{t}\right) = \dfrac{c_H - s}{t^2}
        \end{align}
        and
        \begin{equation}
            \left\|\dot\rho(t)\right\|_2 = \dfrac{1}{t^2}\sqrt{\left(c_W - r\right)^2 + \left(c_H - s\right)^2}.
        \end{equation}
        A Lipschitz constant for $\phi_S(x,\,\cdot)_{k,r,s}$ is thus given by
        \begin{align}
            \begin{split}
                L_k^{r,s} &= \max_{t\in[t_1,\,t_2]}\bigg(\frac{\sqrt{\left(c_W - r\right)^2 + \left(c_H - s\right)^2}}{t^2}
                \cdot\sqrt{2} m_\Delta(x,\,k,\,\floor{\rho(t)})\bigg)
            \end{split}\\
            &\leq\dfrac{\sqrt{\left(c_W - r\right)^2 + \left(c_H - s\right)^2}}{t_1^2}\cdot\sqrt{2}\cdot m_\Delta(x,\,k,\,\gP_{r,s})\\
            &\le\dfrac{\sqrt{\left(c_W - r\right)^2 + \left(c_H - s\right)^2}}{a^2}\cdot\sqrt{2}\cdot m_\Delta(x,\,k,\,\gP_{r,s})
        \end{align}
        where
        \begin{equation}
            \gP_{r,s} = \bigcup_{\alpha\in[t_1,\,t_2]}\{(\floor{f_1^r(t)},\,\floor{f_2^s(t)})\}.
        \end{equation}
        Finally, setting
        \begin{equation}
            L_i^{k,r,s}:= L_k^{r,s}\cdot \Bar{m}(x,\,k,\,\gP_{r,s}^{(i)})
        \end{equation}
        and
        \begin{equation}
            \begin{aligned}
                L_s &=\max_{1\leq i \leq N-1}\sum_{k=0}^{K-1}\sum_{(r,s)\in\Omega_\N}\,L_i^{k,r,s}\\
                &=\max_{1\leq i \leq N-1}\sum_{k=0}^{K-1}\sum_{r,s\in \Omega\cap\N^2} \frac{\sqrt{2} d_{r,s}}{a^2}\\
                &\hspace{6em}\cdot m_\Delta(x,k,\gP_{r,s}^{(i)})\cdot \overline{m}(x,\,k,\,\gP_{r,s}^{(i)})
            \end{aligned}
        \end{equation}
        yields the desired Lipschitz constant.
    \end{proof}
    \fi

\section{Algorithm Description for Differentially Resolvable Transformations}
\label{apx:algorithms}

    \begin{algorithm}[ht]
        \small
        \caption{Interpolation Error $M$ Computation for Rotation Transformation.}
        \begin{algorithmic}
            \label{algo:interpolation-error-M-compute}
            \STATE {\bf Input:}
            clean input image $x$; \\
            interval of rotation angle to certify $[a,\,b]$; \\
            number of first-level samples $N$; \\
            number of second-level samples $n$

            \STATE {\bf Output:}
            rotation angle samples $\{\alpha_i\}_{i=1}^N$; \\
            upper bound $M$ of squared $\ell_2$-interpolation error
            $$
                 M_{\gS}^2 = \argmax_{\alpha\in [a,b]} \min_{1\le i\le N} \|\Tilde{\phi}_R(x,\alpha) - \Tilde{\phi}_R(x,\alpha_i)\|_2^2.
            $$

            \STATE \textit{/* Compute Lipschitz constant $L_r$~\eqref{eq:lem_rotation_lipschitz} */}
            \STATE $\alpha_1 \gets a$
            \FOR{ $i=1,\dots,N-1$ }
                \STATE $\alpha_{i+1} \gets a + (b-a)\cdot\frac{i}{N-1}$~\eqref{eq:unif-in-a-b}
                \FORALL{ $(r,s) \in V$ }
                    \STATE \textit{/* $V$ and $\gP_{r,s}^{(i)}$ are defined in \Cref{lem:rotation_lipschitz} */}
                    \STATE Compute trajectory covered grid pixels $\gP_{r,s}^{(i)}$
                    \FOR{ $k=0,\dots,K-1$ }
                        \STATE Compute $2d_{r,s} \cdot m_{\Delta}(x,k,\cP_{r,s}^{(i)}) \cdot \bar m(x,k,\gP_{r,s}^{(i)})$~\eqref{eq:lem_rotation_lipschitz}
                    \ENDFOR
                \ENDFOR
                \STATE $L_{r,i}\gets \sum_{k=0}^{K-1} \sum_{(r,s) \in V}
                2 d_{r,s} \cdot m_{\Delta} (x,k,\cP_{r,s}^{(i)}) \cdot \bar m (x,k,\gP_{r,s}^{(i)})$.
            \ENDFOR
            \STATE $L_r \gets \max_{1\le i\le N - 1} L_{r,i}$~\eqref{eq:lem_rotation_lipschitz}

            \STATE \textit{/* Compute interpolation error bound $M$~\eqref{eq:sqrt-M} from stratified sampling */}

            \FOR{ $i=1,\dots,N-1$ }
                \FOR{ $j=1,\dots,n$ }
                \STATE \textit{/* Second-level sampling */}
                \STATE $\gamma_{i,j} \gets \alpha_i + (\alpha_{i+1}-\alpha_i)\cdot\frac{j-1}{n-1}$~\eqref{eq:gamma-i-j}
                \ENDFOR
                \STATE $M_i \gets 0$
                \FOR{ $j=1,\dots,n-1$ }
                    \STATE Compute $g_i(\gamma_{i,j})$, $g_i(\gamma_{i,j+1})$, $g_{i+1}(\gamma_{i,j})$, and $g_{i+1}(\gamma_{i,j+1})$~\eqref{eq:g_i_functions_def}
                    \STATE $M_i \gets \max\left\{ M_i, \min\left\{ g_i(\gamma_{i,j}) + g_i(\gamma_{i,j+1}), \right.\right.$
                    \STATE \hspace{2em} $\left.\left. g_{i+1}(\gamma_{i,j}) + g_{i+1}(\gamma_{i,j+1}) \right\} \right\}$
                \ENDFOR
                \STATE $M_i \gets \frac{1}{2} M_i + L\cdot \frac{b-a}{(N-1)(n-1)}$~\eqref{eq:M_i_expression}
            \ENDFOR
            \STATE {\bf Return:} $M \gets \max_{1\le i\le N-1} M_i$~\eqref{eq:sqrt-M}
        \end{algorithmic}
        \label{alg:interpolation-error-compute}
    \end{algorithm}

    \Cref{alg:interpolation-error-compute} presents a pseudo-code for interpolation error $M$ computation, taking rotation transformation as the example.
    It corresponds to the description in \Cref{subsec:interpolation-bound-computation}.
    \Cref{algo:progressive-sampling-cert} presents a pseudo-code for progressive sampling.
    It corresponds to the description in \Cref{subsec:progressive-sampling-with-interpolation-bound}.
    We remark that in practice, we sample in mini-batches with batch size $B$.
    The error tolerance $T$ is set to $M_{\gS}$~\eqref{eq:general_Ms} if certifying rotation or scaling, is set to $\sqrt{M_{\gS}^2 + \sigma^2 / \sigma_b^2 \cdot b_0^2}$~\eqref{eq:general_Ms_brightness_stmt} if certifying the composition of rotation or scaling with brightness change within $[-b_0,\,b_0]$;
    and is set to $\sqrt{(M_{\gS}+r)^2 + \sigma^2 / \sigma_b^2 \cdot b_0^2}$~\eqref{eq:general_Ms_brightness_l2_stmt} if certifying the composition of rotation or scaling, brightness change $[-b_0,\,b_0]$, and $\ell_2$ bounded perturbations within $r$.
    The two algorithms jointly constitute our pipeline \textbf{\framework-DR} for certifying against differentially resolvable transformations as shown in \Cref{fig:bounding-illustration}.

    \begin{algorithm}[!t]
        \small
        \caption{Progressive Sampling for Certification.}
        \begin{algorithmic}
            \label{algo:progressive-sampling-cert}
            \STATE {\bf Input:}
            clean input image $x$ with true class $k_A$;
            first-level parameter samples $\{\alpha_i\}_{i=1}^N$;
            perturbation random variable $\varepsilon$ with variance $\sigma^2$;
            $\ell_2$ error tolerance $T$;
            batch size $B$;
            sampling size limit $n_s$;
            confidence level $p$.
            \STATE {\bf Output:} with probability $1-p$, whether $g(\cdot;\varepsilon)$ is certifiably robust at $\phi(x,\,\alpha)$.

            \FOR { $i=1,\dots,N$ }
                \STATE $x^{(i)} \gets \phi(x,\alpha_i)$
                \STATE $j \gets 0$
                \WHILE {$j \le n_s$}
                    \STATE Sample $B$ instances of $\phi(x^{(i)}, \varepsilon)$, and use them to  update empirical mean $\hat{q}(y_A\lvert\,x^{(i)};\,\varepsilon)$.
                    \STATE $j \gets j + B$.

                    \STATE \textit{/* Lower confidence interval bound with these $j$ samples */}
                    \STATE $\underline{p_A}^{(i)} = \mathrm{LowerConfBound}(\hat{q}(y_A\lvert\,x^{(i)}; \varepsilon), j, 1 - p/N)$.

                    \IF{$\underline{R_i} = \sigma \Phi^{-1}\left( \underline{p_A}^{(i)} \right) > T$}
                        \STATE \textit{/* Already get the certification that $R_i > T$, break */}
                        \STATE \texttt{Break}
                    \ENDIF
                \ENDWHILE
                \IF{ $\underline{R_i} = \sigma \Phi^{-1}\left( \underline{p_A}^{(i)} \right) \le T$ }
                    \STATE \textit{/* Cannot ensure that $R_i > T$. So cannot ensure that $R = \min R_i > T$. Early halt */}
                    \STATE {\bf Return:} \texttt{false}
                \ENDIF
            \ENDFOR
            \STATE {\bf Return:} \texttt{true}
        \end{algorithmic}
    \end{algorithm}

\section{Omitted Experiment Details}
    \label{adxsec:experiment-details}

    Here we provide all omitted details about experiment setup, implementation, discussion about baselines, evaluation protocols, results, findings, and analyses.
    \ifnum\ArXiv=0
    Due to the space limit, most of the results are omitted to the corresponding appendix of the arXiv version~\cite{arxiv}.
    \fi

    \subsection{Model Preparation}
        \label{adxsec:model-preparation}

        \ifnum\ArXiv=0
            Due to the space limit, the full description of the model preparation including all hyperparameters are omitted to the corresponding section of the arXiv version~\cite{arxiv}.
        \fi

        \ifnum\ArXiv=1
        As previous work shows, an undefended model is very vulnerable even under simple random semantic attacks.
        Therefore, to obtain nontrivial certified robustness, we require the model itself to be trained to be robust against semantic transformations.
        We apply data augmentation training~\cite{cohen2019certified} combined with Consistency regularization~\cite{jeong2020consistency} to train the base classifiers.
        The data augmentation training randomly transforms the input by the specified transformation using parameters drawn from the specified smoothing distribution/strategy.
        The Consistency regularization further enhances the consistency of the base classifiers' prediction among the drawn parameters.
        Then,
        the base classifiers are used to construct smoothed classifiers by the specified smoothing distribution/strategy, and we compute its robustness certification with our approach.

        On relatively small datasets MNIST and CIFAR-10, the models are trained from scratch.
        On MNIST, we use a convolutional neural network~(CNN) composed of four convolutional layers and three fully connected layers.
        On CIFAR-10, we use the neural network ResNet-110, a $110$-layer ResNet model~\cite{he2016deep}.
        These model structures are the same as in the literature~\cite{salman2019provably,cohen2019certified,yang2020randomized} for direct comparison.
        On MNIST, we train $100$ epochs; on CIFAR-10, we train $150$ epochs.
        The batch sizes~($B$) are $400$ and $256$ on MNIST and CIFAR-10,  respectively.
        The learning rate on both datasets is initialized to $0.01$, and after every $50$ epochs, the learning rate is multiplied by $0.1$.
        For resolvable transformations, the data augmentation usually uses the same smoothing distribution/strategy as we will use to construct the smoothed classifier.
        In particular, for brightness and contrast transformation, we empirically observe that a larger variance during inference time helps to improve the certified accuracy under large attack radius, and for the composition of Gaussian blur, brightness, contrast, and translation, we additionally add small additive Gaussian noise to improve its ability to defend against other unforeseen attacks as we will discuss in \Cref{appendix:exp-corruption}.
        For differentially resolvable transformations,
        since Gaussian noise is required in constructing the smoothed classifier,
        the data augmentation jointly adds Gaussian noise and the transformation to certify against.
        The detailed hyperparameters such as distribution type and variance are listed in \Cref{tab:main-setting}.
        The weight of Consistency regularization is set to $10$ throughout the training.

        On the large ImageNet dataset, we finetune the existing trained models.
        For resolvable transformations, we finetune from ResNet-50 model in \texttt{torchvision} library~\cite{torchvisionzoo}.
        For differentially resolvable transformations, since the base classifier should also be robust under Gaussian noise, we finetune from Resnet-50 model in \cite{salman2019provably} that achieves state-of-the-art robustness under Gaussian noise.
        In either case, we follow the same data augmentation scheme as on MNIST and CIFAR-10, and we finetune for two epochs with batch size~($B$) $128$, learning rate $0.001$, and Consistency regularization weight $10$.
        During certification~(e.g., \Cref{algo:progressive-sampling-cert}), we use the same batch sizes as during training on these datasets.

        The channel-wise normalization is used for all models on these three datasets as in \cite{cohen2019certified,salman2019provably}.
        On all three datasets, in each training epoch, we feed in the whole training dataset without random shuffle.

        We remark that since our approach focuses on robustness certification and the smoothing strategy to improve certified robustness, we did not fully explore the potential of improving certified robustness from the training side, nor did we file-tune the training hyperparameters.
        Therefore, though we already achieved the state of the art by our effective robustness certification and smoothing strategies, we believe the results could be further improved by more effective training approaches.
        \fi

    \subsection{Implementation Details}
        \label{adxsec:implementation_details}
        We implement the whole approach along with the training scripts in a tool based on \texttt{PyTorch}.
        For resolvable transformations, we extend the smoothing module from Cohen et al \cite{cohen2019certified} to accommodate various smoothing strategies and smoothing distributions.
        The predict and certify modules are kept the same.
        For differentially resolvable transformations, since the stratified sampling requires $N\times n$ times of transformation to compute the interpolation error bound~(where $N$ is the number of first-level samples and $n$ the number of second-level samples), we implement a fast \texttt{C} module and integrate it to our \texttt{Python}-based tool.
        It empirically achieves $3-5$x speed gain compared with \texttt{OpenCV}\cite{opencvinterpolotation}-based transformation.
        For Lipschitz upper bound computation, since the loop in \texttt{Python} is slow, we reformulate the computation by loop-free tensor computations using \texttt{numpy}.
        It empirically achieves $20-40$x speed gain compared to the plain loop-based implementation.
        The full code implementation of our \framework tool along with all trained models are publicly available at \texttt{\small \url{https://github.com/AI-secure/semantic-randomized-smoothing}}.

    \subsection{Details on Attacks}
        \label{adxsec:attack_details}
        We use the following three attacks to evaluate the empirical accuracy of both \framework models and vanilla models: Random Attack, Random+ Attack, and PGD Attack.
        The Random Attack is used in previous work~\cite{balunovic2019certifying,fischer2019statistical} but does not consider the intrinsic characteristics of semantic transformations.
        Thus, we propose Random+ Attack and PGD Attack as the alternatives since they are adaptively designed for our smoothed \framework models and also consider the intrinsic characteristics of these transformations.
        \ifnum\ArXiv=0
        The detailed description of these three attacks can be found in the arXiv version~\cite{arxiv}.
        \fi

        \ifnum\ArXiv=1
        \subsubsection{Random Attack}
        The random attack is used to evaluate the empirical robust accuracy, which is an upper bound of the certified robust accuracy.
        The random attack reads in the clean input, and uniformly samples $N$ parameters from the pre-defined transformation parameter space to transform the input following uniform distribution.
        If the model gives a wrong prediction on any of these $N$ transformed inputs, we treat this sample as being successfully attacked; otherwise, the sample counts toward the empirical robust accuracy.
        We denote by $N$ the ``number of initial starts''.
        In the main experiments, we set $N = 100$, and in the following ablation study~(\Cref{appendix:exp-compare-of-adaptive-attacks}), we also compare the behaviors of the three attacks under $N=10/20/50$.

        For transformations with a hyper-rectangle parameter space, including brightness, contrast, scaling, rotation, Gaussian blur, and their compositions, we uniformly sample transformation parameters for each coordinate.
        For transformations with discrete parameter space, such as translation, we draw the parameter with equal probability.
        When the transformation is composed with $\ell_p$-bounded perturbations, we additionally generate the perturbation vector using FGSM attack~\cite{szegedy2013intriguing}, where the precise gradient is used for vanilla models, and the empirical mean gradient over $100$ samples is used for smoothed \framework models.

        \subsubsection{Adaptive Attack: Random+}
            The Random+ attack follows the same procedure as the Random attack.
            The only difference is that, instead of using uniform distribution for sampling transformation parameters, we use the Beta distribution $\text{Beta}(0.5, 0.5)$.

            Formally, suppose the transformation space is $[a,b]$.
            In Random attack, we generate the attack parameter $\varepsilon$ randomly as follows:
            \begin{equation}
                \varepsilon' \sim \text{Unif}(0,1), \quad \varepsilon \gets a + (b-a) \times \varepsilon'.
            \end{equation}
            In Random+ attack, we generate the attack parameter $\delta$ randomly as follows:
            \begin{equation}
                \delta' \sim \text{Beta}(0.5, 0.5), \quad \delta \gets a + (b-a) \times \delta'.
            \end{equation}
            We choose the Beta distribution because, intuitively, an adversarial example would be more likely to exist at the boundary, i.e., closer to $a$ or $b$.
            For example, suppose the rotation attacker has permitted angles in $[-r,\,r]$, then the adversarial samples may be more likely to have large rotation angle.
            As shown in \Cref{fig:compare_uniform_beta}, the Beta distribution helps to assign more mass when the parameter becomes closer to the boundary.
            Choosing other Beta distribution hyperparameters could control the trade-off on sampling weights over the boundary or over center, and we empirically find $\text{Beta}(0.5, 0.5)$ already works very well as shown in experiments~(\Cref{appendix:exp-compare-of-adaptive-attacks}).

            \begin{figure}[!t]
                \centering
                \includegraphics[width=0.7\linewidth]{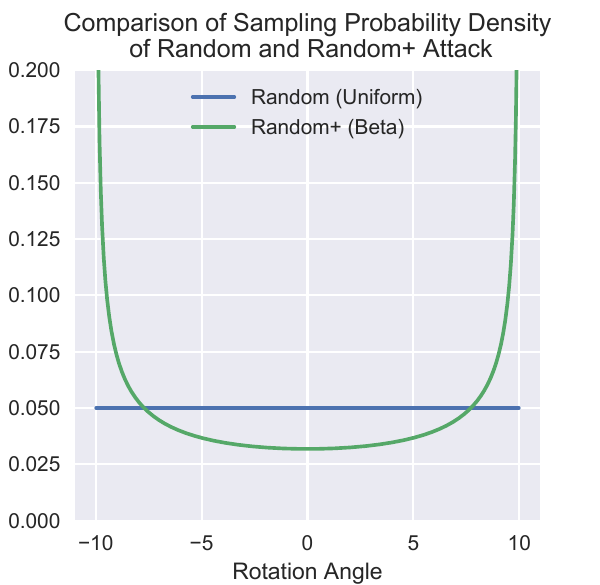}
                \caption{\small Comparison of probability density of Random and Random+ attack when attacking the rotation transformation with rotation angle between $-10^\circ$ and $+10^\circ$.}
                \label{fig:compare_uniform_beta}
            \end{figure}

        \subsubsection{Adaptive Attack: PGD}

            We propose the semantic transformation version of PGD attack as follows:
            (1)~Initialize the transformation parameter following the same process as in Random+ Attack;
            (2)~Suppose the current parameter is $(\alpha_1,\cdots,\alpha_z)$. The attack slightly perturbs each coordinate from $\alpha_i$ to $\alpha_i \pm \tau_i$ respectively and obtain $2z$ perturbed candidates.
            (3)~The attack clips each coordinate to be within the specified range, and then choose the candidate that yields the largest gain of cross-entropy loss (for vanilla models) or empirical mean cross-entropy loss (for \framework models) to update the current parameter.
            Then go to step (2).
            The loop repeats for $10$ iterations for each sample obtained in step (1).
            The perturbed step size $\tau_i = l_i / 10$ where $l_i$ is the length of the specified interval on the $i$-th coordinate.
            Finally, if the transformation is composed with $\ell_p$-bounded perturbations, we additionally generate the perturbation vector in the same way as in Random attack.

            For each Step (1) sample we would have an output and thus there are $N$ outputs.
            If any of these outputs fool the target model, we treat this sample as being successfully attacked; otherwise, the sample counts toward the empirical robust accuracy.
            Note that the translation transformation has a discrete parameter space, thus PGD attack is not applicable.

            We refer to the attack as the semantic transformation version of PGD attack because:
            (1)~It involves multiple initial starts;
            (2)~It leverages the local landscape information to maximize the loss function iteratively.
            (3)~It clips~(i.e., ``projects'') the parameters to be within the perturbation range.
            Compared to the classical PGD attack under $\ell_p$-norm constraint, we use coordinate-wise perturbations to probe the local landscape to circumvent the hardness of obtaining the gradient with respect to transformation parameters.
        \fi

    \subsection{Details on Baseline Approaches}
        \label{adxsec:baselines}
            \textbf{DeepG}~\cite{balunovic2019certifying} is based on linear relaxations.
            The code is open-sourced, and we utilize it to provide a direct comparison.
            The code provides trained models on MNIST and CIFAR-10, while on ImageNet the method is too slow and memory-consuming to run.
            On both MNIST and CIFAR-10, we use the provided trained models from the code.
            In terms of computation time, since our approach uses far less than $\SI{1000}{s}$ for certification per input on MNIST and CIFAR-10, we tune the hyperparameters to let the code spend roughly $\SI{1000}{s}$ for certification.

            \textbf{Interval}~\cite{singh2019abstract} is based on interval bound propagation.
            We also utilize the open-sourced code to provide a direct comparison.
            The settings are the same as in DeepG.

            \textbf{VeriVis}~\cite{pei2017towards} provides an enumeration-based solution when the number of possible transformation parameters or the number of possible transformed images is finite.
            In our evaluation, only translation satisfies this property.
            Therefore, as the baseline, we implement the enumeration-based robustness certification algorithm for our trained robust models.

            \textbf{Semantify-NN}~\cite{mohapatra2019towards} proposes to insert a preprocessing layer to reduce the verification against semantic transformations to the verification against classical $\ell_p$ noises.
            To our knowledge, the code has not been open-sourced yet.
            Therefore, we directly compare with the numbers reported in their paper.
            Since they report the average of certified robust magnitude, we apply Markov's inequality to obtain an upper bound of their certified robust accuracy.
            For example, they report $46.24$ degrees as the average certified robust rotation angles.
            It means that $\Prob[r \ge 50^\circ] \le \E[r] / 50 = 92.48\%$, i.e., the certified robust accuracy is no larger than $92.48\%$ when fixing the rotation angle to be $50^\circ$.

            For brightness and contrast changes, Semantify-NN considers first applying the change and then clipping to $[0, 1]$, while our TSS considers only brightness and contrast changes.
            This makes a one-to-one comparison with \cite{mohapatra2019towards} difficult, but since other baselines~(e.g., \cite{balunovic2019certifying}) consider the same setting as we do, and to align with most baselines, we slightly sacrifice comparability in this special case.
            For interested readers who would like to have an absolutely fair comparison with Semantify-NN on brightness and contrast changes, they can extend our \framework by modeling Semantify-NN's tranformation by $\phi_{BC}(x,(b,c)) \circ \phi_{clip}(x,t_l, t_h)$, where $\phi_{clip}$ clips the pixel intensities lower than $t_l$ and higher than $t_h$.
            Applying \framework-R on transformation parameters $(b,c,t_l,t_h)$ then derives the robustness certification under the same threat model as Semantify-NN.

            \textbf{DistSPT}~\cite{fischer2019statistical} combines randomized smoothing and interval bound propagation to provide certified robustness against semantic transformations.
            Concretely, the approach leverages interval bound propagation to compute the upper bound of interpolation error and then applies randomized smoothing.
            On small datasets such as MNIST and CIFAR-10, the approach is able to provide nontrivial robustness certification.
            Though the certified robust accuracy is inferior than \framework as reflected in \Cref{tab:main}.
            We use their reported numbers in \cite[Table 4]{fischer2019statistical} for DistSPT$^x$ for comparison, since the certification goal and evaluation protocol are the same as ours.
            On ImageNet, as described in \cite[Section 7.4]{fischer2019statistical}, the interval bound propagation is computationally expensive and loose.
            Therefore, they use sampling to estimate the interpolation error, which makes the robustness certification no longer hold against arbitrary attacks but just a certain random attack~(``worst-of-10'' attack).

            \textbf{IndivSPT}~\cite{fischer2019statistical} provides a different certification goal from the above approaches.
            At a high level, the approach uses a transformed image as the input where the transformation parameter is within predefined threshold.
            Then the approach certifies whether the prediction for the transformed image and the prediction for the original image are the same.
            In contrast, \framework and other baseline approaches take original image as the input and certifies whether there exists no transformed image that can mislead the model.
            Due to different certification goals, \framework is not comparable with IndivSPT.

    \subsection{Benign Accuracy}
        \label{adxsec:benign-acc}

        \Cref{tab:benign-acc} shows the benign accuracy of our models corresponding to \Cref{tab:main}.
        For comparison, the vanilla trained models have benign accuracy $98.6\%$ on MNIST, $88.6\%$ on CIFAR-10, and $74.4\%$ on ImageNet.

        We observe that though the trade-off between accuracy and (certified) robustness is widely reported both theoretically~\cite{yang2020randomized,mohapatra2020rethinking,zhang2020black} and empirically~(e.g., \cite{cohen2019certified,Zhai2020MACER,jeong2020consistency}) in classical $\ell_p$ threat model, it does not always exist in our semantic defense setting.
        Specifically, for resolvable transformations, we do not observe an apparent loss of benign accuracy in our certifiably robust models;
        while for differentially resolvable transformations~(those involving scaling and rotation), there is no loss on MNIST, slight losses on CIFAR-10, and apparent losses on ImageNet.
        When there does exist a trade-off between benign accuracy and certified robust accuracy, we show that smoothing variance levels control it in \Cref{appendix:exp-different-variance-levels}.

        \begin{table}[htbp]
            \centering
            \caption{\small Benign accuracy of our \framework models corresponding to those in \Cref{tab:main}. Certified robust accuracy shown as reference.}
            \resizebox{\linewidth}{!}{
            \begin{tabular}{c c c c c}
                \toprule
                \multirow{2}{*}{Transformation} & \multirow{2}{*}{Dataset} & \multirow{2}{*}{Attack Radius} & Certified & Benign \\
                & & & Robust Acc. & Acc. \\
                \midrule

                \multirow{3}{*}{Gaussian Blur} &
                MNIST & Squared Radius $\alpha \le 36$ & $90.6\%$ & $96.8\%$ \\
                & CIFAR-10 & Squared Radius $\alpha \le 16$ & $63.6\%$ & $76.2\%$ \\
                & ImageNet & Squared Radius $\alpha \le 36$ & $51.6\%$ & $59.2\%$ \\

                \midrule
                \multirow{3}{*}{\shortstack[c]{Translation \\ (Reflection Pad.)}} &
                MNIST & $\sqrt{\Delta x^2 + \Delta y^2} \le 8$ & $99.6\%$ & $99.6\%$ \\
                & CIFAR-10 & $\sqrt{\Delta x^2 + \Delta y^2} \le 20$ & $80.8\%$ & $87.0\%$ \\
                & ImageNet & $\sqrt{\Delta x^2 + \Delta y^2} \le 100$ & $50.0\%$ & $73.0\%$ \\

                \midrule
                \multirow{3}{*}{Brightness} &
                MNIST & $b \pm 50\%$ & $98.2\%$ & $98.2\%$ \\
                & CIFAR-10 & $b \pm 40\%$ & $87.0\%$ & $87.8\%$ \\
                & ImageNet & $b \pm 40\%$ & $70.0\%$ & $72.2\%$ \\

                \midrule
                \multirow{3}{*}{\shortstack[c]{Contrast\\ and\\ Brightness}} &
                MNIST & $c\pm 50\%, b\pm 50\%$ & $97.6\%$ & $98.0\%$ \\
                & CIFAR-10 & $c\pm 40\%, b\pm 40\%$ & $82.4\%$ & $86.8\%$ \\
                & ImageNet & $c\pm 40\%, b\pm 40\%$ & $61.4\%$ & $72.2\%$ \\

                \midrule
                \multirow{3}{*}{\shortstack[c]{Gaussian Blur,\\ Translation, Bright-\\ ness, and Contrast}} &  MNIST & $\alpha \le 1, c,b \pm 10\%, \sqrt{\Delta x^2 + \Delta y^2} \le 5$ &  $90.2\%$ &  $98.2\%$ \\
                &  CIFAR-10 &  $\alpha \le 1, c,b \pm 10\%, \sqrt{\Delta x^2 + \Delta y^2} \le 5$ &  $58.2\%$ &  $77.6\%$ \\
                &  ImageNet &  $\alpha \le 10, c,b\pm 20\%, \sqrt{\Delta x^2 + \Delta y^2} \le 10$ &  $32.8\%$ &  $61.6\%$ \\

                \midrule
                \multirow{4}{*}{Rotation} &
                MNIST & $r\pm 50^\circ$ & $97.4\%$ & $99.4\%$ \\
                & \multirow{2}{*}{CIFAR-10} & $r\pm 10^\circ$ & $70.6\%$ & $83.2\%$ \\
                & & $r\pm 30^\circ$ & $63.6\%$ & $82.6\%$ \\
                & ImageNet & $r\pm 30^\circ$ & $30.4\%$ & $46.2\%$ \\

                \midrule
                \multirow{3}{*}{Scaling} &
                MNIST & $s\pm 30\%$ & $99.0\%$ & $99.4\%$ \\
                & CIFAR-10 & $s\pm 30\%$ & $58.8\%$ & $79.8\%$ \\
                & ImageNet & $s\pm 30\%$ & $26.4\%$ & $50.8\%$ \\

                \midrule
                \multirow{4}{*}{\shortstack[c]{Rotation\\ and\\ Brightness}} &
                MNIST & $r\pm 50^\circ, b\pm 20\%$ & $97.0\%$ & $99.4\%$ \\
                & \multirow{2}{*}{CIFAR-10} & $r\pm 10^\circ, b\pm 10\%$ & $70.2\%$ & $83.0\%$ \\
                & & $r\pm 30^\circ, b\pm 20\%$ & $61.4\%$ & $82.6\%$ \\
                & ImageNet & $r\pm 30^\circ, b\pm 20\%$ & $26.8\%$ & $45.8\%$ \\

                \midrule
                \multirow{3}{*}{\shortstack[c]{Scaling\\ and\\ Brightness}} &
                MNIST & $s\pm 50\%, b\pm 50\%$ & $96.6\%$ & $99.4\%$ \\
                & CIFAR-10 & $s\pm 30\%, b\pm 30\%$ & $54.2\%$ & $79.6\%$ \\
                & ImageNet & $s\pm 30\%, b\pm 30\%$ & $23.4\%$ & $50.8\%$ \\

                \midrule
                \multirow{4}{*}{\shortstack[c]{Rotation,\\ Brightness,\\ and $\ell_2$}} &
                MNIST & $r\pm 50^\circ, b\pm 20\%, \|\delta\|_2 \le .05$ & $96.6\%$ & $99.4\%$ \\
                & \multirow{2}{*}{CIFAR-10} & $r\pm 10^\circ, b\pm 10\%, \|\delta\|_2\le .05$ & $64.2\%$ & $83.0\%$ \\
                & & $r\pm 30^\circ, b\pm 20\%, \|\delta\|_2\le .05$ & $55.2\%$ & $82.6\%$ \\
                & ImageNet & $r\pm 30^\circ, b\pm 20\%, \|\delta\|_2\le .05$ & $26.6\%$
         & $45.8\%$ \\

                \midrule
                \multirow{3}{*}{\shortstack[c]{Scaling,\\ Brightness,\\ and $\ell_2$}} &
                MNIST & $s\pm 50\%, b\pm 50\%, \|\delta\|_2\le .05$ & $96.4\%$ & $99.4\%$ \\
                & CIFAR-10 & $s\pm 30\%, b\pm 30\%, \|\delta\|_2\le .05$ & $51.2\%$ & $79.6\%$ \\
                & ImageNet & $s\pm 30\%, b\pm 30\%, \|\delta\|_2\le .05$ & $22.6\%$ & $50.8\%$ \\

                \bottomrule
            \end{tabular}
            }
            \label{tab:benign-acc}
        \end{table}

    \subsection{Smoothing Distributions and Running Time Statistics}
        \label{appendix:exp-dist-running-time}
        In \Cref{tab:main-setting}, we present the smoothing distributions with concrete parameters and average certification computing time per sample for results in main table~(\Cref{tab:main}).
        In the table,
        $\alpha$ is for squared kernel radius for Gaussian blur;
        $\Delta x$ and $\Delta y$ are for translation displacement on horizontal and vertical direction;
        $b$ and $c$ are for brightness shift and contrast change respectively as in $x \mapsto (1+c)x + b$;
        $r$ is for rotation angle;
        $s$ is for size scaling ratio;
        $\varepsilon$ is for additive noise vector;
        and $\|\delta\|_2$ for $\ell_2$ norm of permitted additional perturbations.
        Specifically, ``Training Distribution'' stands for
        the distributions for data augmentation during training the base classifiers;
        and ``Smoothing Distribution'' stands for the distributions for constructing the smoothed classifiers for certification.

        We select these distributions according to the principles in \Cref{adxsec:model-preparation}.

        \newcolumntype{a}{>{\gray}c}

\begin{table*}
    \caption{\small Detailed smoothing distributions and running time statistics for our \framework.
    $\gN(\mu,\Sigma)$ is the normal distribution, $\exp(\lambda)$ is the exponential distribution, $\gU([a,b])$ is the uniform distribution.
    Random variable $\epsilon$ is the elementwise noise as in \Cref{cor:rotations_scaling_certificate}.
    ``Cert.'' means certification.
    }

    \centering
    \resizebox{!}{0.45\textheight}{
    \begin{tabular}{c c c c c c}
        \toprule
        \multirow{3}{*}{Transformation} & \multirow{3}{*}{Dataset} & \multirow{3}{*}{Attack Radius} & \multirow{3}{*}{\shortstack[c]{Training\\ Distribution}} & \multirow{3}{*}{\shortstack[c]{Smoothing\\ Distribution}} & Avg. Cert. \\
        & & & & & Time per \\
        & & & & & Sample  \\

        \arrayrulecolor{black}
        \midrule
        \arrayrulecolor{rgray}
        \multirow{3}{*}{Gaussian Blur}
        & MNIST & Squared Radius $\alpha \le 36$ & \multicolumn{2}{c}{$\alpha\sim\Exp(1/10)$} & $\SI{7.9}{s}$  \\
        \cline{2-6}
        & CIFAR-10 & Squared Radius $\alpha \le 16$ & \multicolumn{2}{c}{$\alpha\sim\Exp(1/5)$} & $\SI{30.9}{s}$ \\
        \cline{2-6}
        & ImageNet & Squared Radius $\alpha \le 36$ & \multicolumn{2}{c}{$\alpha\sim\Exp(1/10)$} & $\SI{45.7}{s}$ \\

        \arrayrulecolor{black}
        \midrule
        \arrayrulecolor{rgray}
        \multirow{3}{*}{\shortstack[c]{Translation\\ (Reflection Pad.)}}
        & MNIST & $\sqrt{\Delta x^2 + \Delta y^2} \le 8$ & \multicolumn{2}{c}{$(\Delta x,\Delta y)\sim\gN(0, 10^2 I)$} & $\SI{10.2}{s}$  \\
        \cline{2-6}
        & CIFAR-10 & $\sqrt{\Delta x^2 + \Delta y^2} \le 20$ & \multicolumn{2}{c}{$(\Delta x,\Delta y)\sim\gN(0, 15^2 I)$} & $\SI{39.4}{s}$ \\
        \cline{2-6}
        & ImageNet & $\sqrt{\Delta x^2 + \Delta y^2} \le 100$ & \multicolumn{2}{c}{$(\Delta x,\Delta y)\sim\gN(0, 30^2 I)$} & $\SI{161.9}{s}$ \\

        \arrayrulecolor{black}
        \midrule
        \arrayrulecolor{rgray}
        \multirow{3}{*}{Brightness}
        & MNIST & $b\pm 50\%$ & \multicolumn{2}{c}{$b \sim \gN(0, 0.6^2)$} & $\SI{2.1}{s}$ \\
        \cline{2-6}
        & CIFAR-10 & $b\pm 40\%$ & \multicolumn{2}{c}{$b \sim \gN(0, 0.3^2)$} & $\SI{4.4}{s}$ \\
        \cline{2-6}
        & ImageNet & $b\pm 40\%$ & \multicolumn{2}{c}{$b \sim \gN(0, 0.4^2)$} & $\SI{45.1}{s}$ \\

        \arrayrulecolor{black}
        \midrule
        \arrayrulecolor{rgray}
        \multirow{3}{*}{\shortstack[c]{Contrast\\ and\\ Brightness}}
        & MNIST & $c\pm 50\%, b\pm 50\%$ & $(c,b)\sim\gN(0, 0.6^2 I)$ & $(c,b)\sim\gN(0, 1.0^2 I)$ & $\SI{9.8}{s}$ \\
        \cline{2-6}
        & CIFAR-10 & $c\pm 40\%, b\pm 40\%$ & $(c,b)\sim\gN(0, 0.4^2 I)$ & $(c,b)\sim\gN(0, 0.6^2 I)$ & $\SI{45.0}{s}$ \\
        \cline{2-6}
        & ImageNet & $c\pm 40\%, b\pm 40\%$ & \multicolumn{2}{c}{$(c,b)\sim\gN(0,0.4^2 I)$} & $\SI{325.6}{s}$ \\

        \arrayrulecolor{black}
        \midrule
        \arrayrulecolor{rgray}

        \multirow{12}{*}{\shortstack[c]{Gaussian Blur,\\ Translation, Bright-\\ ness, and Contrast}} &
        \multirow{4}{*}{MNIST} & \multirow{4}{*}{\shortstack[c]{$\alpha \le 5, c,b \pm 10\%$,\\ $\sqrt{\Delta x^2 + \Delta y^2} \le 5$}} &
        $\alpha \sim \Exp(1/10)$ & \multirow{4}{*}{\shortstack[c]{$\alpha \sim \Exp(1/10)$\\ $(\Delta x, \Delta y) \sim \gN(0, 10^2I)$\\ $(c,b) \sim \gN(0, 0.3^2I)$}} & \multirow{4}{*}{$\SI{12.9}{s}$} \\
        & & & $(\Delta x, \Delta y) \sim \gN(0, 10^2I)$ & \\
        & & & $(c,b) \sim \gN(0, 0.3^2I)$ & \\
        & & & $\epsilon\sim\gN(0, 0.05^2 I)$ & \\
        \cline{2-6}
        & \multirow{4}{*}{CIFAR-10} & \multirow{4}{*}{\shortstack[c]{$\alpha \le 1, c,b \pm 10\%$,\\ $\sqrt{\Delta x^2 + \Delta y^2} \le 5$}} &
        $\alpha \sim \Exp(1)$ & \multirow{4}{*}{\shortstack[c]{$\alpha \sim \Exp(1)$\\ $(\Delta x, \Delta y) \sim \gN(0, 10^2I)$\\ $(c,b) \sim \gN(0, 0.3^2I)$}} & \multirow{4}{*}{$\SI{43.1}{s}$} \\
        & & & $(\Delta x, \Delta y) \sim \gN(0, 10^2I)$ & \\
        & & & $(c,b) \sim \gN(0, 0.3^2I)$ \\
        & & & $\epsilon\sim\gN(0, 0.01^2 I)$ & \\
        \cline{2-6}
        & \multirow{4}{*}{ImageNet} & \multirow{4}{*}{\shortstack[c]{$\alpha \le 10, c,b \pm 20\%$,\\ $\sqrt{\Delta x^2 + \Delta y^2} \le 10$}} &
        $\alpha \sim \Exp(1/5)$ & \multirow{4}{*}{\shortstack[c]{$\alpha \sim \Exp(1/5)$\\ $(\Delta x, \Delta y) \sim \gN(0, 20^2I)$\\ $(c,b) \sim \gN(0, 0.4^2I)$}} & \multirow{4}{*}{$\SI{238.1}{s}$} \\
        & & & $(\Delta x, \Delta y) \sim \gN(0, 20^2I)$ & \\
        & & & $(c,b) \sim \gN(0, 0.4^2I)$ & \\
        & & & $\epsilon\sim\gN(0, 0.01^2 I)$ & \\

        \arrayrulecolor{black}
        \midrule
        \arrayrulecolor{rgray}
        \multirow{4}{*}{Rotation}
        & MNIST & $r\pm 50^\circ$ & \multirow{4}{*}{\shortstack[c]{Same as \\ Rotation and \\ Brightness }} & $\epsilon\sim\gN(0,0.12^2I)$ & $\SI{20.1}{s}$ \\
        \cline{2-3}\cline{5-6}
        & \multirow{2}{*}{CIFAR-10} & $r\pm 10^\circ$ & & $\epsilon\sim\gN(0,0.05^2I)$ & $\SI{52.8}{s}$ \\
        \cline{3-3}\cline{5-6}
        & & $r\pm 30^\circ$ & & $\epsilon\sim\gN(0,0.05^2I)$ & $\SI{141.0}{s}$ \\
        \cline{2-3}\cline{5-6}
        & ImageNet & $r\pm 30^\circ$ & & $\epsilon\sim\gN(0,0.5^2I)$ & $\SI{2358.1}{s}$ \\

        \arrayrulecolor{black}
        \midrule
        \arrayrulecolor{rgray}
        \multirow{3}{*}{Scaling}
        & MNIST & $s\pm 30\%$ & \multirow{3}{*}{\shortstack[c]{Same as \\ Scaling and \\ Brightness}} & $\epsilon\sim\gN(0,0.12^2I)$ & $\SI{17.7}{s}$ \\
        \cline{2-3}\cline{5-6}
        & CIFAR-10 & $s\pm 30\%$ & & $\epsilon\sim\gN(0,0.12^2I)$ & $\SI{42.2}{s}$ \\
        \cline{2-3}\cline{5-6}
        & ImageNet & $s\pm 30\%$ & & $\epsilon\sim\gN(0,0.5^2I)$ & $\SI{1201.2}{s}$ \\

        \arrayrulecolor{black}
        \midrule
        \arrayrulecolor{rgray}
        \multirow{12}{*}{\shortstack[c]{Rotation\\ and\\ Brightness}}
        & \multirow{3}{*}{MNIST} & \multirow{3}{*}{$r\pm 50^\circ, b\pm 20\%$} &
        $r \sim \gU([-55,55])$ & \multirow{3}{*}{\shortstack[c]{$\epsilon \sim \gN(0,0.12^2 I)$\\ $b\sim\gN(0,0.2^2)$}} & \multirow{3}{*}{$\SI{31.4}{s}$} \\
        & & & $\epsilon \sim \gN(0,0.12^2 I)$ \\
        & & & $b \sim \gN(0,0.2^2)$ \\
        \cline{2-6}
        & \multirow{6}{*}{CIFAR-10} & \multirow{3}{*}{$r\pm 10^\circ, b\pm 10\%$} &
        $r \sim \gU([-12.5,12.5])$ & \multirow{3}{*}{\shortstack[c]{$\epsilon \sim \gN(0,0.05^2 I)$\\ $b\sim\gN(0,0.2^2)$}} & \multirow{3}{*}{$\SI{62.3}{s}$} \\
        & & & $\epsilon \sim \gN(0,0.05^2 I)$ \\
        & & & $b \sim \gN(0,0.2^2)$ \\
        \cline{3-6}
        &  & \multirow{3}{*}{$r\pm 30^\circ, b\pm 20\%$} &
        $r \sim \gU([-35,35])$ & \multirow{3}{*}{\shortstack[c]{$\epsilon \sim \gN(0,0.05^2 I)$\\ $b\sim\gN(0,0.2^2)$}} & \multirow{3}{*}{$\SI{157.0}{s}$} \\
        & & & $\epsilon \sim \gN(0,0.05^2 I)$ \\
        & & & $b \sim \gN(0,0.2^2)$ \\
        \cline{2-6}
        & \multirow{3}{*}{ImageNet} & \multirow{3}{*}{$r\pm 30^\circ, b\pm 20\%$} &
        $r \sim \gU([-35,35])$ & \multirow{3}{*}{\shortstack[c]{$\epsilon \sim \gN(0,0.5^2 I)$\\ $b\sim\gN(0,0.2^2)$}} & \multirow{3}{*}{$\SI{2475.6}{s}$} \\
        & & & $\epsilon \sim \gN(0,0.5^2 I)$ \\
        & & & $b \sim \gN(0,0.2^2)$ \\

        \arrayrulecolor{black}
        \midrule
        \arrayrulecolor{rgray}
        \multirow{9}{*}{\shortstack[c]{Scaling\\ and\\ Brightness}}
        & \multirow{3}{*}{MNIST} & \multirow{3}{*}{$s\pm 50\%, b\pm 50\%$} &
        $s \sim \gU([0.45, 1.55])$ & \multirow{3}{*}{\shortstack[c]{$\epsilon\sim\gN(0,0.12^2I)$\\ $b\sim\gN(0,0.5^2)$}} & \multirow{3}{*}{$\SI{74.9}{s}$} \\
        & & & $\epsilon \sim \gN(0,0.12^2I)$ \\
        & & & $b \sim \gN(0, 0.5^2)$ \\
        \cline{2-6}
        & \multirow{3}{*}{CIFAR-10} & \multirow{3}{*}{$s\pm 30\%, b\pm 30\%$} &
        $s \sim \gU([0.65, 1.35])$ & \multirow{3}{*}{\shortstack[c]{$\epsilon\sim\gN(0,0.12^2I)$\\ $b\sim\gN(0,0.3^2)$}} & \multirow{3}{*}{$\SI{44.5}{s}$} \\
        & & & $\epsilon \sim \gN(0,0.12^2I)$ \\
        & & & $b \sim \gN(0, 0.3^2)$ \\
        \cline{2-6}
        & \multirow{3}{*}{ImageNet} & \multirow{3}{*}{$s\pm 30\%, b\pm 30\%$} & $s \sim \gU([0.65,1.35])$ & \multirow{3}{*}{\shortstack[c]{$\epsilon\sim\gN(0,0.5^2I)$\\ $b\sim\gN(0,0.3^2)$}} & \multirow{3}{*}{$\SI{1401.6}{s}$} \\
        & & & $\epsilon \sim \gN(0,0.5^2I)$ \\
        & & & $b \sim \gN(0, 0.3^2)$ \\

        \arrayrulecolor{black}
        \midrule
        \arrayrulecolor{rgray}
        \multirow{4}{*}{\shortstack[c]{Rotation,\\ Brightness,\\ and $\ell_2$}}
        & MNIST & $r\pm 50^\circ, b\pm 20\%, \|\delta\|_2\le .05$ & \multirow{4}{*}{\shortstack[c]{Same as\\ Rotation and\\ Brightness}} & \multirow{4}{*}{\shortstack[c]{Same as\\ Rotation and\\ Brightness}} & $\SI{35.1}{s}$ \\
        \cline{2-3}\cline{6-6}
        & \multirow{2}{*}{CIFAR-10} & $r\pm 10^\circ, b\pm 10\%, \|\delta\|_2\le .05$ & & & $\SI{132.5}{s}$ \\
        \cline{3-3}\cline{6-6}
        & & $r\pm 30^\circ, b\pm 20\%, \|\delta\|_2\le .05$ & & & $\SI{520.2}{s}$ \\
        \cline{2-3}\cline{6-6}
        & ImageNet & $r\pm 30^\circ, b\pm 20\%, \|\delta\|_2\le .05$ & & & $\SI{3463.8}{s}$ \\

        \arrayrulecolor{black}
        \midrule
        \arrayrulecolor{rgray}
        \multirow{3}{*}{\shortstack[c]{Scaling,\\ Brightness,\\ and $\ell_2$}}
        & MNIST & $s\pm 50\%, b\pm 50\%, \|\delta\|_2\le .05$ & \multirow{3}{*}{\shortstack[c]{Same as\\ Scaling and\\ Brightness}} & \multirow{3}{*}{\shortstack[c]{Same as\\ Scaling and\\ Brightness}} & $\SI{75.1}{s}$ \\
        \cline{2-3}\cline{6-6}
        & CIFAR-10 & $s\pm 30\%, b\pm 30\%, \|\delta\|_2\le .05$ & & & $\SI{50.0}{s}$ \\
        \cline{2-3}\cline{6-6}
        & ImageNet & $s\pm 30\%, b\pm 30\%, \|\delta\|_2\le .05$ & & & $\SI{1657.7}{s}$ \\

        \arrayrulecolor{black}
        \bottomrule
    \end{tabular}
    }
    \label{tab:main-setting}
\end{table*}

        \subsection{Comparison of Random Attack and Adaptive Attacks: Detail}
            \label{appendix:exp-compare-of-adaptive-attacks}

            \ifnum\ArXiv=0
                Detailed results are omitted to the corresponding section of the arXiv version~\cite{arxiv}.
            \fi

            \ifnum\ArXiv=1
            In \Cref{tab:main}, we compare the empirical robust accuracy of vanilla models and \framework models under random attacks and two adaptive attacks: Random+ and PGD.
            However, due to space limits, we omit the empirical accuracy of each adaptive attack and we just present the minimum empirical accuracy among them.
            In \Cref{tab:attack-detail-mnist}, \Cref{tab:attack-detail-cifar10}, and \Cref{tab:attack-detail-imagenet}, for all transformations on MNIST, CIFAR-10, and ImageNet respectively, we present the detailed empirical accuracy of each attack under different number of initial starts $N=10/20/50/100$.
            Note that the main table~(\Cref{tab:main}) shows the empirical accuracy with $N=100$ for all attacks.

            From these three tables, we cross-validate the findings shown in the main paper:
            the adaptive attack decreases the empirical accuracy of \framework models slightly, while it decreases that of vanilla models more.
            Moreover, when comparing these three attacks, we find that with a small number of initial starts~(e.g., $N=10$), the PGD attack is typically the most powerful.
            However, with a large number of initial starts~(e.g., $N=100$), Random+ attack sometimes becomes better.
            We conjecture that the optimization goal of PGD attack---maximization of cross-entropy loss---might be sub-optimal in terms of increasing the misclassification rate.
            Thus, with a small number of initial starts, PGD is better than Random/Random+ attack due to the iterative ascending.
            However, with a large number of initial starts, both PGD and Random+ attack can sufficiently explore the adversarial region, and PGD may be misled by the optimization goal to a benign region.
            It would be an interesting future work to study these intriguing properties of semantic attacks.

    \begin{table}[htbp]
        \centering
        \caption{\small Comparison between empirical robust accuracy against random and adaptive attacks and certified robust accuracy on \textbf{MNIST}. The attack radii are consistent with \Cref{tab:main}. The most powerful attack in each setting is highlighted in bold font. The adaptive attacks are shown in gray rows. Note that PGD attack cannot apply to translation transformation because the parameter space is discrete.
        }
        \resizebox{\linewidth}{!}{
        \begin{tabular}{ccccccccc}
            \toprule
            \multirow{3}{*}{Transformation} & \multirow{3}{*}{\shortstack[c]{Attack\\ Radius}} & \multirow{3}{*}{Model} & \multirow{3}{*}{Attack} & \multicolumn{4}{c}{Empirical Robust Accuracy} & Certified \\
            \cline{4-7}
            & & & \multicolumn{4}{c}{Initial Starts $N =$} & Robust \\
            & & & $10$ & $20$ & $50$ & $100$ & Accuracy \\
            \midrule
        
            \multirow{6}{*}{\shortstack[c]{Gaussian\\ Blur}}
            & \multirow{6}{*}{\shortstack[c]{Squared\\ Radius\\ $\alpha \le 36$}} & \multirow{3}{*}{\textbf{TSS}}
              & Random & $93.2\%$ & $92.2\%$ & $92.0\%$ & $91.4\%$ & \multirow{3}{*}{$90.6\%$} \\
            & & & \gray Random+ & \gray $92.4\%$ & \gray $92.2\%$ & \gray $\textbf{91.2\%}$ & \gray $\textbf{91.2\%}$ & \\
            & & & \gray PGD     & \gray $\textbf{91.6\%}$ & \gray $\textbf{91.6\%}$ & \gray $91.6\%$ & \gray $91.6\%$ & \\
            
            \cline{3-9}
            & & \multirow{3}{*}{Vanilla}
              & Random & $14.0\%$ & $12.4\%$ & $\textbf{12.2\%}$ & $\textbf{12.2\%}$ & \multirow{3}{*}{-} \\
            & & & \gray Random+ & \gray $12.4\%$ & \gray $12.4\%$ & \gray $\textbf{12.2\%}$ & \gray $\textbf{12.2\%}$ & \\
            & & & \gray PGD     & \gray $\textbf{12.2\%}$ & \gray $\textbf{12.2\%}$ & \gray $\textbf{12.2\%}$ & \gray $\textbf{12.2\%}$ & \\
            \midrule

            \multirow{6}{*}{\shortstack[c]{Translation\\ (Reflection Pad.)}}
            & \multirow{6}{*}{\shortstack[c]{$\sqrt{\Delta x^2 + \Delta y^2}$\\ $\le 8$}} & \multirow{3}{*}{\textbf{TSS}}
              & Random & $\textbf{99.6\%}$ & $\textbf{99.6\%}$ & $\textbf{99.6\%}$ & $\textbf{99.6\%}$ & \multirow{3}{*}{$99.6\%$} \\
            & & & \gray Random+ & \gray $\textbf{99.6\%}$ & \gray $\textbf{99.6\%}$ & \gray $\textbf{99.6\%}$ & \gray $\textbf{99.6\%}$ & \\
            & & & \gray PGD     & \gray - & \gray - & \gray - & \gray - & \\
            
            \cline{3-9}
            & & \multirow{3}{*}{Vanilla}
              & Random & $\textbf{0.0\%}$ & $\textbf{0.0\%}$ & $\textbf{0.0\%}$ & $\textbf{0.0\%}$ & \multirow{3}{*}{-} \\
            & & & \gray Random+ & \gray $\textbf{0.0\%}$ & \gray $\textbf{0.0\%}$ & \gray $\textbf{0.0\%}$ & \gray $\textbf{0.0\%}$ & \\
            & & & \gray PGD     & \gray - & \gray - & \gray - & \gray - & \\
            \midrule

            \multirow{6}{*}{\shortstack[c]{Brightness}}
            & \multirow{6}{*}{\shortstack[c]{$b\pm 50\%$}} & \multirow{3}{*}{\textbf{TSS}}
              & Random & $\textbf{98.2\%}$ & $\textbf{98.2\%}$ & $\textbf{98.2\%}$ & $\textbf{98.2\%}$ & \multirow{3}{*}{$98.2\%$} \\
            & & & \gray Random+ & \gray $\textbf{98.2\%}$ & \gray $\textbf{98.2\%}$ & \gray $\textbf{98.2\%}$ & \gray $\textbf{98.2\%}$ & \\
            & & & \gray PGD     & \gray $\textbf{98.2\%}$ & \gray $\textbf{98.2\%}$ & \gray $\textbf{98.2\%}$ & \gray $\textbf{98.2\%}$ & \\
            
            \cline{3-9}
            & & \multirow{3}{*}{Vanilla}
              & Random & $97.2\%$ & $\textbf{96.6\%}$ & $\textbf{96.6\%}$ & $\textbf{96.6\%}$ & \multirow{3}{*}{-} \\
            & & & \gray Random+ & \gray $96.8\%$ & \gray $\textbf{96.6\%}$ & \gray $\textbf{96.6\%}$ & \gray $\textbf{96.6\%}$ & \\
            & & & \gray PGD     & \gray $\textbf{96.6\%}$ & \gray $\textbf{96.6\%}$ & \gray $\textbf{96.6\%}$ & \gray $\textbf{96.6\%}$ & \\
            \midrule

            \multirow{6}{*}{\shortstack[c]{Contrast\\ and\\ Brightness}}
            & \multirow{6}{*}{\shortstack[c]{$c\pm 50\%$, \\ $b\pm 50\%$}} & \multirow{3}{*}{\textbf{TSS}}
              & Random & $\textbf{98.0\%}$ & $\textbf{98.0\%}$ & $\textbf{98.0\%}$ & $\textbf{98.0\%}$ & \multirow{3}{*}{$97.6\%$} \\
            & & & \gray Random+ & \gray $\textbf{98.0\%}$ & \gray $\textbf{98.0\%}$ & \gray $\textbf{98.0\%}$ & \gray $\textbf{98.0\%}$ & \\
            & & & \gray PGD     & \gray $\textbf{98.0\%}$ & \gray $\textbf{98.0\%}$ & \gray $\textbf{98.0\%}$ & \gray $\textbf{98.0\%}$ & \\
            
            \cline{3-9}
            & & \multirow{3}{*}{Vanilla}
              & Random & $96.8\%$ & $95.8\%$ & $95.0\%$ & $94.6\%$ & \multirow{3}{*}{-} \\
            & & & \gray Random+ & \gray $95.8\%$ & \gray $94.4\%$ & \gray $93.8\%$ & \gray $93.6\%$ & \\
            & & & \gray PGD     & \gray $\textbf{93.6\%}$ & \gray $\textbf{93.4\%}$ & \gray $\textbf{93.2\%}$ & \gray $\textbf{93.2\%}$ & \\
            \midrule

            \multirow{6}{*}{\shortstack[c]{Gaussian Blur,\\ Translation\\ Contrast,\\ and Brightness}}
            & \multirow{6}{*}{\shortstack[c]{$\alpha\le 5$, \\ $c\pm 10\%$,\\ $b\pm 10\%$,\\ $\sqrt{\Delta x^2 + \Delta y^2}$\\ $\le 5$}} & \multirow{3}{*}{\textbf{TSS}}
              & Random & $97.6\%$ & $97.6\%$ & $97.6\%$ & $97.2\%$ & \multirow{3}{*}{$90.2\%$} \\
            & & & \gray Random+ & \gray $97.6\%$ & \gray $\textbf{97.2\%}$ & \gray $\textbf{97.0\%}$ & \gray $\textbf{97.0\%}$ & \\
            & & & \gray PGD     & \gray $\textbf{97.4\%}$ & \gray $97.4\%$ & \gray $97.2\%$ & \gray $\textbf{97.0\%}$ & \\
            
            \cline{3-9}
            & & \multirow{3}{*}{Vanilla}
              & Random & $10.0\%$ & $4.4\%$ & $1.4\%$ & $\textbf{0.4\%}$ & \multirow{3}{*}{-} \\
            & & & \gray Random+ & \gray $\textbf{6.8\%}$ & \gray $2.4\%$ & \gray $1.2\%$ & \gray $\textbf{0.4\%}$ & \\
            & & & \gray PGD     & \gray $7.0\%$ & \gray $\textbf{1.4\%}$ & \gray $\textbf{0.8\%}$ & \gray $\textbf{0.4\%}$ & \\
            \midrule

            \multirow{6}{*}{\shortstack[c]{Rotation}}
            & \multirow{6}{*}{\shortstack[c]{$r\pm 50^\circ$}} & \multirow{3}{*}{\textbf{TSS}}
              & Random & $98.6\%$ & $\textbf{98.4\%}$ & $\textbf{98.2\%}$ & $98.4\%$ & \multirow{3}{*}{$97.4\%$} \\
            & & & \gray Random+ & \gray $98.6\%$ & \gray $98.6\%$ & \gray $98.4\%$ & \gray $\textbf{98.2\%}$ & \\
            & & & \gray PGD     & \gray $\textbf{98.2\%}$ & \gray $\textbf{98.4\%}$ & \gray $98.4\%$ & \gray $\textbf{98.2\%}$ & \\
            
            \cline{3-9}
            & & \multirow{3}{*}{Vanilla}
              & Random & $27.2\%$ & $17.4\%$ & $13.8\%$ & $12.2\%$ & \multirow{3}{*}{-} \\
            & & & \gray Random+ & \gray $\textbf{15.4\%}$ & \gray $\textbf{13.0\%}$ & \gray $\textbf{11.0\%}$ & \gray $\textbf{11.0\%}$ & \\
            & & & \gray PGD     & \gray $16.4\%$ & \gray $15.6\%$ & \gray $15.4\%$ & \gray $15.2\%$ & \\
            \midrule

            \multirow{6}{*}{\shortstack[c]{Scaling}}
            & \multirow{6}{*}{\shortstack[c]{$s\pm 30\%$}} & \multirow{3}{*}{\textbf{TSS}}
              & Random & $\textbf{99.2\%}$ & $\textbf{99.2\%}$ & $\textbf{99.2\%}$ & $\textbf{99.2}\%$ & \multirow{3}{*}{$97.2\%$} \\
            & & & \gray Random+ & \gray $\textbf{99.2\%}$ & \gray $\textbf{99.2\%}$ & \gray  $\textbf{99.2\%}$& \gray $\textbf{99.2\%}$ & \\
            & & & \gray PGD     & \gray $\textbf{99.2\%}$ & \gray $\textbf{99.2\%}$ & \gray $\textbf{99.2\%}$ & \gray $\textbf{99.2\%}$ & \\
            
            \cline{3-9}
            & & \multirow{3}{*}{Vanilla}
              & Random & $92.0\%$ & $91.4\%$ & $90.2\%$ & $90.2\%$ & \multirow{3}{*}{-} \\
            & & & \gray Random+ & \gray $\textbf{90.0\%}$ & \gray $\textbf{89.4\%}$ & \gray $\textbf{89.2\%}$ & \gray $\textbf{89.2\%}$ & \\
            & & & \gray PGD     & \gray $90.4\%$ & \gray $90.2\%$ & \gray $90.2\%$ & \gray $90.2\%$ & \\
            \midrule

            \multirow{6}{*}{\shortstack[c]{Rotation\\ and\\ Brightness}}
            & \multirow{6}{*}{\shortstack[c]{$r\pm 50\%$,\\ $b\pm 20\%$}} & \multirow{3}{*}{\textbf{TSS}}
              & Random & $98.8\%$ & $98.4\%$ & $98.2\%$ & $98.2\%$ & \multirow{3}{*}{$97.0\%$} \\
            & & & \gray Random+ & \gray $98.6\%$ & \gray $98.2\%$ & \gray $\textbf{98.0\%}$ & \gray $98.2\%$ & \\
            & & & \gray PGD     & \gray $\textbf{98.2\%}$ & \gray $\textbf{98.0\%}$ & \gray $\textbf{98.0\%}$ & \gray $\textbf{98.0\%}$ & \\
            
            \cline{3-9}
            & & \multirow{3}{*}{Vanilla}
              & Random & $28.8\%$ & $17.8\%$ & $12.6\%$ & $11.0\%$ & \multirow{3}{*}{-} \\
            & & & \gray Random+ & \gray $16.6\%$ & \gray $\textbf{11.6\%}$ & \gray $\textbf{10.4\%}$ & \gray $\textbf{10.4\%}$ & \\
            & & & \gray PGD     & \gray $\textbf{13.4\%}$ & \gray $13.6\%$ & \gray $13.0\%$ & \gray $12.6\%$ & \\
            \midrule

            \multirow{6}{*}{\shortstack[c]{Scaling\\ and\\ Brightness}}
            & \multirow{6}{*}{\shortstack[c]{$s\pm 50\%$,\\ $b\pm 50\%$}} & \multirow{3}{*}{\textbf{TSS}}
              & Random & $98.6\%$ & $98.6\%$ & $98.4\%$ & $\textbf{97.8\%}$ & \multirow{3}{*}{$96.6\%$} \\
            & & & \gray Random+ & \gray $98.4\%$ & \gray $98.0\%$ & \gray $\textbf{97.8\%}$ & \gray $\textbf{97.8\%}$ & \\
            & & & \gray PGD     & \gray $\textbf{98.2\%}$ & \gray $\textbf{97.8\%}$ & \gray $\textbf{97.8\%}$ & \gray $\textbf{97.8\%}$ & \\
            
            \cline{3-9}
            & & \multirow{3}{*}{Vanilla}
              & Random & $57.4\%$ & $46.0\%$ & $31.0\%$ & $24.8\%$ & \multirow{3}{*}{-} \\
            & & & \gray Random+ & \gray $40.4\%$ & \gray $28.0\%$ & \gray $\textbf{19.8\%}$ & \gray $\textbf{15.6\%}$ & \\
            & & & \gray PGD     & \gray $\textbf{29.0\%}$ & \gray $\textbf{25.2\%}$ & \gray $25.0\%$ & \gray $24.0\%$ & \\
            \midrule

            \multirow{6}{*}{\shortstack[c]{Rotation,\\ Brightness, \\ and $\ell_2$}}
            & \multirow{6}{*}{\shortstack[c]{$r\pm 50\%$,\\ $b\pm 20\%$,\\ $\|\delta\|_2\le .05$}} & \multirow{3}{*}{\textbf{TSS}}
              & Random & $98.2\%$ & $97.8\%$ & $\textbf{97.6\%}$ & $97.6\%$ & \multirow{3}{*}{$96.6\%$} \\
            & & & \gray Random+ & \gray $98.4\%$ & \gray $98.0\%$ & \gray $97.8\%$ & \gray $97.6\%$ & \\
            & & & \gray PGD     & \gray $\textbf{97.6\%}$ & \gray $\textbf{97.6\%}$ & \gray $\textbf{97.6\%}$ & \gray $\textbf{97.4\%}$ & \\
            
            \cline{3-9}
            & & \multirow{3}{*}{Vanilla}
              & Random & $27.6\%$ & $17.2\%$ & $11.4\%$ & $10.8\%$ & \multirow{3}{*}{-} \\
            & & & \gray Random+ & \gray $15.2\%$ & \gray $\textbf{11.2\%}$ & \gray $\textbf{9.4\%}$ & \gray $\textbf{9.0\%}$ & \\
            & & & \gray PGD     & \gray $\textbf{13.4\%}$ & \gray $11.8\%$  & \gray $12.0\%$  & \gray $11.8\%$  & \\
            \midrule

            \multirow{6}{*}{\shortstack[c]{Scaling,\\ Brightness,\\ and $\ell_2$}}
            & \multirow{6}{*}{\shortstack[c]{$s\pm 50\%$,\\ $b\pm 50\%$,\\ $\|\delta\|_2\le .05$}} & \multirow{3}{*}{\textbf{TSS}}
              & Random & $98.4\%$ & $98.4\%$ & $\textbf{97.6\%}$ & $\textbf{97.6\%}$ & \multirow{3}{*}{$96.4\%$} \\
            & & & \gray Random+ & \gray $\textbf{97.8\%}$ & \gray $97.8\%$ & \gray $\textbf{97.6\%}$ & \gray $\textbf{97.6\%}$ & \\
            & & & \gray PGD     & \gray $\textbf{97.8\%}$ & \gray $\textbf{97.6\%}$ & \gray $\textbf{97.6\%}$ & \gray $\textbf{97.6\%}$ & \\
            
            \cline{3-9}
            & & \multirow{3}{*}{Vanilla}
              & Random & $50.4\%$ & $38.2\%$ & $28.2\%$ & $22.2\%$ & \multirow{3}{*}{-} \\
            & & & \gray Random+ & \gray $34.4\%$ & \gray $23.2\%$ & \gray $\textbf{13.4\%}$ & \gray $\textbf{12.2\%}$ & \\
            & & & \gray PGD     & \gray $\textbf{23.4\%}$ & \gray $\textbf{22.0\%}$ & \gray $21.6\%$ & \gray $20.8\%$ & \\
            
            \bottomrule
        \end{tabular}
        }
        \label{tab:attack-detail-mnist}
    \end{table}

    \begin{table}[htbp]
        \centering
        \caption{\small Comparison between empirical robust accuracy against random and adaptive attacks and certified robust accuracy on \textbf{CIFAR-10}. The attack radii are consistent with \Cref{tab:main}. The most powerful attack in each setting is highlighted in bold font. The adaptive attacks are shown in gray rows. Note that the PGD attack cannot apply to translation transformation because the parameter space is discrete. 
        }
        \resizebox{!}{0.44\textheight}{
        \begin{tabular}{ccccccccc}
            \toprule
            \multirow{3}{*}{Transformation} & \multirow{3}{*}{\shortstack[c]{Attack\\ Radius}} & \multirow{3}{*}{Model} & \multirow{3}{*}{Attack} & \multicolumn{4}{c}{Empirical Robust Accuracy} & Certified \\
            \cline{5-8}
            & & & & \multicolumn{4}{c}{Initial Starts $N =$} & Robust \\
            & & & & $10$ & $20$ & $50$ & $100$ & Accuracy \\
            \midrule

            \multirow{6}{*}{\shortstack[c]{Gaussian\\ Blur}} & \multirow{6}{*}{\shortstack[c]{Squared\\ Radius\\ $\alpha \le 16$}} & \multirow{3}{*}{\textbf{TSS}} & Random & $66.4\%$ & $66.4\%$ & $\textbf{65.8\%}$ & $\textbf{65.8\%}$ & \multirow{3}{*}{$63.6\%$} \\
            & & & \gray Random+ & \gray $66.8\%$ & \gray $66.0\%$ & \gray $\textbf{65.8\%}$ & \gray  $\textbf{65.8\%}$ & \\
            & & & \gray PGD & \gray $\textbf{65.8\%}$ & \gray $\textbf{65.8\%}$ & \gray $\textbf{65.8\%}$ & \gray $\textbf{65.8\%}$ & \\
            \cline{3-9}
            
            & &  \multirow{3}{*}{Vanilla} & Random & $4.8\%$ & $4.2\%$ & $\textbf{3.4\%}$ & $\textbf{3.4\%}$ & \multirow{3}{*}{-} \\
            & & & \gray Random+ & \gray $4.6\%$ & \gray $4.0\%$ & \gray $3.6\%$ & \gray $\textbf{3.4\%}$ & \\
            & & & \gray PGD & \gray $\textbf{3.4\%}$ & \gray $\textbf{3.4\%}$ & \gray $\textbf{3.4\%}$ & \gray $\textbf{3.4\%}$ & \\
            \midrule

            \multirow{6}{*}{\shortstack[c]{Translation\\ (Reflection Pad.)}} & \multirow{6}{*}{\shortstack[c]{$\sqrt{\Delta x^2 + \Delta y^2}$\\ $\le 20$}} & \multirow{3}{*}{\textbf{TSS}} & Random & $\textbf{86.2\%}$ & $\textbf{86.0\%}$ & $86.2\%$ & $86.2\%$ & \multirow{3}{*}{$80.8\%$} \\
            & & & \gray Random+ & \gray $86.4\%$ & \gray $\textbf{86.0\%}$ & \gray $\textbf{86.0\%}$ & \gray $\textbf{86.0\%}$ & \\
            & & & \gray PGD & \gray - & \gray - & \gray - & \gray - & \\
            \cline{3-9}
            
            & & \multirow{3}{*}{Vanilla} & Random & $\textbf{8.0\%}$ & $\textbf{7.0\%}$ & $4.4\%$ & $\textbf{4.2\%}$ & \multirow{3}{*}{-} \\
            & & & \gray Random+ & \gray $8.2\%$ & \gray $7.2\%$ & \gray $\textbf{4.2\%}$ & \gray $\textbf{4.2\%}$ & \\
            & & & \gray PGD & \gray - & \gray - & \gray - & \gray - & \\
            \midrule

            \multirow{6}{*}{\shortstack[c]{Brightness}} & \multirow{6}{*}{\shortstack[c]{$b \pm 40\%$}} & \multirow{3}{*}{\textbf{TSS}} & Random & $87.2\%$ & $87.2\%$ & $87.4\%$ & $\textbf{87.2\%}$ & \multirow{3}{*}{$87.0\%$} \\
            & & & \gray Random+ & \gray $\textbf{87.0\%}$ & \gray $\textbf{87.0\%}$ & \gray $\textbf{87.0\%}$ & \gray $87.4\%$ & \\
            & & & \gray PGD & \gray $87.4\%$ & \gray $87.4\%$ & \gray $87.4\%$ & \gray $87.4\%$ & \\
            \cline{3-9}
            
            & & \multirow{3}{*}{Vanilla} & Random & $57.8\%$ & $51.2\%$ & $45.8\%$ & $44.4\%$ & \multirow{3}{*}{-} \\
            & & & \gray Random+ & \gray $\textbf{49.8\%}$ & \gray $\textbf{44.2\%}$ & \gray $\textbf{42.8\%}$ & \gray $\textbf{42.6\%}$ & \\
            & & & \gray PGD & \gray $52.4\%$ & \gray $51.0\%$ & \gray $50.8\%$ & \gray $50.8\%$ & \\
            \midrule

            \multirow{6}{*}{\shortstack[c]{Contrast\\ and\\ Brightness}} & \multirow{6}{*}{\shortstack[c]{$c \pm 40\%$,\\ $b\pm 40\%$}} & \multirow{3}{*}{\textbf{TSS}} & Random & $86.2\%$ & $86.2\%$ & $86.2\%$ & $86.0\%$ & \multirow{3}{*}{$82.4\%$} \\
            & & & \gray Random+ & \gray $\textbf{85.8\%}$ & \gray $86.2\%$ & \gray $86.0\%$ & \gray $\textbf{85.8\%}$ & \\
            & & & \gray PGD & \gray $\textbf{85.8\%}$ & \gray $\textbf{85.8\%}$ & \gray $\textbf{85.8\%}$ & \gray $\textbf{85.8\%}$ & \\
            \cline{3-9}
            
            & & \multirow{3}{*}{Vanilla} & Random & $48.0\%$ & $40.0\%$ & $27.2\%$ & $21.0\%$ & \multirow{3}{*}{-} \\
            & & & \gray Random+ & \gray $32.0\%$ & \gray $23.2\%$ & \gray $14.8\%$ & \gray $\textbf{9.6\%}$ & \\
            & & & \gray PGD & \gray $\textbf{17.0\%}$ & \gray $\textbf{13.0\%}$ & \gray $\textbf{12.2\%}$ & \gray $11.8\%$ & \\
            \midrule

            \multirow{6}{*}{\shortstack[c]{Gaussian Blur,\\ Translation, \\ Contrast, \\ and Brightness}} & \multirow{6}{*}{\shortstack[c]{$\alpha\le 1$,\\ $c\pm 10\%$, \\ $b \pm 10\%$,\\ $\sqrt{\Delta x^2 + \Delta y^2}$\\ $\le 5$}} & \multirow{3}{*}{\textbf{TSS}} & Random & $71.0\%$ & $\textbf{69.2\%}$ & $\textbf{68.0\%}$ & $\textbf{67.6\%}$ & \multirow{3}{*}{$58.2\%$} \\
            & & & \gray Random+ & \gray $70.6\%$ & \gray $69.8\%$ & \gray $68.4\%$ & \gray $67.8\%$ & \\
            & & & \gray PGD & \gray $\textbf{69.8\%}$ & \gray $69.8\%$ & \gray $69.0\%$ & \gray $68.0\%$ & \\
            \cline{3-9}
            & & \multirow{3}{*}{Vanilla} & Random & $21.2\%$ & $16.6\%$ & $12.0\%$ & $9.6\%$ & \multirow{3}{*}{-} \\
            & & & \gray Random+ & \gray $18.6\%$ & \gray $14.2\%$ & \gray $9.0\%$ & \gray $7.2\%$ & \\
            & & & \gray PGD & \gray $\textbf{12.8\%}$ & \gray $\textbf{9.8\%}$ & \gray $\textbf{6.8\%}$ & \gray $\textbf{5.6\%}$ & \\
            \midrule

            \multirow{12}{*}{Rotation} & \multirow{6}{*}{\shortstack[c]{$r \pm 10^\circ$}} & \multirow{3}{*}{\textbf{TSS}} & Random & $78.0\%$ & $77.0\%$ & $76.8\%$ & $76.6\%$ & \multirow{3}{*}{$70.6\%$} \\
            & & & \gray Random+ & \gray $77.4\%$ & \gray $\textbf{76.8\%}$ & \gray $\textbf{76.4\%}$ & \gray $\textbf{76.4\%}$  \\
            & & & \gray PGD & \gray $\textbf{76.8\%}$ & \gray $\textbf{76.8\%}$ & \gray $76.8\%$ & \gray $76.6\%$  & \\
            \cline{3-9}
            
            & & \multirow{3}{*}{Vanilla} & Random & $69.2\%$ & $68.0\%$ & $\textbf{65.6\%}$ & $65.6\%$ & \multirow{3}{*}{-} \\
            & & & \gray Random+ & \gray $68.4\%$ & \gray $67.2\%$ & \gray $66.0\%$ & \gray $65.6\%$ & \\
            & & & \gray PGD & \gray $\textbf{66.4\%}$ & \gray $\textbf{66.0\%}$ & \gray $\textbf{65.6\%}$ & \gray $\textbf{65.4\%}$ \\
            \cmidrule{2-9}
            
            & \multirow{6}{*}{\shortstack[c]{$r \pm 30^\circ$}} & \multirow{3}{*}{\textbf{TSS}} & Random & $71.8\%$ & $70.2\%$ & $69.8\%$ & $\textbf{69.2}\%$ & \multirow{3}{*}{$63.6\%$} \\
            & & & \gray Random+ & \gray $71.0\%$ & \gray $\textbf{69.4\%}$ & \gray $\textbf{69.2}\%$ & \gray $69.4\%$ \\
            & & & \gray PGD & \gray $\textbf{70.4\%}$ & \gray $70.0\%$ & \gray $70.0\%$ & \gray $69.8\%$ & \\
            \cline{3-9}
            
            & & \multirow{3}{*}{Vanilla} & Random & $31.6\%$ & $27.4\%$ & $22.6\%$ & $21.6\%$ & \multirow{3}{*}{-} \\
            & & & \gray Random+ & \gray $32.2\%$ & \gray $27.2\%$ & \gray $23.8\%$ & \gray $\textbf{21.4}\%$ & \\
            & & & \gray PGD & \gray $\textbf{25.2}\%$ & \gray $\textbf{23.8}\%$ & \gray $\textbf{23.2}\%$ & \gray $23.2\%$ \\
            \midrule

            \multirow{6}{*}{Scaling} & \multirow{6}{*}{\shortstack[c]{$s\pm 30\%$}} & \multirow{3}{*}{\textbf{TSS}} & Random & $69.6\%$ & $67.8\%$ & $67.8\%$ & $67.2\%$ & \multirow{3}{*}{$58.8\%$} \\
            & & & \gray Random+ & \gray $69.2\%$ & \gray $68.4\%$ & \gray $67.4\%$ & \gray $\textbf{67.0\%}$ & \\
            & & & \gray PGD & \gray $\textbf{67.8\%}$ & \gray $\textbf{67.6\%}$ & \gray $\textbf{67.2\%}$ & \gray $\textbf{67.0\%}$ & \\
            \cline{3-9}
            
            & & \multirow{3}{*}{Vanilla} & Random & $60.0\%$ & $54.6\%$ & $52.8\%$ & $51.6\%$ & \multirow{3}{*}{-} \\
            & & & \gray Random+ & \gray $56.6\%$ & \gray $53.8\%$ & \gray $52.2\%$ & \gray $\textbf{51.2\%}$ & \\
            & & & \gray PGD & \gray $\textbf{53.2\%}$ & \gray $\textbf{52.4\%}$ & \gray $\textbf{52.0\%}$ & \gray $52.0\%$ & \\
            \midrule

            \multirow{12}{*}{\shortstack[c]{Rotation\\ and \\ Brightness}} & \multirow{6}{*}{\shortstack[c]{$r\pm 10^\circ$,\\ $b\pm 10\%$}} & \multirow{3}{*}{\textbf{TSS}} & Random & $77.2\%$ & $76.8\%$ & $77.0\%$ & $76.6\%$ & \multirow{3}{*}{$70.6\%$} \\
            & & & \gray Random+ & \gray $77.2\%$ & \gray $\textbf{76.6\%}$ & \gray $\textbf{76.4\%}$ & \gray $\textbf{76.0\%}$ \\
            & & & \gray PGD & \gray $\textbf{76.6\%}$ & \gray $\textbf{76.6\%}$ & \gray $\textbf{76.4\%}$ & \gray $76.4\%$ & \\
            \cline{3-9}
            
            & & \multirow{3}{*}{Vanilla} & Random & $67.2\%$ & $64.8\%$ & $60.6\%$ & $59.4\%$ & \multirow{3}{*}{-} \\
            & & & \gray Random+ & \gray $66.0\%$ & \gray $63.0\%$ & \gray $59.4\%$ & \gray $57.8\%$ & \\
            & & & \gray PGD & \gray $\textbf{57.8\%}$ & \gray $\textbf{57.6\%}$ & \gray $\textbf{57.0\%}$ & \gray $\textbf{56.8\%}$ \\
            \cmidrule{2-9}

            & \multirow{6}{*}{\shortstack[c]{$r\pm 30^\circ$,\\ $b\pm 20\%$}} & \multirow{3}{*}{\textbf{TSS}} & Random & $72.0\%$ & $70.2\%$ & $68.8\%$ & $68.4\%$ & \multirow{3}{*}{$61.4\%$} \\
            & & & \gray Random+ & \gray $70.6\%$ & \gray $68.8\%$ & \gray $\textbf{68.0}\%$ & \gray $\textbf{68.2\%}$ \\
            & & & \gray PGD & \gray $\textbf{69.2}\%$ & \gray $\textbf{68.6}\%$ & \gray $68.6\%$ & \gray $68.6\%$ & \\
            \cline{3-9}
            
            & & \multirow{3}{*}{Vanilla} & Random & $26.6\%$ & $20.2\%$ & $15.8\%$ & $13.0\%$ & \multirow{3}{*}{-} \\
            & & & \gray Random+ & \gray $18.8\%$ & \gray $16.0\%$ & \gray $11.6\%$ & \gray $9.4\%$ \\
            & & & \gray PGD &  \gray $\textbf{12.2}\%$ & \gray $\textbf{10.4}\%$ & \gray $\textbf{9.2}\%$ & \gray $\textbf{9.0}\%$ \\
            \midrule

            \multirow{6}{*}{\shortstack[c]{Scaling\\ and\\ Brightness}} & \multirow{6}{*}{\shortstack[c]{$s\pm 30\%$,\\ $b\pm 30\%$}} & \multirow{3}{*}{\textbf{TSS}} & Random & $68.6\%$ & $68.6\%$ & $67.4\%$ & $67.2\%$ & \multirow{3}{*}{$54.2\%$} \\
            & & & \gray Random+ & \gray $68.4\%$ & \gray $68.0\%$ & \gray $67.0\%$ & \gray $\textbf{66.8\%}$ & \\
            & & & \gray PGD & \gray $\textbf{67.4\%}$ & \gray $\textbf{67.4\%}$ & \gray $\textbf{66.8\%}$ & \gray $\textbf{66.8\%}$ & \\
            \cline{3-9}
            
            & & \multirow{3}{*}{Vanilla} & Random & $39.2\%$ & $30.6\%$ & $20.0\%$ & $17.4\%$ & \multirow{3}{*}{-} \\
            & & & \gray Random+ & \gray $30.4\%$ & \gray $19.4\%$ & \gray $15.4\%$ & \gray $\textbf{11.6\%}$ & \\
            & & & \gray PGD & \gray $\textbf{16.0\%}$ & \gray $\textbf{14.4\%}$ & \gray $\textbf{13.0\%}$ & \gray $13.0\%$ & \\
            \midrule

            \multirow{12}{*}{\shortstack[c]{Rotation,\\ Brightness,\\ and $\ell_2$}} & \multirow{6}{*}{\shortstack[c]{$r\pm 10^\circ$, \\ $b\pm 10\%$, \\ $\|\delta\|_2\le .05$}} &
            \multirow{3}{*}{\textbf{TSS}} 
            & Random & $74.2\%$ & $72.8\%$ & $71.8\%$ & $71.6\%$ & \multirow{3}{*}{$64.2\%$} \\
            & & & \gray Random+ & \gray $72.8\%$ & \gray $72.2\%$ & \gray $71.8\%$ & \gray $\textbf{71.2\%}$ & \\
            & & & \gray PGD & \gray $\textbf{71.6\%}$ & \gray $\textbf{71.6\%}$ & \gray $\textbf{71.6\%}$ & \gray $71.6\%$ & \\
            \cline{3-9}
            & & \multirow{3}{*}{Vanilla}
            & Random & $40.4\%$ & $35.8\%$ & $34.4\%$ & $31.8\%$ & \multirow{3}{*}{-} \\
            & & & \gray Random+ & \gray $36.4\%$ & \gray $\textbf{34.6\%}$ & \gray $\textbf{30.8\%}$ & \gray $\textbf{29.6\%}$ & \\
            & & & \gray PGD & \gray $\textbf{36.0\%}$ & \gray $35.0\%$ & \gray $34.6\%$ & \gray $34.6\%$ & \\
            \cmidrule{2-9}

            & \multirow{6}{*}{\shortstack[c]{$r\pm 30^\circ$, \\ $b\pm 20\%$,\\ $\|\delta\|_2\le .05$}} & \multirow{3}{*}{\textbf{TSS}} & Random & $67.6\%$ & $66.2\%$ & $64.8\%$ & $65.2\%$ & \multirow{3}{*}{$55.2\%$} \\
            & & & \gray Random+ & \gray $65.6\%$ & \gray $65.6\%$ & \gray $65.2\%$ & \gray $64.4\%$ & \\
            & & & \gray PGD & \gray $\textbf{65.2\%}$ & \gray $\textbf{64.6\%}$ & \gray $\textbf{64.0\%}$ & \gray $\textbf{64.0\%}$ & \\
            \cline{3-9}
            & & \multirow{3}{*}{Vanilla} & Random & $7.6\%$ & $5.4\%$ & $2.6\%$ & $0.8\%$ & \multirow{3}{*}{-} \\
            & & & \gray Random+ & \gray $3.8\%$ & \gray $2.4\%$ & \gray $1.2\%$ & \gray $\textbf{0.4\%}$ & \\
            & & & \gray PGD & \gray $\textbf{1.2\%}$ & \gray $\textbf{0.6\%}$ & \gray $\textbf{0.6\%}$ & \gray $0.6\%$ & \\
            \midrule
            
            \multirow{6}{*}{\shortstack[c]{Scaling,\\ Brightness,\\ and $\ell_2$}} & \multirow{6}{*}{\shortstack[c]{$s\pm 30\%$, \\ $b\pm 30\%$, \\ $\|\delta\|_2\le .05$}} & \multirow{3}{*}{\textbf{TSS}} & Random & $67.6\%$ & $66.8\%$ & $65.2\%$ & $65.0\%$ & \multirow{3}{*}{$51.2\%$} \\
            & & & \gray Random+ & \gray $66.0\%$ & \gray $66.2\%$ & \gray $64.6\%$ & \gray $64.4\%$ & \\
            & & & \gray PGD & \gray $\textbf{64.2\%}$ & \gray $\textbf{62.2\%}$ & \gray $\textbf{61.8\%}$ & \gray $\textbf{61.8\%}$ & \\
            \cline{3-9}
            & & \multirow{3}{*}{Vanilla} & Random & $15.6\%$ & $11.4\%$ & $5.8\%$ & $4.4\%$ & \multirow{3}{*}{-} \\
            & & & \gray Random+ & \gray $8.2\%$ & \gray $5.0\%$ & \gray $\textbf{2.0\%}$ & \gray $\textbf{2.0\%}$ & \\
            & & & \gray PGD & \gray $\textbf{3.8\%}$ & \gray $\textbf{2.8\%}$ & \gray $2.8\%$ & \gray $2.6\%$ & \\
            \bottomrule
        \end{tabular}
        }
        \label{tab:attack-detail-cifar10}
        \vspace{1em}
    \end{table}

    \begin{table}[tb]
        \centering
        \caption{\small Comparison between empirical robust accuracy against random and adaptive attacks and certified robust accuracy on \textbf{ImageNet}. The attack radii are consistent with \Cref{tab:main}. The most powerful attack in each setting is highlighted in bold font. The adaptive attacks are shown in gray rows. Note that PGD attack cannot apply to translation transformation because the parameter space is discrete.
        }
        \resizebox{0.95\linewidth}{!}{
        \begin{tabular}{ccccccccc}
            \toprule
            \multirow{3}{*}{Transformation} & \multirow{3}{*}{\shortstack[c]{Attack\\ Radius}} & \multirow{3}{*}{Model} & \multirow{3}{*}{Attack} & \multicolumn{4}{c}{Empirical Robust Accuracy} & Certified \\
            \cline{5-8}
            & & & & \multicolumn{4}{c}{Initial Starts $N =$} & Robust \\
            & & & & $10$ & $20$ & $50$ & $100$ & Accuracy \\
            \midrule
        
            \multirow{6}{*}{\shortstack[c]{Gaussian\\ Blur}}
            & \multirow{6}{*}{\shortstack[c]{Squared\\ Radius\\ $\alpha\le 36$}} & \multirow{3}{*}{\textbf{TSS}}
              & Random & $53.2\%$ & $\textbf{52.8\%}$ & $\textbf{52.8\%}$ & $52.8\%$ & \multirow{3}{*}{$51.6\%$} \\
            & & & \gray Random+ & \gray $53.2\%$ & \gray $\textbf{52.8\%}$ & \gray $\textbf{52.8\%}$ & \gray $52.8\%$ & \\
            & & & \gray PGD     & \gray $\textbf{52.8\%}$ & \gray $\textbf{52.8\%}$ & \gray $\textbf{52.8\%}$ & \gray $\textbf{52.6\%}$ & \\
            
            \cline{3-9}
            & & \multirow{3}{*}{Vanilla}
              & Random & $9.6\%$ & $8.6\%$ & $8.4\%$ & $8.4\%$ & \multirow{3}{*}{-} \\
            & & & \gray Random+ & \gray $8.8\%$ & \gray $\textbf{8.2\%}$ & \gray $\textbf{8.2\%}$ & \gray $\textbf{8.2\%}$ & \\
            & & & \gray PGD     & \gray $\textbf{8.4\%}$ & \gray $\textbf{8.2\%}$ & \gray $\textbf{8.2\%}$ & \gray $\textbf{8.2\%}$ & \\
            \midrule

            \multirow{6}{*}{\shortstack[c]{Translation\\ (Reflection Pad.)}}
            & \multirow{6}{*}{\shortstack[c]{$\sqrt{\Delta x^2 + \Delta y^2}$\\ $\le 100$}} & \multirow{3}{*}{\textbf{TSS}}
              & Random & $70.0\%$ & $69.6\%$ & $\textbf{69.2\%}$ & $\textbf{69.2\%}$ & \multirow{3}{*}{$50.0\%$} \\
            & & & \gray Random+ & \gray $\textbf{69.4\%}$ & \gray $\textbf{69.2\%}$ & \gray $\textbf{69.2\%}$ & \gray $\textbf{69.2\%}$ & \\
            & & & \gray PGD     & \gray - & \gray - & \gray - & \gray - & \\
            
            \cline{3-9}
            & & \multirow{3}{*}{Vanilla}
              & Random & $\textbf{55.8\%}$ & $\textbf{53.4\%}$ & $\textbf{48.8\%}$ & $46.6\%$ & \multirow{3}{*}{-} \\
            & & & \gray Random+ & \gray $57.2\%$ & \gray $54.6\%$ & \gray $50.6\%$ & \gray $\textbf{46.2\%}$ & \\
            & & & \gray PGD     & \gray - & \gray - & \gray - & \gray - & \\
            \midrule

            \multirow{6}{*}{\shortstack[c]{Brightness}}
            & \multirow{6}{*}{\shortstack[c]{$b\pm 40\%$}} & \multirow{3}{*}{\textbf{TSS}}
              & Random & $70.8\%$ & $\textbf{70.4\%}$ & $\textbf{70.4\%}$ & $\textbf{70.4\%}$ & \multirow{3}{*}{$70.0\%$} \\
            & & & \gray Random+ & \gray $\textbf{70.4\%}$ & \gray $\textbf{70.4\%}$ & \gray $\textbf{70.4\%}$ & \gray $\textbf{70.4\%}$ & \\
            & & & \gray PGD     & \gray $\textbf{70.4\%}$ & \gray $\textbf{70.4\%}$ & \gray $\textbf{70.4\%}$ & \gray $\textbf{70.4\%}$ & \\
            
            \cline{3-9}
            & & \multirow{3}{*}{Vanilla}
              & Random & $31.6\%$ & $26.6\%$ & $21.6\%$ & $19.6\%$ & \multirow{3}{*}{-} \\
            & & & \gray Random+ & \gray $22.8\%$ & \gray $\textbf{19.8\%}$ & \gray $\textbf{18.4\%}$ & \gray $\textbf{18.4\%}$ & \\
            & & & \gray PGD     & \gray $\textbf{22.0\%}$ & \gray $22.4\%$ & \gray $21.8\%$ & \gray $21.8\%$ & \\
            \midrule

            \multirow{6}{*}{\shortstack[c]{Contrast\\ and\\ Brightness}}
            & \multirow{6}{*}{\shortstack[c]{$c\pm 40\%$,\\ $b\pm 40\%$}} & \multirow{3}{*}{\textbf{TSS}}
              & Random & $70.4\%$ & $69.2\%$ & $\textbf{68.4\%}$ & $\textbf{68.4\%}$ & \multirow{3}{*}{$61.4\%$} \\
            & & & \gray Random+ & \gray $69.2\%$ & \gray $68.8\%$ & \gray $\textbf{68.4\%}$ & \gray $\textbf{68.4\%}$ & \\
            & & & \gray PGD     & \gray $\textbf{68.4\%}$ & \gray $\textbf{68.4\%}$ & \gray $\textbf{68.4\%}$ & \gray $\textbf{68.4\%}$ & \\
            
            \cline{3-9}
            & & \multirow{3}{*}{Vanilla}
              & Random & $20.8\%$ & $10.4\%$ & $3.6\%$ & $1.2\%$ & \multirow{3}{*}{-} \\
            & & & \gray Random+ & \gray $8.0\%$ & \gray $2.0\%$ & \gray $0.4\%$ & \gray $\textbf{0.0\%}$ & \\
            & & & \gray PGD     & \gray $\textbf{1.8\%}$ & \gray $\textbf{0.2\%}$ & \gray $\textbf{0.0\%}$ & \gray $\textbf{0.0\%}$ & \\
            \midrule

            \multirow{6}{*}{\shortstack[c]{Gaussian Blur,\\ Translation\\ Contrast,\\ and Brightness}}
            & \multirow{6}{*}{\shortstack[c]{$\alpha\le 10$,\\ $c\pm 20\%$,\\ $b\pm 20\%$,\\ $\sqrt{\Delta x^2 + \Delta y^2}$\\ $\le 10$}} & \multirow{3}{*}{\textbf{TSS}}
              & Random & $51.8\%$ & $50.2\%$ & $49.2\%$ & $48.8\%$ & \multirow{3}{*}{$32.8\%$} \\
            & & & \gray Random+ & \gray $51.4\%$ & \gray $\textbf{49.6\%}$ & \gray $\textbf{48.0\%}$ & \gray $48.2\%$ & \\
            & & & \gray PGD     & \gray $\textbf{49.6\%}$ & \gray $\textbf{49.6\%}$ & \gray $48.2\%$ & \gray $\textbf{47.4\%}$ & \\
            
            \cline{3-9}
            & & \multirow{3}{*}{Vanilla}
              & Random & $20.6\%$ & $17.4\%$ & $12.0\%$ & $9.4\%$ & \multirow{3}{*}{-} \\
            & & & \gray Random+ & \gray $15.2\%$ & \gray $12.8\%$ & \gray $7.8\%$ & \gray $6.6\%$ & \\
            & & & \gray PGD     & \gray $\textbf{11.2\%}$ & \gray $\textbf{8.0\%}$ & \gray $\textbf{6.0\%}$ & \gray $\textbf{4.0\%}$ & \\
            \midrule

            \multirow{6}{*}{\shortstack[c]{Rotation}}
            & \multirow{6}{*}{\shortstack[c]{$r\pm 30\%$}} & \multirow{3}{*}{\textbf{TSS}}
              & Random & $40.2\%$ & $\textbf{38.4\%}$ & $38.4\%$ & $\textbf{37.8\%}$ & \multirow{3}{*}{$30.4\%$} \\
            & & & \gray Random+ & \gray $\textbf{39.0\%}$ & \gray $38.6\%$ & \gray $\textbf{38.0\%}$ & \gray $\textbf{37.8\%}$ & \\
            & & & \gray PGD     & \gray $40.4\%$ & \gray $39.8\%$ & \gray $39.8\%$ & \gray $39.4\%$ & \\
            
            \cline{3-9}
            & & \multirow{3}{*}{Vanilla}
              & Random & $47.8\%$ & $44.4\%$ & $41.4\%$ & $40.0\%$ & \multirow{3}{*}{-} \\
            & & & \gray Random+ & \gray $45.0\%$ & \gray $43.6\%$ & \gray $40.6\%$ & \gray $38.8\%$ & \\
            & & & \gray PGD     & \gray $\textbf{39.6\%}$ & \gray $\textbf{38.4\%}$ & \gray $\textbf{37.8\%}$ & \gray $\textbf{37.0\%}$ & \\
            \midrule

            \multirow{6}{*}{\shortstack[c]{Scaling}}
            & \multirow{6}{*}{\shortstack[c]{$s\pm 30\%$}} & \multirow{3}{*}{\textbf{TSS}}
              & Random & $40.2\%$ & $38.0\%$ & $37.4\%$ & $37.4\%$ & \multirow{3}{*}{$26.4\%$} \\
            & & & \gray Random+ & \gray $\textbf{38.8\%}$ & \gray $\textbf{37.2\%}$ & \gray $\textbf{36.8\%}$ & \gray $\textbf{36.4\%}$ & \\
            & & & \gray PGD     & \gray $39.0\%$ & \gray $37.8\%$ & \gray $37.6\%$ & \gray $37.8\%$ & \\
            
            \cline{3-9}
            & & \multirow{3}{*}{Vanilla}
              & Random & $55.2\%$ & $53.0\%$ & $51.2\%$ & $50.0\%$ & \multirow{3}{*}{-} \\
            & & & \gray Random+ & \gray $55.6\%$ & \gray $52.8\%$ & \gray $50.6\%$ & \gray $50.0\%$ & \\
            & & & \gray PGD     & \gray $\textbf{50.6\%}$ & \gray $\textbf{49.8\%}$ & \gray $\textbf{49.4\%}$ & \gray $\textbf{49.8\%}$ & \\
            \midrule

            \multirow{6}{*}{\shortstack[c]{Rotation\\ and\\ Brightness}}
            & \multirow{6}{*}{\shortstack[c]{$r\pm 30^\circ$\\ $b\pm 20\%$}} & \multirow{3}{*}{\textbf{TSS}}
              & Random & $\textbf{38.8\%}$ & $\textbf{38.0\%}$ & $37.2\%$ & $37.4\%$ & \multirow{3}{*}{$26.8\%$} \\
            & & & \gray Random+ & \gray $39.0\%$ & \gray $38.2\%$ & \gray $\textbf{37.0\%}$ & \gray $\textbf{36.8\%}$ & \\
            & & & \gray PGD     & \gray $39.6\%$ & \gray $39.4\%$ & \gray $38.6\%$ & \gray $38.8\%$ & \\
            
            \cline{3-9}
            & & \multirow{3}{*}{Vanilla}
              & Random & $40.4\%$ & $35.4\%$ & $29.2\%$ & $22.4\%$ & \multirow{3}{*}{-} \\
            & & & \gray Random+ & \gray $35.2\%$ & \gray $31.2\%$ & \gray $25.2\%$ & \gray $\textbf{21.2\%}$ & \\
            & & & \gray PGD     & \gray $\textbf{25.0\%}$ & \gray $\textbf{23.2\%}$ & \gray $\textbf{22.2\%}$ & \gray $21.4\%$ & \\
            \midrule

            \multirow{6}{*}{\shortstack[c]{Scaling\\ and\\ Brightness}}
            & \multirow{6}{*}{\shortstack[c]{$s\pm 30\%$,\\ $b\pm 30\%$}} & \multirow{3}{*}{\textbf{TSS}}
              & Random & $40.2\%$ & $38.0\%$ & $\textbf{36.4\%}$ & $36.4\%$ & \multirow{3}{*}{$23.4\%$} \\
            & & & \gray Random+ & \gray $38.0\%$ & \gray $\textbf{37.0\%}$ & \gray $36.6\%$ & \gray $\textbf{36.0\%}$ & \\
            & & & \gray PGD     & \gray $\textbf{37.0\%}$ & \gray $\textbf{37.0\%}$ & \gray $36.6\%$ & \gray $36.6\%$ & \\
            
            \cline{3-9}
            & & \multirow{3}{*}{Vanilla}
              & Random & $34.4\%$ & $26.2\%$ & $19.4\%$ & $16.0\%$ & \multirow{3}{*}{-} \\
            & & & \gray Random+ & \gray $21.0\%$ & \gray $\textbf{15.0\%}$ & \gray $\textbf{12.4\%}$ & \gray $\textbf{8.8\%}$ & \\
            & & & \gray PGD     & \gray $\textbf{17.6\%}$ & \gray $15.2\%$ & \gray $13.8\%$ & \gray $13.4\%$ & \\
            \midrule

            \multirow{6}{*}{\shortstack[c]{Rotation,\\ Brightness, \\ and $\ell_2$}}
            & \multirow{6}{*}{\shortstack[c]{$r\pm 30^\circ$,\\ $b\pm 20\%$,\\ $\|\delta\|_2\le .05$}} & \multirow{3}{*}{\textbf{TSS}}
              & Random & $39.4\%$ & $38.2\%$ & $37.8\%$ & $37.0\%$ & \multirow{3}{*}{$26.6\%$} \\
            & & & \gray Random+ & \gray $\textbf{38.2\%}$ & \gray $\textbf{37.8\%}$ & \gray $\textbf{36.6\%}$ & \gray $\textbf{36.4\%}$ & \\
            & & & \gray PGD     & \gray $38.8\%$ & \gray $38.8\%$ & \gray $38.4\%$ & \gray $38.0\%$ & \\
            
            \cline{3-9}
            & & \multirow{3}{*}{Vanilla}
              & Random & $26.0\%$ & $23.2\%$ & $19.8\%$ & $17.6\%$ & \multirow{3}{*}{-} \\
            & & & \gray Random+ & \gray $21.4\%$ & \gray $18.4\%$ & \gray $16.0\%$ & \gray $14.4\%$ & \\
            & & & \gray PGD     & \gray $\textbf{16.6\%}$ & \gray $\textbf{14.6\%}$ & \gray $\textbf{14.2\%}$ & \gray $\textbf{14.0\%}$ & \\
            \midrule

            \multirow{6}{*}{\shortstack[c]{Scaling,\\ Brightness,\\ and $\ell_2$}}
            & \multirow{6}{*}{\shortstack[c]{$s\pm 30\%$,\\ $b\pm 30\%$,\\ $\|\delta\|_2\le .05$}} & \multirow{3}{*}{\textbf{TSS}}
              & Random & $40.2\%$ & $38.2\%$ & $37.2\%$ & $36.0\%$ & \multirow{3}{*}{$22.6\%$} \\
            & & & \gray Random+ & \gray $38.0\%$ & \gray $\textbf{36.4\%}$ & \gray $\textbf{35.8\%}$ & \gray $\textbf{35.6\%}$ & \\
            & & & \gray PGD     & \gray $\textbf{36.8\%}$ & \gray $\textbf{36.4\%}$ & \gray $36.4\%$ & \gray $36.0\%$ & \\
            
            \cline{3-9}
            & & \multirow{3}{*}{Vanilla}
              & Random & $24.4\%$ & $17.2\%$ & $11.4\%$ & $7.4\%$ & \multirow{3}{*}{-} \\
            & & & \gray Random+ & \gray $13.8\%$ & \gray $\textbf{8.4\%}$ & \gray $\textbf{5.8\%}$ & \gray $\textbf{4.8\%}$ & \\
            & & & \gray PGD     & \gray $\textbf{9.8\%}$ & \gray $8.8\%$ & \gray $7.4\%$ & \gray $7.4\%$ & \\
            
            \bottomrule
        \end{tabular}
        }
        \label{tab:attack-detail-imagenet}
    \end{table}

                \begin{table}[H]
                    \centering
                    \caption{\small Comparison of Empirical Accuracy for each corruption evaluated from the highest severity level~(5) of CIFAR-10-C and ImageNet-C.}
                    \resizebox{\linewidth}{!}{
                    \begin{tabular}{c|c|ccg|ccg}
                        \toprule
                         \multicolumn{2}{c|}{Corruption} & \multicolumn{3}{c|}{CIFAR-10} & \multicolumn{3}{c}{ImageNet} \\
                        \hline
                        Category & Type & Vanilla & AugMix~\cite{hendrycks2019augmix} & \framework
                        & Vanilla & AugMix~\cite{hendrycks2019augmix} & \framework \\
                        \hline
                        Weather
                        & Snow &
                        $68.2\%$ & $\textbf{75.6\%}$ & $69.4\%$ &
                        $16.0\%$ & $\textbf{22.6\%}$ & $13.8\%$ \\
                        & Fog &
                        $63.4\%$ & $\textbf{65.4\%}$ & $62.0\%$ &
                        $\textbf{24.0\%}$ & $22.2\%$ & $18.0\%$ \\
                        & Frost &
                        $59.2\%$ & $67.8\%$ & $\textbf{73.8\%}$ &
                        $21.6\%$ & $\textbf{24.8\%}$ & $22.6\%$ \\
                        & Brightness &
                        $\textbf{82.4\%}$ & $\textbf{82.4\%}$ & $71.8\%$ &
                        $\textbf{56.8\%}$ & $56.6\%$ & $35.8\%$ \\
                        \hline
                        Blur
                        & Zoom Blur &
                        $52.6\%$ & $70.8\%$ & $\textbf{75.2\%}$ &
                        $21.4\%$ & $\textbf{31.0\%}$ & $20.4\%$ \\
                        & Glass Blur &
                        $46.6\%$ & $50.2\%$ & $\textbf{72.2\%}$ &
                        $8.0\%$ & $\textbf{14.0\%}$ & $13.8\%$ \\
                        & Motion Blur &
                        $54.8\%$ & $68.6\%$ & $\textbf{70.2\%}$ &
                        $14.2\%$ & $\textbf{25.2\%}$ & $11.4\%$ \\
                        & Defocus Blur &
                        $49.0\%$ & $72.2\%$ & $\textbf{75.6\%}$ &
                        $14.0\%$ & $22.6\%$ & $\textbf{25.6\%}$ \\
                        \hline
                        Noise
                        & Impulse Noise &
                        $29.8\%$ & $\textbf{51.0\%}$ & $46.2\%$ &
                        $4.0\%$ & $9.8\%$ & $\textbf{12.0\%}$ \\
                        & Gaussian Noise &
                        $34.8\%$ & $56.4\%$ & $\textbf{62.8\%}$ &
                        $4.4\%$ & $9.6\%$ & $\textbf{12.8\%}$ \\
                        & Shot Noise &
                        $43.0\%$ & $\textbf{63.4\%}$ & $62.6\%$ &
                        $4.0\%$ & $13.0\%$ & $\textbf{14.0\%}$ \\
                        \hline
                        Digital
                        & Pixelate &
                        $42.0\%$ & $59.0\%$ & $\textbf{76.0\%}$ &
                        $19.6\%$ & $39.2\%$ & $\textbf{55.6\%}$ \\
                        & Elastic Transform &
                        $71.4\%$ & $65.2\%$ & $\textbf{74.4\%}$ &
                        $14.8\%$ & $\textbf{23.8\%}$ & $23.6\%$ \\
                        & Contrast &
                        $23.8\%$ & $26.0\%$ & $\textbf{49.8\%}$ &
                        $4.2\%$ & $\textbf{11.6\%}$ & $5.0\%$ \\
                        & JPEG Compression &
                        $70.8\%$ & $\textbf{73.0\%}$ & $71.8\%$ &
                        $33.6\%$ & $\textbf{45.4\%}$ & $31.6\%$ \\
                        \hline
                        Extra
                        & Saturate &
                        $79.6\%$ & $\textbf{83.4\%}$ & $63.6\%$ &
                        $41.6\%$ & $\textbf{43.4\%}$ & $25.8\%$ \\
                        & Spatter &
                        $72.8\%$ & $\textbf{82.0\%}$ & $69.0\%$ &
                        $22.4\%$ & $\textbf{30.6\%}$ & $17.6\%$ \\
                        & Speckle Noise &
                        $45.2\%$ & $\textbf{64.0\%}$ & $58.8\%$
                        & $11.4\%$ & $\textbf{27.4\%}$ & $23.6\%$ \\
                        & Gaussian Blur &
                        $34.6\%$ & $67.4\%$ & $\textbf{75.8\%}$ &
                        $11.2\%$ & $15.2\%$ & $\textbf{33.0\%}$ \\
                        \hline
                        \multicolumn{2}{c|}{Average} &
                        $53.89\%$ & $65.46\%$ & $\textbf{67.42\%}$ &
                        $18.27\%$ & $\textbf{25.68\%}$ & $21.89\%$ \\
                        \bottomrule
                    \end{tabular}
                    }
                    \label{tab:cifar10-imagenet-c-breakdown}
                \end{table}
            \fi

        \subsection{Empirical Robustness against Unforeseen Attacks: Evaluation Protocol and Result Breakdown}
            \label{appendix:exp-corruption}

            \ifnum\ArXiv=0
                Detailed results are omitted to the corresponding section of the arXiv version~\cite{arxiv}.
            \fi

            \ifnum\ArXiv=1
            In the main text~(\Cref{subsec:generalization-to-unforeseen-physical-attacks}), we briefly show that \framework is generalizable to defend against unforeseen physical attacks by evaluating on CIFAR-10-C and ImageNet-C.
            Here, we first introduce the detailed evaluation protocol, then a breakdown of the result table~(\Cref{tab:cifar10-imagenet-c}) in the main text and show empirical accuracy on each type of corruption.

            \subsubsection{Evaluated Models}
                On either CIFAR-10-C and ImageNet-C, we choose three models for evaluation: the vanilla model, the AugMix~\cite{hendrycks2019augmix} trained model, and our \framework model for defending the composition of Gaussian blur, translation, brightness, and contrast.
                The vanilla models and \framework models are the same models as used in main experiments.
                The AugMix is the state-of-the-art empirical defense on the CIFAR-10-C and ImageNet-C dataset according to \cite{croce2020robustbench}.
                For AugMix, on CIFAR-10-C, since the model weights are not released, we use the official implementation of AugMix~(\url{https://github.com/google-research/augmix}) and extend the code to support our used model architecture~(ResNet-110) for a fair comparison.
                We run the code with the suggested hyperparameters and achieve similar performance as reported in their paper.
                On ImageNet-C, we directly use the officially released model weights.
                The model has the same architecture~(ResNet-50) as ours so the comparison is naturally fair.
                Note that all these models are trained only on the clean CIFAR-10 or ImageNet training set.
                We do not include the evaluation of MNIST models since there is no corrupted MNIST dataset available within our best knowledge.

            \subsubsection{Empirical Accuracy Computation}
                We compute the \textbf{empirical accuracy}~(on CIFAR-10-C/ImageNet-C) as the ratio of correctly predicted samples among the test samples, where the test samples are all corrupted at the highest severity level~(level 5) to model the strongest unforeseen semantic attacker.
                For each corruption type, there is a full test set generated by 1-to-1 mapping from the original clean test set samples processed with the corruption.
                Being consistent with the main experiment's protocol, for each corruption type, we uniformly pick $500$ samples from the corresponding test set.
                Then, we compute the average empirical accuracy among all $19$ corruptions and report it in \Cref{tab:cifar10-imagenet-c} in main text.

                As a reference, we also include their certified accuracy against the composition of Gaussian blur, brightness, contrast, and translation.
                Since our \framework  provides robustness certification only for smoothed models, we apply the same smoothing strategy as our \framework models, hoping for providing robustness certificates for baseline models.
                As shown in \Cref{tab:cifar10-imagenet-c}, only \framework models can be certified with nontrivial certified robust accuracy.

            \subsubsection{Breakdown}

                In \Cref{tab:cifar10-imagenet-c-breakdown}, we show the breakdown of empirical accuracy for all models evaluated in \Cref{tab:cifar10-imagenet-c}.
                Note that our \framework models are trained using only four of these 19 corruptions~(brightness, contrast, Gaussian blur, and additive Gaussian noise).
                Almost on all the corruptions, \framework has higher accuracy than vanilla models and sometimes higher than the state-of-the-art defense---AugMix.
                Interestingly,  we find \framework models have different generalization abilities on these corruptions.
                The additive Gaussian noise has the best generalization ability, because \framework model also achieves much higher accuracy against impulse noise and shot noise than all the baselines.
                The Gaussian blur also generalizes well, because we can see significantly higher accuracy of \framework models against zoom blur, glass blur, motion blur, and defocus blur especially on CIFAR-10-C.
                Finally, brightness and contrast, even though they seem to be among the simplest transformations, have the poorest generalization ability.
                For example, under severe corruptions, the empirical accuracy of brightness is even below than that of vanilla models.
                Manual inspection of the corrupted images showed that corrupted brightness or contrast images are severely altered so that they are hard to be distinguished even by humans, giving a hint for possible reasons for the poor performance on these images.
                We thus conjecture that overly severe corruption could be the reason, and we think that it would be an interesting future direction to study these different generalization abilities in depth.
            \fi

        \subsection{Certified Accuracy Beyond Certified Radii}
            \label{appendix:exp-beyond-radii}

            \ifnum\ArXiv=0
                Detailed results are omitted to the corresponding section of the arXiv version~\cite{arxiv}.
            \fi

            \ifnum\ArXiv=1
            \begin{figure}[t]
                \centering
                \includegraphics[width=\linewidth]{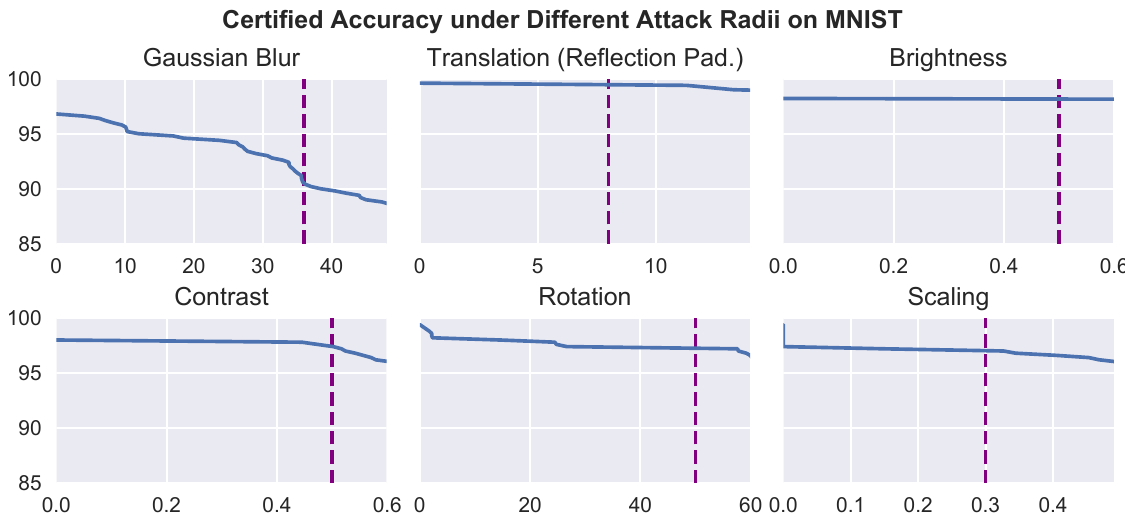}
                \caption{\small
                The blue curves show the certified robust accuracy on \textbf{MNIST}.
                The predefined certified radii are shown as purple vertical dotted lines.
                \textbf{No significant degradation after exceeding the predefined radii}.
                For Contrast subfigure, we allow additional $50\%$ brightness change.
                }
                \label{fig:mnist-beyond-radii-curve}
            \end{figure}

            \begin{figure}[htbp]
                \centering
                \includegraphics[width=\linewidth]{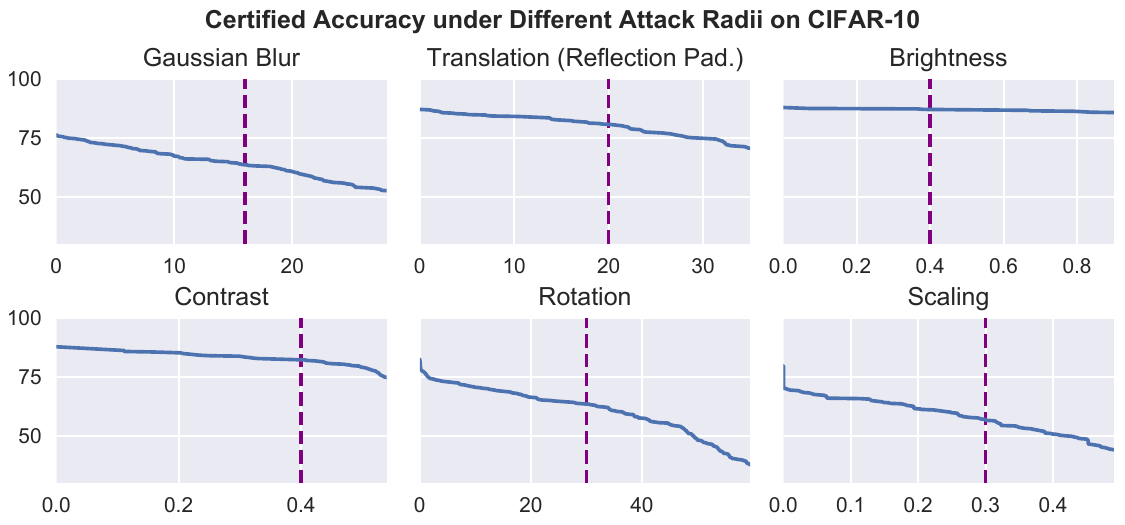}
                \caption{\small
                The blue curves show the certified robust accuracy on \textbf{CIFAR-10}.
                The predefined certified radii are shown as purple vertical dotted lines.
                \textbf{No significant degradation after exceeding the predefined certified radii}.
                For Contrast subfigure, we allow additional $40\%$ brightness change.
                }
                \label{fig:cifar-10-beyond-radii-curve}
            \end{figure}

    \begin{table}[htbp]
        \centering
        \caption{\small Empirical and certified robust accuracy on \textbf{MNIST} when attack radii go beyond the predefined one. The predefined certified radii are consistent with \Cref{tab:main}.\\
        *~For Contrast we allow additional $\pm 50\%$ brightness change since a single Contrast attack is not powerful enough. 
        }
        \resizebox{0.95\linewidth}{!}{
        \begin{tabular}{ccccggg}
            & & & & \multicolumn{3}{g}{Beyond Predefined Radii} \\
            \toprule
            
            \multirow{4}{*}{\shortstack[c]{Gaussian\\ Blur}} & & Radii: & $36$ & $40$ & $44$ & $48$ \\
            \cline{2-7}
            & \multirow{2}{*}{\textbf{TSS}} & Empirical & $91.2\%$ & $90.8\%$ & $90.4\%$ & $89.2\%$ \\
            & & Certified & $90.6\%$ & $90.0\%$ & $89.6\%$ & $88.8\%$ \\
            \cline{2-7}
            & Vanilla & Empirical & $12.2\%$ & $11.8\%$ & $11.2\%$ & $11.2\%$ \\
            \midrule
            
            \multirow{4}{*}{\shortstack[c]{Translation\\ (Reflection Pad.)}} & & Radii: & $8$ & $10$ & $12$ & $14$ \\
            \cline{2-7}
            & \multirow{2}{*}{\textbf{TSS}} & Empirical & $99.6\%$ & $99.6\%$ & $99.6\%$ & $99.6\%$ \\
            & & Certified & $99.6\%$ & $99.6\%$ & $99.4\%$ & $99.0\%$ \\
            \cline{2-7}
            & Vanilla & Empirical & $0.0\%$ & $0.0\%$ & $0.0\%$ & $0.0\%$ \\
            \midrule
            
            \multirow{4}{*}{Brightness} & & Radii: & $50\%$ & $52\%$ & $55\%$ & $60\%$ \\
            \cline{2-7}
            & \multirow{2}{*}{\textbf{TSS}} & Empirical & $98.2\%$ & $98.2\%$ & $98.2\%$ & $98.2\%$ \\
            & & Certified & $98.2\%$ & $98.2\%$ & $98.2\%$ & $98.2\%$ \\
            \cline{2-7}
            & Vanilla & Empirical & $96.6\%$ & $96.2\%$ & $95.6\%$ & $94.4\%$ \\
            \midrule
            
            \multirow{4}{*}{Contrast*} & & Radii: & $50\%$ & $52\%$ & $55\%$ & $60\%$ \\
            \cline{2-7}
            & \multirow{2}{*}{\textbf{TSS}} & Empirical & $98.0\%$ & $98.0\%$ & $98.0\%$ & $98.0\%$ \\
            & & Certified & $97.6\%$ & $97.2\%$ & $96.8\%$ & $96.2\%$ \\
            \cline{2-7}
            & Vanilla & Empirical & $93.2\%$ & $93.2\%$ & $93.2\%$ & $93.0\%$ \\
            \midrule
            
            \multirow{4}{*}{\shortstack{Rotation}} & & Radii: & $50^\circ$ & $52^\circ$ & $55^\circ$ & $60^\circ$ \\
            \cline{2-7}
             & \multirow{2}{*}{\textbf{TSS}} & Empirical & $98.2\%$ & $98.2\%$ & $98.2\%$ & $97.8\%$ \\
             & & Certified & $97.4\%$ & $97.4\%$ & $97.4\%$ & $96.6\%$ \\
            \cline{2-7}
             & Vanilla & Empirical & $11.0\%$ & $9.8\%$ & $8.4\%$ & $7.2\%$  \\
            \midrule
            
            \multirow{4}{*}{\shortstack{Scaling}} & & Radii: & $30\%$ & $35\%$ & $40\%$ & $50\%$ \\
            \cline{2-7}
             & \multirow{2}{*}{\textbf{TSS}} & Empirical & $99.2\%$ & $98.8\%$ & $98.8\%$ & $98.6\%$ \\
             & & Certified & $97.2\%$ & $96.8\%$ & $96.8\%$ & $96.0\%$ \\
            \cline{2-7}
             & Vanilla & Empirical & $89.2\%$ & $82.6\%$ & $72.8\%$ & $45.4\%$ \\
            \bottomrule
        \end{tabular}
        }
        \label{tab:beyond-radii-mnist}
    \end{table}

    \begin{table}[H]
        \centering
        \caption{\small Empirical and certified robust accuracy on \textbf{CIFAR-10} when attack radii go beyond the predefined one. The predefined certified radii are consistent with \Cref{tab:main}.\\
        * For Contrast we allow additional $40\%$ brightness change since a single Contrast attack is not powerful enough.
        }
        \resizebox{0.85\linewidth}{!}{
        \begin{tabular}{ccccggg}
            
            & & & & \multicolumn{3}{g}{Beyond Predefined Radii} \\
            \toprule
            
            \multirow{4}{*}{\shortstack[c]{Gaussian\\ Blur}} & & Radii: & $16$ & $20$ & $24$ & $28$ \\
            \cline{2-7}
            & \multirow{2}{*}{\textbf{TSS}} & Empirical & $65.8\%$ & $63.4\%$ & $61.2\%$ & $56.4\%$ \\
            & & Certified & $63.6\%$ & $60.8\%$ & $56.0\%$ & $52.6\%$ \\
            \cline{2-7}
            & Vanilla & Empirical & $3.4\%$ & $3.2\%$ & $3.0\%$ & $2.8\%$ \\ 
            \midrule
            
            \multirow{4}{*}{\shortstack[c]{Tranlation\\ (Reflection Pad.)}} & & Radii: & $20$ & $25$ & $30$ & $35$ \\
            \cline{2-7}
            & \multirow{2}{*}{\textbf{TSS}} & Empirical & $86.0\%$ & $86.0\%$ & $85.8\%$ & $85.8\%$ \\
            & & Certified & $80.8\%$ & $77.4\%$ & $74.8\%$ & $70.6\%$ \\
            \cline{2-7}
            & Vanilla & Empirical & $4.2\%$ & $3.6\%$ & $3.6\%$ & $2.8\%$ \\ 
            \midrule
            
            \multirow{4}{*}{Brightness} & & Radii: & $40\%$ & $45\%$ & $50\%$ & $55\%$ \\
            \cline{2-7}
            & \multirow{2}{*}{\textbf{TSS}} & Empirical & $87.2\%$ & $87.0\%$ & $87.0\%$ & $87.0\%$ \\
            & & Certified & $87.0\%$ & $87.0\%$ & $87.0\%$ & $87.0\%$ \\
            \cline{2-7}
            & Vanilla & Empirical & $42.6\%$ & $32.8\%$ & $21.2\%$ & $14.0\%$ \\ 
            \midrule
            
            \multirow{4}{*}{Contrast*} & & Radii & $40\%$ & $45\%$ & $50\%$ & $55\%$ \\
            \cline{2-7}
            & \multirow{2}{*}{\textbf{TSS}} & Empirical & $85.8\%$ & $85.8\%$ & $85.4\%$ & $85.2\%$ \\
            & & Certified & $82.4\%$ & $80.8\%$ & $79.2\%$ & $71.8\%$ \\
            \cline{2-7}
            & Vanilla & Empirical & $9.6\%$ & $7.8\%$ & $5.6\%$ & $4.8\%$ \\
            \midrule
            
            \multirow{4}{*}{Rotation} & & Radii: & $30^\circ$ & $40^\circ$ & $50^\circ$ & $60^\circ$ \\
            \cline{2-7}
             & \multirow{2}{*}{\textbf{TSS}} & Empirical & $69.2\%$ & $64.0\%$ & $57.8\%$ & $46.8\%$  \\
             & & Certified & $63.6\%$ & $57.6\%$ & $48.2\%$ & $37.4\%$ \\
            \cline{2-7}
             & Vanilla & Empirical & $21.4\%$ & $8.8\%$ & $5.0\%$ & $3.2\%$ \\
            \midrule

            \multirow{4}{*}{Scaling} & & Radii: & $30\%$ & $35\%$ & $40\%$ & $50\%$ \\
            \cline{2-7}
             & \multirow{2}{*}{\textbf{TSS}} & Empirical & $67.0\%$ & $65.0\%$ & $60.6\%$ & $54.8\%$  \\
             & & Certified & $58.8\%$ & $53.6\%$ & $51.0\%$ & $43.4\%$ \\
            \cline{2-7}
             & Vanilla & Empirical & $51.2\%$ & $43.0\%$ & $34.8\%$ & $21.2\%$ \\
            \bottomrule
        \end{tabular}
        }
        \label{tab:beyond-radii-cifar10}
    \end{table}

    \begin{table}[htbp]
        \centering
        \caption{\small Empirical and certified robust accuracy on \textbf{ImageNet} when attack radii go beyond the predefined one. The predefined certified radii are consistent with \Cref{tab:main}.\\
        *~For Contrast/Rotation/Scaling we allow additional $\pm 40\%$/$\pm 20\%$/$\pm 30\%$ brightness change since a single Contrast/Rotation/Scaling attack is not powerful enough.}
        \resizebox{0.85\linewidth}{!}{
        \begin{tabular}{ccccgg}
            & & & & \multicolumn{2}{g}{Beyond Predefined Radii} \\
            \toprule
            
            \multirow{4}{*}{\shortstack[c]{Gaussian\\ Blur}} & & Radii: & $36$ & $40$ & $44$ \\
            \cline{2-6}
            & \multirow{2}{*}{\textbf{TSS}} & Empirical & $52.6\%$ & $51.2\%$ & $49.8\%$ \\
            & & Certified & $51.6\%$ & $50.0\%$ & $48.8\%$ \\
            \cline{2-6}
            & Vanilla & Empirical & $8.2\%$ & $7.2\%$ & $6.2\%$ \\
            \midrule
            
            \multirow{4}{*}{\shortstack[c]{Translation\\ (Reflection Pad.)}} & & Radii: & $100$ & $105$ & $110$ \\
            \cline{2-6}
            & \multirow{2}{*}{\textbf{TSS}} & Empirical & $69.2\%$ & $69.2\%$ & $69.0\%$ \\
            & & Certified & $50.0\%$ & $49.4\%$ & $46.8\%$ \\
            \cline{2-6}
            & Vanilla & Empirical & $46.2\%$ & $37.6\%$ & $36.6\%$ \\
            \midrule
            
            \multirow{4}{*}{Brightness} & & Radii: & $40\%$ & $45\%$ & $50\%$ \\
            \cline{2-6}
            & \multirow{2}{*}{\textbf{TSS}} & Empirical & $70.4\%$ & $70.2\%$ & $70.0\%$ \\
            & & Certified & $70.0\%$ & $69.8\%$ & $69.6\%$ \\
            \cline{2-6}
            & Vanilla & Empirical & $18.4\%$ & $10.0\%$ & $5.2\%$ \\
            \midrule
            
            \multirow{4}{*}{Contrast*} & & Radii: & $40\%$ & $45\%$ & $50\%$ \\
            \cline{2-6}
            & \multirow{2}{*}{\textbf{TSS}} & Empirical & $68.4\%$ & $68.2\%$ & $67.6\%$ \\
            & & Certified & $61.4\%$ & $55.8\%$ & $45.0\%$ \\
            \cline{2-6}
            & Vanilla & Empirical & $0.0\%$ & $0.0\%$ & $0.0\%$ \\
            \midrule
            
            \multirow{4}{*}{\shortstack{Rotation*}} & & Radii: & $30^\circ$ & $35^\circ$ & $45^\circ$ \\
            \cline{2-6}
             & \multirow{2}{*}{\textbf{TSS}} & Empirical & $36.8\%$ & $36.4\%$ & $33.4\%$ \\
             & & Certified & $26.8\%$ & $26.2\%$ & $21.8\%$ \\
            \cline{2-6}
             & Vanilla & Empirical & $21.2\%$ & $19.4\%$ & $16.2\%$ \\
            \midrule
            
            \multirow{4}{*}{\shortstack{Scaling*}} & & Radii: & $30\%$ & $40\%$ & $50\%$ \\
            \cline{2-6}
             & \multirow{2}{*}{\textbf{TSS}} & Empirical & $36.0\%$ & $32.4\%$ & $26.6\%$  \\
             & & Certified & $23.4\%$ & $18.4\%$ & $11.6\%$  \\
            \cline{2-6}
             & Vanilla & Empirical & $8.8\%$ & $8.8\%$ & $7.0\%$ \\
            \bottomrule
        \end{tabular}
        }
        \label{tab:beyond-radii-imagenet}
    \end{table}

            In \Cref{fig:mnist-beyond-radii-curve} and \Cref{fig:cifar-10-beyond-radii-curve}, on MNIST and CIFAR-10,
            the purple vertical dotted lines stand for the predefined certified radii that the models aim to defend, and the blue curves show the certified robust accuracy~($y$ axis) with respect to attack radii~($x$ axis).
            The figures imply that the \framework models that aim to defend against transformations within certain thresholds still maintain \textbf{high certified accuracy} when the transformation parameters go even far beyond the thresholds.

            In \Cref{tab:beyond-radii-mnist}, \Cref{tab:beyond-radii-cifar10}, and \Cref{tab:beyond-radii-imagenet}, we further list the empirical robust accuracy of \framework and vanilla models when the attacker goes beyond the predefined certified radius.
            The empirical robust accuracy is computed as the minimum among all three attacks: Random, Random+, and PGD.
            We observe that the empirical robust accuracy follows the same tendency.
            For example, on CIFAR-10 dataset, the \framework model is trained to defend against the rotation transformation within $30^\circ$ where it achieves $69.2\%$/$63.6\%$ empirical/certified accuracy.
            When the rotation angle goes up to $60^\circ$ the model still preserves $46.8\%$/$37.4\%$ empirical/certified accuracy.
            On the contrary, the vanilla model's empirical accuracy is reduced from $21.4\%$~($30^\circ$ rotation) to $3.2\%$~($60^\circ$ rotation).

    \begin{table}[htbp]
        \centering
        \caption{\small Study of the impact of different smoothing variance levels on certified robust accuracy and benign accuracy on \textbf{MNIST} for \framework. The attack radii are consistent with \Cref{tab:main}. The ``Dist.'' refers to both training and smoothing distribution. The variance used in \Cref{tab:main} is labeled in gray.}
        \resizebox{\linewidth}{!}{

            \begin{tabular}{cccccc}
                \toprule
                \multirow{2}{*}{Transformation} & \multirow{2}{*}{\shortstack[c]{Attack\\ Radius}} & \multicolumn{4}{c}{Certified Accuracy and Benign Accuracy} \\
                & & \multicolumn{4}{c}{under Different Variance Levels} \\
                \midrule

                \multirow{3}{*}{\shortstack[c]{Gaussian Blur}} & \multirow{3}{*}{$\alpha\le 36$} & Dist. of $\alpha$ & $\Exp(1/5)$ & \gray $\Exp(1/10)$ & $\Exp(1/20)$ \\
                \cline{3-6}
                & & Cert. Rob. Acc. & $90.4\%$ & $\textbf{90.6\%}$ & $89.2\%$ \\
                & & Benign Acc. & $\textbf{97.0\%}$ & $96.8\%$ & $93.4\%$ \\
                \hline

                \multirow{3}{*}{\shortstack[c]{Translation\\ (Reflection Pad.)}} & \multirow{3}{*}{\shortstack[c]{$\sqrt{\Delta x^2 + \Delta y^2}$\\ $\le 8$}} & Dist. of $(\Delta x, \Delta y)$ & $\gN(0,5^2 I)$ & \gray $\gN(0, 10^2 I)$ & $\gN(0, 15^2 I)$ \\
                \cline{3-6}
                & & Cert. Rob. Acc. & $99.0\%$ & $\textbf{99.6\%}$ & $99.4\%$ \\
                & & Benign Acc. & $\textbf{99.6\%}$ & $\textbf{99.6\%}$ & $\textbf{99.6\%}$ \\
                \hline

                \multirow{3}{*}{\shortstack[c]{Brightness}} & \multirow{3}{*}{$b\pm 50\%$} & Dist. of $(c,b)$ & $\gN(0, 0.5^2 I)$ & \gray $\gN(0, 0.6^2 I)$ & $\gN(0, 0.7^2 I)$ \\
                \cline{3-6}
                & & Cert. Rob. Acc. & $\textbf{98.4\%}$ & $98.2\%$ & $\textbf{98.4\%}$ \\
                & & Benign Acc. & $\textbf{98.4\%}$ & $\textbf{98.4\%}$ & $\textbf{98.4\%}$ \\
                \hline

                \multirow{3}{*}{\shortstack[c]{Contrast}} & \multirow{3}{*}{$c\pm 50\%$} & Dist. of $(c,b)$ & $\gN(0, 0.5^2 I)$ & \gray $\gN(0, 0.6^2 I)$ & $\gN(0, 0.7^2 I)$ \\
                \cline{3-6}
                & & Cert. Rob. Acc. & $0.0\%$ & $98.0\%$ & $\textbf{98.4\%}$  \\
                & & Benign Acc. & $\textbf{98.4\%}$ & $\textbf{98.4\%}$ & $\textbf{98.4\%}$ \\
                \hline

                \multirow{3}{*}{Rotation} & \multirow{3}{*}{$r\pm 50^\circ$} & Dist. of $\epsilon$ & $\gN(0, 0.05^2 I)$ & \gray $\gN(0, 0.12^2 I)$ & $\gN(0, 0.20^2 I)$ \\
                \cline{3-6}
                & & Cert. Rob. Acc. & $\textbf{97.6\%}$ & $97.4\%$ & $\textbf{97.6\%}$ \\
                & & Benign Acc. & $99.2\%$ & $\textbf{99.4\%}$ & $99.2\%$ \\
                \hline

                \multirow{3}{*}{Scaling} & \multirow{3}{*}{$s\pm 30\%$} & Dist. of $\epsilon$ & $\gN(0, 0.05^2 I)$ & \gray $\gN(0, 0.12^2 I)$ & $\gN(0, 0.20^2 I)$ \\
                \cline{3-6}
                & & Cert. Rob. Acc. & $96.6\%$ & $\textbf{97.2\%}$ & $96.0\%$ \\
                & & Benign Acc. & $\textbf{99.4\%}$ & $\textbf{99.4\%}$ & $99.0\%$ \\

                \bottomrule
            \end{tabular}
        }
        \label{tab:var-level-mnist}
    \end{table}

    \begin{table}[htbp]
        \centering
        \caption{\small Study of the impact of different smoothing variance levels on certified robust accuracy and benign accuracy on \textbf{CIFAR-10} for \framework. The attack radii are consistent with \Cref{tab:main}. The ``Dist.'' refers to both training and smoothing distribution. The variance used in \Cref{tab:main} is labeled in gray.}
        \resizebox{\linewidth}{!}{

            \begin{tabular}{cccccc}
                \toprule
                \multirow{2}{*}{Transformation} & \multirow{2}{*}{\shortstack[c]{Attack\\ Radius}} & \multicolumn{4}{c}{Certified Accuracy and Benign Accuracy} \\
                & & \multicolumn{4}{c}{under Different Variance Levels} \\
                \midrule

                \multirow{3}{*}{\shortstack[c]{Gaussian Blur}} & \multirow{3}{*}{$\alpha \le 16$} & Dist. of $\alpha$ & \gray $\Exp(1/5)$ & $\Exp(1/10)$ & $\Exp(1/20)$ \\
                \cline{3-6}
                & & Cert. Rob. Acc. & $\textbf{63.6\%}$ & $60.6\%$ & $53.0\%$ \\
                & & Benign Acc. & $\textbf{76.2\%}$ & $68.0\%$ & $57.4\%$ \\
                \hline

                \multirow{3}{*}{\shortstack[c]{Translation\\ (Reflection Pad.)}} & \multirow{3}{*}{\shortstack[c]{$\sqrt{\Delta x^2 + \Delta y^2}$\\ $\le 20$}} & Dist. of $(\Delta x, \Delta y)$ & $\gN(0,10^2 I)$ & \gray $\gN(0, 15^2 I)$ & $\gN(0, 20^2 I)$ \\
                \cline{3-6}
                & & Cert. Rob. Acc. & $76.2\%$ & $\textbf{80.8\%}$ & $74.4\%$ \\
                & & Benign Acc. & $\textbf{89.0\%}$ & $87.0\%$ & $84.6\%$ \\
                \hline

                \multirow{3}{*}{\shortstack[c]{Brightness}} & \multirow{3}{*}{$b \pm 40\%$} & Dist. of $(c,b)$ &  $\gN(0, 0.2^2 I)$ & \gray $\gN(0, 0.3^2 I)$ & $\gN(0, 0.4^2 I)$ \\
                \cline{3-6}
                & & Cert. Rob. Acc. & $\textbf{87.4\%}$ & $87.0\%$ & $86.2\%$  \\
                & & Benign Acc. & $\textbf{87.8\%}$ & $\textbf{87.8\%}$ & $86.4\%$ \\
                \hline

                \multirow{3}{*}{\shortstack[c]{Contrast}} & \multirow{3}{*}{$c \pm 40\%$} &  Dist. of $(c,b)$ &  $\gN(0, 0.2^2 I)$ & $\gN(0, 0.3^2 I)$ & \gray $\gN(0, 0.4^2 I)$ \\
                \cline{3-6}
                & & Cert. Rob. Acc. & $0.0\%$ & $\textbf{82.4\%}$ & $\textbf{82.4\%}$ \\
                & & Benign Acc. & $\textbf{87.8\%}$ & $\textbf{87.8\%}$ & $86.4\%$ \\
                \hline

                \multirow{3}{*}{Rotation} & \multirow{3}{*}{$r \pm 30^\circ$} &  Dist. of $\epsilon$ & $\gN(0, 0.05^2 I)$ \gray & $\gN(0, 0.09^2 I)$ & $\gN(0, 0.12^2 I)$ \\
                \cline{3-6}
                & & Cert. Rob. Acc. & $\textbf{63.6\%}$ & $62.0\%$  & $59.0\%$  \\
                & & Benign Acc. & $\textbf{82.0\%}$ & $78.6\%$ & $72.2\%$  \\
                \hline

                \multirow{3}{*}{Scaling} & \multirow{3}{*}{$s \pm 30\%$} &  Dist. of $\epsilon$ & $\gN(0, 0.05^2 I)$ & $\gN(0, 0.09^2 I)$ & \gray $\gN(0, 0.12^2 I)$ \\
                \cline{3-6}
                & & Cert. Rob. Acc. & $59.0\%$ & $\textbf{59.4\%}$ & $58.8\%$ \\
                & & Benign Acc. & $\textbf{85.4\%}$ & $81.6\%$ & $79.2\%$ \\

                \bottomrule
            \end{tabular}
        }
        \label{tab:var-level-cifar}
    \end{table}
            \fi

        \subsection{Different Smoothing Variance Levels: More Results}
            \label{appendix:exp-different-variance-levels}

            \ifnum\ArXiv=0
                Detailed results are omitted to the corresponding section of the arXiv version~\cite{arxiv}.
            \fi

            \ifnum\ArXiv=1
            In \Cref{subsec:different-smoothing-variance} we have shown the study on smoothing variance levels on ImageNet~(\Cref{tab:var-level-imagenet}).
            Here, we further present our study of smoothing variance levels on MNIST and CIFAR-10.
            They are shown in \Cref{tab:var-level-mnist} and \Cref{tab:var-level-cifar} respectively.
            The smoothing variances shown in the two tables are for both training and inference-time smoothing.
            Except for smoothing variance, all other hyperparameters for training and certification are kept the same and consistent with the main experiments~(\Cref{tab:main}).
            As we can observe, the same conclusion still holds: usually, when the smoothing variance increases, the benign accuracy drops and the certified robust accuracy first rises and then drops.
            The reason is that larger smoothing variance makes the input more severely transformed so that the benign accuracy becomes smaller.
            On the other hand, larger smoothing variance makes the robustness easier to be certified as we can observe in various robustness conditions in \Cref{adx-sec:distribution_proofs}, where the required lower bound of $p_A$ becomes smaller.
            This is the reason for the ``first rise'' on certified accuracy.
            However, when the smoothing variance becomes too large, the benign accuracy becomes too low, and according to our definition, the certified accuracy is upper bounded by the benign accuracy~(precondition of robustness is correctness).
            This is the reason for the ``then drop'' on certified accuracy.

            We again observe that the range of acceptable variance is usually wide.
            For example, on CIFAR-10, for rotation transformation, the certified robust accuracy is $63.6\%$/$62.0\%$/$59.0\%$ across a wide range of smoothing variance: $0.05$, $0.09$, $0.12$.
            Thus, even in the presence of such trade-off, without fine-tuning the smoothing variances, we can still obtain high certified robust accuracy and high benign accuracy as reported in \Cref{tab:main} and \Cref{tab:benign-acc}~respectively.

            We remark that the gray cells in \Cref{tab:var-level-mnist} and \Cref{tab:var-level-cifar} indicate the smoothing variances used in our main experiments.
            We did not tune the smoothing variances so these cells might be sub-optimal~(though usually close to optimal) and they are just placed here for indication purpose.
        \fi

    \subsection{Tightness-Efficiency Trade-Off}
        \label{appendix:exp-tightness-efficiency-trade-off}

            \ifnum\ArXiv=0
                Detailed results are omitted to the corresponding section of the arXiv version~\cite{arxiv}.
            \fi

            \ifnum\ArXiv=1
        We notice that as we increase the number of samples when estimating the interpolation error in~\eqref{eq:sqrt-M} and~\eqref{eq:M_i_expression}, the interpolation error $M_{\gS}$ and the upper bound $\sqrt M \ge M_{\gS}$ become smaller and the certification becomes tighter, leading to higher certified robust accuracy. However, the computation time is also increased, resulting in a trade-off between speed and accuracy.
        In \Cref{tab:tightness-efficiency-trade-off-cifar} and \Cref{tab:tightness-efficiency-trade-off-mnist}, we illustrate this trade-off on two differentially resolvable transformations: composition of rotation and brightness on CIFAR-10, and composition of scaling and brightnes on MNIST.
        From the tables, we find that, for these compositions, as the sample numbers $N$ and $n$ increase, the interpolation error decreases and computing time increases (linearly with $N$ and $n$). As a consequence, if using a large number of samples, we can decrease the smoothing noise level $\sigma$ and achieve both higher certified accuracy and higher benign accuracy at the cost of larger computation time.

        \begin{table}[htbp]
            \centering
            \caption{\small
            Average interpolation upper bound $\sqrt{M}$~\eqref{eq:sqrt-M}, average computation time, and ``Certified accuracy~(average certification time)'' for varying number of samples and smoothing noise levels. Results on \textbf{CIFAR-10} against the composition of \textbf{rotation} $\pm 10^\circ$ and \textbf{brightness} change $\pm 10\%$.
            }
            \resizebox{\linewidth}{!}{
                \begin{tabular}{c c | c c | c c c}
                    \toprule
                    \multicolumn{2}{c|}{Number of Samples} & \multicolumn{2}{c|}{Interpolation} & \multicolumn{3}{c}{Smoothing Noise Level $\sigma$} \\
                    First-Level & Second-Level & Avg. $\sqrt M$ & Avg. Comp. Time & $0.05$ & $0.09$ & $0.12$  \\
                    \hline
                    \multirow{2}{*}{$N=556$} & \multirow{2}{*}{$n=2,000$} & \multirow{2}{*}{$0.050$} & \multirow{2}{*}{$\SI{22.50}{s}$} & $70.2\%$ & $65.2\%$ & $61.2\%$  \\
                    & & & & ($\SI{62.32}{s}$) & ($\SI{86.60}{s}$) & ($\SI{53.73}{s}$) \\
                    \multirow{2}{*}{$N=556$} & \multirow{2}{*}{$n=200$} & \multirow{2}{*}{$0.131$} & \multirow{2}{*}{$\SI{1.97}{s}$} & $42.0\%$ & $59.2\%$ & $60.4\%$ \\
                    & & & & ($\SI{490.21}{s}$) & ($\SI{93.19}{s}$) & ($\SI{86.60}{s}$) \\
                    \multirow{2}{*}{$N=56$} & \multirow{2}{*}{$n=2,000$} & \multirow{2}{*}{$0.322$} & \multirow{2}{*}{$\SI{1.90}{s}$} & $1.2\%$ & $12.6\%$ & $29.2\%$  \\
                    & & & & ($\SI{6.18}{s}$) & ($\SI{16.64}{s}$) & ($\SI{25.77}{s}$) \\
                    \multirow{2}{*}{$N=56$} & \multirow{2}{*}{$n=200$} & \multirow{2}{*}{$0.499$} & \multirow{2}{*}{$\SI{0.27}{s}$} & $0.0\%$ & $1.2\%$ & $3.4\%$ \\
                    & & & & ($\SI{5.22}{s}$) & ($\SI{5.68}{s}$) & ($\SI{8.49}{s}$) \\
                    \hline
                    \multicolumn{4}{r|}{Benign Accuracy:} & $83.0\%$ & $79.2\%$ & $79.6\%$ \\
                    \bottomrule
                \end{tabular}
            }
            \label{tab:tightness-efficiency-trade-off-cifar}
        \end{table}

        \begin{table}[htbp]
            \centering
            \caption{\small Average interpolation upper bound $\sqrt{M}$~\eqref{eq:sqrt-M}, average bound computation time, and ``Certified robust accuracy~(average certification time)'' when using different number of samples and various smoothing noise levels. Data is collected on \textbf{MNIST} dataset against the composition of \textbf{scaling} $\pm 50\%$ and \textbf{brightness} change $\pm 50\%$.}
            \resizebox{\linewidth}{!}{
                \begin{tabular}{c c | c c | c c c}
                    \toprule
                    \multicolumn{2}{c|}{Number of Samples} & \multicolumn{2}{c|}{Interpolation} & \multicolumn{3}{c}{Smoothing Noise Level $\sigma$} \\
                    First-Level & Second-Level & Avg. $\sqrt M$ & Avg. Comp. Time & $0.05$ & $0.09$ & $0.12$  \\
                    \midrule
                    \multirow{2}{*}{$N=2,500$} & \multirow{2}{*}{$n=500$} & \multirow{2}{*}{$0.064$} & \multirow{2}{*}{$\SI{10.52}{s}$} & $97.2\%$ & $97.4\%$ & $96.6\%$ \\
                    & & & & ($\SI{92.36}{s}$) & ($\SI{76.25}{s}$) & ($\SI{67.44}{s}$) \\
                    \multirow{2}{*}{$N=2,500$} & \multirow{2}{*}{$n=50$} & \multirow{2}{*}{$0.163$} & \multirow{2}{*}{$\SI{0.90}{s}$} & $18.8\%$ & $97.0\%$ & $95.0\%$ \\
                    & & & & ($\SI{157.48}{s}$) & ($\SI{217.97}{s}$) & ($\SI{97.91}{s}$) \\
                    \multirow{2}{*}{$N=250$} & \multirow{2}{*}{$n=500$} & \multirow{2}{*}{$0.441$} & \multirow{2}{*}{$\SI{0.74}{s}$} & $0.0\%$ & $6.0\%$ & $16.2\%$ \\
                    & & & & ($\SI{0.80}{s}$) & ($\SI{4.91}{s}$) & ($\SI{12.48}{s}$) \\
                    \multirow{2}{*}{$N=250$} & \multirow{2}{*}{$n=50$} & \multirow{2}{*}{$0.641$} & \multirow{2}{*}{$\SI{0.13}{s}$} & $0.0\%$ & $0.0\%$ & $0.6\%$ \\
                    & & & & ($\SI{0.79}{s}$) & ($\SI{0.71}{s}$) & ($\SI{1.60}{s}$) \\
                    \hline
                    \multicolumn{4}{r|}{Benign Accuracy:} & $99.4\%$ & $99.6\%$ & $99.4\%$ \\
                    \bottomrule
                \end{tabular}
            }
            \label{tab:tightness-efficiency-trade-off-mnist}
        \end{table}
            \fi

\end{document}